\documentclass[accepted]{uai2024} 


\usepackage[american]{babel}

\usepackage{natbib} 
    \bibliographystyle{plainnat}
    
\usepackage{mathtools} 
\usepackage{booktabs} 
\usepackage{tikz} 
\usepackage{hyperref}       
\usepackage{url}            
\usepackage{booktabs}       
\usepackage{amsfonts}       
\usepackage{nicefrac}       
\usepackage{microtype}      
\usepackage{xcolor}         
\usepackage{mathtools, amsthm}
\usetikzlibrary{arrows.meta}
\usetikzlibrary{decorations.pathreplacing}

\usepackage{algorithm2e}
\usepackage{wrapfig}
\usepackage{amsmath}
\usepackage{graphicx}
\usepackage{xcolor}
\usepackage{subcaption}


\def\R{{\mathbb{R}}}
\def\pr{{\rm Pr}}
\def\Pre{{\text{Pr}_{n}}}
\def\X{{\mathcal X}}
\def\Y{{\mathcal Y}}
\def\G{{\mathcal G}}
\DeclareMathOperator*{\argmax}{arg\,max}
\DeclareMathOperator*{\argmin}{arg\,min}
\newtheorem{thm}{Theorem}
\newtheorem{thm*}{Theorem}
\newtheorem{lemma}[thm]{Lemma}
\newtheorem{lemma*}[thm*]{Lemma}
\newtheorem{cor}[thm]{Corollary}

\title{Convergence Behavior of an Adversarial Weak Supervision Method}

%
%
\author[1]{\href{mailto:<sla001@ucsd.edu>?Subject=[UAI 2024] Rules of Thumb Paper}{Steven~An}{}}
\author[1]{\href{mailto:<dasgupta@eng.ucsd.edu>?Subject=[UAI 2024] Rules of Thumb Paper}{Sanjoy~Dasgupta}{}}
\affil[1]{%
    Computer Science Department\\

    University of California, San Diego\\

    La Jolla, CA 92093, USA
}
  \begin{document}
\maketitle

\begin{abstract}
    Labeling data via rules-of-thumb and minimal label supervision is central to Weak Supervision, a paradigm subsuming subareas of machine learning such as crowdsourced learning and semi-supervised ensemble learning.
    By using this labeled data to train modern machine learning methods, the cost of acquiring large amounts of hand labeled data can be ameliorated.
    Approaches to combining the rules-of-thumb falls into two camps, reflecting different ideologies of statistical estimation.
    The most common approach, exemplified by the Dawid-Skene model, is based on probabilistic modeling.
    The other, developed in the work of Balsubramani-Freund and others, is adversarial and game-theoretic.
    We provide a variety of statistical results for the adversarial approach under log-loss: we characterize the form of the solution, relate it to logistic regression, demonstrate consistency, and give rates of convergence.
    On the other hand, we find that probabilistic approaches for the same model class can fail to be consistent.
    Experimental results are provided to corroborate the theoretical results.
\end{abstract}

\section{Introduction}\label{sec:intro}
We consider a common setting found in Weak Supervision (WS): suppose we have a fixed set of data points $X=\{x_1, \ldots, x_n\}$ whose labels, in $\Y = \left\{ 1, 2,\ldots, k \right\}$, are not known.
We are also given $p$ \emph{rules-of-thumb} (sometimes called \emph{labeling functions}) $h^{(1)}, \ldots, h^{(p)}: X \to \Y \cup \{?\}$, where ``?'' means ``abstain''.
Given rough estimates of the accuracies of these rules, how can we use them to make inferences about the labels of $X$?

The WS umbrella contains work from several lines of machine learning including crowdsourced learning, semi-supervised learning, and programmatic weak supervision.
Methods range from unsupervised to semi-supervised.

In \textit{crowdsourced learning}, several workers are asked to label $X$. They might abstain on some of the points, and thus each worker's labeling corresponds to a rule-of-thumb. These rules can be combined using a purely unsupervised process, or using a small amount of expertly-labeled data to help estimate the accuracy of each rule (person).

The field of \textit{semi-supervised learning} has a long history of using simple human- or machine-generated rules like
\begin{equation*}
    \mbox{document contains {\tt goalie}} \ \implies \ \mbox{label} = \mbox{\tt sports}.
\end{equation*}
In this example, the rule abstains on any document not containing the word {\tt goalie} and is only somewhat accurate.
The hope is for a collection of such rules, together with a little labeled data, to be turned into a good classifier.
Early instances of this idea include work in information retrieval \citep{CD90}, Yarowsky's method for word sense disambiguation \citep{Y95}, and co-training \citep{BM98}.
More recently, \citet{balsubramani2015scalable} have suggested using highly accurate ``specialists'' that predict on small parts of input space, and then combining them.

\emph{Programmatic weak supervision}, exemplified by the {\tt snorkel} framework \citep{snorkel}, uses user-defined or automatically generated computer programs which serve as rules-of-thumb.
The combination is usually unsupervised, but a small amount of labeled data can help.

Methods from these lines of work can make it easier to produce large labeled data sets, which are key to supervised learning.
There are two broad approaches to combining rules-of-thumb. The well-studied \emph{probabilistic} approach assumes that the labeling process conforms to a generative model and uses this model to determine the most likely label of each point. We take the Dawid-Skene estimator \citeyearpar{DS_1979} as a representative of this approach. On the other hand, the \emph{game-theoretic} or \emph{adversarial} approach, as developed in the work of \citet{balsubramani2015optimally} and others, uses estimates of rule accuracies to generate a plausible set of labelings and chooses predictions that minimize the maximum possible error under this constraint.

Each approach has potential pitfalls.
Probabilistic approaches can produce poor predictions if the rules-of-thumb violate the generative assumptions: i.e.~under misspecification. The adversarial approach, on the other hand, can be too pessimistic given its preoccupation with mitigating the worst case.
In this paper, we do a statistical analysis of the adversarial approach and prove various results for it including convergence.
We find that similar properties do not hold for the Dawid-Skene probabilistic approach.
The results are discussed through the lens of model and approximation uncertainty (to be defined later), providing a clearer image of the approaches.
Empirical results corroborate our analysis.

To set the background, suppose that the label $y$ of any point $x \in \X$ is given by the conditional probability function $\eta(\ell \mid x) = \pr(y = \ell \mid x)$, where $\ell \in \Y = \{1,2,\ldots,k\}$.
In some applications, such as object detection, $\eta(\ell\mid x)$ will place almost all its mass on the single true label.
In other settings, like predicting the course of a disease, there is inherent uncertainty and $\eta(\ell\mid x)$ will be spread over several labels $\ell$.
We will look at methods that estimate the probabilities of different labels for the given data points $X = \{x_1, \ldots, x_n\}$.
That is, the probabilities $\eta = (\eta(\ell\mid x_i): 1 \leq i \leq n, 1 \leq \ell \leq k)$.
Generative approaches such as Dawid-Skene yield this readily.
The solution space is $\Delta_k^n$, where $\Delta_k$ denotes the $k$-probability simplex: we select a distribution over $k$ labels for each of the $n$ data points.

For the adversarial approach, \citep{balsubramani_2016_general_loss} provide a framework that accommodates different loss functions for classification.
Although their work focuses primarily on 0-1 loss, we use the log loss, which is more appropriate when label probabilities are sought. Given rules-of-thumb $h^{(1)}, \ldots, h^{(p)}$, and estimates of their accuracies, the adversarial approach first defines a set $P \subset \Delta_k^n$ of plausible labelings; this takes estimation error into account and thus includes the true $\eta$. The goal is then to choose a model $g \in \Delta_k^n$ whose log-likelihood is maximized even for the worst-case ``true'' labeling $z \in P$:
\[
    \max_{g \in \Delta_k^n} \ \min_{z \in P} \ z \cdot \log g.
\]
We show that the solution $g$ has several favorable properties.
\begin{enumerate}
    \item (Maximum entropy) $g$ is the maximum entropy distribution in $P$.
    \item (Form of solution) $g$ belongs to an exponential family of distributions $\G$ that can be defined in terms of the given rules-of-thumb.
    \item (Logistic Regression) The minimax game is shown to be an instance of regularized logistic regression.
    \item (Consistency) As the estimation error for rule accuracies goes to zero, $g$ converges to the model $g^* \in \G$ that is closest to $\eta$ in KL-divergence.
    \item (Rates of convergence) We bound the rate at which $g$ approaches $g^*$.
    \item (Dawid-Skene comparison) For sufficiently good rule accuracy estimates, $g$ is guaranteed to be closer to $\eta$ in KL-divergence than the Dawid-Skene prediction.
    \item (Empirical Results) Consistency is demonstrated on synthetic data and $g$ is compared to the Dawid-Skene prediction/other SOTA methods on real data.
\end{enumerate}
Interestingly, the Dawid-Skene prediction is in the same family $\G$.
However, we show it's not always consistent.

\section{Related Work}
The study of WS is not only about constructing a classifier from rules-of-thumb, but encompasses all aspects of the process from start to end.
\citet{zhang2022survey} provide a good survey discussing the various aspects of a WS pipeline.
The pipeline involves the creation of \textit{labeling functions} (rules-of-thumb), creating a \textit{label model} (classifier) to aggregate the rule predictions, and an \textit{end model} trained on the label model's labeling of the data.
These components can be separate, but can also be trained end-to-end, e.g.~\citep{endtoend_ws}.
In our setting, the rules-of-thumb are fixed, but adding more rules has been studied, e.g.~\citep{Varma_SNUBA}.

Since rules-of-thumb abstract the feature space, the domains to which WS is applicable varies widely.
E.g.~computer vision~\citep{Yu_ReKall_CV}, natural language processing~\citep{Yu_nlp}, medical applications~\citep{Wang2019ACT}.

Our representative for the probabilistic approach, created by \citet{DS_1979}, has spawned of a myriad of models.
Indeed, the recent work of \citet{data_prog} can be viewed as a generalization of the Dawid-Skene model where inter-rule dependencies are modeled.
Dawid-Skene type estimators are well studied theoretically too, e.g.~\citet{gao2013minimax} study the convergence of EM,\citet{li2014error} provide finite sample error bounds, and \citet{zhang2014spectral} provide a provably good EM initialization.

A good survey of semi-supervised learning can be found in \citep{ZG09}; the approaches taken are mostly probabilistic.
A very different, game-theoretic/adversarial optimization, approach was introduced by \citep{balsubramani2015optimally} for binary classification and complete (non-abstaining) rules.
Their work was generalized to accommodate partial rules in \citep{balsubramani2015scalable} and to a variety of different losses \citep{balsubramani_2016_general_loss}.
In~\cite{arache_huang_final}, a similar optimization problem is considered with a focus on experiments.
The work of~\citet{minimax_classification_with_01_loss_and_performance_guarantees}~\citet{adversarial_multi_class_learning_performance_guarantees} and~\citet{Mazuelas_generalized_entropy} give finite sample generalization bounds for rules learned under a similar adversarial framework.
In contrast, our bounds are in the transductive setting rather than the inductive one.

\section{Setup}
Our goal is to label $n$ datapoints $X= \{ x_{1},\ldots, x_{n}\}$ whose labels lie in $\mathcal{Y} = [k]$ using $p$ rules of thumb $h^{(1)},\ldots, h^{(p)}$ where $h^{(j)}\colon X\rightarrow \Y \cup \{\text{?} \}$ and ``?'' denotes an abstention.
$\eta_{i\ell } = \Pr( y_{i}=\ell \mid x_{i} )$ is the true probability of class $\ell $ for $x_{i}$.
We'll write that in vector form a la \citet{minimax_classification_with_01_loss_and_performance_guarantees}:
\[
    \eta_i = (\eta_{i1}, \ldots, \eta_{ik}) \in \Delta_k \quad \text{and} \quad \eta = (\eta_1, \ldots, \eta_n) \in \Delta_k^n,
\]
so that $\eta$ is a vector of length $nk$.
Each rule's prediction $h^{(j)}(x_i)$ can be written as a vector in $\{0,1\}^k$:
\begin{equation}~\label{eqn:one_hot_rule_pred}
    h^{(j)}_i =
    \left\{
    \begin{array}{ll}
    \vec{e}_\ell & \mbox{if } h^{(j)}(x_i) = \ell \in [k] \\
    \vec{0}_k    & \mbox{if } h^{(j)}(x_i) = \text{?}
    \end{array}
    \right.
\end{equation}
$\vec{e}_{\ell }$ is the $\ell^{th}$ canonical basis vector in $k$ dimensions.
Write
\[
    h^{(j)} = (h^{(j)}_1, \ldots, h^{(j)}_n) \in \Delta_{k}^{n}.
\]
Thus $h^{(j)}_{i\ell}$ is $1$ if $h^{(j)}(x_i) = \ell$ and $0$ otherwise.

\section{An Adversarial Approach}

Suppose we had upper and lower bounds on the accuracies of each rule $h^{\left(j\right)}$'s predictions on $X$.
E.g.~For instance, if these are based on $v$ labeled instances, then our estimates are accurate within $O(1/\sqrt{v})$.
While there are $k^{n}$ possible labelings of $X$, knowing $h^{(j)}$ makes at most $v$ mistakes implies that only labelings whose Hamming distance is at most $v$ from $h^{(j)}$ are \textit{coherent} with that piece of knowledge.
This is a significant decrease.
Every additional rule and the bounds for its mistakes on $X$ further constrains and shrinks the set of coherent labelings.
We will soon see how this information effectively constrains the true labeling $\eta$ to lie in a specific polytope $P\subset \Delta_{k}^{n}$.

If $h^{\left( j \right)}$ makes $n_{j}\leq n$ predictions on $X = \{ x_{1}, \ldots, x_{n} \}$, abstaining on the rest, the expected proportion of correct predictions is
\[
    b^{*}_{j}:= \frac{1}{n_{j}} \sum_{i=1}^{n} \sum_{\ell =1}^{k} \eta_{i\ell }\textbf{1}( h^{(j)}(x_{i}) = \ell  ) = \frac{1}{n_{j}} \eta \cdot  h^{(j)}.
\]
$b^{*}_{j}$ is the empirical accuracy of rule $j$.
If $b_{j}$ is an estimate of $b_{j}^{*}$ and $\epsilon_{j}\geq 0$ so large that $b^{*}_{j}\in [b_{j}-\epsilon_{j}, b_{j} + \epsilon_{j}]$, we have
\begin{equation}~\label{eqn:bf_rule_acc_ineq}
     b_j - \epsilon_j \leq \frac{1}{n_{j}} \eta \cdot h^{(j)} \leq b_j + \epsilon_j.
\end{equation}
For instance, $b_{j}$ could be an estimate from labeled data and $\epsilon_{j}$ could be from a binomial confidence interval.

Likewise, the empirical fraction of labels that are $\ell$ is
\[
    w^{*}_{\ell } = \frac{1}{n} \sum_{i=1}^{n} \eta_{i\ell } = \frac{1}{n} \eta \cdot \vec{e}_{\ell }^{n}
\]
where $\vec{e}_\ell^{\,n} \in \{0,1\}^{nk}$ is an $n$-fold repetition of $\vec{e}_\ell$.
Like above, say $w_{\ell }$ is an estimate of $w^{*}_{\ell }$ and $\xi_{\ell }\geq 0$ so large that $w_{\ell }^{*}\in [w_{\ell }-\xi_{\ell }, w_{\ell } + \xi_{\ell }]$.
We can then write
\[
    w_{\ell} - \xi_{\ell}  \leq \frac{1}{n} \eta \cdot \vec{e}_\ell^{\,n} \leq w_{\ell}+\xi_{\ell},
\]
For brevity, take $m = p+k$, the number of constraints from rule accuracies and class frequencies.
We'll abuse notation and let $b = (b_{1},\ldots, b_{p}, w_{1},\ldots, w_{k})$.
Similarly, we'll say $\epsilon = (\epsilon_{1},\ldots,\epsilon_{p}, \xi_{1},\ldots, \xi_{k})$.

In writing the rule accuracy and class frequency bounds in matrix form, $A\in \mathbb{R}^{m\times nk}$, we construct a polytope of coherent labelings.
Defined row-wise,
\begin{equation}~\label{eqn:A_matrix_def}
    a^{\left( j \right)}=\begin{cases}
h^{\left( j \right)}/n_{j}\quad \text{when} \quad 1\leq j\leq p\\
        \vec{e}^{\,n}_{j-p}/n \quad \text{when} \quad p+1\leq j\leq p+k =m.
    \end{cases}
\end{equation}
With element-wise inequalities, we can write the $m$ inequalities for rule accuracy and class frequency as
$b-\epsilon \leq A\eta\leq b+\epsilon$ for $b, \epsilon\in \mathbb{R}^{m}$.
The polytope of coherent labelings is defined by those inequalities for $b^{*}, w^{*}$:
\begin{equation}~\label{eqn:bf_polytope_def}
    P = \{z \in \Delta_k^n: b - \epsilon \leq Az \leq b + \epsilon \}.
\end{equation}
For any $\epsilon\geq \vec{0}_{m}$, we require the interval $[b-\epsilon, b+\epsilon]$ to contain $b^{*}$, i.e.~$\epsilon\rightarrow \vec{0}_{m}$ implies $b\rightarrow b^{*}$, because our adversarial approach requires that the underlying labeling $\eta$ be in $P$.

So, given information about the rule accuracies and class frequencies, the adversary can only choose labelings $z\in P$, i.e.~labelings coherent with the information.
\citet{balsubramani2015optimally} propose a two player zero-sum minimax game where the adversary seeks to maximize loss with their choice of labeling $z\in P$, while the learner attempts to minimize it with their prediction $g\in \Delta_{k}^{n}$.
While they consider 0-1 loss, we use log loss/cross entropy as the game's objective.
Said another way, the learner wishes to maximize log-likelihood while the adversary seeks to minimize it.
The Balsubramani-Freund (BF) model's game can be written
\begin{equation}\label{eqn:bf_minmax_game}
    V = \max_{g \in \Delta_k^n} \min_{z \in P} z \cdot \log g.
\end{equation}
We will see that this is equivalent to a max-entropy type problem, and can be optimized either via gradient descent (as proposed by \citet{balsubramani2015optimally}) or via an off-the-shelf convex program solver.

\section{Statistical Analysis of BF}

\subsection[Learner's Prediction is Maximum Entropy Model in P]{Learner's Prediction is Maximum Entropy Model in $P$}\label{subsec:bf_is_max_entro}

The learner's optimal prediction $g^{bf}$ from the minimax game in Equation~\ref{eqn:bf_minmax_game} turns out to be the maximum entropy distribution in the polytope $P$ of labelings consistent with the accuracy and class frequency bounds.
(Strictly speaking objects in $\Delta_{k}^{n}$ contain $n$ distributions, each over $k$ objects, but we call such objects distributions for brevity.)

\begin{thm}\label{thm:minimax_game}
    The minimax game in Equation~\ref{eqn:bf_minmax_game} can be equivalently written as follows.
    \[
        V = \max_{g \in \Delta_k^n} \min_{z \in P} z \cdot \log g = \min_{z \in P} \max_{g \in \Delta_k^n} z \cdot \log g = \min_{z\in P} z\cdot \log z
    \]
    The first expression defines a learner prediction $g^{bf}$, the second defines an adversarial labeling $z^{*}$.
    Then, $g^{bf}=z^{*}$ and they are the maximum entropy distribution in $P$, the optimal solution to the right-most expression.
\end{thm}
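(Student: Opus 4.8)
The plan is to collapse the three expressions onto the right-most maximum-entropy problem and then recover the left-most (learner's) value by a first-order optimality argument, sidestepping any general minimax theorem. First I would dispatch the inner maximization in the middle expression. For a fixed adversary labeling $z$, the objective $z\cdot\log g=\sum_{i,\ell} z_{i\ell}\log g_{i\ell}$ decouples across the $n$ points, and on each simplex factor the map $g_i\mapsto\sum_{\ell} z_{i\ell}\log g_{i\ell}$ is maximized at $g_i=z_i$ by Gibbs' inequality (equivalently $D(z_i\,\|\,g_i)\ge 0$, with equality iff $g_i=z_i$). Thus $\max_{g} z\cdot\log g=z\cdot\log z$, which is precisely the middle-equals-right identity $\min_{z\in P}\max_{g} z\cdot\log g=\min_{z\in P} z\cdot\log z$. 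Since $z\mapsto z\cdot\log z=-H(z)$ is strictly convex and $P$ is a nonempty compact convex polytope (it contains $\eta$), this minimization has a unique solution $z^{*}$, the maximum-entropy point of $P$, and I denote the common value by $V$.

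Next I would show the left-most max--min expression also equals $V$ and is attained at $g=z^{*}$. Weak duality yields $\max_{g}\min_{z} z\cdot\log g\le\min_{z}\max_{g} z\cdot\log g=V$ for free, so only the matching lower bound is needed. For this I would feed the learner's choice $g=z^{*}$ into the inner minimization. The map $z\mapsto z\cdot\log z^{*}$ is linear, so I can read off its minimum over $P$ from the optimality of $z^{*}$ for the entropy problem: the variational inequality $\langle \log z^{*}+\mathbf{1},\,z-z^{*}\rangle\ge 0$ holds for every $z\in P$ (the gradient of $z\cdot\log z$ being $\log z+\mathbf{1}$). The key simplification is that the all-ones term cancels, since $\sum_{\ell} z_{i\ell}=\sum_{\ell} z^{*}_{i\ell}=1$ for each $i$ forces $\langle\mathbf{1},z-z^{*}\rangle=0$; what remains is $z\cdot\log z^{*}\ge z^{*}\cdot\log z^{*}$ for all $z\in P$, i.e.\ $\min_{z\in P} z\cdot\log z^{*}=z^{*}\cdot\log z^{*}=V$. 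Sandwiching the max--min value between this lower bound and the weak-duality upper bound makes all three expressions equal, with $g=z^{*}$ optimal for the learner. To see the optimum is exactly $g^{bf}=z^{*}$, observe that any learner optimum $g$ satisfies $z^{*}\cdot\log g\ge V=z^{*}\cdot\log z^{*}$, while Gibbs' inequality gives the reverse; equality then pins down $g_{i\ell}=z^{*}_{i\ell}$ at every coordinate with $z^{*}_{i\ell}>0$.

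The step I expect to be the main obstacle is the boundary behavior of $\log$: both the variational inequality and the final uniqueness argument break down at any coordinate where $z^{*}_{i\ell}=0$, since there $\log z^{*}_{i\ell}=-\infty$. I would close this gap by proving that $z^{*}$ has full support. If some coordinate vanished, moving from $z^{*}$ toward a full-support point of $P$ along the feasible segment would increase entropy at an infinite initial rate, because the derivative of $-t\log t$ diverges as $t\to 0^{+}$; this contradicts the maximality of $z^{*}$. The argument needs $P$ to contain a strictly positive point, which is guaranteed when the tolerances $\epsilon$ are positive (one may nudge $\eta$ toward the uniform labeling and stay in $P$). With full support established, the gradient is finite, the optimality condition holds verbatim, and the Gibbs equality condition forces $g^{bf}=z^{*}$. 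As an alternative to the whole second step, one could invoke Sion's minimax theorem---the objective being concave in $g$ and linear (hence convex) in $z$ over compact convex domains---but its upper-semicontinuity hypothesis runs into the same boundary subtlety, so the full-support fact is doing the real work either way.
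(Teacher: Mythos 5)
Your argument is correct in substance but takes a genuinely different route from the paper's for the hard part of the theorem. You handle the middle--right equality exactly as the paper does (Gibbs' inequality applied pointwise), but for the left--middle equality the paper invokes Von Neumann's minimax theorem to commute the max and min, and then separately derives the Lagrange duals of both the max--min and the max-entropy problems to show the learner's prediction and the adversary's labeling share the same softmax form with the same dual, which is how it concludes $g^{bf}=z^{*}$. You instead bypass any minimax theorem: weak duality gives $\max\min\le\min\max$ for free, and you certify the reverse inequality by playing $g=z^{*}$ and reading off $\min_{z\in P}z\cdot\log z^{*}=z^{*}\cdot\log z^{*}$ from the first-order variational inequality of the entropy minimization (with the $\mathbf{1}$ term cancelling because all rows of $z$ and $z^{*}$ are probability vectors). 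This is more elementary and more self-contained, and it identifies $g^{bf}=z^{*}$ directly via the Gibbs equality condition rather than by matching dual programs; what it does not deliver, and what the paper's Lagrangian route provides as a byproduct, is the explicit exponential-family form of the solution needed for Theorem~\ref{thm:BF_dual}. One caveat: your full-support step is doing real work, and the ``nudge $\eta$ toward uniform'' justification can fail when $b^{*}$ sits exactly on the boundary of the box $[b-\epsilon,b+\epsilon]$ or when $\epsilon=\vec{0}_{m}$ (a regime the paper actually uses in Theorem~\ref{thm:bf_best_approx}); you would need to assume, or argue separately, that $P$ meets the relative interior of $\Delta_{k}^{n}$. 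The paper's proof has a parallel unaddressed regularity issue (attainment of the dual optimum and strict positivity of the resulting softmax), so this is not a defect relative to the paper, but it is worth flagging as a shared hypothesis rather than a theorem-free fact.
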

The general steps are to commute the min and max via Von Neumann's minimax theorem, apply Gibb's inequality repeatedly, and show the second and third problems have the same Lagrange dual.
All proofs are found in the Appendix.
For a more general treatment of minimax games and maximum entropy, see~\citep{grunwald_maxent}.

\subsection{Characterizing the BF Solution}
We now show an easily optimizable dual of the sum of max entropies problem from Theorem~\ref{thm:minimax_game}.
This exposes the functional form of $g^{bf}$.
To be terse, for $\theta\in \mathbb{R}^{m}$, we will use $a^{(\theta)}$ as shorthand for $A^{\top} \theta = \theta_1 a^{(1)} + \cdots + \theta_{m} a^{(m)}$ where $a^{(j)}$ is row $j$ of $A$.
Also, recall $g_{i\ell }$ is the learner's prediction for class $\ell $ on $x_{i}$ and $A, b, \epsilon$ together fully specify polytope $P$ (Equation~\ref{eqn:bf_polytope_def}).
\begin{thm}\label{thm:BF_dual}
    The learner's optimal prediction $g$ for the game
    \[
        V = \max_{g \in \Delta_k^n} \min_{z \in P} z \cdot \log g \quad \text{is} \quad g_{i\ell} = \frac{\exp(a_{i\ell}^{(\sigma^{\prime}-\sigma)})}{\sum_{\ell^{\prime}}\exp(a_{i\ell^{\prime}}^{(\sigma^{\prime}-\sigma)})}.
    \]
    $\sigma, \sigma^{\prime}$ are gotten from optimizing the dual problem
    \begin{multline*}
        V = V(b, \epsilon) = \max_{\sigma, \sigma' \geq \vec{0}_{m}} \Bigg[(\sigma^{\prime}-\sigma)\cdot b-(\sigma^{\prime} + \sigma)\cdot \epsilon\\
        - \sum_{i=1}^n \log \Big( \sum_{\ell=1}^k \exp (a_{i\ell}^{(\sigma'-\sigma)}) \Big)\Bigg].
    \end{multline*}
\end{thm}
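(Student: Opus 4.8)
The plan is to start from Theorem~\ref{thm:minimax_game}, which already identifies the game value $V$ with the negative-entropy minimization $\min_{z \in P} z \cdot \log z$, and then to compute the Lagrange dual of this convex program; the dual will simultaneously expose the exponential-family form of the minimizer and the stated objective. Concretely, $P$ is cut out of the product simplex $\Delta_k^n$ by the $2m$ affine inequalities $Az \leq b + \epsilon$ and $Az \geq b - \epsilon$ (Equation~\ref{eqn:bf_polytope_def}), so I would attach a multiplier $\sigma \geq \vec{0}_m$ to the upper bounds and $\sigma' \geq \vec{0}_m$ to the lower bounds, while retaining the $n$ normalization constraints $\sum_\ell z_{i\ell} = 1$ explicitly. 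Using the shorthand $\sigma \cdot A z = (A^\top \sigma)\cdot z = a^{(\sigma)} \cdot z$ and collecting terms, the Lagrangian reduces to $z \cdot \log z + a^{(\sigma - \sigma')} \cdot z + (\sigma' - \sigma)\cdot b - (\sigma' + \sigma)\cdot \epsilon$ together with the simplex terms. This particular pairing of $\sigma,\sigma'$ with the two inequality families is exactly what produces the signs $(\sigma' - \sigma)\cdot b$ and $a^{(\sigma'-\sigma)}$ that appear in the statement.

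Next I would carry out the inner minimization over $z$, which decouples across the $n$ data points since both the entropy and the linear term $a^{(\sigma-\sigma')}\cdot z$ are separable. For each fixed $i$ the task is to minimize $\sum_\ell z_{i\ell}\log z_{i\ell} + \sum_\ell c_{i\ell} z_{i\ell}$ over $z_i \in \Delta_k$, with $c_{i\ell} = a_{i\ell}^{(\sigma - \sigma')}$. Setting the gradient to zero against a multiplier for $\sum_\ell z_{i\ell} = 1$ yields the Gibbs/softmax minimizer $z^*_{i\ell} \propto \exp(-c_{i\ell}) = \exp(a_{i\ell}^{(\sigma'-\sigma)})$, i.e.\ precisely the claimed form of $g_{i\ell}$ after normalizing. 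Substituting this minimizer back invokes the standard log-partition identity, so the minimized per-point value equals $-\log \sum_\ell \exp(a_{i\ell}^{(\sigma'-\sigma)})$; summing over $i$ and restoring the constant terms $(\sigma'-\sigma)\cdot b - (\sigma'+\sigma)\cdot\epsilon$ assembles exactly the dual objective $V(b,\epsilon)$, to be maximized over $\sigma,\sigma' \geq \vec{0}_m$.

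To finish, I would invoke strong duality to equate the primal minimum $\min_{z\in P} z\cdot\log z$ with the dual maximum, guaranteeing that the $z^*$ reconstructed from the dual optimum is the true primal minimizer, which by Theorem~\ref{thm:minimax_game} coincides with $g^{bf}$. Since the objective is convex and continuous on the compact set $\Delta_k^n$ (with the convention $0\log 0 = 0$) and all constraints are affine, a minimizer exists and strong duality holds as soon as $P$ is nonempty, which is exactly the standing assumption that the true $\eta \in P$. I expect the main obstacles to be two delicate bookkeeping points rather than any deep difficulty: first, pinning down the correspondence between the two multiplier vectors and the upper/lower inequality families so that every sign in $\exp(a_{i\ell}^{(\sigma'-\sigma)})$ and in $(\sigma'-\sigma)\cdot b - (\sigma'+\sigma)\cdot\epsilon$ emerges correctly; and second, justifying the first-order stationarity argument despite the entropy being defined only on the nonnegative orthant. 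For the latter it is convenient that the softmax minimizer has all coordinates strictly positive and hence lies in the relative interior of each simplex, so the Lagrangian stationarity conditions are valid without an extra barrier or limiting argument.
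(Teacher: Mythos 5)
Your proposal is correct, and it essentially reproduces one of the two derivations the paper itself uses. The paper proves this statement (as Lemma~\ref{app_lem:learn_opt_pred}) by dualizing the \emph{inner} problem $\min_{z\in P} z\cdot\log g$ for fixed $g$ and then carrying out the outer maximization over $g$ via a mass-redistribution argument showing $\log g_i - a_i^{(\sigma'-\sigma)}$ must be a constant vector; you instead dualize the max-entropy program $\min_{z\in P} z\cdot\log z$ and appeal to Theorem~\ref{thm:minimax_game} to identify the resulting $z^*$ with $g^{bf}$. That second route is exactly the paper's Lemma~\ref{app_lem:adv_opt_label}, so nothing is missing logically: Theorem~\ref{thm:minimax_game} is stated earlier and may be invoked. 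Your handling of the simplex constraints is in fact a little cleaner than the paper's --- you minimize $\sum_\ell z_{i\ell}\log z_{i\ell} + c_{i\ell}z_{i\ell}$ directly over $\Delta_k$ via the Gibbs variational principle, whereas the paper introduces an explicit multiplier $\xi\in\R^n$ for $Dz=\vec{1}_n$ and then eliminates it analytically; both give the same $-\log\sum_\ell\exp(a_{i\ell}^{(\sigma'-\sigma)})$ per point. Your sign bookkeeping ($\sigma$ on the upper bounds, $\sigma'$ on the lower bounds, minimizer $\propto\exp(-c_{i\ell})=\exp(a_{i\ell}^{(\sigma'-\sigma)})$) checks out against the stated objective. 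The one place where you and the paper are equally brief is strong duality: with an entropy objective the refined Slater condition technically asks for a feasible point in the relative interior of the objective's domain (i.e., a strictly positive $z\in P$), not merely nonemptiness of $P$; the paper asserts the same thing at the same level of detail, so this is not a gap relative to the paper's own standard of rigor, but it is the point you would want to tighten if pressed.
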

The dual problem is concave in $2p+2k$ variables $\sigma, \sigma^{\prime}$ and can easily be solved with gradient descent or a convex program solver.

\subsection{BF Solution Lies in an Exponential Family}~\label{subsec:bf_expo_family}
With matrix $A \in \R^{m \times nk}$, we can define a family of conditional probability distributions parameterized by $\theta \in \R^m$:
\begin{equation}~\label{eqn:bf_exponential_form}
    g^{(\theta)}_{i\ell } \propto \exp \bigg(\sum_{j=1}^m \theta_j a^{(j)}_{i \ell} \bigg)
    = \exp(a^{(\theta)}_{i \ell} ).
\end{equation}
The family $\G = \{g^{(\theta)}: \theta \in \R^{m}\}$ has the characteristic exponential family form.
We can treat $g^{(\theta)}$ as a vector in $\Delta_k^n$.
If we take the optimal $\sigma^{\prime}$, $\sigma$ from Theorem~\ref{thm:BF_dual} and define $\theta^{bf} = \sigma^{\prime}-\sigma$, the learner's best-play $g^{bf}$ is $g^{(\theta^{bf})}$.

\subsection{BF is a Form of Logistic Regression}

Since BF is a maximum entropy problem, we can relate it to multi-class logistic regression (MLR) with $\ell_{1}$ regularization.
This connection was previously observed by~\citet{Mohri_foundations_ML} (Chapter 13 Section 7) and~\citet{Mazuelas_generalized_entropy}.

To start, we briefly review the formulation for MLR\@.
Suppose we have datapoint features $x_{i}\in \mathbb{R}^{d}$ and their label distributions $\eta_{i}\in \Delta_{k}$ for $i\in [n]$.
For each $x_{i}$, the goal is to predict each class' probability (elements of $\eta_{i}$) by using different weighted combinations of $x_{i}$.
Formally, we wish to learn $w_{\ell }\in \mathbb{R}^{d}$ for every $\ell \in [k]$ such that
\[
    g^{lr}_{i\ell } = \frac{\exp(w_{\ell}^{\top}x_{i})}{\sum_{\ell^{\prime}=1}^{k} \exp(w_{\ell^{\prime}}^{\top}x_{i})} \quad \text{approximates}\quad   \eta_{i\ell }.
\]
If $w_{\ell }$ serves as row $\ell $ of a weight matrix $W\in \mathbb{R}^{k\times d}$, the prediction for datapoint $x_{i}$ is the softmax of elements in the vector $Wx_{i}$.
To learn $W$, one minimizes cross entropy, which can be regularized with coefficient $C$:
\[
    \min_{W\in \mathbb{R}^{k\times d}}\left[ -\eta^{\top}\log g^{lr} + C\sum_{\ell =1}^{k} \|w_{\ell }\|_{1}\right]
\]

To connect the BF problem to the above, it suffices to show two things.
First is that the prediction $g^{bf}$ can be written as the softmax of a weight matrix times datapoint features.
Second, the dual objective in Theorem~\ref{thm:BF_dual} can be turned into a cross entropy term plus $\ell_{1}$ type regularization.

For the first point, consider $g=g^{(\theta)}\in \G$ where $\theta\in \mathbb{R}^{m}$.
Looking at the $i^{th}$ datapoint, define $A_{i}\in \mathbb{R}^{m\times k}$ to be columns $k(i-1)+1,\ldots, ki$ of matrix $A$.
$A_{i}$ contains the one-hot encoding of the rule predictions on $x_{i}$ and all canonical basis vectors in $k$ dimensions (Equation~\ref{eqn:A_matrix_def}).
Then, $g_{i}$ is the softmax of $A_{i}^{\top}\theta$ (Equation~\ref{eqn:bf_exponential_form}).
Observe that the weights $\theta$ are in vector rather than matrix form.
We show in the appendix how to rewrite $A^{\top}_{i}\theta$ so that it is equal to a weight \textit{matrix} $T_{\theta}\in \mathbb{R}^{k\times mk}$ times feature \textit{vector} $\widehat{x}_{i}\in \mathbb{R}^{km}$ (taking elements from $A_{i}$).

Now, rather than have one regularization coefficient $C$, BF actually has $2m$, two for each weight in $\theta\in \mathbb{R}^{m}$.
This is because weight $\theta_{j}$ will have a different regularization coefficient depending on whether it's positive or negative.
So, let $\theta_{j} = \sigma^{\prime}_{j}-\sigma_{j}$ with $\sigma^{\prime}_{j}$, $\sigma_{j}$ being the positive and negative parts of $\theta_{j}$ respectively.
We'll now see what those regularization coefficients are.

\begin{lemma}~\label{lem:bf_logistic_regression}
    BF is a logistic regression type classifier.
    Fix $A, \eta, b, \epsilon$, which fixes $b^{*}=A\eta$, $P$, and the BF prediction $g^{bf} = g^{(\theta^{bf})}$.
    The BF dual problem from Theorem~\ref{thm:BF_dual} can be rewritten as
    \begin{multline*}
        -V(b, \epsilon) = \min_{\theta\in \mathbb{R}^{m}} \Big[ -\eta^{\top}\log g^{(\theta)}\\
        + \underbrace{(b^{*}-(b-\epsilon))\cdot \sigma^{\prime} + (b+\epsilon-b^{*})\cdot \sigma}_{\text{regularization} }\Big].
    \end{multline*}
    The optimal weights $\theta$ from above equals $\theta^{bf}$.
    Moreover, for every $i\in [n]$, BF's prediction $g^{bf}_{i}$ for datapoint $i$ is the softmax of $T_{\theta}\widehat{x}_{i}$, a weight matrix times a ``feature'' vector.
\end{lemma}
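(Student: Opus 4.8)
The plan is to start from the dual objective in Theorem~\ref{thm:BF_dual}, negate it to view $-V(b,\epsilon)$ as a minimization, and rewrite the log-partition sum as a cross-entropy term plus a linear penalty. Writing $\theta = \sigma' - \sigma$ and recalling the exponential-family form $g^{(\theta)}_{i\ell} \propto \exp(a^{(\theta)}_{i\ell})$ from Equation~\ref{eqn:bf_exponential_form}, I would first expand the cross entropy. Since $\log g^{(\theta)}_{i\ell} = a^{(\theta)}_{i\ell} - \log \sum_{\ell'} \exp(a^{(\theta)}_{i\ell'})$ and $\sum_\ell \eta_{i\ell} = 1$, the quantity $-\eta^\top \log g^{(\theta)}$ splits as $-\sum_{i,\ell}\eta_{i\ell} a^{(\theta)}_{i\ell} + \sum_i \log \sum_\ell \exp(a^{(\theta)}_{i\ell})$. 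The first sum collapses through $a^{(\theta)} = A^\top\theta$ into $\theta\cdot(A\eta) = \theta\cdot b^*$, so the log-partition term present in the dual equals exactly $-\eta^\top\log g^{(\theta)} + \theta\cdot b^*$.

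Substituting this identity into $-V$ converts the dual into $-\eta^\top \log g^{(\theta)} + \theta\cdot b^* - \theta\cdot b + (\sigma'+\sigma)\cdot\epsilon$. Grouping the linear terms coordinate-wise with $\theta = \sigma'-\sigma$ turns $\theta\cdot(b^*-b) + (\sigma'+\sigma)\cdot\epsilon$ into $(b^*-(b-\epsilon))\cdot\sigma' + (b+\epsilon-b^*)\cdot\sigma$, which is precisely the claimed regularizer. This portion is routine algebra.

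The crux is upgrading the constrained minimization over $\sigma,\sigma'\geq \vec{0}_m$ into the unconstrained one over $\theta\in\R^m$. Here the hypothesis $\eta\in P$ does the real work: it gives $b^* = A\eta \in [b-\epsilon, b+\epsilon]$, so both coefficient vectors are nonnegative, $b^*-(b-\epsilon)\geq\vec{0}_m$ and $b+\epsilon-b^*\geq\vec{0}_m$. Consequently, for any fixed difference $\theta = \sigma'-\sigma$, the feasible pairs are exactly $\sigma' = \theta_+ + t$, $\sigma = \theta_- + t$ for a slack $t\geq\vec{0}_m$, and the regularizer increases by $(b^*-(b-\epsilon)+b+\epsilon-b^*)\cdot t = 2\epsilon\cdot t\geq 0$; hence it is minimized at $t=\vec{0}_m$, i.e.\ by the positive and negative parts $\sigma'=\theta_+$, $\sigma=\theta_-$. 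This shows the inner split optimizes to positive/negative parts, so minimizing over $(\sigma',\sigma)\geq\vec{0}_m$ is identical to minimizing over $\theta\in\R^m$, yielding the stated form; since the value and the minimizing difference are preserved, the optimal $\theta$ coincides with $\theta^{bf}$. I expect this reduction, together with checking the coefficient signs that make ``regularization'' an apt label, to be the main obstacle.

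Finally, for the softmax claim I would exhibit the reshaping explicitly. Let $A_i\in\R^{m\times k}$ be the block of columns of $A$ for datapoint $i$, so that $(A_i^\top\theta)_\ell = a^{(\theta)}_{i\ell}$ and $g^{bf}_i = \mathrm{softmax}(A_i^\top\theta)$ by Equation~\ref{eqn:bf_exponential_form}. Stacking the $k$ columns of $A_i$ into one feature vector $\widehat{x}_i\in\R^{km}$ and placing $\theta^\top$ along a block-diagonal weight matrix $T_\theta = I_k\otimes\theta^\top\in\R^{k\times mk}$ gives $(T_\theta\widehat{x}_i)_\ell = \theta^\top(\text{column } \ell \text{ of } A_i) = a^{(\theta)}_{i\ell}$, hence $T_\theta\widehat{x}_i = A_i^\top\theta$ and $g^{bf}_i=\mathrm{softmax}(T_\theta\widehat{x}_i)$. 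This is pure bookkeeping matching the MLR template $\mathrm{softmax}(Wx_i)$ and needs no further argument.
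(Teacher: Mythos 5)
Your proposal is correct and follows essentially the same route as the paper's proof: rewrite the log-partition sum as $-\eta^\top\log g^{(\theta)} + \theta\cdot b^*$ using $A\eta = b^*$, regroup the linear terms into the two nonnegative-coefficient penalties, and exhibit the reshaping $A_i^\top\theta = T_\theta\widehat{x}_i$ (your column-major convention with $T_\theta = I_k\otimes\theta^\top$ is just the transpose of the paper's row-major one and works equally well). The one place you go beyond the paper is the slack argument showing that, for fixed $\theta=\sigma'-\sigma$, the regularizer grows by $2\epsilon\cdot t$ along the feasible ray $(\theta_+ + t,\ \theta_- + t)$, so the constrained optimization over $(\sigma,\sigma')\geq\vec{0}_m$ collapses to the unconstrained one over $\theta$ with positive/negative parts --- a step the paper asserts by fiat, so this is a welcome tightening rather than a deviation.
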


The proof is deferred to the Appendix where we'll also show how to convert any MLR problem into an instance of BF\@.

For regular $\ell_{1}$ regularization, we would see the term
\[
    C\|\theta\|_{1} = C\sum_{j=1}^{m} |\theta_{j}| = \sum_{j=1}^{m} \left[ C\sigma_{j}^{\prime}+C\sigma_{j} \right]
\]
in the objective because $|\theta_{j}| = \sigma^{\prime}_{j}+\sigma_{j}$.
Since $b^{*}\in [b-\epsilon, b+\epsilon]$ by assumption, our $2m$ regularization coefficients are all non-negative.
In the Appendix, we'll see $\sigma_{j}$ is associated with constraint $a^{(j)}z\leq b_{j}+\epsilon_{j}$, the $j^{th}$ upper bound constraint in $P$ (Equation~\ref{eqn:bf_polytope_def}).
$\sigma_{j}$'s regularization coefficient $b_{j}+\epsilon_{j} - b^{*}_{j}$ depends on how poor the upper bound $b_{j}+\epsilon_{j}$ is for the empirical accuracy $b_{j}^{*}$.
Similarly, $\sigma^{\prime}_{j}$ is associated with the constraint $b_{j}-\epsilon_{j}\leq a^{(j)}z$ and its regularization coefficient $b^{*}_{j}-(b_{j}-\epsilon_{j})$ measures how poorly $b_{j}-\epsilon_{j}$ lower bounds $b_{j}^{*}$.

To finish this section, note that for MLR, if the datapoint features number in $d$, i.e.~$x_{i}\in \mathbb{R}^{d}$, one learns $kd$ weights since $W\in \mathbb{R}^{k\times d}$.
For BF, since $\widehat{x}_{i}\in \mathbb{R}^{km}$, there will be $km = k(p+k)$ so called ``datapoint features''.
However, only $m=p+k$ weights will be learned as $T_{\theta}\in \mathbb{R}^{k\times mk}$ is completely characterized by $m$ values.

\subsection{Model and Approximation Uncertainty}
To eventually draw the link between the loss of the BF and Dawid-Skene (DS) prediction, we will need a more granular notion of loss or \textit{uncertainty}.
While the ultimate goal is to infer the labels of $X$, the best one can in general hope for is to infer $\eta$, the label distribution for each datapoint.
The loss one incurs from predicting $\eta$ rather than the true labels is irreducible and called \textit{aleatoric uncertainty}.
On the other hand, the loss in estimating $\eta$ is nominally reducible and called \textit{epistemic uncertainty}, denoted $\mathcal{E}$.

The sources of $\mathcal{E}$ are twofold and depend on the method chosen to estimate $\eta$.
When a method is chosen, the set of predictions for that method is fixed.
For BF as the chosen method, its set of predictions is $\G$.
If $\eta \not\in \G$, then one immediately incurs loss for choosing a method that cannot predict $\eta$.
That quantity is the \textit{model uncertainty} or $\mathcal{E}^{mod}$.
If $d(\cdot, \cdot )$ measures distance between $\mu, \nu \in \Delta_{k}^{n}$, the model uncertainty is $\mathcal{E}^{mod}=\min_{g\in \G}d(\eta, g)$.
That distance is defined later.
If we suppose the best approximator to $\eta$ with respect to $d(\cdot,\cdot)$ is unique, the failure of a method to produce the best approximator $g^{*}:= \argmin_{g\in \G}d(\eta, g)$ for $\eta$ is regarded as \textit{approximation uncertainty} $\mathcal{E}^{appr}$.
Approximation uncertainty can be from poor estimates of rule accuracies/class frequencies or be pathological to the method.
If the model produces prediction $g$, then $\mathcal{E}^{appr} =d(g^{*}, g)$.
To summarize, $\mathcal{E} = \mathcal{E}^{mod} + \mathcal{E}^{appr}$ and for appropriately chosen $d(\cdot,\cdot )$,
\[
    \mathcal{E} = d(\eta, g), \quad \mathcal{E}^{mod} = d(\eta, g^{*}), \quad \mathcal{E}^{appr} = d(g^{*}, g).
\]

For example, a sufficiently deep neural network is a universal approximator, i.e.~$\mathcal{E}^{mod} = 0$ because it can predict any continuous function.
Any epistemic uncertainty would be from the approximation uncertainty, e.g.~from small training sets, optimization difficulties.
See the Appendix and especially \citep{Hullermeier_uncertainty2021} for a more subtle and complete discussion of these concepts.

\subsection[A Pythagorean Theorem for G]{A Pythagorean Theorem for $\G$}
In the previous section, we considered $d(\eta, g^{*}), d(g^{*}, g)$, corresponding to model and approximation uncertainty respectively.
By using KL divergence as our distance, a sufficiently nice $g^{*}$ can simplify the sum of those terms by way of a Pythagorean theorem.
For $\mu, \nu \in \Delta_{k}^{n}$, we define
\[
    d(\mu, \nu) = \sum_{i=1}^{n} KL(\mu_{i}, \nu_{i}).
\]
If $\mu=\eta$ and $\nu=g$, $d(\eta, g)$ sums the KL divergence of our prediction $g_{i}\in \Delta_{k}$ for point $x_{i}$ against the underlying conditional label distribution $\eta_{i}\in \Delta_{k}$ for each datapoint in $X$.
Now, if $g^{*}$ as a labeling produces the same rule accuracies and class frequencies as $\eta$ (namely $Ag^{*}=A\eta$), then $d(\eta, g^{*}) + d(g^{*}, g)$ equals a single $d(\cdot,\cdot )$ term.

\begin{lemma}\label{lem:pythagorean}
    Pick any $\theta, \theta^{\prime} \in \R^{m}$ and write $g = g^{(\theta)}$, $g^{\prime} = g^{(\theta^{\prime})}$.
    If $g$ satisfies $A g = A \eta$, then
    \[
        d(\eta, g^{\prime}) = d(\eta, g) + d(g, g^{\prime}).
    \]
\end{lemma}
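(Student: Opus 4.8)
The plan is to compute the difference $d(\eta, g') - d(\eta, g)$ directly and show it equals $d(g, g')$, which rearranges into the claimed identity. First I would write both divergences against a common reference: since $d(\eta, g') - d(\eta, g) = \sum_{i=1}^n \sum_{\ell=1}^k \eta_{i\ell} \log(g_{i\ell}/g'_{i\ell})$, the $\eta_{i\ell}\log \eta_{i\ell}$ terms cancel and I am left with a linear functional of $\eta$. The whole proof then amounts to manipulating this single linear functional.

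Next I would exploit the exponential-family form (Equation~\ref{eqn:bf_exponential_form}). Writing $Z_i(\theta) = \sum_{\ell'} \exp(a^{(\theta)}_{i\ell'})$ for the per-datapoint normalizer, we have $\log(g_{i\ell}/g'_{i\ell}) = a^{(\theta)}_{i\ell} - a^{(\theta')}_{i\ell} - \log Z_i(\theta) + \log Z_i(\theta')$. Because $a^{(\theta)}$ is linear in $\theta$, the first two terms combine to $a^{(\theta-\theta')}_{i\ell}$, while the log-partition terms are independent of $\ell$ and factor out of the inner sum (using $\sum_\ell \eta_{i\ell}=1$). This splits the difference into a linear piece $\sum_{i,\ell}\eta_{i\ell}\,a^{(\theta-\theta')}_{i\ell}$ plus a $\theta$-only piece $\sum_i [\log Z_i(\theta') - \log Z_i(\theta)]$.

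The crux is the linear piece, and this is exactly where the moment-matching hypothesis enters. Reading $a^{(\theta-\theta')}$ as the vector $A^\top(\theta-\theta')$, the linear term is the inner product $(A\eta)\cdot(\theta-\theta')$. The hypothesis $Ag = A\eta$ then lets me replace $\eta$ by $g$ without changing the value, giving $(Ag)\cdot(\theta-\theta') = \sum_{i,\ell} g_{i\ell}\,a^{(\theta-\theta')}_{i\ell}$. I would then run the same expansion backward: $\sum_i[\sum_\ell g_{i\ell}\,a^{(\theta-\theta')}_{i\ell} + \log Z_i(\theta') - \log Z_i(\theta)]$ is precisely $\sum_{i,\ell} g_{i\ell}\log(g_{i\ell}/g'_{i\ell}) = d(g,g')$, again using $\sum_\ell g_{i\ell}=1$. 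Combining the two steps yields $d(\eta,g') - d(\eta,g) = d(g,g')$.

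I expect the only real subtlety to be bookkeeping: correctly identifying $\sum_{i,\ell}\eta_{i\ell}\,a^{(\theta-\theta')}_{i\ell}$ with $(A\eta)\cdot(\theta-\theta')$ through the length-$nk$ vectorization of $\eta$ and the row structure of $A$ (Equation~\ref{eqn:A_matrix_def}), since this is the single step where the structure of $\G$ and the constraint $Ag=A\eta$ actually interact. Everything else is the standard telescoping used to prove the Pythagorean theorem for exponential families, and notably no convexity or optimality of $g$ is needed — only that $g, g' \in \G$ and that $g$ matches the $A$-moments of $\eta$.
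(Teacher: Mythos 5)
Your proposal is correct and follows essentially the same route as the paper's proof: expand the difference $d(\eta,g')-d(\eta,g)$ using the exponential-family form, observe that the only $\eta$-dependent piece is the linear functional $\eta\cdot a^{(\theta-\theta')}=(A\eta)\cdot(\theta-\theta')$, swap $\eta$ for $g$ via the moment-matching hypothesis, and reverse the expansion to recover $d(g,g')$. The only cosmetic difference is that you merge $a^{(\theta)}-a^{(\theta')}$ into $a^{(\theta-\theta')}$ by linearity, whereas the paper handles the two terms separately; the content is identical.
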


\subsection{BF's Consistency and its Rate of Convergence}
The Pythagorean theorem just shown allows us to decompose the BF prediction's (or learner's best play's) loss $d(\eta,g^{bf})$ into model and approximation uncertainty.
We will see that the best approximator to $\eta$ in $\G$ with respect to KL divergence, i.e.~the minimizer to $d(\eta, \cdot )$, is unique.
Hence, we can define $g^{*}= g^{(\theta^{*})}:= \argmin_{g\in \G}d(\eta,g)$.
We can show that $Ag^{*} = A\eta$ (see Appendix) and use Lemma~\ref{lem:pythagorean} to get the following.
\begin{lemma}~\label{lem:bf_loss_decomp}
    \[
        \underbrace{d(\eta, g^{bf})}_{\mathcal{E}_{bf}} = \underbrace{d(\eta, g^{*})}_{\mathcal{E}^{mod}_{bf}} + \underbrace{d(g^{*}, g^{bf})}_{\mathcal{E}^{appr}_{bf}}.
    \]
\end{lemma}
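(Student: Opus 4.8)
The plan is to read off this decomposition as a direct instance of the Pythagorean theorem, Lemma~\ref{lem:pythagorean}, by choosing the base point to be $g^{*}$ and the second point to be $g^{bf}$. Concretely, I would instantiate Lemma~\ref{lem:pythagorean} with $g = g^{*}$ and $g^{\prime} = g^{bf}$; its conclusion $d(\eta, g^{\prime}) = d(\eta, g) + d(g, g^{\prime})$ then reads exactly $d(\eta, g^{bf}) = d(\eta, g^{*}) + d(g^{*}, g^{bf})$, and matching the three terms to $\mathcal{E}_{bf}$, $\mathcal{E}^{mod}_{bf}$, and $\mathcal{E}^{appr}_{bf}$ finishes the claim. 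The substitution is forced in one direction: since in general $b \neq b^{*}$, the BF play $g^{bf}$ need not satisfy $A g^{bf} = A\eta$, so $g^{bf}$ cannot serve as the base point of the lemma and it is essential that $g^{*}$ plays that role.

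To invoke Lemma~\ref{lem:pythagorean} legitimately I need three facts. First, both $g^{*}$ and $g^{bf}$ must lie in $\G$. For $g^{bf}$ this is Theorem~\ref{thm:BF_dual} together with Section~\ref{subsec:bf_expo_family}, where $g^{bf} = g^{(\theta^{bf})}$ with $\theta^{bf} = \sigma^{\prime} - \sigma$; for $g^{*}$ it is definitional, since $g^{*} = g^{(\theta^{*})} := \argmin_{g \in \G} d(\eta, g)$ is a member of $\G$ by construction once the minimizer is known to exist. Second, I must verify the hypothesis $A g^{*} = A\eta$ so that $g^{*}$ is an admissible base point. Third, I need the minimizer $g^{*}$ to be unique, so that the approximation term $d(g^{*}, g^{bf})$ is a well-defined object; this is the uniqueness assertion already flagged in the text preceding the lemma.

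The real content, and the step I expect to be the main obstacle, is the moment-matching identity $A g^{*} = A\eta$ (the fact the excerpt defers to the Appendix). The idea is that, up to an additive constant independent of $\theta$, the map $\theta \mapsto d(\eta, g^{(\theta)})$ is the negative log-likelihood of the exponential family $\G$ whose sufficient statistics are the rows of $A$; explicitly it equals $-\,\theta^{\top}(A\eta) + \sum_{i} \log \bigl( \sum_{\ell} \exp(a^{(\theta)}_{i\ell}) \bigr)$ plus the constant $\sum_{i,\ell} \eta_{i\ell} \log \eta_{i\ell}$. This is convex in $\theta$ (an affine term plus a sum of log-partition functions), and its gradient is $\nabla_{\theta}\, d(\eta, g^{(\theta)}) = A\, g^{(\theta)} - A\eta$, the difference between the model and observed expectations of the sufficient statistics. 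Setting the gradient to zero at the optimum $\theta^{*}$ yields precisely $A g^{*} = A\eta$, and convexity simultaneously certifies that any stationary point is a global minimizer and that the minimizing distribution $g^{*}$ is unique, discharging the third ingredient. I would additionally check that the $\argmin$ is attained so that $\theta^{*}$ exists. With $A g^{*} = A\eta$ established and both $g^{*}, g^{bf} \in \G$ confirmed, the proof collapses to the single application of Lemma~\ref{lem:pythagorean} described above.
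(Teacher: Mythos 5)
Your proof is correct and its final step is identical to the paper's: both apply Lemma~\ref{lem:pythagorean} with $g=g^{*}$ as the base point and $g^{\prime}=g^{bf}$, and both correctly observe that the roles cannot be swapped since $g^{bf}$ need not moment-match $\eta$. Where you genuinely diverge is in how the hypothesis $Ag^{*}=A\eta$ is discharged. The paper gets it from Theorem~\ref{thm:bf_best_approx}: the solution of $V(b^{*},\vec{0}_{m})$ is feasible for the polytope with $\epsilon=\vec{0}_{m}$, hence satisfies $Az=b^{*}=A\eta$ by construction, and a separate argument (the sensitivity bound plus a Jensen's-inequality uniqueness step) identifies that solution with $g^{*}$. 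You instead read $\theta\mapsto d(\eta,g^{(\theta)})$ as the exponential-family negative log-likelihood $-\theta^{\top}(A\eta)+\sum_{i}\log Z_{i}^{(\theta)}$ plus a constant, compute $\nabla_{\theta}d(\eta,g^{(\theta)})=Ag^{(\theta)}-A\eta$, and obtain moment matching from first-order optimality. This is a cleaner and more standard route that avoids routing through the primal convex program entirely; what the paper's route buys in exchange is the additional conclusion that $g^{*}$ is actually computable as a BF solution (needed elsewhere, e.g.\ for the experiments), which your argument does not provide. Two small caveats on your version: you correctly flag that attainment of the $\argmin$ at finite $\theta^{*}$ must be checked before setting the gradient to zero (the infimum could in principle be approached only as $\|\theta\|\to\infty$); and convexity in $\theta$ alone does not give uniqueness of the minimizing \emph{distribution}, because $\theta\mapsto g^{(\theta)}$ is not injective (shifting all class-frequency weights by a constant leaves $g^{(\theta)}$ unchanged). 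The fix for the second point is already in your hands: any two minimizers are stationary, hence both satisfy $Ag=A\eta$, and applying Lemma~\ref{lem:pythagorean} between them forces their KL divergence to vanish — which is essentially the paper's own uniqueness argument.
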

In the following, we upper-bound $\mathcal{E}^{appr}_{bf}$ by a linear function of $\epsilon$, the widths of the interval containing $b^{*}$.
Moreover, we will see that $\mathcal{E}^{appr}_{bf}\rightarrow 0$ as $\epsilon \rightarrow \vec{0}_m$, meaning the loss of BF's prediction $g^{bf}$ tends to the smallest possible loss, i.e.~the model uncertainty.
If a method's approximation uncertainty can be brought down to $0$ for every problem, we call it \textit{consistent}.
Therefore, we conclude BF is consistent.

\begin{thm}\label{thm:bf_best_approx}
    The $g\in \mathcal{G}$ minimizing $d(\eta, g)$ is unique.
    Call it $g^{*}$.
    When $\epsilon = \vec{0}_{m}$, the learner's prediction gotten from solving $V(b^{*}, \vec{0}_{m})$ is exactly $g^{*}$.
\end{thm}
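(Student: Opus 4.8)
The plan is to collapse both assertions onto a single concave program in $\theta$ and then read uniqueness off the Pythagorean identity of Lemma~\ref{lem:pythagorean}. First I would expand the objective. Writing $H(\eta) = -\sum_{i,\ell}\eta_{i\ell}\log\eta_{i\ell}$ and using $\log g^{(\theta)}_{i\ell} = a^{(\theta)}_{i\ell} - \log\sum_{\ell'}\exp(a^{(\theta)}_{i\ell'})$ together with $\sum_\ell \eta_{i\ell}=1$ and $\eta\cdot a^{(\theta)} = (A\eta)\cdot\theta = b^{*}\cdot\theta$, I get
\[
    d(\eta, g^{(\theta)}) = -H(\eta) - \Big[\, b^{*}\cdot\theta - \sum_{i=1}^n \log\sum_{\ell=1}^k \exp(a^{(\theta)}_{i\ell})\,\Big].
\]
The bracketed term is exactly the dual objective of Theorem~\ref{thm:BF_dual} at $(b,\epsilon)=(b^{*},\vec{0}_{m})$, under the identification $\theta=\sigma'-\sigma$. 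Hence minimizing $d(\eta,\cdot)$ over $\G$ is the very same concave maximization the learner solves to obtain $g^{bf}$ when $\epsilon=\vec{0}_{m}$. This single observation ties the two claims together, so the bulk of the work is the uniqueness statement.

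Next I would record the first-order condition. A short computation gives $\nabla_\theta\, d(\eta,g^{(\theta)}) = A g^{(\theta)} - A\eta$, so every stationary point satisfies the moment-matching identity $A g^{(\theta)} = A\eta$; since the objective is convex in $\theta$ (log-sum-exp composed with the linear map $\theta\mapsto A^{\top}\theta$, minus a linear term), each stationary point is a global minimizer. Granting that a minimizer $g = g^{(\theta^{*})}$ exists, uniqueness is then immediate from Lemma~\ref{lem:pythagorean}: because $Ag = A\eta$, for every other $g' = g^{(\theta')}\in\G$ we have $d(\eta, g') = d(\eta, g) + d(g, g')$, and $d(g,g') = \sum_i KL(g_i,g'_i) \ge 0$ with equality only when $g'=g$. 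Thus $d(\eta,g') > d(\eta,g)$ for all $g'\neq g$, so the minimizer is unique \emph{as a distribution}, even though its natural parameter $\theta^{*}$ need not be, owing to the non-identifiability of the exponential family.

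The identification claim then falls straight out of the first display. By Theorem~\ref{thm:BF_dual} the learner's play at $(b^{*},\vec{0}_{m})$ is $g^{(\theta^{bf})}$ with $\theta^{bf}$ maximizing the bracketed dual objective, which equals $-d(\eta, g^{(\theta)})-H(\eta)$; since $H(\eta)$ is constant in $\theta$, the maximizer of the dual is precisely the minimizer of $d(\eta,\cdot)$, and by the uniqueness just established $g^{bf}=g^{*}$.

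I expect the main obstacle to be the existence of the minimizer \emph{inside} the open family $\G$, i.e.\ that the supremum of the concave dual is attained at a finite $\theta^{*}$ rather than escaping to the boundary of $\Delta_k^n$. I would handle this by invoking Theorem~\ref{thm:minimax_game}, which supplies a primal optimizer $z^{*}\in P$ with $z^{*}=g^{bf}$: since $\eta$ is itself feasible for the equality-constrained max-entropy problem at $\epsilon=\vec{0}_{m}$, the constraint $Az=b^{*}$ is consistent, and the standard maximum-entropy argument places the optimizer in the relative interior, yielding a finite $\theta^{*}$ together with the moment-matching identity $Ag^{*}=A\eta$ used above. Ruling out boundary optima — for instance degenerate class frequencies that force some coordinate of $g$ to zero — is the one step that genuinely requires care.
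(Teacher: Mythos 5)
Your proof is correct and reaches the same two conclusions, but the route to the ``$g^{bf}$ at $\epsilon=\vec{0}_m$ is a minimizer'' half is genuinely different from the paper's. The paper gets this from its global sensitivity-analysis bound (Theorem~\ref{app_thm:bf_bound}, proved via perturbation of a reference convex program), specialized to $\epsilon=\vec{0}_m$: $d(\eta,g^{bf*})\le d(\eta,g)$ for every $g\in\G$. You instead observe directly that
$d(\eta,g^{(\theta)}) = -H(\eta)-\bigl[\,b^{*}\cdot\theta-\sum_i\log\sum_\ell\exp(a^{(\theta)}_{i\ell})\,\bigr]$,
so minimizing $d(\eta,\cdot)$ over $\G$ \emph{is} the dual problem $V(b^{*},\vec{0}_m)$ up to an additive constant; this is exactly the content of the paper's Lemma~\ref{lem:bf_logistic_regression} evaluated at $b=b^{*}$, $\epsilon=\vec{0}_m$ (where the regularization vanishes), and it is arguably cleaner and more self-contained than routing through the sensitivity machinery. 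Your gradient computation $\nabla_\theta d(\eta,g^{(\theta)})=Ag^{(\theta)}-A\eta$ also recovers the moment-matching identity $Ag^{*}=A\eta$ (the paper's Corollary~\ref{app_cor:g_star_marginal}) without needing to argue it from primal feasibility. The uniqueness half is essentially identical to the paper's: both apply Lemma~\ref{lem:pythagorean} at the minimizer and use that $d(g,g')=0$ forces $g=g'$ (the paper spells this out with a Jensen's-inequality equality-condition argument; ``KL is zero iff equal'' is the same fact). One point in your favor: you explicitly flag the attainment/boundary issue — that the dual supremum could escape to infinite $\theta$ when, e.g., some $b^{*}_j\in\{0,1\}$ forces coordinates of every feasible $z$ to zero — which the paper glosses over entirely; your proposed resolution via Slater/strong duality for the affine-constrained max-entropy problem is the right one in the non-degenerate case, though ``the standard maximum-entropy argument places the optimizer in the relative interior'' does require the non-degeneracy you mention and is not automatic.
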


In bounding $\mathcal{E}_{bf}^{appr}$, we have the rate of convergence for $g^{bf} \rightarrow g^{*}$ as $\epsilon \rightarrow \vec{0}_{m}$ in terms of $\epsilon$ and $g^{*}$'s weights $\theta^{*}$.
\begin{thm}\label{thm:basic_bound}
    For fixed $g^{*}$ and $|\cdot |$ acting element-wise,
    \[
        d(g^{( \theta^{*} )}, g^{bf})  \leq 2\epsilon^{\top}|\theta^{*}| \leq 2\|\epsilon\|_\infty\|\theta^{*}\|_{1} = O(\|\epsilon\|_\infty)
    \]
\end{thm}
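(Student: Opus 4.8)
The plan is to reduce the statement to the excess cross-entropy incurred by the $\epsilon$-dependent regularization in the logistic-regression reformulation of BF. Writing $L(\theta) = -\eta^\top \log g^{(\theta)}$ for the cross-entropy, the first observation is that $d(\eta, g^{(\theta)}) = L(\theta) - H(\eta)$, where $H(\eta) = -\eta^\top\log\eta$ does not depend on $\theta$. Hence, by the loss decomposition in Lemma~\ref{lem:bf_loss_decomp}, the approximation uncertainty satisfies
\[
    d(g^{(\theta^*)}, g^{bf}) = d(\eta, g^{bf}) - d(\eta, g^*) = L(\theta^{bf}) - L(\theta^*),
\]
so it suffices to bound the cross-entropy gap between the regularized optimum $\theta^{bf}$ and the unregularized optimum $\theta^*$ (which exists, is fixed by $\eta$ and $A$, and is independent of $\epsilon$ by Theorem~\ref{thm:bf_best_approx}).

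Next I would invoke Lemma~\ref{lem:bf_logistic_regression}, which shows that $\theta^{bf}$ minimizes $L(\theta) + R_\epsilon(\theta)$ over $\theta\in\R^m$, where $R_\epsilon(\theta) = (b^* - (b-\epsilon))\cdot\sigma' + (b+\epsilon - b^*)\cdot\sigma$ with $\sigma', \sigma$ the positive and negative parts of $\theta$. Comparing the objective at $\theta^{bf}$ with its value at the feasible point $\theta^*$ gives
\[
    L(\theta^{bf}) + R_\epsilon(\theta^{bf}) \leq L(\theta^*) + R_\epsilon(\theta^*).
\]
Since every coefficient of $R_\epsilon$ is non-negative (because $b^*\in[b-\epsilon, b+\epsilon]$), we have $R_\epsilon(\theta^{bf})\geq 0$, and therefore $L(\theta^{bf}) - L(\theta^*)\leq R_\epsilon(\theta^*)$.

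It then remains to bound $R_\epsilon(\theta^*)$. Because $b - \epsilon \leq b^* \leq b + \epsilon$ element-wise, both coefficient vectors lie below $2\epsilon$: indeed $b^* - (b-\epsilon) = (b^*-b) + \epsilon \leq 2\epsilon$ and $(b+\epsilon) - b^* \leq 2\epsilon$. Writing $\sigma^{*\prime}, \sigma^*$ for the positive and negative parts of $\theta^*$, so that $\sigma^{*\prime} + \sigma^* = |\theta^*|$ element-wise, I obtain
\[
    R_\epsilon(\theta^*) \leq 2\epsilon\cdot\sigma^{*\prime} + 2\epsilon\cdot\sigma^* = 2\epsilon^\top|\theta^*|.
\]
Chaining the three bounds yields $d(g^{(\theta^*)}, g^{bf}) \leq 2\epsilon^\top|\theta^*|$, and a final application of H\"older's inequality gives $2\epsilon^\top|\theta^*| \leq 2\|\epsilon\|_\infty\|\theta^*\|_1$; since $\|\theta^*\|_1$ is a fixed constant, the bound is $O(\|\epsilon\|_\infty)$.

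I expect the main difficulty to be bookkeeping rather than conceptual: matching sign conventions so that $R_\epsilon(\theta^*)$ is evaluated with the positive/negative parts of $\theta^*$, and confirming $R_\epsilon(\theta^{bf}) \geq 0$ from the non-negativity of the coefficients guaranteed by $b^*\in[b-\epsilon,b+\epsilon]$. The one genuinely load-bearing step is the identity $d(g^*, g^{bf}) = L(\theta^{bf}) - L(\theta^*)$, which hinges on $g^*$ being the global cross-entropy minimizer; this is precisely what Lemma~\ref{lem:bf_loss_decomp} supplies (through $Ag^*=A\eta$ and the Pythagorean relation of Lemma~\ref{lem:pythagorean}), so no extra argument is needed there.
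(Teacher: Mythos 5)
Your proof is correct, but it takes a genuinely different route from the paper's. The paper proves this bound (Corollary~\ref{app_cor:bf_bound_to_uncertainties}, via Theorem~\ref{app_thm:bf_bound}) by a primal sensitivity-analysis argument: it constructs a ``reference'' max-entropy program whose optimum is $g^{*}$ with explicit KKT multipliers built from $\theta^{*}$, invokes the global perturbation inequality of Boyd--Vandenberghe (Section 5.6.2) to compare the perturbed program $V(b,\epsilon)$ against the reference, and then simplifies the resulting multiplier terms. You instead work entirely in the dual, using the regularized-logistic-regression reformulation of Lemma~\ref{lem:bf_logistic_regression}: since $\theta^{bf}$ minimizes $L(\theta)+R_\epsilon(\theta)$, a two-point comparison at $\theta^{bf}$ and $\theta^{*}$ together with $R_\epsilon(\theta^{bf})\geq 0$ gives $L(\theta^{bf})-L(\theta^{*})\leq R_\epsilon(\theta^{*})\leq 2\epsilon^{\top}|\theta^{*}|$, and Lemma~\ref{lem:bf_loss_decomp} converts the cross-entropy gap into $d(g^{*},g^{bf})$. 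The two arguments exploit the same underlying duality, but yours is shorter and more elementary, replacing the KKT verification of the reference program and the external sensitivity theorem with a standard regularized-M-estimator comparison; the cost is that it leans on Lemma~\ref{lem:bf_logistic_regression} as a black box (which the paper proves independently, so nothing circular occurs). Note also that your argument, applied at an arbitrary $\theta^{ref}$ rather than $\theta^{*}$, recovers the paper's more general intermediate bound $d(\eta,g^{bf})\leq d(\eta,g^{ref})+2\epsilon^{\top}|\theta^{ref}|$, so nothing is lost in generality either.
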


\section{A Probabilistic Approach}

\begin{figure}[t]
\centering
\begin{tikzpicture}
\node[circle,draw=black,minimum size=35pt] (Y) at (0,1) {$y$};
\node[very thick, circle,draw=black,minimum size=35pt] (h1) at (-3,-1) {$h^{(1)}(x)$};
\node[very thick, circle,draw=black,minimum size=35pt] (h2) at (-1,-1) {$h^{(2)}(x)$};
\node[very thick, circle,draw=black,minimum size=35pt] (hp) at (3,-1) {$h^{(p)}(x)$};

\node (ld) at (1, -1) {$\ldots$};

\draw [-Latex] (Y) edge (h1);
\draw [-Latex] (Y) edge (h2);
\draw [-Latex] (Y) edge (ld);
\draw [-Latex] (Y) edge (hp);
\end{tikzpicture}
\caption{Dawid-Skene Graphical Model}\label{fig:dawid_skene}
\end{figure}

To estimate the probability of labels, the DS model posits an underlying generative model.
By fitting the parameters of this model, one is able to obtain estimates of label probabilities for every datapoint.
The model assumed is simple: for a datapoint and its label $(x,y) \in \X \times \Y$, rule predictions $h^{(1)}(x), \ldots, h^{(p)}(x)$ are conditionally independent given label $y$ (see Figure~\ref{fig:dawid_skene}).

The parameters of the DS model are the underlying class frequencies $\Pr( y=\ell )$ and $p$ confusion matrices.
Each rule $h^{(j)}$ is parameterized by its underlying row stochastic confusion matrix $B_{j} = b_{j\ell \ell^{\prime}}\in[0,1]^{k \times k}$ where
\begin{equation*}
    b_{j\ell\ell^{\prime}}=\Pr(h^{(j)}(x) = \ell^{\prime}\mid y=\ell).
\end{equation*}
Since the rule predictions are conditionally independent given the label, the true posterior label probability is
\begin{multline}\label{eqn:ds_prediction_bayes_theorem}
    \Pr( y\mid h^{(j)}(x), j\in [p] ) = \frac{\Pr\left( y, h^{(j)}(x), j\in [p] \right)}{\Pr\left( h^{(j)}(x), j\in [p] \right)}
    \\= \frac{\Pr\left( y \right)\prod_{j=1}^{p} \Pr\left( h^{(j)}(x)\mid y \right)}{\sum_{\ell =1}^{k} \Pr\left( y=\ell  \right)\prod_{j=1}^{p} \Pr\left( h^{(j)}(x)\mid y =\ell  \right)}.
\end{multline}
Note that when anything other than the underlying probabilities are substituted in the bottom expression, the last equality does not hold.
In practice, one estimates the confusion matrices and class frequencies, e.g.~by EM\@, and plugs them in the right hand side in lieu of their empirical counterparts.

We present results involving the one-coin DS model (OCDS), where each rule is a biased coin.
While $b_{j}$ is an estimate of $b^{*}_{j}$ in other parts of the paper, take $b_{j}$ to mean $\Pr( h^{(j)}(x) = y )$ in this section only.
I.e.~$b_{j}$ is the underlying bias (resp.~accuracy) of coin (resp.~rule) $j$.
Diagonal elements of the confusion matrix $b_{j\ell \ell}$ equal $b_{j}$, while all off diagonal elements $b_{j\ell \ell^{\prime}}$ equal $(1-b_{j})/(k-1)$.

While OCDS is one of the simplest generative models, it is representative.
The more complex probabilistic models have similar characteristics in terms of consistency.
The comparison of those more complex probabilistic models to BF is taken up in the experiments.

\section{Relation and Comparison to Adversarial Approach}

\subsection{DS Prediction in Same Exponential Family}
If $g^{ds}$ is a OCDS prediction, then Equation~\ref{eqn:ds_prediction_bayes_theorem} gives its form:
\[
    g^{ds}_{i\ell } \propto w_{\ell }\prod_{j=1}^{p} b_{j}^{\textbf{1}(h^{(j)}(x_{i}) = y_{i})}\left(\frac{1-b_{j}}{k-1}\right)^{\textbf{1}(h^{(j)}(x_{i}) \not\in \{y_{i},\text{?}\} )}.
\]
This is the result of running OCDS' E step from its EM algorithm with class frequencies $w\in \Delta_{k}$ and rule accuracies $b\in [0,1]^{m}$.
Note that if a rule $j$ abstains on $x_{i}$, i.e.~$h^{(j)}(x_{i}) = \text{?} $, then it makes no contribution to the prediction on $x_{i}$.
Now, we can exhibit weights $\theta^{ds}$ such that $g^{(\theta^{ds})} = g^{ds}$ from above.
This means that $g^{ds}\in \mathcal{G}$.
The next fact was observed by~\cite{li2014error} in Corollary 9.
\begin{lemma}[OCDS Weights]~\label{lem:ocds_weights}
    Suppose $0<w_{\ell }, b_{j}<1$ for $\ell \in [k], j\in [p]$ and $g^{ds}$ is the OCDS prediction as above.
    If $\theta^{ds}\in \mathbb{R}^{m}$ is defined as
    \[
        \theta^{ds}_{j} = \log\left(e^{n_{j}}\frac{b_{j}(k-1)}{1-b_{j}}\right) \quad \text{and} \quad \theta^{ds}_{p+\ell }=\log(e^{n}w_{\ell })
    \]
    for $j\in [p]$ and $\ell \in [k]$, then $g^{(\theta^{ds})} = g^{ds}$.
\end{lemma}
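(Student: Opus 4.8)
The statement is an explicit ``plug-in'' identity, so the plan is to exhibit the claimed weights by comparing the two unnormalized log-densities. Both $g^{ds}_i$ and $g^{(\theta)}_i$ are probability vectors on $[k]$ obtained by normalizing a strictly positive expression, so $g^{(\theta^{ds})}_i = g^{ds}_i$ holds as soon as $\log g^{ds}_{i\ell}$ and $\log g^{(\theta^{ds})}_{i\ell}$ agree up to an additive term that does not depend on $\ell$; the two normalizers are then forced to coincide. Thus it suffices to compute $\log g^{ds}_{i\ell}$ from the displayed OCDS form and $\log g^{(\theta^{ds})}_{i\ell} = a^{(\theta^{ds})}_{i\ell} - \log Z_i$ from the exponential-family form in Equation~\ref{eqn:bf_exponential_form}, and to match their $\ell$-dependent parts.

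First I would simplify $\log g^{ds}_{i\ell}$. For a fixed datapoint $x_i$ and rule $j$ there are three cases: if $h^{(j)}$ abstains, both indicators vanish and the rule contributes nothing; if $h^{(j)}$ predicts some class, then a non-abstaining rule is either correct or incorrect, so I can rewrite the paired contribution $\mathbf{1}(h^{(j)}(x_i)=\ell)\log b_j + \mathbf{1}(h^{(j)}(x_i)\notin\{\ell,\text{?}\})\log\frac{1-b_j}{k-1}$ as the $\ell$-independent constant $\log\frac{1-b_j}{k-1}$ plus $\mathbf{1}(h^{(j)}(x_i)=\ell)\log\frac{b_j(k-1)}{1-b_j}$. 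Collecting over all rules gives
\[
  \log g^{ds}_{i\ell} = \log w_\ell + \sum_{j=1}^{p}\mathbf{1}(h^{(j)}(x_i)=\ell)\log\frac{b_j(k-1)}{1-b_j} + C_i,
\]
where $C_i$ gathers every $\ell$-independent piece (the per-rule constants and the common abstention factors).

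Next I would recall the features from Equation~\ref{eqn:A_matrix_def}: for datapoint $i$ one has $a^{(j)}_{i\ell} = \mathbf{1}(h^{(j)}(x_i)=\ell)/n_j$ for the $p$ rule rows and $a^{(p+\ell')}_{i\ell} = \mathbf{1}(\ell=\ell')/n$ for the $k$ class-frequency rows, so $a^{(\theta)}_{i\ell} = \sum_{j=1}^{p}(\theta_j/n_j)\,\mathbf{1}(h^{(j)}(x_i)=\ell) + \theta_{p+\ell}/n$. The $\ell$-dependent basis functions here are exactly the rule-agreement indicators $\mathbf{1}(h^{(j)}(x_i)=\ell)$ and the class indicators that also appear in the reduced DS expression, so I would match coefficients term by term: the coefficient of $\mathbf{1}(h^{(j)}(x_i)=\ell)$ forces $\theta_j/n_j = \log\frac{b_j(k-1)}{1-b_j}$, and the pure-$\ell$ term forces $\theta_{p+\ell}/n = \log w_\ell$. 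Solving these reads off $\theta^{ds}$, with the $n_j$ and $n$ factors appearing precisely to cancel the $1/n_j$ and $1/n$ normalizations built into the rows of $A$. Every remaining discrepancy is $\ell$-independent and is swallowed by the softmax normalizer, yielding $g^{(\theta^{ds})} = g^{ds}$ and hence $g^{ds}\in\G$. The only real obstacle is careful bookkeeping: one must confirm that the abstention factors and the per-rule constants $\log\frac{1-b_j}{k-1}$ are genuinely independent of $\ell$ before discarding them, and must track the $1/n_j$, $1/n$ scalings so the compensating factors in $\theta^{ds}$ come out correctly. The hypothesis $0<w_\ell,b_j<1$ is exactly what keeps every logarithm finite and every probability strictly positive, so the entrywise matching is well defined.
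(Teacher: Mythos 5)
Your strategy is exactly the paper's: reduce $\log g^{ds}_{i\ell}$ modulo $\ell$-independent terms to $\log w_\ell + \sum_{j:h^{(j)}(x_i)=\ell}\log\frac{b_j(k-1)}{1-b_j}$, expand $a^{(\theta)}_{i\ell}=\sum_j(\theta_j/n_j)\mathbf{1}(h^{(j)}(x_i)=\ell)+\theta_{p+\ell}/n$ from the rows of $A$, and match coefficients; the case analysis for abstentions and the observation that the common factor $\prod_{j}\frac{1-b_j}{k-1}$ over all non-abstaining rules cancels in the softmax are both right. The one place you should slow down is the final ``reads off $\theta^{ds}$'' step: the matching equations $\theta_j/n_j=\log\frac{b_j(k-1)}{1-b_j}$ and $\theta_{p+\ell}/n=\log w_\ell$ give $\theta_j=n_j\log\bigl(\frac{b_j(k-1)}{1-b_j}\bigr)=\log\bigl(\bigl(\frac{b_j(k-1)}{1-b_j}\bigr)^{n_j}\bigr)$ and $\theta_{p+\ell}=n\log w_\ell$, which is \emph{not} the displayed $\log\bigl(e^{n_j}\frac{b_j(k-1)}{1-b_j}\bigr)=n_j+\log\frac{b_j(k-1)}{1-b_j}$; with the displayed weights one gets $\theta_j/n_j=1+\frac{1}{n_j}\log\frac{b_j(k-1)}{1-b_j}$, whose $\ell$-dependent count $|\{j:h^{(j)}(x_i)=\ell\}|$ and residual $1/n_j$ scaling do not reproduce $g^{ds}$. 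This discrepancy is inherited from the statement itself (the paper's own proof makes the same silent identification, and its concrete example later uses $n\log(\cdot)$, confirming the exponent was intended on the ratio rather than on $e$), so your derivation is the mathematically correct one; you should simply have flagged that the weights you obtain are $n_j\log(\cdot)$ rather than asserting they coincide with the formula as printed.
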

Since $\mathcal{G}$ is essentially parameterized by all real weights, we have to avoid weights that are infinite -- hence the restriction on $w, b$.
Therefore, $\mathcal{G}$ doesn't contain all OCDS predictions.
\begin{lemma}[Informal]\label{lem:all_g_are_DS}
    $\G$ contains all one-coin DS predictions constructed by class frequencies $w_{\ell }$ and rule accuracies $b_{j}$ each not equal to $0$ or $1$.
\end{lemma}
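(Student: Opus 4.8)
The plan is to read Lemma~\ref{lem:all_g_are_DS} as the ``for every'' restatement of the constructive Lemma~\ref{lem:ocds_weights}: the latter hands us, for a single OCDS prediction, an explicit finite weight vector $\theta^{ds}\in\mathbb{R}^{m}$ with $g^{(\theta^{ds})}=g^{ds}$, and I would observe that this construction is uniform over all admissible parameters. Concretely, I would fix an arbitrary one-coin DS prediction $g^{ds}$ built from class frequencies $w_{\ell}\in(0,1)$ and accuracies $b_{j}\in(0,1)$, form the weights $\theta^{ds}$ prescribed by Lemma~\ref{lem:ocds_weights}, note that every coordinate is finite precisely because none of the $w_{\ell},b_{j}$ hits $0$ or $1$, and conclude $g^{ds}=g^{(\theta^{ds})}\in\G$. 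Since $g^{ds}$ was arbitrary, $\G$ contains every such prediction.

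To justify the one nontrivial ingredient -- that the prescribed $\theta^{ds}$ really reproduces $g^{ds}$ -- I would verify the identity at the level of log-probabilities. Starting from the Bayes/E-step form $g^{ds}_{i\ell}\propto w_{\ell}\prod_{j} b_{j}^{\mathbf{1}(h^{(j)}(x_{i})=\ell)}\big((1-b_{j})/(k-1)\big)^{\mathbf{1}(h^{(j)}(x_{i})\notin\{\ell,\text{?}\})}$, I would take logarithms and rewrite the disagreement indicator via $\mathbf{1}(h^{(j)}(x_{i})\notin\{\ell,\text{?}\})=\mathbf{1}(h^{(j)}(x_{i})\neq\text{?})-\mathbf{1}(h^{(j)}(x_{i})=\ell)$. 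The factor $\mathbf{1}(h^{(j)}(x_{i})\neq\text{?})$, together with every other term not depending on $\ell$, is constant across the $k$ classes for a fixed point $i$, so it is annihilated by the softmax normalization and may be dropped. What survives is a sum of $\mathbf{1}(h^{(j)}(x_{i})=\ell)$ terms plus a $\log w_{\ell}$ term, which is exactly the shape of the exponential-family exponent $a^{(\theta)}_{i\ell}$ in Equation~\ref{eqn:bf_exponential_form}; matching coefficients against the rule rows $a^{(j)}=h^{(j)}/n_{j}$ and the class rows $\vec{e}^{\,n}_{\ell}/n$ of $A$ then pins down $\theta^{ds}$.

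The main obstacle -- and the reason the statement must be hedged to $(0,1)$ rather than $[0,1]$ -- is the boundary behavior, which is exactly why the lemma is flagged ``Informal.'' When some $b_{j}$ or $w_{\ell}$ approaches $0$ or $1$, the corresponding logarithm diverges and the would-be weight leaves $\mathbb{R}$; the associated OCDS prediction then sits on the boundary of $\Delta_{k}^{n}$ (a hard $0$ or $1$ entry) and is a limit of members of $\G$ without belonging to $\G$, since $\G$ is parameterized by genuinely finite $\theta$. Handling this is purely a matter of recording the finiteness hypothesis: the real content is the explicit, uniform weight map of Lemma~\ref{lem:ocds_weights}, and the only care needed is to keep that map inside $\mathbb{R}^{m}$.
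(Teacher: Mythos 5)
Your proposal is correct and follows essentially the same route as the paper: the appendix proves this via Lemma~\ref{app_lem:one_coin_ds_weights}, which exhibits exactly the weight map of Lemma~\ref{lem:ocds_weights} and verifies it by taking logarithms of the E-step form, splitting the disagreement product into a per-class agreement term plus a class-independent term that the softmax normalization annihilates, and matching the surviving coefficients against the rows of $A$ — precisely your indicator identity $\mathbf{1}(h^{(j)}(x_{i})\notin\{\ell,\text{?}\})=\mathbf{1}(h^{(j)}(x_{i})\neq\text{?})-\mathbf{1}(h^{(j)}(x_{i})=\ell)$. Your remark on the boundary cases also matches the paper's reason for restricting to parameters in $(0,1)$.
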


\subsection{Comparing BF with DS}
Suppose we took an arbitrary OCDS prediction $g^{ds}$.
This could be from EM after convergence, a single E step given estimates of the accuracies/class frequencies, etc.
To continue our discussion, we need to know what OCDS estimates the rule accuracies and class frequencies to be.
For $g^{ds}$, performing the M step (essentially $Ag^{ds}$) gives those quantities.
Call those estimates $b^{ds} = (b^{ds}_{1},\ldots, b^{ds}_{p})$ and $w^{ds} = (w^{ds}_{1},\ldots,w^{ds}_{k})$ respectively.
Also, recall that the rule accuracy/class frequency estimates given to BF are $b$, $w$ respectively while $b^{*}$, $w^{*}$ denote the empirical values.

To compare BF to OCDS, we will also decompose OCDS' epistemic uncertainty into model and approximation uncertainty.
Let $g^{ds*}$ be the OCDS prediction formed by using the empirical rule accuracies and class frequencies (i.e.~do one E step with $b^{*}$, $w^{*}$).
While $\G$ doesn't contain all OCDS predictions, it's sufficiently large so that BF and OCDS have the same model uncertainty.
\begin{lemma}~\label{lem:ds_loss_decomp}
    \[
        \underbrace{d(\eta, g^{ds})}_{\mathcal{E}_{ds}} = \underbrace{d(\eta, g^{*})}_{\mathcal{E}^{mod}_{ds}} + \underbrace{d(g^{*}, g^{ds})}_{\mathcal{E}^{appr}_{ds}}
    \]
    \[
        \mathcal{E}^{appr}_{ds} = \underbrace{d(\eta, g^{ds*}) - d(\eta, g^{*})}_{\mathcal{E}^{appr}_{ds,1}} + \underbrace{d(\eta, g^{ds}) - d(\eta, g^{ds*})}_{\mathcal{E}^{appr}_{ds,2}}
    \]
\end{lemma}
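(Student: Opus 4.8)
The plan is to derive the first identity as a direct instance of the Pythagorean theorem (Lemma~\ref{lem:pythagorean}) and then obtain the second as a telescoping rearrangement around the intermediate labeling $g^{ds*}$. The two ingredients I would invoke are already in place: $g^{ds}$ lies in $\G$ (Lemma~\ref{lem:all_g_are_DS}), and the KL-projection $g^{*} = g^{(\theta^{*})}$ satisfies $A g^{*} = A\eta$ (established just before Lemma~\ref{lem:bf_loss_decomp}).

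For the first identity, I would first record that both $g^{*} = g^{(\theta^{*})}$ and $g^{ds} = g^{(\theta^{ds})}$ belong to $\G$, the latter by Lemma~\ref{lem:all_g_are_DS} under the standing non-degeneracy assumption that the class frequencies and rule accuracies defining $g^{ds}$ avoid the boundary values $0$ and $1$ (cf.\ Lemma~\ref{lem:ocds_weights}). Since $g^{*}$ satisfies $A g^{*} = A\eta$, I apply Lemma~\ref{lem:pythagorean} with $g = g^{*}$ playing the role of the moment-matching point and $g' = g^{ds}$, which yields
\[
    d(\eta, g^{ds}) = d(\eta, g^{*}) + d(g^{*}, g^{ds}).
\]
This is exactly the claimed split into $\mathcal{E}^{mod}_{ds} = d(\eta, g^{*})$ and $\mathcal{E}^{appr}_{ds} = d(g^{*}, g^{ds})$. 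Crucially, the $g^{*}$ appearing here is the same unique minimizer of $d(\eta, \cdot)$ over $\G$ that appears in the BF decomposition (Lemma~\ref{lem:bf_loss_decomp}); since $g^{*}$ depends only on $\G$ and $\eta$, not on either estimation procedure, BF and OCDS share an identical model-uncertainty term.

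For the second identity, I would add and subtract the loss $d(\eta, g^{ds*})$ of the empirical-parameter prediction. The proposed right-hand side
\[
    \bigl(d(\eta, g^{ds*}) - d(\eta, g^{*})\bigr) + \bigl(d(\eta, g^{ds}) - d(\eta, g^{ds*})\bigr)
\]
telescopes: the $d(\eta, g^{ds*})$ terms cancel, leaving $d(\eta, g^{ds}) - d(\eta, g^{*})$, which by the first identity equals $d(g^{*}, g^{ds}) = \mathcal{E}^{appr}_{ds}$. The only point needing care is that each displayed term be finite, which holds because $g^{ds*}$, like $g^{ds}$, lies in $\G$ (Lemma~\ref{lem:all_g_are_DS}) and hence has strictly positive coordinates, keeping all KL divergences against $\eta$ finite.

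I do not anticipate a genuine analytic obstacle here: with $g^{ds} \in \G$ and $A g^{*} = A\eta$ supplied by earlier results, the first identity is a one-line application of Lemma~\ref{lem:pythagorean} and the second is pure algebra. What merits a sentence of interpretation, rather than proof, is the meaning of the finer split: $\mathcal{E}^{appr}_{ds,1} = d(\eta, g^{ds*}) - d(\eta, g^{*}) \ge 0$ is the loss intrinsic to OCDS even given perfect empirical estimates, whereas $\mathcal{E}^{appr}_{ds,2} = d(\eta, g^{ds}) - d(\eta, g^{ds*})$ (of either sign) isolates the additional loss due to imperfect estimates of the accuracies and class frequencies.
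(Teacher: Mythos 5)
Your proposal is correct and follows essentially the same route as the paper: the first identity is the Pythagorean theorem (Lemma~\ref{lem:pythagorean}) applied with $g = g^{*}$ (using $Ag^{*} = A\eta$) and $g' = g^{ds} \in \G$, and the second is the add-and-subtract telescoping around $d(\eta, g^{ds*})$. The only difference is that the paper's appendix also covers degenerate OCDS predictions (parameters equal to $0$ or $1$, hence outside $\G$) by a limiting argument showing the best approximator over the closure has the same loss as $g^{*}$, whereas you restrict to the non-degenerate case by assumption; that restriction is harmless here since the stated quantity $d(g^{*}, g^{ds})$ is only finite in that case anyway.
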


The first equality follows from our Pythagorean theorem (Lemma~\ref{lem:bf_loss_decomp}).
One easily sees by substitution that the second equality is true.
By definition, $\mathcal{E}^{appr}_{ds,1}\geq 0$.
Moreover, it only depends on $A, \eta$ (because $b^{*}$, $w^{*}$ are from $A \eta$).
Thus, as soon as the rules predictions and true label distribution are fixed (i.e.~$A$ and $\eta$ are fixed), $\mathcal{E}_{ds,1}^{appr}$ is fixed.
$\mathcal{E}_{ds,2}^{appr}$ essentially measures how well $b^{ds}, w^{ds}$ estimate $b^{*}, w^{*}$.
While possibly negative, it tends to $0$ as $b^{ds} \rightarrow b^{*}$ and $w^{ds}\rightarrow w^{*}$.
(Its exact form in terms of those quantities is shown in the Appendix.)
In the experiments, we show how the individual contributions of $\mathcal{E}_{ds, 1}^{appr}$ and $\mathcal{E}_{ds, 2}^{appr}$ can vary for real datasets.

Before comparing BF and OCDS' predictions, we first discuss OCDS' consistency.
For OCDS to be consistent, we need to be able to bring $\mathcal{E}^{appr}_{ds}$ down to $0$ for every problem.
To be concrete, we cannot talk about OCDS' consistency in a vacuum.
The OCDS generative assumption only defines the functional form of the posterior label probability (in terms of the underlying class frequencies/rule accuracies, Equation~\ref{eqn:ds_prediction_bayes_theorem}).
In other words, the OCDS generative assumption only defines the model uncertainty.
To get a prediction, one has to estimate those underlying quantities, e.g.~by running EM\@.
Thus, we have to talk about the consistency of OCDS \textit{alongside} an algorithm.
In this paper, we focus our attention on the consistency of OCDS paired with EM\@.

OCDS with EM is not consistent because we exhibit a problem ($A, \eta$) in the Appendix where EM never converges to $g^{*}$.
Note that since $g^{*}$ is unique (Theorem~\ref{thm:bf_best_approx}), we only have to check convergence at that point.
Essentially, if EM starts at $g^{*}$ for that problem, then an M step followed by an E step results in $g^{ds*} \neq g^{*}$.
This is because applying the M step to $g^{*}$ gives $b^{*}, w^{*}$ -- we show in the appendix that $Ag^{*} = A \eta = b^{*}$.
The E step with those quantities is $g^{ds*}$ by definition.

\begin{lemma}~\label{lem:ds_not_consistent}
    OCDS with EM is inconsistent because there exists a problem where EM doesn't converge at $g^{*}$.
\end{lemma}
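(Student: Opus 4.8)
The plan is to disprove consistency by exhibiting a single explicit instance $(A, \eta)$ and showing that the target $g^{*}$ is not even a fixed point of the EM operator, so that no run of OCDS-with-EM can converge to it. The key reduction sharpens what must be checked. By Theorem~\ref{thm:bf_best_approx} the KL-best approximator $g^{*}$ is unique, and by first-order optimality (the gradient of the convex map $\theta \mapsto d(\eta, g^{(\theta)})$ vanishes exactly when $A g^{(\theta)} = A\eta$) it is the \emph{unique} member of $\G$ satisfying $A g^{*} = A\eta$. Since the OCDS prediction $g^{ds*}$ also lies in $\G$ (Lemma~\ref{lem:ocds_weights}), we get the clean equivalence $g^{ds*} = g^{*}$ if and only if $A g^{ds*} = A\eta$. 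Hence it suffices to build a problem on which \emph{one E step, fed the empirical accuracies and frequencies $b^{*}, w^{*}$, fails to reproduce them}, i.e.\ $A g^{ds*} \neq A\eta$.

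First I would record the EM operator at $g^{*}$. The M step applied to a labeling $g$ returns exactly the induced quantities $A g$ (this is the remark that the M step is essentially $A g^{ds}$). Because $A g^{*} = A\eta$, the M step at $g^{*}$ returns precisely $(b^{*}, w^{*})$, and the subsequent E step returns $g^{ds*}$ by definition. Thus the composed map sends $g^{*} \mapsto g^{ds*}$, and $g^{*}$ is a fixed point of EM exactly when $g^{ds*} = g^{*}$. This is why, as the uniqueness of $g^{*}$ notes, it is enough to inspect a single point rather than trace the whole trajectory.

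Next I would argue that non-fixedness rules out convergence to $g^{*}$. The EM operator is the linear moment map $g \mapsto A g$ followed by the OCDS posterior formula, which is continuous on the region where every coordinate of the accuracies lies in $(0,1)$ and every class frequency is positive; I would choose $\eta$ generic enough that $A\eta$ is strictly interior (this also legitimizes the weight formula of Lemma~\ref{lem:ocds_weights}). On such an instance, if some EM run had $g^{(t)} \to g^{*}$, continuity would force the limit to be a fixed point. Since $g^{*}$ is not one, EM cannot converge to $g^{*}$ from any start; and as $g^{*}$ is the unique KL-best approximator, the approximation uncertainty $\mathcal{E}^{appr}_{ds} = d(g^{*}, g^{ds})$ of Lemma~\ref{lem:ds_loss_decomp} cannot be driven to $0$ on this problem, which is exactly inconsistency.

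The substantive step, and the main obstacle, is producing concrete numbers witnessing $A g^{ds*} \neq A\eta$. I would take a small binary instance ($k = 2$, a handful of points, two or three non-abstaining rules) and pick $\eta$ so the one-coin generative assumption is misspecified, then simply plug $(b^{*}, w^{*}) = A\eta$ into the OCDS posterior and verify that the resulting accuracies $A g^{ds*}$ differ from $A\eta$. The guiding intuition is that the one-coin posterior applies a nonlinear sharpening that preserves the moment constraints only when the generative model holds; mild misspecification breaks this, so all that remains is the routine arithmetic of displaying one clean witness, which (as the statement indicates) I would relegate to the Appendix.
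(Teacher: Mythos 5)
Your strategy matches the paper's: both reduce the claim to showing that $g^{*}$ is not a fixed point of the EM operator, using the facts that $Ag^{*}=A\eta$ (so the M step at $g^{*}$ returns $(b^{*},w^{*})$ and the E step returns $g^{ds*}$) and that uniqueness of $g^{*}$ means only this one point needs to be inspected. Your additional reduction --- that since $g^{*}$ is the \emph{unique} element of $\G$ with $Ag=A\eta$, it suffices to check $Ag^{ds*}\neq A\eta$ --- is correct and is actually a simplification over the paper, which instead solves for $g^{*}$ explicitly (exhibiting optimal dual weights for a symmetric two-rule, two-class family in which $g^{*}$ recovers the exact conditional label proportions) and then compares it coordinate-wise to $g^{ds*}$. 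Your criterion would let one skip that computation entirely.

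The one substantive gap is that you never exhibit the witness. The paper's proof is, in essence, the concrete instance: $n=22$, two non-abstaining rules, counts $r_{1,11}=r_{2,22}=5$, $r_{1,12}=r_{2,21}=3$, $r_{2,11}=r_{1,22}=2$, $r_{2,12}=r_{1,21}=1$, for which $g^{*}$ predicts $\Pr(y=1\mid h^{(1)}=h^{(2)}=1)=5/7\approx 0.714$ while the E step with $(b^{*},w^{*})$ gives $\approx 0.76$. Your appeal to ``mild misspecification breaks moment preservation'' is the right intuition, but it is an assertion, not a proof; the lemma is an existence claim and stands or falls on producing one such instance. Relatedly, your parenthetical that the gradient of $\theta\mapsto d(\eta,g^{(\theta)})$ vanishes exactly when $Ag^{(\theta)}=A\eta$ is correct for this exponential family, but the uniqueness of the moment-matching element of $\G$ is more directly obtained from the paper's Theorem~\ref{thm:bf_best_approx} together with the Pythagorean identity (Lemma~\ref{lem:pythagorean}), which is how the paper argues it; if you invoke first-order optimality instead, you should also note that the minimizer is attained at a finite $\theta$, which is what Theorem~\ref{thm:bf_best_approx} guarantees. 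The continuity argument you add to rule out convergence from any initialization is a harmless strengthening of the paper's fixed-point formulation.
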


For the BF prediction to be better than OCDS', we need $d(\eta, g^{bf})\leq d(\eta, g^{ds})$.
Using our loss decompositions, we want to know when $\mathcal{E}_{bf}^{mod} + \mathcal{E}_{bf}^{appr}\leq \mathcal{E}^{mod}_{ds} + \mathcal{E}_{ds}^{appr}$.
Since Lemmata~\ref{lem:bf_loss_decomp},~\ref{lem:ds_loss_decomp} show $\mathcal{E}_{bf}^{mod} = \mathcal{E}_{ds}^{mod}$, we present a sufficient condition for $\mathcal{E}_{bf}^{appr}\leq \mathcal{E}_{ds}^{appr}$ to hold.
\begin{lemma}~\label{lem:bf_ds_comp_second}
    Fix OCDS prediction $g^{ds}$.
    If $\epsilon$ (for BF) is s.t.
    \[
        \|\epsilon\|_\infty \leq \frac{d(\eta, g^{ds*}) - d(\eta, g^{*}) + d(\eta, g^{ds}) - d(\eta, g^{ds*})}{2\|\theta^{*}\|_{1}},
    \]
    then $d(\eta, g^{bf})\leq d(\eta, g^{ds})$.
\end{lemma}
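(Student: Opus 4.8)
The plan is to chain together the two loss decompositions (Lemmata~\ref{lem:bf_loss_decomp} and~\ref{lem:ds_loss_decomp}) with the convergence rate bound (Theorem~\ref{thm:basic_bound}). The goal $d(\eta, g^{bf}) \leq d(\eta, g^{ds})$ is equivalent, after cancelling the common model uncertainty term $\mathcal{E}^{mod}_{bf} = d(\eta, g^{*}) = \mathcal{E}^{mod}_{ds}$ (which holds because both decompositions share the same $g^{*}$), to showing the approximation-uncertainty inequality $\mathcal{E}^{appr}_{bf} \leq \mathcal{E}^{appr}_{ds}$. So the whole argument reduces to bounding BF's approximation uncertainty from above and OCDS's from below, then comparing.

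\emph{First} I would invoke Theorem~\ref{thm:basic_bound} to upper-bound the left side: $\mathcal{E}^{appr}_{bf} = d(g^{*}, g^{bf}) \leq 2\|\epsilon\|_\infty \|\theta^{*}\|_{1}$. \emph{Next}, I would read off the right side directly from Lemma~\ref{lem:ds_loss_decomp}, which expresses
\[
    \mathcal{E}^{appr}_{ds} = \big(d(\eta, g^{ds*}) - d(\eta, g^{*})\big) + \big(d(\eta, g^{ds}) - d(\eta, g^{ds*})\big).
\]
\emph{Then} the sufficient condition becomes purely algebraic: it suffices that $2\|\epsilon\|_\infty\|\theta^{*}\|_{1}$ not exceed the expression above for $\mathcal{E}^{appr}_{ds}$. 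Solving this linear inequality for $\|\epsilon\|_\infty$ — dividing through by the positive constant $2\|\theta^{*}\|_{1}$ — yields exactly the stated threshold
\[
    \|\epsilon\|_\infty \leq \frac{d(\eta, g^{ds*}) - d(\eta, g^{*}) + d(\eta, g^{ds}) - d(\eta, g^{ds*})}{2\|\theta^{*}\|_{1}},
\]
so that $\mathcal{E}^{appr}_{bf} \leq 2\|\epsilon\|_\infty\|\theta^{*}\|_{1} \leq \mathcal{E}^{appr}_{ds}$, giving the claim.

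\emph{The main subtlety} is justifying that the two $\mathcal{E}^{mod}$ terms genuinely cancel, i.e.\ that OCDS and BF share the same best approximator $g^{*}$ in the decomposition. This is precisely the content flagged in the prose preceding Lemma~\ref{lem:ds_loss_decomp}: $\G$ is rich enough (Lemma~\ref{lem:all_g_are_DS}) that the OCDS family is contained in it, so the KL-projection $g^{*} = \argmin_{g \in \G} d(\eta, g)$ is the common model-optimal point and $\mathcal{E}^{mod}_{bf} = \mathcal{E}^{mod}_{ds}$. I would also note, as a sanity check, that the numerator is nonnegative whenever the threshold is meaningful: the first bracketed term $\mathcal{E}^{appr}_{ds,1} \geq 0$ by definition, and although $\mathcal{E}^{appr}_{ds,2}$ may be negative, the condition is vacuously satisfiable only when their sum is positive, which is where BF can actually beat OCDS. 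Everything else is a direct substitution, so I expect no genuine obstacle beyond correctly tracking which decomposition supplies which term.
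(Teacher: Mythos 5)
Your proposal is correct and follows essentially the same route as the paper: upper-bound $\mathcal{E}^{appr}_{bf}$ by $2\|\epsilon\|_\infty\|\theta^{*}\|_{1}$ via Theorem~\ref{thm:basic_bound}, identify $\mathcal{E}^{appr}_{ds}$ from Lemma~\ref{lem:ds_loss_decomp}, cancel the shared model uncertainty $d(\eta,g^{*})$, and rearrange. Your added care about why the two model-uncertainty terms coincide (via $\G$ containing the OCDS predictions) matches the justification the paper supplies in its appendix version of the argument.
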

\begin{proof}[Proof Sketch]
    By Theorem~\ref{thm:basic_bound}, $\mathcal{E}_{bf}^{appr}\leq 2\|\epsilon\|_\infty \|\theta^{*}\|_1$.
    By Lemma~\ref{lem:ds_loss_decomp}, $\mathcal{E}_{ds}^{appr} = d(\eta, g^{ds*}) - d(\eta, g^{*}) + d(\eta, g^{ds}) - d(\eta, g^{ds*})$.
    Substitute into $\mathcal{E}_{bf}^{appr}\leq \mathcal{E}_{ds}^{appr}$ and rearrange.
\end{proof}

Because $\mathcal{E}_{ds}^{appr} = d(g^{*}, g^{ds}) \geq 0$, the upper bound is non-negative, showing the existence of a ball that $\epsilon$ has to lie in for BF to have a better prediction than OCDS\@.
We note that this bound is very conservative and is not useful in practice except for very small $\epsilon$.

\section{Experimental Results}

\begin{table*}
    \centering
    \caption{Dataset Statistics}\label{tab:dataset_refs_stats}
    \begin{tabular}{lcrr}
    \toprule
    Name, Dataset Source, Rule Source & \# Rules ($p$) & \# Train ($n$) & \# Valid\\
    \midrule
    AwA       \citep{awa_data}, \citep{adversarial_multi_class_learning_performance_guarantees} & 36 & 1372  & 172  \\
    Basketball \citep{Fu_triplet_basketball}, \citep{Fu_triplet_basketball} & 4  & 17970 & 1064 \\
    Cancer    \citep{misc_breast_cancer_17}, \citep{adversarial_label_learning} & 3  & 171   & 227  \\
    Cardio    \citep{misc_cardiotocography_193}, \citep{adversarial_label_learning} & 3  & 289   & 385 \\
    Domain    \citep{domainnet_data}, \citep{adversarial_multi_class_learning_performance_guarantees} & 5  & 2587  & 323  \\
    IMDB      \citep{imdb_data}, \citep{ren-etal-2020-denoising} & 8  & 20000 & 2500 \\
    OBS      \citep{misc_obs_404}, \citep{adversarial_label_learning} & 3  & 239   & 317  \\
    SMS      \citep{misc_sms_spam_collection_228}, \citep{Awasthi2020Learning} & 73 & 4571  & 500  \\
    Yelp    \citep{yelp_data}, \citep{ren-etal-2020-denoising} & 8  & 30400 & 3800 \\
    Youtube \citep{misc_youtube_spam_collection_380}, \citep{youtube_snorkel} & 10 & 1586  & 120  \\
    \bottomrule
    \end{tabular}
\end{table*}

\begin{table*}
    \begin{center}
    \caption{Comparison of BF Against Other WS Methods Using Average Log Loss}\label{tab:labeled_wrench_log_loss}
        \begin{tabular}{ccccccccccc}
\toprule
            Method & AwA & Basketball & Cancer & Cardio & Domain & IMDB & OBS & SMS & Yelp & Youtube \\
            \midrule
            MV & $0.31$ & $2.40$ & $14.87$ & $0.66$ & $5.48$ & $6.39$ & $8.73$ & $0.79$ & $5.90$ & $1.27$ \\
            OCDS & $0.24$ & $3.75$ & $4.46$ & $13.74$ & $22.32$ & $2.91$ & $6.28$ & $0.78$ & $1.73$ & $17.63$ \\
            DP & $0.42$ & $1.31$ & $6.14$ & $7.01$ & $9.21$ & $0.68$ & $3.98$ & $0.53$ & $2.61$ & $0.72$ \\
            EBCC & \textbf{0.13} & $0.45$ & $4.25$ & $0.90$ & $1.80$ & $0.73$ & $2.23$ & $0.43$ & $0.81$ & $0.69$ \\
            HyperLM & $0.21$ & $1.31$ & $6.93$ & $0.60$ & $1.29$ & $0.62$ & $2.66$ & $0.68$ & \textbf{0.60} & \textbf{0.42} \\
\midrule
            AMCL CC & \textbf{0.14} & $1.26$ & $14.86$ & $0.42$ & $5.42$ & $1.46$ & $8.73$ & $0.69$ & $0.85$ & $0.70$ \\
            BF & \textbf{0.13} & \textbf{0.39} & \textbf{0.68} & \textbf{0.20} & \textbf{1.12} & \textbf{0.59} & \textbf{0.61} & \textbf{0.42} & $0.64$ & $0.50$ \\
\midrule
            $\frac{1}{n} d(\eta, g^{*})$ & $0.01$ & $0.32$ & $0.65$ & $0.13$ & $1.01$ & $0.57$ & $0.59$ & $0.25$ & $0.54$ & $0.21$ \\
            \bottomrule
        \end{tabular}
    \end{center}
\end{table*}
\begin{table*}
    \caption{Comparison of BF Against Other WS Methods Using Average 0-1 Loss and Average Brier Score}\label{tab:labeled_wrench_zero_one_brier_score}
\begin{adjustbox}{width=\textwidth,center}
        \begin{tabular}{ccccccccccccccccccccc}
\toprule
            \multicolumn{1}{c}{Method} & \multicolumn{2}{c}{AwA} & \multicolumn{2}{c}{Basketball} & \multicolumn{2}{c}{Cancer} & \multicolumn{2}{c}{Cardio} & \multicolumn{2}{c}{Domain} & \multicolumn{2}{c}{IMDB} & \multicolumn{2}{c}{OBS} & \multicolumn{2}{c}{SMS} & \multicolumn{2}{c}{Yelp} & \multicolumn{2}{c}{Youtube} \\
             & 0-1 & BS & 0-1 & BS & 0-1 & BS & 0-1 & BS & 0-1 & BS & 0-1 & BS & 0-1 & BS & 0-1 & BS & 0-1 & BS & 0-1 & BS \\
            \midrule
            MV & \textbf{1.31} & $0.15$ & $24.54$ & $0.31$ & $52.05$ & $0.95$ & $34.95$ & $0.35$ & $45.73$ & $0.62$ & $29.40$ & $0.47$ & \textbf{27.62} & $0.54$ & $31.92$ & $0.32$ & \textbf{31.84} & $0.49$ & \textbf{18.79} & \textbf{0.23} \\
            OCDS & $2.11$ & $0.04$ & \textbf{11.29} & \textbf{0.23} & $52.05$ & $1.02$ & $39.79$ & $0.80$ & $80.17$ & $1.60$ & $49.81$ & $0.95$ & \textbf{27.62} & $0.55$ & $9.67$ & \textbf{0.18} & $46.74$ & $0.72$ & $52.40$ & $1.05$ \\
            DP & $3.15$ & $0.06$ & \textbf{11.29} & \textbf{0.23} & $50.88$ & $1.01$ & $39.79$ & $0.80$ & $72.51$ & $1.36$ & $30.48$ & $0.45$ & \textbf{27.62} & $0.55$ & $32.19$ & $0.36$ & $46.78$ & $0.71$ & $34.75$ & $0.40$ \\
            EBCC & \textbf{1.57} & \textbf{0.03} & $36.33$ & $0.29$ & $52.05$ & $1.03$ & $39.79$ & $0.62$ & $48.23$ & $0.74$ & $28.26$ & $0.45$ & \textbf{27.62} & $0.55$ & \textbf{8.16} & $0.25$ & $36.02$ & $0.51$ & $52.40$ & $0.50$ \\
            HyperLM & $2.55$ & $0.10$ & $36.36$ & $0.45$ & $52.05$ & $0.94$ & $7.96$ & $0.31$ & $41.98$ & $0.65$ & \textbf{27.74} & $0.41$ & \textbf{27.62} & $0.45$ & $53.73$ & $0.50$ & $32.92$ & \textbf{0.41} & $20.37$ & $0.26$ \\
\midrule
            AMCL CC & $2.00$ & $0.06$ & $12.14$ & \textbf{0.23} & $49.18$ & $0.93$ & \textbf{3.11} & \textbf{0.06} & \textbf{36.82} & \textbf{0.54} & $31.74$ & $0.46$ & \textbf{27.62} & $0.54$ & $45.04$ & $0.49$ & $37.39$ & $0.48$ & $38.88$ & $0.47$ \\
            BF & $3.67$ & $0.06$ & \textbf{11.40} & \textbf{0.22} & \textbf{40.47} & \textbf{0.49} & \textbf{3.11} & $0.08$ & \textbf{36.75} & \textbf{0.55} & $29.33$ & \textbf{0.41} & \textbf{27.62} & \textbf{0.42} & $13.50$ & $0.25$ & \textbf{34.42} & $0.45$ & $24.34$ & $0.33$ \\
\midrule
            $g^{*}$ & $0.58$ & $0.01$ & $11.27$ & $0.19$ & $36.26$ & $0.46$ & $3.11$ & $0.06$ & $37.26$ & $0.51$ & $28.74$ & $0.38$ & $27.62$ & $0.40$ & $8.09$ & $0.14$ & $26.54$ & $0.36$ & $7.31$ & $0.12$ \\
            \bottomrule
        \end{tabular}
\end{adjustbox}
\end{table*}

\begin{figure}[t]
    \centering
    \includegraphics[width=0.99\linewidth]{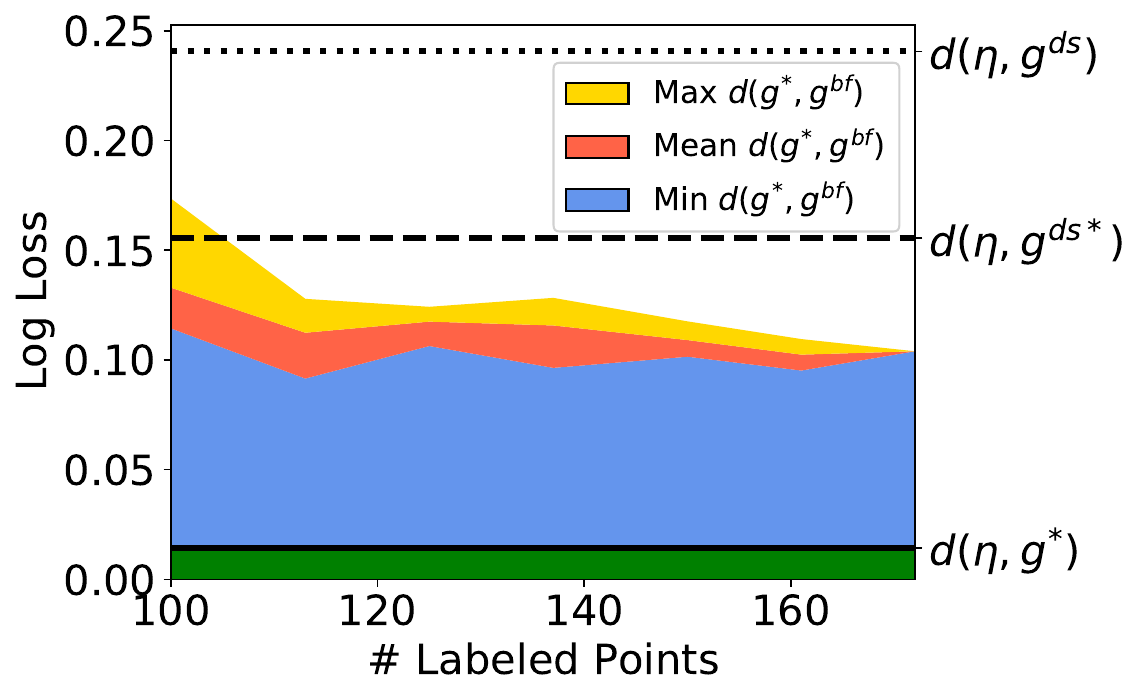}
    \caption{BF/OCDS loss breakdowns on the AwA dataset where $\mathcal{E}^{appr}_{ds,1} = d(\eta, g^{ds*}) - d(\eta, g^{*})$ is large.  The green section (below solid line) is loss incurred by any prediction in $\mathcal{G}$.}~\label{fig:awa_bf_ds_loss}
\end{figure}
\begin{figure}[t]
    \centering
    \includegraphics[width=0.99\linewidth]{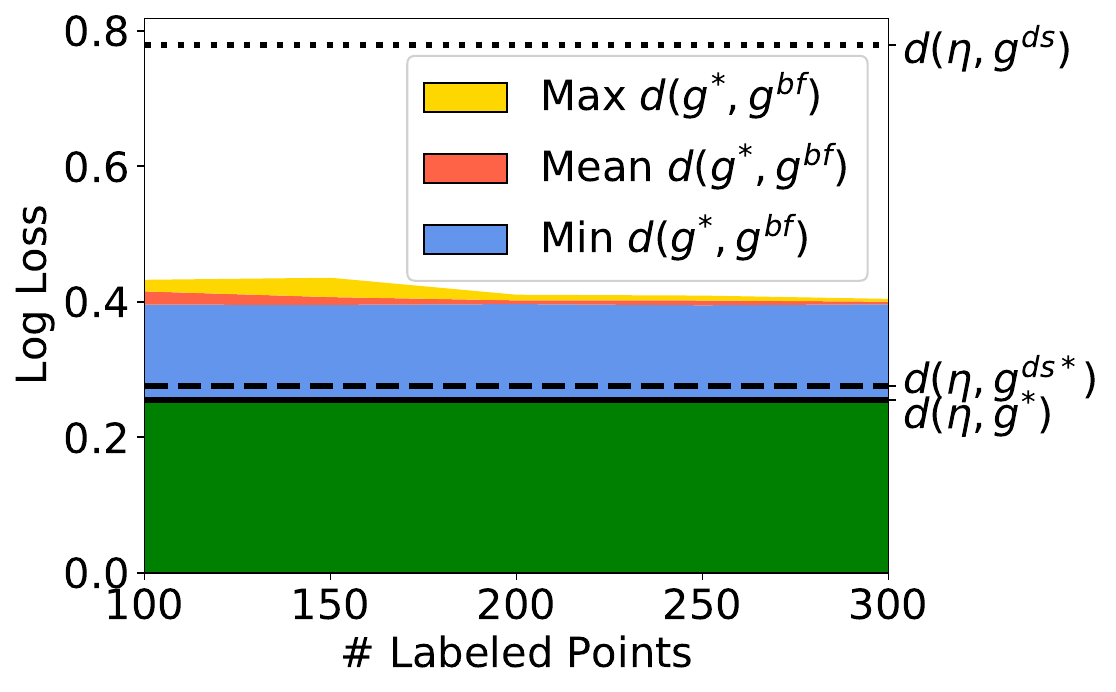}
    \caption{BF/OCDS loss breakdowns on the SMS dataset where $\mathcal{E}^{appr}_{ds,1}$ is small.}~\label{fig:sms_bf_ds_loss}
\end{figure}
\begin{figure}
    \centering
    \includegraphics[width=0.9\linewidth]{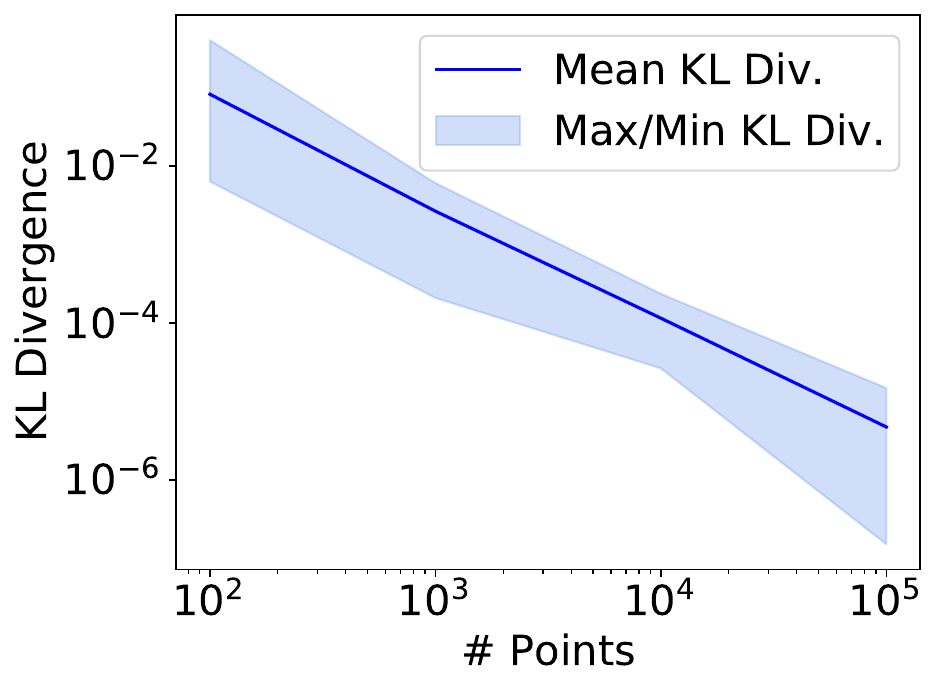}
    \caption{Convergence of $g^{*}$ to $\eta$ on a synthetic dataset as $n$ increases.}~\label{fig:bf_synth_convergence}
\end{figure}

We now provide experimental results comparing BF to one coin DS, but also other SOTA Weak Supervision (WS) methods on 10 real datasets.
Visualizations of BF and OCDS' model and approximation uncertainty are given, along with a synthetic experiment demonstrating BF's consistency.
See the supplementary material or \url{https://github.com/stevenan5/balsubramani-freund-uai-2024} for the code and the Appendix for the complete experimental results.

The other WS methods compared are Majority Vote (MV), Data Programming (DP)~\citep{data_prog}, a popular generative WS method, EBCC~\citep{EBCC}, a Bayesian method, HyperLM~\citep{HyperLM}, a Graph NN method, and AMCL `Convex Combination'~\citep{adversarial_multi_class_learning_performance_guarantees}, another adversarial WS method.
All but BF and AMCL CC are implemented in WRENCH~\citep{zhang2021wrench}.
We consider 10 real datasets from varying domains.
See Table~\ref{tab:dataset_refs_stats} for the dataset/rule sources and some relevant statistics.
Apart from Domain with $k=5$ classes, all other datasets had $k=2$ classes.
Unless provided, validation sets are from random splits with split sizes following the cited authors.
We evaluate how well each method labels the training set.

BF, DP, EBCC, AMCL CC are each run 10 times due to randomness in their methodology.
For BF and AMCL CC, the randomness is from the sampling of 100 labeled datapoints from the validation set.
For BF, the raw estimates of rule accuracies and class frequencies (our $b$) are bounded via the Wilson score interval (following \citet{brown2001}) with confidence $0.95$ to obtain $\epsilon$.
OCDS' EM algorithm is initialized with the majority vote labeling and is run until convergence, while DP and EBCC are run with the default hyperparameters supplied in WRENCH\@.

Table~\ref{tab:labeled_wrench_log_loss} shows the average log loss $\frac{1}{n} d(\eta, g)$ of each method while Table~\ref{tab:labeled_wrench_zero_one_brier_score} shows the average 0-1 loss and average Brier score.
Bold entries are indistinguishable via two-tailed paired t-test with $p=0.05$.
For Table~\ref{tab:labeled_wrench_log_loss}, we see that BF's prediction has low log loss, beating out other methods on most datasets.
Also included is the BF/OCDS model uncertainty $\frac{1}{n} d(\eta, g^{*})$.
We can get $g^{*}$ by taking $\eta$ to be the ground truth labels and solving $V(b^{*},\vec{0}_{m})$ a la Theorem~\ref{thm:bf_best_approx}.
The values for $\frac{1}{n} d(\eta, g^{*})$ being smaller than the values of $\frac{1}{n} d(\eta, g^{bf})$ (the row above), is evidence for BF's consistency (i.e.~BF can give the best estimator $g^{*}$).
For Table~\ref{tab:labeled_wrench_zero_one_brier_score}, we see that the loss for the BF prediction $g^{bf}$ is reasonable with respect to average 0-1 loss and average Brier score despite being chosen to guard against worst case log loss.

Figures~\ref{fig:awa_bf_ds_loss},~\ref{fig:sms_bf_ds_loss} show how OCDS' loss breaks down when we judge how well EM approximates the empirical rule accuracies and class frequencies.
For every quantity of labeled points, BF is run 10 times and the min/mean/max BF approximation uncertainties are plotted.
(Since we can compute $d(\eta, g^{bf})$ and $d(\eta, g^{*})$, we know $\mathcal{E}^{appr}_{bf}$ via Lemma~\ref{lem:bf_loss_decomp}.)
To compute $d(\eta, g^{ds*}) = \mathcal{E}^{appr}_{ds,1} + d(\eta, g^{*})$, we plug in the empirical rule accuracies $b^{*}$/class frequencies $w^{*}$ into the OCDS prediction -- i.e.~do one E step with those quantities.
Note that to maintain consistency with Table~\ref{tab:labeled_wrench_log_loss}, we plot the average loss rather than absolute loss.
E.g.~we write $d(\eta, g^{*})$ when we've actually plotted $\frac{1}{n} d(\eta, g^{*})$.
For AwA, $d(\eta, g^{ds*})$ (dashed line) is essentially the same as the BF loss.
Compare that to SMS, where that same quantity is very close to the model uncertainty $d(\eta, g^{*})$, meaning OCDS' reducible error mainly comes from  $\mathcal{E}^{appr}_{ds,2}$, or poor estimates of the rule accuracies and class frequencies from EM\@.

Finally, we generated 10 binary label datasets with 3 rules and $10^5$ datapoints under the one-coin DS assumption.
The label distribution was drawn from $Dirichlet(1,1)$ while each rule's accuracy was drawn from $Beta(2, 4/3)$.
For each dataset, BF was run on the first $10^2, 10^3, 10^4, 10^5$ datapoints, being given the empirical rule accuracies and class frequencies each time.
We measure $\frac{1}{n} d(\eta, g^{*})$, the average KL divergence between the BF prediction and the underlying generative distribution $\eta$.
This is done to empirically demonstrate our notion of consistency under a simple generative setting.
Indeed, it's easy to show by inspection that $g^{ds*}\rightarrow \eta$ as $n\rightarrow \infty$.
It turns out that $g^{*}\rightarrow \eta$ as $n\rightarrow \infty$ too.
To be clear, we have abused notation -- for each $n$, we can get $g^{ds*}, g^{*}$ via one E step/running BF respectively.
We want to remind the reader that the OCDS model generates a \textit{hard} label for each datapoint.
Thus, the empirical rule accuracies/class frequencies (for fixed $n$) do not match their underlying values.
Since the BF prediction $g^{*}$ will induce the empirical rule accuracies/class frequencies, i.e.~$Ag^{*}=b^{*}$, not equal to the underlying rule accuracies/class frequencies, $g^{*}\neq \eta$ even though $\eta\in \mathcal{G}$.
Figure~\ref{fig:bf_synth_convergence} shows a log-log scale graph of min/average/max of $\frac{1}{n} d(\eta, g^{*})$ versus the number of datapoints.
We see that the average KL divergence to the underlying distribution decreases exponentially fast.

\section{Discussion}

The theoretical and empirical results presented in this paper point toward the viability of adversarial weak supervision methods in general.
Multiple theoretical results are presented in support while experimental results demonstrate real world performance.

First, we show the close relationship of BF with $\ell_{1}$ regularized multi-class logistic regression.

Second, in showing that BF and OCDS have the same model uncertainty, we not only reduce the problem of their comparison to comparing their approximation uncertainties, but we ensured a fair comparison.
I.e.~neither model had an advantage out the gate by being more expressive than the other.
By comparing approximation uncertainties, we deduced the existence of a region of $\epsilon$'s where BF's performance is no worse than OCDS (Lemma~\ref{lem:bf_ds_comp_second}).
Moreover, BF's approximation uncertainty only depends on how well the empirical rule accuracies/class frequencies are estimated (Theorem~\ref{thm:basic_bound}) while OCDS' approximation uncertainty has a term that depends on the problem specification.
This means BF is consistent while OCDS with EM is not (Lemma~\ref{lem:ds_not_consistent}).

Third, for adversarial methods to be viable in practice, it's necessary (but not sufficient) that they be competitive or outperform unsupervised methods when given labeled data.
With just 100 labeled points to estimate the rule accuracies and class frequencies, we observed results for BF that are promising.
A natural open question is whether one can reduce the dependence of a method like BF on labeled data.

We also saw two scenarios involving BF's consistency in the experiments.
First, we provided evidence of Theorem~\ref{thm:bf_best_approx} in the last row of Table~\ref{tab:labeled_wrench_log_loss}.
That theorem shows how to compute $g^{*}$, the best approximator to $\eta$.
As predicted, $d(\eta, g^{*})$ was shown to be smaller than $d(\eta, g^{bf})$ for every dataset.
Not only that, having $g^{*}$ allowed the discussion of BF and OCDS' approximation uncertainty on real datasets (via Lemmas~\ref{lem:bf_loss_decomp},~\ref{lem:ds_loss_decomp}).
Second, we saw how $g^{*}\rightarrow \eta$ as $n\rightarrow \infty$ in a setting with an underlying generative model.
I.e.~BF was not too pessimistic in a favorable scenario.

To conclude, we want to touch on model and approximation uncertainty in other WS methods, especially for the purposes of introducing clarity.
For example, although BF and DP~\citep{data_prog} receive the same rules-of-thumb, DP can have lower model uncertainty because it considers other factors, expanding its expressivity.
Moreover, the interplay between the two types of uncertainty is unclear: \citet{MisspecificationInDP} showed that adding too many factors for DP caused the quality of the resulting prediction to decrease, i.e.~the extra approximation uncertainty overshadowed the drop in model uncertainty.
These notions of uncertainty come into play when one is allowed to add rules-of-thumb (e.g.~\citep{Varma_SNUBA}) or acquire labeled data to better estimate model parameters.
For example, the approximation uncertainty of BF with 100 labeled points on IMDB, OBS (Table~\ref{tab:labeled_wrench_log_loss}) is very low.
Thus, the main source of loss is high model uncertainty, i.e.~$g^{*}$ far from $\eta$, so one shouldn't acquire more labeled data to lower approximation uncertainty.



\begin{acknowledgements} 
We thank the National Science Foundation for support under grant IIS-2211386, Santiago Mazuelas for bringing the connection to logistic regression to our attention, Ver\'{o}nica \'{A}lvarez for helping to curate the datasets used, and the anonymous reviewers for their time and their suggestions, which have undoubtedly improved the paper.
\end{acknowledgements}
\newpage
\bibliography{refs}
\newpage

\onecolumn

\title{Convergence Behavior of an Adversarial Weak Supervision Method\\(Appendix)}
\maketitle
\appendix
\section{Appendix Overview}
Here, we provide the missing proofs along with the remaining experimental results.
We'll recap the Balsubramani-Freund (BF) model (with log loss) and the Dawid-Skene (DS) model (specifically the one-coin variant).

After the model recap, the order of results is as follows.
\begin{itemize}
    \item Section~\ref{app_sec:bf_dual}, statement and proof of Theorems~\ref{thm:minimax_game}~and~\ref{thm:BF_dual}, derivations of the optimal learner play and adversary labeling.
    \item Section~\ref{app_sec:exponential_family_preds}, discussion on the set of BF predictions.
    \item Section~\ref{app_sec:bf_logistic_regression}, statement and proof of Lemma~\ref{lem:bf_logistic_regression}, BF's relationship to logistic regression.
    \item Section~\ref{app_sec:pythagorean}, statement and proof of Lemma~\ref{lem:pythagorean}, a Pythagorean theorem.
    \item Section~\ref{app_sec:model_and_approx_uncertainty}, a discussion of model and approximation uncertainty.
    \item Section~\ref{app_sec:bf_loss_decomp}, statement and proof of Lemma~\ref{lem:bf_loss_decomp}, BF's loss decomposition.
    \item Section~\ref{app_sec:bf_bound}, statement and proof of an error bound for BF, with major steps fleshed out in subsequent subsections. Also the proof of Theorem~\ref{thm:basic_bound}.
    \begin{itemize}
        \item Subsection~\ref{app_subsec:ref_prog}, exhibition of reference BF program, a prerequisite to sensitivity analysis result.
        \item Subsection~\ref{app_subsec:sensitivity_analy}, relevant background and proof of sensitivity analysis result used.
        \item Subsection~\ref{app_subsec:sens_analy_simpl}, simplification of terms in sensitivity analysis result.
    \end{itemize}
    \item Section~\ref{app_sec:bf_best_approx}, statement and proof of Theorem~\ref{thm:bf_best_approx}, BF's consistency. I.e.~the learner's prediction being the best approximator for $\eta$ when $\epsilon=\vec{0}_{m}$.
    \item Section~\ref{app_sec:ds_in_exp_fam}, statement and proof of Lemma~\ref{lem:all_g_are_DS}, showing that the exponential family $\G$ only contains DS predictions which doesn't use parameters that are $0$ or $1$.
    \item Section~\ref{app_sec:gen_error_exp}, statement and proof of Lemma~\ref{lem:ds_loss_decomp}, the DS loss decomposition.
    \item Section~\ref{app_sec:bf_ds_comparison}, statement and proof of Lemma~\ref{lem:bf_ds_comp_second}.
    \item Section~\ref{app_sec:ds_inconsistent}, construction and analysis of a set of problems where DS is in general inconsistent (Lemma~\ref{lem:ds_not_consistent}).
    \item Section~\ref{app_sec:experimental_results}, the complete experimental results.
\end{itemize}

\section{Model Recap}
In this section, we recap the two models we consider.

\subsection{Notation Recap}
Before going forward, we quickly review the notation.
We have $n$ datapoints $X=\{ x_{1},\ldots, x_{n} \}\subset \mathcal{X}^{n}$ with labels in $\mathcal{Y}=[k]$.
There are $p$ rules of thumb in the ensemble, the $j^{th}$ rule is denoted $h^{(j)}\colon X\rightarrow \mathcal{Y}\cup \{\text{?}\} $.
The~? denotes abstention.

In general, we will be dealing with $n$ distributions, each over $k$ elements, denoted by the set $\Delta_{k}^{n}$.
The ground truth $\eta$ is a vector in $\Delta_{k}^{n}$.
$\eta_{i}\in \Delta_{k}$ is a distribution over $k$ elements where
\[
    \eta_{i} = (\eta_{i1},\ldots, \eta_{ik}) \qquad \text{where} \qquad \eta_{i\ell } = \Pr\left( y=\ell \mid x_{i} \right) \qquad \text{so that } \qquad \eta = (\eta_{1},\ldots, \eta_{n}).
\]
We will also consider the empirical distribution $\Pre\left( \cdot  \right)$, which assigns probability mass $1/n$ to every datapoint $x_{i}$.

\subsection{DS Model}~\label{app_subsec:ds_expository}
\begin{figure}[t]
\centering
\begin{tikzpicture}
\node[circle,draw=black,minimum size=35pt] (Y) at (0,1) {$y$};
\node[very thick, circle,draw=black,minimum size=35pt] (h1) at (-3,-1) {$h^{(1)}(x)$};
\node[very thick, circle,draw=black,minimum size=35pt] (h2) at (-1,-1) {$h^{(2)}(x)$};
\node[very thick, circle,draw=black,minimum size=35pt] (hp) at (3,-1) {$h^{(p)}(x)$};

\node (ld) at (1, -1) {$\ldots$};

\draw [-Latex] (Y) edge (h1);
\draw [-Latex] (Y) edge (h2);
\draw [-Latex] (Y) edge (ld);
\draw [-Latex] (Y) edge (hp);
\end{tikzpicture}
\caption{Dawid-Skene Graphical Model}\label{app_fig:dawid_skene}
\end{figure}

\begin{algorithm}[tb]
\caption{One Coin Dawid-Skene EM}\label{app_alg:DSEM}
\SetKwInOut{Input}{Input}
\SetKwInOut{KwResult}{Output}
\SetKw{Continue}{continue}
\Input{Predictions and unlabeled points}
\KwResult{Posterior distributions $g\in \Delta_k^n$}
Initialize $g$ using simple majority vote\;
\While{g has not converged}
    {\tcp{Maximization Step}
    \For{$j\in [p]$}
        {$b_{j}\leftarrow \frac{1}{n_{j}}\sum_{i=1}^{n} \sum_{\ell=1}^{k} g_{i\ell } \textbf{1}(h^{(j)}(x_i) = \ell) $\;}

    \For{$\ell\in [k]$}
        {$w_\ell\leftarrow \sum_{i=1}^n g_{i\ell}/n$\;}
    {\tcp{Expectation Step}
    \For{$i\in[n], \ell\in[k]$}
        {$\widehat{g}_{i\ell} \leftarrow w_\ell\prod_{j=1}^p  (b_{j})^{\textbf{1}(h^{(j)}(x_i) = \ell)} \left( \frac{1-b_{j}}{k-1} \right)^{\textbf{1}(h^{(j)}(x_i) \neq \ell)}$\;}
    \For{$i\in[n], \ell\in[k]$}
        {$g_{i\ell} \leftarrow \widehat{g}_{i\ell }/(\sum_{\ell^{\prime}=1}^{k} \widehat{g}_{i\ell^{\prime}})$\;}
}
}
\Return \textbf{g}\;
\end{algorithm}

To estimate the probability of labels, the DS model posits an underlying generative model.
By fitting the parameters of this model, one is able to obtain estimates of label probabilities for every datapoint.
The model assumed is simple: for a datapoint and its label $(x,y) \in \X \times \Y$, rule predictions $h^{(1)}(x), \ldots, h^{(p)}(x)$ are conditionally independent given label $y$ (see Figure~\ref{fig:dawid_skene}).

The parameters of the DS model are the underlying class frequencies $\Pr( y=\ell )$ and $p$ confusion matrices.
Each rule $h^{(j)}$ is parameterized by a row stochastic confusion matrix $B_{j} = b_{j\ell \ell^{\prime}}\in[0,1]^{k \times k}$ where
\begin{equation*}
    b_{j\ell\ell^{\prime}}=\Pr(h^{(j)}(x) = \ell^{\prime}\mid y=\ell).
\end{equation*}
Since the rule predictions are conditionally independent given the label, the true posterior probability of the label is
\begin{equation}\label{app_eqn:ds_prediction_bayes_theorem}
    \Pr( y\mid h^{(j)}(x), j\in [p] ) = \frac{\Pr\left( y, h^{(j)}(x), j\in [p] \right)}{\Pr\left( h^{(j)}(x), j\in [p] \right)}
    = \frac{\Pr\left( y \right)\prod_{j=1}^{p} \Pr\left( h^{(j)}(x)\mid y \right)}{\sum_{\ell =1}^{k} \Pr\left( y=\ell  \right)\prod_{j=1}^{p} \Pr\left( h^{(j)}(x)\mid y =\ell  \right)}.
\end{equation}
In practice, one estimates the confusion matrix entries and class frequencies, e.g.~by EM, and plugs them in the right hand side in lieu of their empirical counterparts.

For the one-coin DS model (OCDS), each rule is a biased coin.
Whether or not it makes the correct prediction is independent of the class and is the result of a coin flip.
Call the bias (or accuracy) for rule $h^{(j)}$ $b_{j}$.
The confusion matrix can be defined as follows.
\[
    b_{j\ell\ell^{\prime}} = \begin{cases}
        b_{j} \quad \text{if}\quad  \ell=\ell^{\prime}\\
        \frac{1-b_{j}}{k-1} \quad \text{otherwise.}
    \end{cases}
\]
We present the EM algorithm for the OCDS model in Algorithm~\ref{app_alg:DSEM}.
Recall that $n_{j}$ is the number of predictions rule $h^{(j)}$ makes on $X$.

\subsection{BF Model}~\label{app_subsec:bf_expository} 

If we could upper and lower bound the relevant parameters (class frequencies, accuracies) from the DS model, we can shrink the possible labelings for the datapoints the adversary can choose.
For element-wise inequality, we mean the adversary is restricted to
\[
    P = \{ z\in \Delta_{k}^{n}\colon b-\epsilon\leq Az\leq b+\epsilon \}
\]
where $b$ is our estimate for $b^{*}:=A\eta$ and $\epsilon$ so large that $b^{*}_{j}\in [b_{j}-\epsilon_{j}, b_{j}+\epsilon_{j}]$ for each $j$.
While we consider $A$ as defined in the main paper, this could be generalized.
Then, the learner and adversary can play a zero-sum minimax game, where the learner aims to maximize log-likelihood, while the adversary seeks to minimize it.
That is,
\[
    V=\max_{g\in \Delta_{k}^{n}} \min_{z\in P} z\cdot \log g.
\]
Our goal is to upper bound $-V$, and with such a bound we can prove our claimed results.

For completeness, we restate the definition of $A$.
Write each rule's prediction $h^{(j)}(x_i)$ as a vector in $\{0,1\}^k$, as follows:
\[
    h^{(j)}_i =
    \left\{
    \begin{array}{ll}
    \vec{e}_\ell & \mbox{if } h^{(j)}(x_i) = \ell \in [k] \\
    \vec{0}_k    & \mbox{if } h^{(j)}(x_i) = \text{?}
    \end{array}
    \right.
\]
Here $\vec{e}_\ell$ is the $k$-dimensional coordinate vector that is $0$ except for a $1$ at position $\ell$, and $\vec{0}_k$ is the $k$-dimensional zero vector.
We write all of $h^{(j)}$'s predictions in vector form (abusing notation),
\[
    h^{(j)} = (h^{(j)}_1, \ldots, h^{(j)}_n) \in \Delta_{k}^{n}.
\]
Thus $h^{(j)}_{i\ell}$ is $1$ if $h^{(j)}(x_i) = \ell$ and $0$ otherwise.
Let $n_{j}$ denote the number of points (out of $n$ possible) where rule $h^{(j)}$ makes a prediction.

Letting $a^{(j)}$ be the $j^{th}$ row of matrix $A\in \mathbb{R}^{m\times kn}$ where $m=p+k$,
\[
    a^{\left( j \right)}=\begin{cases}
h^{\left( j \right)}/n_{j}\quad \text{when} \quad 1\leq j\leq p\\
        \vec{e}^{\,n}_{j-p}/n \quad \text{when} \quad p+1\leq j\leq p+k =m.
    \end{cases}
\]
$\vec{e}^{\,n}_{\ell }$ is the $\ell^{th}$ canonical basis vector in $k$ dimensions, repeated $n$ times.

\section{Proof of Theorems~\ref{thm:minimax_game} and~\ref{thm:BF_dual}}\label{app_sec:bf_dual}

We prove Theorem~\ref{thm:minimax_game} by finding the dual of the two problems mentioned.
Theorem~\ref{thm:BF_dual} immediately follows from our derivations in service of Theorem~\ref{thm:minimax_game}.

\begin{thm}[Theorem~\ref{thm:minimax_game}]~\label{app_thm:bf_minimax_game} 
    The minimax game in Equation~\ref{eqn:bf_minmax_game} can equivalently written as follows.
    \[
        V \ = \ \max_{g \in \Delta_k^n} \ \min_{z \in P} \ \ z \cdot \log g \ = \ \min_{z \in P} \ \max_{g \in \Delta_k^n} \ \ z \cdot \log g \ = \ \min_{z\in P} z\cdot \log z
    \]
    The first expression defines a learner prediction $g^{bf}$, the second defines an adversarial labeling $z^{*}$.
    Then, $g^{bf}=z^{*}$ and they are the maximum entropy distribution in $P$, the optimal solution to the right-most expression.
\end{thm}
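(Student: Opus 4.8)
The plan is to prove the chain of equalities by working from the right, then identifying the common optimizer. First I would commute the min and the max. The objective $(g,z)\mapsto z\cdot\log g=\sum_{i=1}^n\sum_{\ell=1}^k z_{i\ell}\log g_{i\ell}$ is linear (hence both convex and concave) in $z$ for each fixed $g$, and concave in $g$ for each fixed $z$; moreover both feasible sets are convex and compact, since $P$ is the intersection of a polytope with $\Delta_k^n$ (Equation~\ref{eqn:bf_polytope_def}) and $\Delta_k^n$ is itself compact and convex. This is precisely the setting of Von Neumann's (or Sion's) minimax theorem, which yields
\[
    \max_{g\in\Delta_k^n}\min_{z\in P} z\cdot\log g=\min_{z\in P}\max_{g\in\Delta_k^n} z\cdot\log g,
\]
the first equality.

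Next I would evaluate the inner maximization of the min-max form in closed form. For fixed $z\in P\subset\Delta_k^n$ the objective decouples across datapoints, so it suffices to maximize $\sum_{\ell}z_{i\ell}\log g_{i\ell}$ over $g_i\in\Delta_k$ for each $i$ separately. Writing $\sum_\ell z_{i\ell}\log g_{i\ell}=\sum_\ell z_{i\ell}\log z_{i\ell}-KL(z_i,g_i)$ and using $KL(z_i,g_i)\ge0$ (Gibbs' inequality), the maximum is $\sum_\ell z_{i\ell}\log z_{i\ell}$, attained uniquely on the support of $z_i$ at $g_i=z_i$. Summing over $i$ gives $\max_{g\in\Delta_k^n}z\cdot\log g=z\cdot\log z$, so that
\[
    \min_{z\in P}\max_{g\in\Delta_k^n}z\cdot\log g=\min_{z\in P}z\cdot\log z,
\]
the third equality. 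Because $z\cdot\log z$ is the negative of the total entropy $-\sum_i\sum_\ell z_{i\ell}\log z_{i\ell}$, its minimizer $z^*$ over $P$ is exactly the maximum-entropy distribution in $P$.

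To finish I would show $g^{bf}=z^*$. Let $V$ be the common value established above. By definition $g^{bf}$ attains $\min_{z\in P}z\cdot\log g^{bf}=V$, and $z^*$ attains $\max_{g}z^*\cdot\log g=V$. Hence
\[
    V=\min_{z\in P}z\cdot\log g^{bf}\le z^*\cdot\log g^{bf}\le\max_{g\in\Delta_k^n}z^*\cdot\log g=V,
\]
so $z^*\cdot\log g^{bf}=V$ and $(g^{bf},z^*)$ is a saddle point. In particular $g^{bf}$ attains $\max_g z^*\cdot\log g$, and by the Gibbs characterization from the previous step this maximizer is $g=z^*$; therefore $g^{bf}=z^*$, and both equal the maximum-entropy distribution in $P$.

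The step I expect to be the main obstacle is the careful application of the minimax theorem, because $z\cdot\log g$ is only extended-real-valued: $\log g_{i\ell}\to-\infty$ as $g_{i\ell}\to0$. I would address this by adopting the convention $0\log0=0$ and noting that the inner maximization never drives a coordinate $g_{i\ell}$ to $0$ while $z_{i\ell}>0$, so the relevant effective domain keeps the objective upper-semicontinuous and concave in $g$ and lower-semicontinuous and convex in $z$ --- exactly the hypotheses under which Sion's theorem (and existence of a saddle point) still applies. Alternatively one can prove the identities on the relative interior of $\Delta_k^n$ and pass to the boundary by continuity and compactness.
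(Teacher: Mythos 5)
Your proof is correct, and the first two equalities (Von Neumann/Sion to swap the min and max, then Gibbs' inequality to collapse the inner maximization to $z\cdot\log z$) follow the same route as the paper. Where you genuinely diverge is the final identification $g^{bf}=z^{*}$: the paper establishes this by deriving the Lagrange dual of each of the two optimization problems separately (Lemmas~\ref{app_lem:learn_opt_pred} and~\ref{app_lem:adv_opt_label}) and observing that both yield the same softmax functional form and the same dual program, whereas you give a direct saddle-point sandwich $V=\min_{z}z\cdot\log g^{bf}\le z^{*}\cdot\log g^{bf}\le\max_{g}z^{*}\cdot\log g=V$, concluding that $g^{bf}$ maximizes $g\mapsto z^{*}\cdot\log g$ and hence equals $z^{*}$ by the uniqueness in Gibbs' inequality (which does hold even when $z^{*}_{i}$ has zeros, since mass placed off the support of $z^{*}_{i}$ can always be strictly profitably moved onto it). Your argument is more elementary and self-contained for this theorem; the paper's dual computation is heavier but is not wasted effort, since the explicit dual program and the softmax form of $g^{bf}$ are exactly what Theorem~\ref{thm:BF_dual} and all the later convergence results require, so the paper gets Theorem~\ref{thm:minimax_game} essentially as a byproduct. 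You are also more careful than the paper about the fact that $z\cdot\log g$ is only extended-real-valued near the boundary $g_{i\ell}=0$; your proposed fixes (upper/lower semicontinuity for Sion, or arguing on the relative interior and passing to the limit) are both adequate, and this is a point the paper glosses over when invoking Von Neumann.
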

\begin{proof}
    The first equality is by definition.
For the second equality, Von Neumann's minimax theorem \citep{nikaido1954neumann} allows the commutation of the max and min for the second equality -- the objective is concave in $g$ when $z$ is fixed, convex in $z$ when $g$ fixed, and the respective sets containing $g, z$ are convex and compact.
    The third equality follows from Gibb's inequality: for the prediction/label distribution of datapoint $x_i$, the objective is $\sum_{\ell=1}^{k} z_{i\ell}\log g_{i\ell}$, which is maximized exactly when $g_{i\ell}$ equals $z_{i\ell}$ for all $\ell\in [k]$.

    To show that the learner and adversary's labelings are equal at optimality, we derive their forms in terms of Lagrange multipliers, and show that the dual problems where the Lagrange multipliers come from are one and the same.
    To figure out what $g^{bf}$ is, we find the dual of $\min_{z\in P}z \cdot \log g$ and then perform the maximization over $g$.
    This is done in Lemma~\ref{app_lem:learn_opt_pred}.
    In going from the second to third expression in the claim, we have already performed the inner maximization over $g$.
    It suffices to find the dual of the sum of max entropies problem.
    This is done in Lemma~\ref{app_lem:adv_opt_label}.
    One sees that the functional form and dual problem are the same.
\end{proof}

\begin{lemma}\label{app_lem:learn_opt_pred}
    The learner's optimal prediction $g$ for game
    \[
        V = \max_{g \in \Delta_k^n} \min_{z \in P} z \cdot \log g \qquad \text{is} \qquad g_{i\ell} = \frac{\exp(a_{i\ell}^{(\sigma^{\prime}-\sigma)})}{\sum_{\ell^{\prime}= 1}^{k}\exp(a_{i\ell^{\prime}}^{(\sigma^{\prime}-\sigma)})}.
    \]
    $\sigma, \sigma^{\prime}$ are gotten from optimizing the dual problem
    \[
        V = V(b, \epsilon) \ = \ \max_{\sigma, \sigma' \geq \vec{0}_{m}} \left[(\sigma^{\prime}-\sigma)\cdot b-(\sigma^{\prime} + \sigma)\cdot \epsilon- \sum_{i=1}^n \log \Big( \sum_{\ell=1}^k \exp (a_{i\ell}^{(\sigma'-\sigma)}) \Big)\right].
    \]
\end{lemma}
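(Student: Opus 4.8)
The plan is to fix the learner's play $g \in \Delta_k^n$, evaluate the inner value $\min_{z \in P} z \cdot \log g$ by Lagrangian duality, and only afterward carry out the outer maximization over $g$. The inner problem is a linear program in $z$: the objective $z \cdot \log g$ is linear, and $P$ is cut out by the simplex constraints $z \in \Delta_k^n$ together with the two-sided bound $b - \epsilon \leq Az \leq b + \epsilon$. I would dualize \emph{only} the $A$-constraints, attaching a multiplier $\sigma' \geq \vec{0}_m$ to the lower bounds $Az \geq b - \epsilon$ and $\sigma \geq \vec{0}_m$ to the upper bounds $Az \leq b + \epsilon$, while keeping $z \in \Delta_k^n$ as a hard constraint. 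Since $P$ is nonempty (it contains $\eta$ by assumption) and compact, LP strong duality lets me commute the resulting min and max, giving
\[
    \min_{z \in P} z \cdot \log g = \max_{\sigma, \sigma' \geq \vec{0}_m} \left[ (\sigma' - \sigma) \cdot b - (\sigma + \sigma') \cdot \epsilon + \min_{z \in \Delta_k^n} z \cdot \big(\log g + a^{(\sigma - \sigma')}\big) \right],
\]
where I have used $A^{\top}(\sigma - \sigma') = a^{(\sigma - \sigma')}$ and collected the constant terms.

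Next, the inner minimization of a linear functional over the product of simplices $\Delta_k^n$ decouples across datapoints, and for each $i$ the minimum of $\sum_\ell z_{i\ell}(\log g_{i\ell} + a_{i\ell}^{(\sigma - \sigma')})$ over $z_i \in \Delta_k$ is attained at a vertex, i.e.\ it equals $\min_\ell (\log g_{i\ell} + a_{i\ell}^{(\sigma - \sigma')})$. Because the outer maximization over $g$ and the maximization over $(\sigma, \sigma')$ are both maxima, I can swap their order freely, and it then remains, for each fixed $(\sigma, \sigma')$ and each $i$, to solve the maximin $\max_{g_i \in \Delta_k} \min_\ell (\log g_{i\ell} + a_{i\ell}^{(\sigma-\sigma')})$. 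I would handle this by the standard equalization (water-filling) argument: in epigraph form $\max t$ subject to $\log g_{i\ell} + a_{i\ell}^{(\sigma-\sigma')} \geq t$ and $\sum_\ell g_{i\ell} = 1$, the constraints force $g_{i\ell} \geq \exp(t - a_{i\ell}^{(\sigma-\sigma')})$, so summing yields $t \leq -\log \sum_\ell \exp(a_{i\ell}^{(\sigma'-\sigma)})$, with equality exactly when $g_{i\ell} \propto \exp(a_{i\ell}^{(\sigma'-\sigma)})$. This is precisely the claimed softmax form, and it is automatically strictly positive, so the logarithm is well defined at the optimum.

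Finally, I would substitute this optimal $g$ back to read off the value. Using that $a^{(\sigma'-\sigma)} = -a^{(\sigma-\sigma')}$ entrywise, the two accuracy contributions cancel inside the minimum and each datapoint contributes $-\log\sum_\ell \exp(a_{i\ell}^{(\sigma'-\sigma)})$, which assembles into the stated dual objective for $V(b,\epsilon)$. The step I expect to require the most care is justifying the min--max commutation: I need $P$ nonempty (guaranteed since the construction forces $\eta \in P$) and finiteness of the LP value, after which standard convex/LP duality applies. A secondary subtlety is confirming that the equalization argument gives the \emph{global} optimum of the maximin and that the optimal $g$ lies in the relative interior of each simplex, so that $z \cdot \log g$ stays finite throughout and the softmax form is genuinely the maximizer rather than a boundary artifact.
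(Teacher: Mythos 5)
Your proposal is correct and follows essentially the same route as the paper's proof: partially dualize the inner linear program (keeping $z\in\Delta_k^n$ as a hard constraint, with $\sigma$ on the upper bounds and $\sigma'$ on the lower bounds), reduce the inner minimization to a coordinate-wise minimum over each simplex, commute the two maximizations, and equalize $\log g_{i\ell}+a_{i\ell}^{(\sigma-\sigma')}$ across $\ell$ to obtain the softmax and the log-sum-exp term. The only difference is cosmetic: you settle the per-datapoint maximin via the epigraph formulation and the summation bound $t\leq-\log\sum_\ell\exp(a_{i\ell}^{(\sigma'-\sigma)})$, whereas the paper argues by contradiction that any non-equalized $g_i$ can be improved by redistributing mass — your version is a slightly cleaner way to reach the same conclusion.
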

\begin{proof}
    We find the Lagrange dual of the inner minimization problem first, then do the outer maximization over $g$.
    Let's associate Lagrange multiplier $\sigma$ with $Az\leq b+\epsilon$ and $\sigma^{\prime}$ with $-Az\leq -b+\epsilon$.
    Observe that
    \begin{align*}
        \min_{z\in P} z\cdot \log g
        &=
        \max_{\sigma, \sigma^{\prime}\geq \vec{0}_{m}}\min_{z\in \Delta_{k}^{n}}\left[ z\cdot \log g + \sigma \cdot \left( Az-b-\epsilon \right) + \sigma^{\prime}\cdot \left( -Az+b-\epsilon \right) \right]\\
        &=
        \max_{\sigma, \sigma^{\prime}\geq \vec{0}_{m}}\min_{z\in \Delta_{k}^{n}}\left[ z\cdot (\log g-A^{\top}\sigma^{\prime} + A^{\top}\sigma) + \sigma \cdot \left( -b-\epsilon \right)+ \sigma^{\prime}\cdot \left(b-\epsilon \right)\right]\\
        &=
        \max_{\sigma, \sigma^{\prime}\geq \vec{0}_{m}}\left[ \sum_{i=1}^{n} \min(\log g_{i} - a^{(\sigma^{\prime}-\sigma)}_{i}) + (\sigma^{\prime}-\sigma)^{\top}b - (\sigma^{\prime} +\sigma)^{\top}\epsilon \right]
    \end{align*}
    where the minimum and logarithm functions each act element-wise.
    This means that
    \[
        \max_{g\in \Delta_{k}^{n}}\min_{z\in P} z\cdot \log g =\max_{g\in \Delta_{k}^{n}}\max_{\sigma, \sigma^{\prime}\geq \vec{0}_{m}}\left[ \sum_{i=1}^{n} \min(\log g_{i} - a^{(\sigma^{\prime}-\sigma)}_{i}) + (\sigma^{\prime}-\sigma)^{\top}b - (\sigma^{\prime} +\sigma)^{\top}\epsilon \right].
    \]
    We can commute the maximums, and then consider how to maximize just one of the summands.
    That is sufficient because each of the summands is one out of the $n$ distributions in $g$.
    For simplicity, write $a_{i}^{(\sigma^{\prime}-\sigma)}$ as $a_{i}$.

    We claim that the optimal $g$ is such that $\log g_{i} -a_{i}=M_{i}\textbf{1}_{k}$ for some $M_{i}\in \mathbb{R}$, i.e.~it has the same value each of the $k$ positions.
    We'll show this by contradiction.
    Suppose $g^{\prime}_{i}$ is optimal and is such that $\log g^{\prime}_{i}-a_{i}$ has different values at different positions.
    The goal is to maximize the minimum element of vector $f:=\log g^{\prime}_{i}-a_{i}$.
    We'll show that if not all elements of that vector are equal, then it's possible to increase the minimum.

    Without loss of generality, suppose that $f_{\ell }$, $\ell \in [k]$ is a non-decreasing sequence with the added condition that $f_{1}<f_{2}$.
    The case where the minimum element of $f$ appearing multiple times is an easy generalization of the below argument.
    Define $\xi = 0.5(f_{2}- f_{1})$, which is strictly positive by assumption.
    Now, for $\ell^{\prime}\geq 2$, define $\gamma_{\ell^{\prime}}$ as the strictly positive value such that
    \[
        \log\left( g^{\prime}_{i\ell^{\prime}}-\gamma_{\ell^{\prime}} \right) - a_{i\ell^{\prime}} = f_{2} - \xi = \frac{1}{2} (f_{2}+ f_{1})>f_{1}.
    \]
    If we create a new distribution $g^{\prime\prime}_{i\ell^{\prime} } = g_{i\ell^{\prime}}-\gamma_{\ell^{\prime}}$ (for the same $\ell^{\prime}$ as above), we'll have $\sum_{\ell^{\prime} =2}^{k} \gamma_{\ell^{\prime}}$ extra probability mass.
    Thus, if we give that extra mass to the first position, we have
    \[
        \log \left( g^{\prime}_{i1 } + \sum_{\ell^{\prime}=1}^{k} \gamma_{\ell^{\prime}}  \right) -a_{i1} > \log (g^{\prime}_{i\ell }) - a_{i1} = f_{1}.
    \]
    Fully written out, the new distribution $g^{\prime\prime}_{i}\in \Delta_{k}$ is defined as follows:
    \[
        g^{\prime\prime}_{i\ell } = \begin{cases}
            g^{\prime}_{i\ell} - \gamma_{\ell}\qquad \qquad\ \text{for}\quad \ell\geq 2\\
            g^{\prime}_{i\ell} + \sum_{\ell^{\prime}=1}^{k} \gamma_{\ell^{\prime}} \quad \text{otherwise}.
        \end{cases}
    \]
    From our previous argument, we know that for all $\ell \in [k]$,
    \[
        \log(g^{\prime\prime}_{i\ell }) - a_{i\ell } > f_{1}
    \]
    meaning $g^{\prime}_{i}$ was not the optimal choice of distribution.

    Now, we solve for $M_{i}$.
    For every $\ell \in [k]$,
    \[
        \log g_{i} - a_{i} = M_{i}\vec{1}_{k}\qquad \text{implies} \qquad g_{i\ell } = \exp(M_{i}+a_{i\ell }).
    \]
    Now since $g_{i}$ is a distribution over $k$ elements,
    \[
        \sum_{\ell =1}^{k} g_{i\ell } = \sum_{\ell =1}^{k} \exp(M_{i}+a_{i\ell }) = 1 \quad \Rightarrow \quad M_{i} = -\log\left( \sum_{\ell =1}^{k} \exp(a_{i\ell }) \right)
    \]
    We are able to immediately derive our claims.
    First, lets plug $M_{i}$ back into our expression for $g_{i\ell }$.
    We have
    \[
        g_{i\ell } = \frac{\exp(a^{(\sigma^{\prime}-\sigma)}_{i\ell })}{\sum_{\ell^{\prime}=1}^{k}\exp(a_{i\ell^{\prime}}^{(\sigma^{\prime}-\sigma)}) }.
    \]
    Now, by substituting $M_{i}$ for $\min(\log g_{i} -a_{i})$, the maximization problem is
    \[
        \max_{\sigma, \sigma^{\prime}\geq \vec{0}_{m}}\left[ (\sigma^{\prime}-\sigma)^{\top}b -(\sigma^{\prime}+\sigma)^{\top}\epsilon -\sum_{i=1}^{n} \log\Bigg( \sum_{\ell =1}^{k} \exp(a_{i\ell }^{(\sigma^{\prime}-\sigma)}) \Bigg) \right].
    \]
\end{proof}

\begin{lemma}\label{app_lem:adv_opt_label}
    The adversary's optimal labeling $z$ for the game
    \[
        V = \min_{z \in P} \max_{g \in \Delta_k^n} z \cdot \log g \qquad \text{is} \qquad z_{i\ell} = \frac{\exp(a_{i\ell}^{(\sigma^{\prime}-\sigma)})}{\sum_{\ell^{\prime}}\exp(a_{i\ell^{\prime}}^{(\sigma^{\prime}-\sigma)})}.
    \]
    $\sigma, \sigma^{\prime}$ are gotten from optimizing the dual problem
    \[
        V = V(b, \epsilon) = \max_{\sigma, \sigma' \geq \vec{0}_{m}} \left[(\sigma^{\prime\top}-\sigma)^{\top}b-(\sigma^{\prime} + \sigma)^{\top}\epsilon- \sum_{i=1}^n \log \Big( \sum_{\ell=1}^k \exp (a_{i\ell}^{(\sigma'-\sigma)}) \Big)\right].
    \]
\end{lemma}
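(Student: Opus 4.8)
The plan is to mirror the derivation of Lemma~\ref{app_lem:learn_opt_pred}, but starting from the max-entropy form of the game rather than from the learner's side. First I would dispatch the inner maximization over $g$: by Gibbs' inequality, for each datapoint $i$ the quantity $\sum_{\ell} z_{i\ell}\log g_{i\ell}$ is maximized over $g_i\in\Delta_k$ exactly at $g_i=z_i$, with value $\sum_{\ell} z_{i\ell}\log z_{i\ell}$. Summing over $i$ collapses the game to the convex program
\[
    V = \min_{z \in P} z \cdot \log z,
\]
i.e.\ minimizing negative entropy over the polytope $P$ of coherent labelings. This is exactly the right-most expression in Theorem~\ref{thm:minimax_game}, so the task reduces to dualizing this program and reading off its minimizer.

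Next I would form the Lagrangian dual, associating (as in Lemma~\ref{app_lem:learn_opt_pred}) multiplier $\sigma\geq\vec{0}_m$ with $Az\leq b+\epsilon$ and $\sigma'\geq\vec{0}_m$ with $-Az\leq -b+\epsilon$, while keeping the simplex constraint $z\in\Delta_k^n$ explicit in the inner problem. Collecting the linear-in-$z$ terms via $A^\top(\sigma-\sigma')$ and the shorthand $a^{(\sigma'-\sigma)}$, the inner problem becomes
\[
    \min_{z \in \Delta_k^n} \Big[ z \cdot \log z - z \cdot a^{(\sigma'-\sigma)} \Big] + (\sigma'-\sigma) \cdot b - (\sigma'+\sigma) \cdot \epsilon.
\]
The bracketed minimization separates across datapoints, and each per-datapoint subproblem $\min_{z_i \in \Delta_k} \sum_{\ell}(z_{i\ell}\log z_{i\ell} - z_{i\ell} a_{i\ell}^{(\sigma'-\sigma)})$ is an entropy-regularized linear objective whose stationarity condition (with a multiplier for $\sum_{\ell} z_{i\ell}=1$) forces the Gibbs/softmax form
\[
    z_{i\ell} = \frac{\exp(a_{i\ell}^{(\sigma'-\sigma)})}{\sum_{\ell'} \exp(a_{i\ell'}^{(\sigma'-\sigma)})},
\]
with optimal value $-\log\sum_{\ell}\exp(a_{i\ell}^{(\sigma'-\sigma)})$. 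Substituting these values back yields precisely the dual objective
\[
    V = \max_{\sigma, \sigma' \geq \vec{0}_m} \Big[ (\sigma'-\sigma) \cdot b - (\sigma'+\sigma) \cdot \epsilon - \sum_{i=1}^n \log \big( \sum_{\ell=1}^k \exp(a_{i\ell}^{(\sigma'-\sigma)}) \big) \Big],
\]
which is identical to the dual appearing in Lemma~\ref{app_lem:learn_opt_pred}. Since the adversary's optimal $z$ then has the same softmax form as the learner's optimal $g$ and both arise from the same dual, this simultaneously proves the stated form and the identity $z^*=g^{bf}$ claimed in Theorem~\ref{thm:minimax_game}.

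The step I expect to be the main obstacle is justifying that this Lagrangian dual is tight, i.e.\ that there is no duality gap, so the claimed $z$ is genuinely primal-optimal rather than merely dual-stationary. I would argue this from convex-programming strong duality: the objective $z\cdot\log z$ is convex (with the convention $0\log 0 = 0$) and continuous on $\Delta_k^n$, and all constraints defining $P$ are affine, so feasibility alone---guaranteed because $\eta\in P$ by construction---suffices for strong duality without needing a Slater interior point. A minor accompanying check is that the softmax minimizer is strictly positive and hence interior to the nonnegativity constraints, so the KKT stationarity used to derive its form is valid throughout.
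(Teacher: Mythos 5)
Your proposal is correct and follows essentially the same route as the paper: collapse the inner maximization via Gibbs' inequality to the max-entropy program $\min_{z\in P} z\cdot\log z$, dualize the polytope constraints with $\sigma,\sigma'$, and solve the separable per-datapoint entropy-regularized subproblems to obtain the softmax form and the log-sum-exp dual. The only cosmetic difference is that the paper dualizes the normalization constraints explicitly (via a block matrix $D$ and multipliers $\xi$ that it then eliminates analytically), whereas you keep the simplex constraint inside the inner minimization; your added remark on strong duality for affine constraints is a welcome detail the paper leaves implicit.
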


\begin{proof}
    We have argued in the proof of Theorem~\ref{thm:minimax_game} that the inner maximum to our problem can be dispensed with.
    In this case, $g=z$.
    Thus, we just need to find the Lagrange dual of
    \[
        \min_{z \in P}\ \ z \cdot \log z \quad \text{where} \quad P = \{z \in \Delta_k^n: b - \epsilon \leq Az \leq b + \epsilon \}.
    \]

    Define matrix $D\in \{ 0,1 \}^{n\times nk}$ which is the block identity matrix in $n$ dimensions, except rather than having a scalar in each entry, one has a row vector of length $k$ (that's either all $0$'s or $1$'s).
    Namely, $Dz=\vec{1}_{n}$, the vector of $n$ $1$'s -- $D$ adds up blocks of $k$ elements of the vector $z$, which are defined to be distributions.
    The Lagrangian is then
    \[
        L(z, \sigma, \sigma^{\prime}, \xi) := z\cdot \log z + \sigma \cdot \left( Az-b-\epsilon \right) + \sigma^{\prime}\cdot \left( -Az+b-\epsilon \right)+\xi \cdot ( Dz-\vec{1}_{n} ).
    \]
    $\sigma,\sigma^{\prime}\geq 0$ are for the constraints $Az\leq b+\epsilon$ and $b-\epsilon\leq Az$ respectively, while $\xi\in \mathbb{R}^{n}$ is for the constraint that each block of $k$ elements in $z$ adds to $1$ ($Dz = \vec{1}_{n}$).
    There is no need for a constraint on the non-negativity of $z$ as the functional form of $z$ is always non-negative.

    To find the Langrange dual, we need to minimize over $z$.
    If we gather all the $z$'s together, we get
    \[
        \min_{z\in \mathbb{R}^{nk}}\left[ z\cdot \left(\log z + A^{\top} \sigma - A^{\top} \sigma^{\prime} + D^{\top} \xi \right)+ \sigma \cdot \left(-b-\epsilon \right) + \sigma^{\prime}\cdot \left(b-\epsilon \right)-\xi \cdot \vec{1}_{n}\right].
    \]
    Differentiating with respect to $z$ gives
    \[
        \log z + \vec{1}_{nk} + A^{\top} \sigma - A^{\top} \sigma^{\prime} + D^{\top} \xi.
    \]
    Setting to $\vec{0}_{nk}$ gives the optimal $z^{*}$, which we write with exponential function acting element-wise:
    \[
        z^{*} = \exp(-\vec{1}_{nk} - A^{\top} \sigma + A^{\top} \sigma^{\prime} - D^{\top} \xi )= \exp( -\vec{1}_{nk}+ a^{( \sigma^{\prime} -\sigma)} - D^{\top} \xi )
    \]
    where we used $a^{\left( \theta \right)}=A^{\top}\theta$ after grouping to get the term $A^{\top}( \sigma^{\prime}-\sigma )$.
    Plugging $z^{*}$ in for $z$, we can write the Lagrangian as
    \[
        L(z^{*}, \sigma, \sigma^{\prime}, \xi) =-\exp( -\vec{1}_{nk}+ a^{( \sigma^{\prime} -\sigma)} - D^{\top} \xi ) \cdot \vec{1}_{nk} +\sigma \cdot \left(-b-\epsilon \right) + \sigma^{\prime}\cdot \left(b-\epsilon \right) -\xi \cdot \vec{1}_{n}
    \]
    In the dual program, we will need to maximize over all Lagrange multipliers.
    Indeed, we can maximize over $\xi$ analytically.
    We will solve for $\xi_{1}$, but the argument will work for an arbitrary element of $\xi$.
    First, note that $D^{\top}\xi$ is a vector of length $nk$ such that the first $k$ elements are $\xi_{1}$, the next $k$ elements are $\xi_{2}$, etc.
    Therefore, in the first $k$ elements of the vector of exponential values, $D^{\top}\xi$ simplifies to $\xi_{1}$.
    Also, note that in the way that our notation is defined, $a^{(\theta)}_{1}$ is the length $k$ vector denoting the first $k$ elements of the vector $a^{(\theta)}$.
    Observe then that
    \[
        \frac{\partial }{\partial \xi_{1}} L(z^{*},\sigma, \sigma^{\prime},\xi) = \sum_{\ell=1}^{k}  \exp( -1+ a^{( \sigma^{\prime} -\sigma)}_{1\ell} - \xi_{1} ) - 1.
    \]
    Setting this equal to 0, one can see that
    \[
        \xi_{1}^{*} = -1 + \log\left( \sum_{\ell=1}^{k} \exp\left(a_{1\ell}^{( \sigma^{\prime}-\sigma )} \right) \right).
    \]
    From what we have said, we can plug in $\xi^{*}_{1}$ to get the form of $z^{*}_{1}$ in terms of $\sigma$ and $\sigma^{\prime}$.
    \[
        z^{*}_{1}=\exp\left( -\vec{1}_{k} + a^{( \sigma^{\prime}-\sigma )}_{1} -\xi_{1}\vec{1}_{k}\right) = \frac{\exp\Big(a_{1}^{( \sigma^{\prime}-\sigma )}\Big)}{\sum_{\ell=1}^{k} \exp\left( a_{1\ell}^{( \sigma^{\prime}-\sigma )} \right)}
    \]
    where the division is element-wise.
    This gives us our result on the prediction form of the learner, as we know its prediction matches the adversary's labeling.

    If we now look at the Lagrangian (after plugging in the optimal $\xi$), we see that the first term evaluates to $-n$ because we are summing over $n$ softmaxes, i.e.
    \[
        -\exp( -\vec{1}_{nk}+a^{( \sigma^{\prime}-\sigma )}-D^{\top}\xi )\cdot \vec{1}_{nk} = -\sum_{i=1}^{n} \sum_{\ell=1}^{k} \frac{\exp\Big(a_{i\ell}^{( \sigma^{\prime}-\sigma )}\Big)}{\sum_{\ell^{\prime}=1}^{k} \exp\Big( a_{i\ell^{\prime}}^{( \sigma^{\prime}-\sigma )} \Big)} = -\sum_{i=1}^{n} 1.
    \]
    One easily sees that
    \[
        -\sum_{i=1}^{n} \xi_{i}=-\sum_{i=1}^{n} \left[ -1 +\log\left( \sum_{\ell=1}^{k} \exp\Big(a_{i\ell}^{( \sigma^{\prime}- \sigma )}\Big) \right) \right]=n - \sum_{i=1}^{n} \log\left( \sum_{\ell=1}^{k} \exp\Big(a_{i\ell}^{( \sigma^{\prime}- \sigma )}\Big) \right).
    \]
    This means that our dual function is
    \[
        \left[\sigma \cdot \left(-b-\epsilon \right) + \sigma^{\prime}\cdot \left(b-\epsilon \right) - \sum_{i=1}^{n} \log\left( \sum_{\ell=1}^{k} \exp\Big(a_{i\ell}^{( \sigma^{\prime}- \sigma )}\Big) \right)\right].
    \]
    Maximizing over $\sigma, \sigma^{\prime}$ both being non-negative in every element gives us our claim.
\end{proof}

\section{Exponential Family of BF Predictions}~\label{app_sec:exponential_family_preds}
The matrix $A\in \mathbb{R}^{m\times nk}$ used in the BF program defines a family of conditional probability distributions parameterized by $\theta\in \mathbb{R}^{m}$, a vector of real weights.
We write such a distribution as
\[
    g^{(\theta)}_{i\ell } \propto \exp\bigg( \sum_{j=1}^{m} \theta_{j}a^{(j)}_{i\ell} \bigg) = \exp(a_{i\ell }^{(\theta)}).
\]
The set of these predictions where $\theta$ is allowed to range is
\[
    \G_{bf} := \{ g^{\left( \theta \right)}\colon \theta \in \mathbb{R}^{m} \},
\]
which is an exponential family of distributions.
(Note that $g^{(\theta)}\in \Delta_{k}^{n}$.)
Indeed, the BF predictions from Theorem~\ref{thm:BF_dual} take this form -- just take $\theta = \sigma^{\prime}-\sigma$.

\section{BF and Logistic Regression (Lemma~\ref{lem:bf_logistic_regression})}~\label{app_sec:bf_logistic_regression}
Now, we turn to showing the relationship between BF and logistic regression.

We consider multiclass logistic regression for $k$ classes trained on $X$, the dataset of $n$ points.
Say that each datapoint $x_{i}\in X$ has $d$ dimensions, i.e.~$x_{i}\in \mathbb{R}^{d}$.
Rather than have hard labels for points in $X$, we will allow the labels to be probability distributions, $\eta_{i}\in \Delta_{k}$ specifically.
To estimate $\eta_{i}$, each class will get a score and the softmax of those scores will be taken.
To get class $\ell $'s score, the datapoint features $x_{i}$ will be weighted by vector $w_{\ell }\in \mathbb{R}^{d}$ and summed together via $w_{\ell}^{\top}x_{i}$.
(We're abusing notation here because $w_{\ell }$ will later represent a class frequency).
The logistic regression prediction for datapoint $x_{i}$ and class $\ell $ is
\[
    g^{lr}_{i\ell } = \frac{\exp(w_{\ell }^{\top}x_{i})}{\sum_{\ell^{\prime}=1}^{k} \exp(w_{\ell^{\prime}}^{\top}x_{i})} \quad \text{which estimates} \quad \eta_{i\ell }.
\]
In vector form, the class scores are $Wx_{i}$, where $W$ is a $k\times d$ matrix, row $\ell $ being $w_{\ell}$.
In words, the logistic regression prediction form is the softmax of a weight matrix times feature vector.
Recalling that logistic regression minimizes cross entropy loss, the optimization problem is
\[
    \min_{W\in \mathbb{R}^{k\times d}} \eta^{\top}\log g^{lr}
\]

To relate BF to logistic regression, we need to do two things.
First is to convert the formulation of the BF constraints so that the prediction form matches the logistic regression form.
Second is to show that the dual form of the BF problem can be converted into regularized cross entropy.
At the end of this section we will show how to convert an instance of a multi-class logistic regression problem into a BF problem.

Recall that $A$ is matrix in $\mathbb{R}^{m\times nk}$.
Say $A_{i}$ is the $m\times k$ matrix for datapoint $x_{i}$, consisting of columns $k(i-1) + 1$ to $ki$.
The $k$ scores that BF gives to datapoint $i$ with weights $\theta$ is $A_{i}^{\top}\theta$.
We want to exhibit weight matrix $T_{\theta}$ and datapoint features $\widehat{x}_{i}$ such that $A_{i}^{\top}\theta = T_{\theta}\widehat{x}_{i}$.

The idea is to flatten $A_{i}$ into a vector while inflating $\theta$ into a matrix of the correct size.
$\widehat{x}_{i}$ will take its elements from $A_{i}$.
Specifically $\widehat{x}_{i}\in \mathbb{R}^{mk}$ will be $A_{i}$ flattened in row major order.
Each block of $k$ elements in $\widehat{x}_{i}$ is a row of $A_{i}$.
(We write $a_{i,j}$ to mean the element of $A$ at row $i$, column $j$ -- no relation to $i, j$ in the rest of the appendix.)
\[
    \widehat{x}_{i} = (a_{1, k(i-1) + 1}, a_{1, k(i-1) + 2},\ldots, a_{1, ki},\ldots, a_{m, k(i-1)+1},\ldots, a_{m, ki})^{\top}
\]
For the weights in matrix form, let $I_{k}$ be the identity matrix in $k$ dimensions.
Call our weight matrix $T_{\theta}\in \mathbb{R}^{k\times mk}$ where
\[
    T_{\theta} = (\theta_{1}I_{k}, \theta_{2}I_{k},\ldots, \theta_{m}I_{k}).
\]

\begin{lemma}~\label{app_lem:bf_pred_lr_form}
    For any datapoint index $i$ and fixed weight vector $\theta$, define $\widehat{x}_{i}$ and $T_{\theta}$ as above.
    \[
        A_{i}^{\top}\theta = T_{\theta}\widehat{x}_{i}.
    \]
    This means that the BF prediction can be written as the softmax of a weight matrix times feature vector, just like in logistic regression.
\end{lemma}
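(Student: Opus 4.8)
The plan is to verify the identity $A_i^\top \theta = T_\theta \widehat{x}_i$ by a direct componentwise comparison, exploiting the block structure that has been deliberately engineered into both $\widehat{x}_i$ and $T_\theta$. Both sides are vectors in $\R^k$ (the $k$ class scores for datapoint $i$), so it suffices to match their $\ell$-th entries for every $\ell \in [k]$. First I would make the block decomposition explicit. Since $\widehat{x}_i$ is $A_i$ flattened in row-major order, it splits into $m$ consecutive length-$k$ blocks, where the $j$-th block is exactly the $j$-th row of $A_i$, namely $r_j := (a_{j,k(i-1)+1}, \ldots, a_{j,ki})^\top \in \R^k$. Dually, $T_\theta = (\theta_1 I_k, \ldots, \theta_m I_k)$ splits into $m$ column-blocks, each a scaled identity. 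Carrying out the block matrix–vector product then collapses to $T_\theta \widehat{x}_i = \sum_{j=1}^m (\theta_j I_k)\, r_j = \sum_{j=1}^m \theta_j r_j$.

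Next I would expand the left-hand side and see that it produces the same sum. Writing out the matrix–vector product gives $(A_i^\top \theta)_\ell = \sum_{j=1}^m (A_i)_{j\ell}\,\theta_j$, and since $(A_i)_{j\ell} = a_{j,\,k(i-1)+\ell} = (r_j)_\ell$, this is precisely the $\ell$-th coordinate of $\sum_{j} \theta_j r_j$. Hence the two sides agree entrywise and the identity follows. To then close out the stated ``logistic regression form'' claim, I would invoke Equation~\ref{eqn:bf_exponential_form}: by definition $a^{(\theta)}_{i\ell} = \sum_{j} \theta_j a^{(j)}_{i\ell} = (A_i^\top \theta)_\ell$, so the class-score vector $a^{(\theta)}_i$ equals $A_i^\top \theta = T_\theta \widehat{x}_i$, and therefore $g^{(\theta)}_i$ is the softmax of $T_\theta \widehat{x}_i$, i.e.\ the softmax of a weight matrix times a feature vector, exactly matching the MLR prediction form.

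The main obstacle here is not analytic but bookkeeping: there are two levels of indexing to keep straight, the global column index $k(i-1)+\ell$ of the full matrix $A$ versus the local $(j,\ell)$ index of the submatrix $A_i$, together with the row-major flattening convention. The only thing requiring care is ensuring these conventions stay consistent so that the $j$-th length-$k$ block of $\widehat{x}_i$ lines up with the $\theta_j I_k$ block of $T_\theta$; once that alignment is pinned down, the equality is immediate.
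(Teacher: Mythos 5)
Your proposal is correct and matches the paper's argument: both verify $A_i^\top\theta = T_\theta\widehat{x}_i$ entrywise, the paper via an indicator sum $\sum_{j}\sum_{\ell'}\mathbf{1}(\ell'=\ell)\theta_j a_{j,k(i-1)+\ell'}$ and you via the equivalent block decomposition $T_\theta\widehat{x}_i=\sum_j \theta_j r_j$, which is just a cleaner packaging of the same computation. The concluding softmax step is likewise identical to the paper's.
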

\begin{proof}
Say $a_{ij}$ is the $(i, j)^{th}$ element of matrix $A$.
Then,
\[
    (A_{i}^{\top}\theta)_{\ell } = \sum_{j=1}^{m} \theta_{j}a_{j, k(i-1) + \ell }.
\]

The new prediction for class $\ell $ on datapoint $i$ is
\[
    (T_{\theta}\widehat{x}_{i})_{\ell } = \sum_{j=1}^{m} \sum_{\ell^{\prime}=1}^{k} \textbf{1}(\ell^{\prime} = \ell )\theta_{j}a_{j, k(i-1) + \ell^{\prime}} = \sum_{j=1}^{m} \theta_{j}a_{j, k(i-1) + \ell }
\]
which matches our expression above.
This means that $A_{i}^{\top}\theta = T_{\theta}\widehat{x}_{i}$.
So,
\[
    g^{(\theta)}_{i\ell } = \frac{\exp(a^{(\theta)}_{i\ell })}{\sum_{\ell^{\prime}=1}^{k} \exp(a^{(\theta)}_{i\ell^{\prime} })} = \frac{\exp((T_{\theta}\widehat{x}_{i})_{\ell })}{\sum_{\ell^{\prime}=1}^{k} \exp((T_{\theta}\widehat{x}_{i})_{\ell^{\prime} })}
\]
\end{proof}

Now, we can rewrite the BF dual program so it looks like $\ell_{1}$ regularized logistic regression.
The regularization is different in that there isn't one regularization constant, but twice the number of weights being learned.
Moreover, we learn $m$ weights for $mk$ features, whereas regular multi-class logistic regression would learn $mk^2$ weights for the same amount of features.
\begin{lemma}[Lemma~\ref{lem:bf_logistic_regression}]~\label{app_lem:bf_equivalent_to_lr}
    Fix $A, \eta,b, \epsilon$ so that $b^{*} = A\eta$ and the BF prediction $g^{bf}$ from Lemma~\ref{app_lem:learn_opt_pred} are fixed.
    The BF dual from that problem is equivalent to $\ell_{1}$ regularized multi-class logistic regression where each weight gets two regularization coefficients, depending on its sign.
    Let $\theta\in \mathbb{R}^{m}$ and $\sigma^{\prime}_{j}$, $\sigma_{j}$ be the positive and negative parts of $\theta_{j}$ respectively.
    \[
        -V(b, \epsilon) = \min_{\theta\in \mathbb{R}^{m}} \Big[\underbrace{-\eta^{\top}\log g^{(\theta)}}_{\text{cross entropy} }+\underbrace{\sigma^{\prime\top}(b^{*}-(b-\epsilon)) +\sigma^{\top}(b-b^{*}+\epsilon)}_{\text{regularization} }\Big].
    \]
    Moreover, the optimal $g^{(\theta)}$ from above equals $g^{bf}$ and can be written as a logistic regression style prediction.
\end{lemma}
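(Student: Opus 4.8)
The plan is to start from the BF dual in Lemma~\ref{app_lem:learn_opt_pred}, negate it into a minimization, reparametrize by $\theta = \sigma'-\sigma$, and then match it term-by-term against the regularized cross-entropy objective. Writing the dual as a minimization,
\[
    -V(b,\epsilon) = \min_{\sigma,\sigma' \geq \vec{0}_m} \Big[ -(\sigma'-\sigma)\cdot b + (\sigma'+\sigma)\cdot \epsilon + \sum_{i=1}^n \log\Big(\sum_{\ell=1}^k \exp(a^{(\sigma'-\sigma)}_{i\ell})\Big)\Big],
\]
the first move is to expand the cross-entropy term of the target objective. Using $\log g^{(\theta)}_{i\ell} = a^{(\theta)}_{i\ell} - \log\sum_{\ell'}\exp(a^{(\theta)}_{i\ell'})$ together with $\sum_\ell \eta_{i\ell} = 1$ (each $\eta_i$ is a distribution) and the identity $\eta\cdot A^{\top}\theta = (A\eta)\cdot\theta = b^{*}\cdot\theta$ (recall $b^* = A\eta$ and $a^{(\theta)} = A^{\top}\theta$), I obtain
\[
    -\eta^{\top}\log g^{(\theta)} = -b^{*}\cdot\theta + \sum_{i=1}^n \log\Big(\sum_{\ell=1}^k \exp(a^{(\theta)}_{i\ell})\Big).
\]

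The key computation is then to add the regularizer and watch the $b^*$ terms cancel. For any decomposition $\theta = \sigma'-\sigma$ with $\sigma,\sigma'\geq\vec{0}_m$, expanding $\sigma'\cdot(b^*-(b-\epsilon)) + \sigma\cdot(b+\epsilon-b^*)$ collects into $\theta\cdot b^* - \theta\cdot b + (\sigma'+\sigma)\cdot\epsilon$. Summing this with the cross-entropy expression above, the $\pm\,b^{*}\cdot\theta$ terms cancel and what remains is exactly $\sum_{i}\log(\sum_\ell \exp(a^{(\theta)}_{i\ell})) - \theta\cdot b + (\sigma'+\sigma)\cdot\epsilon$, i.e.\ precisely the bracketed expression of the negated dual evaluated at $(\sigma,\sigma')$. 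Hence the target objective and the negated dual objective agree as functions of $(\sigma,\sigma')$, term by term, giving $-V(b,\epsilon) = \min_{\sigma,\sigma'\geq\vec{0}_m}[\text{cross entropy} + \text{regularization}]$.

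The step that needs care—and the main obstacle—is passing from this minimization over nonnegative $(\sigma,\sigma')$ to the unconstrained $\min_{\theta\in\mathbb{R}^m}$ claimed in the statement, where $\sigma',\sigma$ are specifically the positive and negative parts of $\theta$. For this I would fix $\theta$ and note that the objective depends on the chosen decomposition only through the penalty $(\sigma'+\sigma)\cdot\epsilon$; since $\epsilon\geq\vec{0}_m$ and $\sigma'_j+\sigma_j \geq |\theta_j|$ with equality exactly at the positive/negative-part decomposition, the inner minimum over decompositions of a fixed $\theta$ is attained there. This is precisely where nonnegativity of the regularization coefficients, i.e.\ $b^{*}\in[b-\epsilon,b+\epsilon]$, is used. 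Collapsing this inner minimization leaves $\min_{\theta\in\mathbb{R}^m}$ with $\sigma',\sigma$ the positive/negative parts, as stated.

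Finally, for the \emph{moreover} clause: the minimizing $(\sigma,\sigma')$ are the BF dual optima from Lemma~\ref{app_lem:learn_opt_pred}, so the optimal $\theta=\sigma'-\sigma$ equals $\theta^{bf}$ and therefore $g^{(\theta)}=g^{(\theta^{bf})}=g^{bf}$. That this prediction has logistic-regression form is immediate from Lemma~\ref{app_lem:bf_pred_lr_form}, which writes each $g^{(\theta)}_i$ as the softmax of $T_\theta\widehat{x}_i$, a weight matrix times a feature vector.
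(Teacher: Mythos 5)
Your proof is correct and follows essentially the same route as the paper's: rewrite the dual objective using $b^{*}=A\eta$ so that the log-partition sum combines with $-b^{*}\cdot\theta$ into the cross-entropy $-\eta^{\top}\log g^{(\theta)}$, identify the leftover terms as the sign-dependent regularizer, and invoke the softmax/feature-vector lemma for the prediction form. Your explicit justification that restricting to the positive/negative-part decomposition loses nothing (since for fixed $\theta$ the objective depends on the decomposition only through $(\sigma'+\sigma)\cdot\epsilon$, minimized when $\sigma'_j+\sigma_j=|\theta_j|$) is a small point the paper leaves implicit, and is a welcome addition.
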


\begin{proof}
    We start by rewriting the BF program for fixed $A, b, \epsilon, \eta$ (Theorem~\ref{thm:BF_dual}).
    \[
        V(b, \epsilon) = \max_{\sigma, \sigma' \geq \vec{0}_{m}} \Bigg[(\sigma^{\prime}-\sigma)\cdot b-(\sigma^{\prime} + \sigma)\cdot \epsilon - \sum_{i=1}^n \log \Big( \sum_{\ell=1}^k \exp (a_{i\ell}^{(\sigma'-\sigma)}) \Big)\Bigg].
    \]
    We will need $b^{*}=A\eta$, so we rewrite the (dual) function as
    \[
        (\sigma^{\prime}-\sigma)^{\top} b^{*}-\sigma^{\prime\top}(b^{*}-b+\epsilon) -\sigma^{\top}(b-b^{*}+\epsilon)- \sum_{i=1}^n \log \Big( \sum_{\ell=1}^k \exp (a_{i\ell}^{(\sigma'-\sigma)}) \Big)
    \]
    If we take $\theta$ to mean $\sigma^{\prime}-\sigma$, we recall this statement from the proof of Lemma~\ref{lem:pythagorean}:
    \[
        \log g^{(\theta)}_{i\ell } = \log \frac{\exp(a^{(\theta)}_{i\ell })}{\sum_{\ell^{\prime}=1}^{k} \exp(a^{(\theta)}_{i\ell^{\prime} })}
    \]
    By using $A\eta = b^{*}$ and $\theta = \sigma^{\prime}-\sigma$, the BF dual function can be written as
    \[
        \eta^{\top}A^{\top}\theta - \sum_{i=1}^n \log \Big( \sum_{\ell=1}^k \exp (a_{i\ell}^{(\theta)}) \Big)-\sigma^{\prime\top}(b^{*}-b+\epsilon) -\sigma^{\top}(b-b^{*}+\epsilon)
    \]
    which becomes
    \[
        \sum_{i=1}^{n} \sum_{\ell =1}^{k} \eta_{i\ell }\bigg(\log(\exp(a^{(\theta)}_{i\ell })) - \log \bigg( \sum_{\ell^{\prime}=1}^k \exp ((A^{\top}\theta)_{i\ell^{\prime}}) \bigg)\bigg)-\sigma^{\prime\top}(b^{*}-b+\epsilon) -\sigma^{\top}(b-b^{*}+\epsilon)
    \]
    because $\sum_{\ell =1}^{k} \eta_{i\ell } = 1$.
    Observe the term inside the outermost parentheses is just $\log g_{i\ell }^{(\theta)}$.
    Reintroducing the maximization,
    \[
        V(b, \epsilon) = \max_{\theta\in \mathbb{R}^{m}} \Big[\eta^{\top}\log g^{(\theta)}-\sigma^{\prime\top}(b^{*}-b+\epsilon) -\sigma^{\top}(b-b^{*}+\epsilon)\Big]
    \]
    which is
    \[
        -V(b, \epsilon) = \min_{\theta\in \mathbb{R}^{m}} \Big[-\eta^{\top}\log g^{(\theta)}+\sigma^{\prime\top}(b^{*}-b+\epsilon) +\sigma^{\top}(b-b^{*}+\epsilon)\Big].
    \]
    We now want to argue that this is a logistic regression problem.
    Clearly, we're minimizing cross entropy with $\eta$.
    We argue that the last two terms are equivalent to $\ell_{1} $ regularization.
    The usual regularization term would look like $C\|\theta\|_{1}$ for some $C \geq 0$.
    This is equal to $\sum_{j=1}^{m} C|\theta_{j}| = \sum_{j=1}^{m} C(\sigma^{\prime}_{j} + \sigma_{j})$, a weighted sum of non-negative values.
    This is because $|\theta_{j}| = \sigma^{\prime}_{j} + \sigma_{j}$ as $\sigma^{\prime}_{j}$ is the positive part of $\theta_{j}$ while $\sigma_{j}$ is the negative part.
    The construction of $P$ assumes $b^{*} \in [b-\epsilon, b+\epsilon]$ so that one can check that the coefficients for $\sigma^{\prime}, \sigma$ are non-negative.
    Therefore, the last two terms in the stated objective act like $\ell_{1}$ regularization.
    To finish, we use the fact that the BF prediction $g^{(\theta)}$ can be written in the logistic regression style, Lemma~\ref{app_lem:bf_pred_lr_form}.
\end{proof}

We can also go from a $\ell_{1}$ regularized logistic regression loss to a BF problem.
The matrix $A$ will be defined in terms of the datapoint features while $b$ will be an expected value of the features.
This means that $b$ will not necessarily lie in the unit interval.
This specification of constraints is very similar to the ``uncertainty sets'' proposed by~\citet{minimax_classification_with_01_loss_and_performance_guarantees}.
\begin{lemma}~\label{app_lem:lr_is_a_bf_prob}
    Suppose one's goal was to minimize a $\ell_{1} $ regularized logistic regression loss function written as below.
    Then, the minimization of that loss function is equivalent to solving a BF problem.
    \[
        \min_{w_{\ell }\in \mathbb{R}^{d}, \ell \in [k]} \left[ -\sum_{i=1}^{n} \sum_{\ell =1}^{k} \eta_{i\ell }\log\frac{\exp(w_{\ell }^{\top}x_{i})}{\sum_{\ell^{\prime}=1}^{k} \exp(w_{\ell}^{\prime\top}x_{i})}  + C\sum_{\ell =1}^{k} \|w_{\ell }\|_1 \right]
    \]
    The constraint matrix $A$ will have size $dk\times nk$ where $d$ is the dimension of $x_{i}$.
    In block matrix form (where the blocks are scaled identity matrices), the block in row $c$ column $i$ of $A$ is $(x_{i})_{c}I_{k}$.
    That is, the $c^{th}$ feature of datapoint $x_{i}$ times the identity matrix in $k$ dimensions.
    The constraints for BF are $z\in \Delta_{k}^{n}$ and
    \[
        b^{*}-C\vec{1}_{dk}\leq Az\leq b^{*}+C\vec{1}_{dk} \quad \text{where} \quad b^{*}_{k(c-1) + \ell } = \sum_{i=1}^{n} (x_{i})_c \eta_{i\ell }.
    \]
\end{lemma}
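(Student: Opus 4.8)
The plan is to run the equivalence of Lemma~\ref{lem:bf_logistic_regression} in reverse. That lemma already tells us every BF dual is an $\ell_1$-style regularized cross-entropy whose two sign-dependent coefficients are $b^{*}_j-(b_j-\epsilon_j)$ (for $\sigma^{\prime}_j$) and $(b_j+\epsilon_j)-b^{*}_j$ (for $\sigma_j$). To recover ordinary $\ell_1$ regularization with the single constant $C$, it suffices to force both coefficients to equal $C$, and setting $b=b^{*}$ together with $\epsilon=C\vec{1}_{dk}$ does exactly this. Thus the target BF instance is the one with the stated block matrix $A$, bounds $b=b^{*}$, and widths $\epsilon=C\vec{1}_{dk}$, and the claim reduces to identifying its prediction family and its dual objective with the given MLR problem.

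First I would check the constraint data are consistent. A direct block computation gives $(Az)_{k(c-1)+\ell}=\sum_{i=1}^n (x_i)_c\, z_{i\ell}$, so $A\eta$ has precisely the entries $b^{*}_{k(c-1)+\ell}=\sum_i (x_i)_c\,\eta_{i\ell}$ claimed; hence $b^{*}=A\eta$. Because $b=b^{*}$ and $\epsilon=C\vec{1}_{dk}\ge\vec{0}_{dk}$, the interval condition $b^{*}\in[b-\epsilon,b+\epsilon]$ holds trivially, so $P$ is well defined and the regularization coefficients of Lemma~\ref{lem:bf_logistic_regression} are nonnegative. Here I would explicitly flag that, unlike the rules-of-thumb matrix of the main text, this $A$ and its $b$ need not take values in $[0,1]$; but the derivation of Lemma~\ref{lem:bf_logistic_regression} uses only $b^{*}=A\eta$ and $b^{*}\in[b-\epsilon,b+\epsilon]$, never that $b$ is an accuracy, so the lemma still applies to this generalized $A$.

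Next I would match the two prediction families. Transposing the same block computation, $a^{(\theta)}_{i\ell}=(A^{\top}\theta)_{k(i-1)+\ell}=\sum_{c=1}^d (x_i)_c\,\theta_{k(c-1)+\ell}$, so defining $w_\ell\in\mathbb{R}^d$ by $(w_\ell)_c:=\theta_{k(c-1)+\ell}$ yields $a^{(\theta)}_{i\ell}=w_\ell^{\top}x_i$. Consequently the exponential-family prediction $g^{(\theta)}_{i\ell}\propto\exp(a^{(\theta)}_{i\ell})$ from Equation~\ref{eqn:bf_exponential_form} is literally the softmax $\exp(w_\ell^{\top}x_i)/\sum_{\ell'}\exp(w_{\ell'}^{\top}x_i)$, i.e.~the MLR prediction $g^{lr}$; this block choice of $A$ produces the logistic-regression form directly, without invoking the general $T_\theta,\widehat{x}_i$ machinery of Lemma~\ref{app_lem:bf_pred_lr_form}. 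Since the index $k(c-1)+\ell$ ranges bijectively over $[dk]$ as $(c,\ell)$ ranges over $[d]\times[k]$, the map $\theta\mapsto(w_1,\dots,w_k)$ is a bijection between $\mathbb{R}^{dk}$ and $(\mathbb{R}^d)^k$, and it splits the norm as $\|\theta\|_1=\sum_{c,\ell}|\theta_{k(c-1)+\ell}|=\sum_\ell\|w_\ell\|_1$.

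Finally I would assemble the objective. With $b=b^{*}$ and $\epsilon=C\vec{1}_{dk}$, the regularization term of Lemma~\ref{lem:bf_logistic_regression} collapses to $\sigma^{\prime\top}(C\vec{1}_{dk})+\sigma^{\top}(C\vec{1}_{dk})=C(\sigma^{\prime}+\sigma)\cdot\vec{1}_{dk}=C\|\theta\|_1$, using $\sigma^{\prime}_j+\sigma_j=|\theta_j|$. Hence
\[
    -V(b^{*},C\vec{1}_{dk})=\min_{\theta\in\mathbb{R}^{dk}}\big[-\eta^{\top}\log g^{(\theta)}+C\|\theta\|_1\big]=\min_{w_1,\dots,w_k}\Big[-\eta^{\top}\log g^{lr}+C\textstyle\sum_{\ell}\|w_\ell\|_1\Big],
\]
where the second equality is exactly the bijection and norm identity from the previous step. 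The right-hand side is the stated MLR loss, so minimizing it is equivalent to solving the BF game $V(b^{*},C\vec{1}_{dk})$, with optimal weights corresponding under $(w_\ell)_c=\theta^{bf}_{k(c-1)+\ell}$. I do not expect a genuine obstacle: the content is a careful transcription of indices through the block structure of $A$ together with the reverse reading of Lemma~\ref{lem:bf_logistic_regression}; the only point demanding care is confirming that the generalized, non-probability $A,b$ do not violate the hypotheses of that lemma, which reduce to $b^{*}=A\eta$ and $b^{*}\in[b-\epsilon,b+\epsilon]$, both arranged by construction.
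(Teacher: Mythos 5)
Your proposal is correct and follows essentially the same route as the paper's proof: both work backwards from the BF-to-logistic-regression lemma, choose $b=b^{*}$ and $\epsilon=C\vec{1}_{dk}$ to collapse the two sign-dependent coefficients into a single $\ell_{1}$ constant, and verify via the same block-index computation that $a^{(\theta)}_{i\ell}=w_{\ell}^{\top}x_{i}$ under the identification $(w_{\ell})_{c}=\theta_{k(c-1)+\ell}$ and that $b^{*}_{k(c-1)+\ell}=\sum_{i}(x_{i})_{c}\eta_{i\ell}$. If anything, you make explicit two points the paper leaves implicit (the norm identity $\|\theta\|_{1}=\sum_{\ell}\|w_{\ell}\|_{1}$ and the observation that the hypotheses of the earlier lemma never require $A,b$ to be probabilities), which is a welcome addition rather than a deviation.
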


\begin{proof}
    To prove this, we essentially work backwards from the result above.
    We need to specify a matrix $A$ and we also need $\epsilon$.

    We first go from the regularization coefficient $C$ to a choice of $b$ and $\epsilon$.
    Note that in the above proof, the regularization constants had either $b-b^{*}$ or $b^{*}-b$.
    These two values can be different.
    However, since we only have a single value $C$, we will choose $b=b^{*}$ and $\epsilon = C$.

    We will need $A$ to be of size $dk\times nk$ because for multi-class logistic regression, one learns $dk$ weights.
    Like above, we define $A_{i}\in \mathbb{R}^{dk\times k}$, the part of the constraint matrix for datapoint $x_{i}$.
    (The $A_{i}$'s are concatenated horizontally to form $A$.)
    Then,
    \[
        A_{i} = \begin{bmatrix}
            (x_{i})_{1}I_{k}\\
            (x_{i})_{2}I_{k}\\
            \vdots\\
            (x_{i})_{d}I_{k}
        \end{bmatrix}.
    \]
    To see that this is correct, we write the logistic regression weights as a $dk$ length vector $w$.
    \[
        \widehat{w} = (w_{11}, w_{21},\ldots, w_{k1}, w_{12}, w_{22}, \ldots, w_{k2},\ldots, w_{1d},\ldots, w_{kd})^{\top}
    \]
    Basically, the $k$ weight vectors are interleaved.
    For index $i$ of $\widehat{w}$, the element comes from the weight vector for class $i \mod k$.
    Concretely, $\widehat{w}_{k(c-1)+\ell } = w_{\ell c}$, the $c^{th}$ element of class $\ell $'s weight vector.
    We will now show that $(A_{i}^{\top}\widehat{w})_{\ell } = w_{\ell }^{\top}x_{i}$.
    \[
        (A_{i}^{\top}\widehat{w})_{\ell } = \sum_{c=1}^{d} \sum_{\ell^{\prime} =1}^{k} \textbf{1}(\ell^{\prime} = \ell )(x_{i})_c\widehat{w}_{k(c-1) + \ell^{\prime} } = \sum_{c=1}^{d}(x_{i})_{c}\widehat{w}_{k(c-1)+ \ell }
    \]
    However, by virtue of the definition of $\widehat{w}$, we see that that last sum is exactly $w^{\top}_{\ell }x_{i}$.

    To conclude the proof, we just need to specify $b^{*}$.
    $b^{*}$ will be equal to $A\eta$, so we compute element $k(c-1) + \ell$ of that vector.
    \[
        b^{*}_{k(c-1) + \ell } = (A\eta)_{k(c-1) + \ell } = \sum_{i=1}^{n} \sum_{\ell^{\prime} =1}^{k} \textbf{1}(\ell =\ell^{\prime})(x_{i})_{c}\eta_{i\ell^{\prime} } = \sum_{i=1}^{n} (x_{i})_{c}\eta_{i\ell }.
    \]
\end{proof}

\section{Pythagorean Theorem (Lemma~\ref{lem:pythagorean})}\label{app_sec:pythagorean}

\begin{lemma}[Lemma~\ref{lem:pythagorean}]~\label{app_lem:pythagorean} 
    Pick any $\theta, \theta^{\prime} \in \R^{m}$ and write $g = g^{(\theta)}$ and $g^{\prime} = g^{(\theta^{\prime})}$.
    If $A g = A \eta$, then
    \[
        d(\eta, g^{\prime}) = d(\eta, g) + d(g, g^{\prime}).
    \]
\end{lemma}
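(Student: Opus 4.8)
The plan is to reduce the claimed identity to a single cancellation driven by the constraint $Ag = A\eta$. Expanding $d(\eta,g') - d(\eta,g)$ directly from the definition $d(\mu,\nu) = \sum_{i=1}^n KL(\mu_i,\nu_i)$, the $\eta_{i\ell}\log\eta_{i\ell}$ terms cancel between the two divergences, leaving $\sum_{i,\ell} \eta_{i\ell}\log(g_{i\ell}/g'_{i\ell})$. Since by definition $d(g,g') = \sum_{i,\ell} g_{i\ell}\log(g_{i\ell}/g'_{i\ell})$, the lemma is equivalent to the statement
\[
    \sum_{i=1}^n \sum_{\ell=1}^k (\eta_{i\ell} - g_{i\ell})\,\log \frac{g_{i\ell}}{g'_{i\ell}} = 0.
\]
So the whole proof hinges on understanding the log-ratio $\log(g_{i\ell}/g'_{i\ell})$.

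The key computation uses the exponential-family form $g^{(\theta)}_{i\ell} \propto \exp(a^{(\theta)}_{i\ell})$. Writing $Z_i(\theta) = \sum_{\ell'}\exp(a^{(\theta)}_{i\ell'})$ for the normalizer and using the linearity $a^{(\theta)} = A^\top\theta$, I get
\[
    \log \frac{g_{i\ell}}{g'_{i\ell}} = a^{(\theta)}_{i\ell} - a^{(\theta')}_{i\ell} + c_i = a^{(\theta - \theta')}_{i\ell} + c_i,
\]
where $c_i = \log\big(Z_i(\theta')/Z_i(\theta)\big)$ depends only on the datapoint index $i$, not on the label $\ell$. Substituting this decomposition into the target sum splits it into a constant piece and a linear piece, and I will argue each vanishes separately.

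The constant piece $\sum_{i,\ell}(\eta_{i\ell}-g_{i\ell})\,c_i$ vanishes because $c_i$ factors out of the inner sum over $\ell$, leaving $\sum_\ell \eta_{i\ell} - \sum_\ell g_{i\ell} = 1 - 1 = 0$ for each $i$ (both $\eta_i$ and $g_i$ lie in $\Delta_k$). For the linear piece, interchanging the order of summation gives
\[
    \sum_{i,\ell}(\eta_{i\ell}-g_{i\ell})\,a^{(\theta-\theta')}_{i\ell} = \sum_{j=1}^m (\theta-\theta')_j \, \big(a^{(j)}\cdot(\eta - g)\big) = (\theta-\theta')^\top\big(A\eta - Ag\big),
\]
which is exactly zero by the hypothesis $Ag = A\eta$. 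Both pieces being zero establishes the displayed identity, and hence the lemma.

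I do not expect a genuine analytic obstacle here; the content is structural rather than computational. The one point requiring care is recognizing that the mismatch between the two log-partition functions enters the log-ratio only as the per-datapoint constant $c_i$, so it is annihilated by the simplex normalization, while the genuinely label-dependent part is linear in $A$ and is annihilated by the moment-matching constraint $Ag = A\eta$. This is precisely the exponential-family (Bregman) projection identity, so the only step where I would double-check indices is the interchange of summation that produces $(\theta-\theta')^\top(A\eta - Ag)$.
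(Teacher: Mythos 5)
Your proof is correct and follows essentially the same route as the paper's: both rest on writing $\log g_{i\ell} = a^{(\theta)}_{i\ell} - \log Z^{(\theta)}_i$ and using $Ag = A\eta$ to exchange $\eta$ for $g$ in the linear terms, with the normalizers cancelling against the simplex constraint. The paper performs the substitution $\eta \cdot a^{(\theta)} = g \cdot a^{(\theta)}$ and reassembles the result into $d(g,g')$, whereas you verify the equivalent cancellation $\sum_{i,\ell}(\eta_{i\ell}-g_{i\ell})\log(g_{i\ell}/g'_{i\ell})=0$ directly; this is only a cosmetic reorganization.
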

\begin{proof}
We begin with
\begin{align*}
    d(\eta, g^{\prime}) - d(\eta, g)
    &=
    \sum_{i=1}^n \sum_{\ell=1}^k \eta_{i\ell} \log \frac{\eta_{i\ell}}{g_{i\ell}'} -
    \sum_{i=1}^n \sum_{\ell=1}^k \eta_{i\ell} \log \frac{\eta_{i\ell}}{g_{i\ell}} \\
    &=
    \sum_{i=1}^n \sum_{\ell=1}^k \eta_{i\ell} \log \frac{g_{i\ell}}{g_{i\ell}'}.
\end{align*}
For $g = g^{(\theta)}$, we have
\[
    \log g_{i\ell}
    =\log \frac{\exp(a^{(\theta)}_{i\ell})}{\sum_{\ell' = 1}^{k} \exp(a^{(\theta)}_{i\ell'})}
    =
    a^{(\theta)}_{i\ell} - \log Z^{(\theta)}_i,
\]
where $Z^{(\theta)}_i$ is a shorthand for $\sum_{\ell'= 1}^{k} \exp(a^{(\theta)}_{i\ell'})$.
Thus
\begin{align*}
    d(\eta, g^{\prime}) - d(\eta, g)
    &=
    \sum_{i=1}^n \sum_{\ell=1}^k \eta_{i\ell} \left( a^{(\theta)}_{i\ell} - \log Z^{(\theta)}_i - a^{(\theta')}_{i\ell} + \log Z^{(\theta')}_i \right) \\
    &=
    \eta \cdot a^{(\theta)} - \eta \cdot a^{(\theta')} + \sum_{i=1}^n (\log Z^{(\theta')}_i - \log Z^{(\theta)}_i)
\end{align*}
Now, by assumption $Ag = A\eta$.
This means that for any $\theta\in \mathbb{R}^{m}$, $\theta \cdot  Ag = \theta \cdot A\eta$.
Written another way, for the same $\theta$, $\eta \cdot a^{(\theta)} = g \cdot a^{(\theta)}$, whereupon
\begin{align*}
    d(\eta, g^{\prime}) - d(\eta, g)
    &=
    g \cdot a^{(\theta)} -  g \cdot a^{(\theta')} + \sum_{i=1}^n (\log Z^{(\theta')}_i - \log Z^{(\theta)}_i) \\
    &=
    \sum_{i=1}^n \sum_{\ell=1}^k g_{i\ell} \left( a^{(\theta)}_{i\ell} - \log Z^{(\theta)}_i - a^{(\theta')}_{i\ell} + \log Z^{(\theta')}_i \right) \\
    &=
    \sum_{i=1}^n \sum_{\ell=1}^k g_{i\ell} \log \frac{g_{i\ell}}{g_{i\ell}'}
    =
    d(g, g').
\end{align*}
\end{proof}

\section{Model and Approximation Uncertainty}~\label{app_sec:model_and_approx_uncertainty}

For both BF and one-coin DS (with EM), their respective models use quantities that are estimated.
For BF, it's given estimates of the rule accuracies and class frequencies, while for OCDS, EM estimates those same quantities.
We would like to know how much of each method's error results from the fact that it is receiving estimated quantities in comparison to error that's unavoidable to the method.
We will say the set of unlabeled datapoints $X$ is fixed, along with the rules $h^{(1)},\ldots, h^{(p)}$, their predictions on $X$, and true label probabilities $\eta$.

Formally, abuse notation and say that all possible predictions a method can make lies in the set $\G \subseteq \Delta_{k}^{n}$.
By choosing a method (read: by choosing BF/OCDS), one is choosing $\G$.
Thus, the unavoidable error from choosing that method is $\min_{g\in \G}d(\eta, g)$.
That quantity is referred to as the \textit{model uncertainty} or $\mathcal{E}^{mod}$.
Now, in general, the method doesn't output the best approximator from the set of possible predictions to $\eta$ due to a variety of factors.
These could be how the method is initialized, how the hyperparameters are set, etc.
Suppose that the best approximator to $\eta$ is unique.
This will be what happens with our choice of $d(\cdot,\cdot )$ (Theorem~\ref{app_thm:bf_best_approx}).
If the method predicts $g$, then the excess error from predicting $g$ over $g^{*} = \argmin_{g\in \G}d(\eta,g)$ is $d(g^{*}, g)$.
We call this the \textit{approximation uncertainty} or $\mathcal{E}^{appr}$.

Taken together, these form a learner's \textit{epistemic uncertainty}.
Epistemic as in these sources of uncertainty are from our lack of knowledge.
For model uncertainty, it would be sufficient to choose a method such that it can exactly predict $\eta$.
Then, one would have no model uncertainty.
Of course this is extremely non-trivial if you wish to choose a method with limited expressivity.
Indeed, one could pick a method that was a universal approximator.
For approximation uncertainty, one can in theory give the method the perfect hyperparameters, initialization, or stop it early.
For example, the method might do a form of gradient descent and only stops after convergence.
However, it may be optimal to take the method's prediction before convergence.
In other words, one has approximation uncertainty when they do not know the best way to use the method to make it output $g^{*}$.
If we call the epistemic uncertainty $\mathcal{E}$, the model uncertainty $\mathcal{E}^{mod}$, and the approximation uncertainty $\mathcal{E}^{appr}$, we have
\[
    \mathcal{E} = \underbrace{\mathcal{E}^{mod}}_{d(\eta, g^{*})} + \underbrace{\mathcal{E}^{appr}}_{d(g^{*}, g)}.
\]

Just predicting $\eta$ is not the end all.
In the end, we are interested in the labels of points $x_{1},\ldots, x_{n}$.
However, if the true label for a datapoint is a distribution there is uncertainty on what the actual label is.
For example, we may have some information about someone who smokes.
Since not all smokers get cancer, there is uncertainty about whether they will get cancer.
In that sense there is a distribution over the two outcomes representing whether they get cancer.
This kind of uncertainty about the label is \textit{aleatoric}.

Aleatoric uncertainty is not reducible and does not concern us.
However, epistemic uncertainty is (in theory) reducible and we are concerned with this quantity.
Indeed, our discussion here is in line with the ideas found in \citep{Hullermeier_uncertainty2021}.
The main difference is that they suggest that the hypothesis space (our $\G$) should be fixed to be able to discuss aleatoric and epistemic uncertainty without ambiguity.
However, we are allowing the hypothesis space ($\G$) to change with the model.
So, if the reader is unsatisfied with how we use the terms ``aleatoric'', ``epistemic'', ``model'', ``approximation'' uncertainty, then they may refer to the mathematical quantities we have associated each of those terms with.

We now go through two examples and explain what the model and approximation uncertainties are.
A method with all its epistemic uncertainty coming from the model uncertainty is majority vote.
It only ever makes 1 prediction $g^{mv}$, i.e.~$\G$ contains 1 point.
Therefore, $g^{mv}$ is the best approximator for $\eta$.
There is no approximation uncertainty because the method does not estimate anything.
Now, suppose we considered weighted majority vote.
For simplicity, say that none of the rules abstain and that the weights form a distribution.
In other words, we take a convex combination of the rule predictions.
Here, one can compute the optimal convex combination such that the resulting prediction $g^{wmv*}$ is closest to $\eta$.
The distance between the two is the model uncertainty.
However, one may not be able to compute the optimal combination.
The approximation uncertainty is the distance from some weighted majority vote prediction $g^{wmv}$ to the optimal weighted majority vote prediction $g^{wmv*}$.

We now want to disambiguate and be very clear.
Because we have chosen $d(\cdot, \cdot )$ to be KL divergence (of the sum thereof), the Pythagorean theorem we showed in Section~\ref{app_sec:pythagorean} makes it so $\mathcal{E} = d(\eta, g)$ when $\mathcal{E}^{mod} = d(\eta, g^{*})$ and $\mathcal{E}^{appr} = d(g^{*}, g)$.
If one wants to choose another $d(\mu, \nu):\Delta_{k}^{n}\times \Delta_{k}^{n}\rightarrow \mathbb{R}$, one may need to write
\[
    \mathcal{E}^{appr} = d(\eta, g) - d(\eta, g^{*}).
\]

Now, because we decompose the error of BF and OCDS into model and approximation uncertainty, we are able to easily compare the two.
We'll see that BF and OCDS have the same model uncertainty, meaning the comparison between the two models is fair.
I.e.~the unavoidable prediction loss for predictions from each model is the same.
Contrast that to recent work in the literature with models that add rules-of-thumb in the process of creating labels for datapoints in $X$, e.g.~\cite{Varma_SNUBA}.
There, the model uncertainty is being reduced and a naive comparison to a method that doesn't add rules-of-thumb would be unfair.

\section{BF Loss Decomposition (Lemma~\ref{lem:bf_loss_decomp})}~\label{app_sec:bf_loss_decomp}
We're ready now to decompose the loss of any BF prediction.
We'll show that the optimal approximator $g^{*}$ to $\eta$ satisfies the property needed to apply the Pythagorean theorem we just proved (Lemma~\ref{lem:pythagorean}) later.
\begin{lemma}[Lemma~\ref{lem:bf_loss_decomp}]~\label{app_lem:bf_loss_decomp}
    \[
        \underbrace{d(\eta, g^{bf})}_{\mathcal{E}_{bf}} = \underbrace{d(\eta, g^{*})}_{\mathcal{E}^{mod}_{bf}} + \underbrace{d(g^{*}, g^{bf})}_{\mathcal{E}^{appr}_{bf}}.
    \]
\end{lemma}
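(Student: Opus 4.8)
The plan is to obtain the decomposition as an immediate corollary of the Pythagorean theorem (Lemma~\ref{lem:pythagorean}), for which the only missing ingredient is the moment-matching identity $A g^{*} = A\eta$. Since $g^{bf} = g^{(\theta^{bf})}$ with $\theta^{bf} = \sigma^{\prime}-\sigma$ (Section~\ref{subsec:bf_expo_family}) and $g^{*} = g^{(\theta^{*})}$ by definition, both predictions lie in the exponential family $\G$. Once $Ag^{*} = A\eta$ is established, I would invoke Lemma~\ref{lem:pythagorean} with $g = g^{*}$ (the distribution satisfying the hypothesis $Ag = A\eta$) and $g^{\prime} = g^{bf}$; its conclusion reads off verbatim as $d(\eta, g^{bf}) = d(\eta, g^{*}) + d(g^{*}, g^{bf})$, which is the claim with the stated labels for $\mathcal{E}_{bf}$, $\mathcal{E}^{mod}_{bf}$, and $\mathcal{E}^{appr}_{bf}$.

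The substantive step, and the one I expect to be the main obstacle, is proving $Ag^{*} = A\eta$. I would derive it from the first-order optimality conditions of the projection $g^{*} = \argmin_{g\in\G} d(\eta, g)$, existence and uniqueness of which I take from Theorem~\ref{thm:bf_best_approx}. Reparametrizing the objective by the natural parameter $\theta$ and using $\sum_\ell \eta_{i\ell} = 1$ together with the identity $\log g_{i\ell}^{(\theta)} = a_{i\ell}^{(\theta)} - \log Z_i^{(\theta)}$ recorded in the proof of Lemma~\ref{lem:pythagorean}, define
\[
    F(\theta) := \eta \cdot \log g^{(\theta)} = \eta\cdot a^{(\theta)} - \sum_{i=1}^{n} \log Z_i^{(\theta)}, \qquad Z_i^{(\theta)} = \sum_{\ell^{\prime}=1}^{k} \exp\bigl(a_{i\ell^{\prime}}^{(\theta)}\bigr).
\]
Since $d(\eta, g^{(\theta)})$ differs from $-F(\theta)$ only by the $\theta$-independent negative entropy of $\eta$, minimizing the divergence over $\G$ is the same as maximizing $F$ over $\R^{m}$.

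Differentiating is the only genuine computation. Because $a^{(\theta)} = A^{\top}\theta$, the linear term contributes $(A\eta)_j$ to $\partial F/\partial\theta_j$, while the log-partition term contributes $-\sum_i \sum_{\ell^{\prime}} g^{(\theta)}_{i\ell^{\prime}} a^{(j)}_{i\ell^{\prime}} = -(Ag^{(\theta)})_j$, so that $\nabla_\theta F = A\eta - Ag^{(\theta)}$. Setting the gradient to zero at the maximizer $\theta^{*}$ yields $Ag^{*} = A\eta$; concavity of $F$ (convexity of the log-partition function) certifies that this stationary point is the global optimum, consistent with the uniqueness asserted in Theorem~\ref{thm:bf_best_approx}. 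With $Ag^{*} = A\eta$ in hand, the one-line application of Lemma~\ref{lem:pythagorean} described above completes the argument.
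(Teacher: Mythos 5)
Your proposal is correct, and its top-level structure matches the paper's: both proofs reduce the claim to the moment-matching identity $Ag^{*}=A\eta$ and then apply Lemma~\ref{lem:pythagorean} with $g=g^{*}$ and $g^{\prime}=g^{bf}$. Where you differ is in how that identity is obtained. The paper gets it from Corollary~\ref{app_cor:g_star_marginal}: the solution of $V(b^{*},\vec{0}_{m})$ is feasible for the degenerate polytope $\{z: b^{*}\leq Az\leq b^{*}\}$, hence satisfies $Az=b^{*}=A\eta$ by construction, and Theorem~\ref{thm:bf_best_approx} (itself proved via the sensitivity-analysis bound of Theorem~\ref{app_thm:bf_bound}) identifies that solution with $g^{*}$. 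You instead prove $Ag^{*}=A\eta$ directly from the first-order conditions of the KL projection onto $\G$, i.e.\ the classical moment-matching property of exponential families; your gradient computation $\nabla_{\theta}\bigl(\eta\cdot\log g^{(\theta)}\bigr)=A\eta-Ag^{(\theta)}$ and the concavity argument are both correct. Your route is more elementary and self-contained, avoiding the duality machinery entirely; its one dependency is that the minimizer is attained at a finite $\theta^{*}\in\R^{m}$ so that the stationarity condition applies, which you appropriately delegate to Theorem~\ref{thm:bf_best_approx} (and which the paper's polytope-feasibility argument sidesteps, since feasibility gives $Az=b^{*}$ without needing an interior stationary point). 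No circularity arises either way, since Theorem~\ref{thm:bf_best_approx} does not rely on this lemma.
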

\begin{proof}
    By Corollary~\ref{app_cor:g_star_marginal}, we know that $A\eta = Ag^{*}$.
    So, in Lemma~\ref{lem:pythagorean}, choose $g^{\prime}$ to be the BF prediction $g^{bf}$ and $g$ to be $g^{*}$.
    The result is immediate.
\end{proof}

\section{The BF Bound and Proof of Theorem~\ref{thm:basic_bound}}\label{app_sec:bf_bound}
We now present an error bound for the BF program we have stated.
This bound is used to show that BF dominates DS when $\epsilon=\vec{0}_{m}$, lets us compare BF and DS when $\epsilon$ is not all zeros, and gives rates of convergence for consistency.
The proof is presented with just the major steps needed to get the bound.
In the following sections, each of the major steps are proved.

\begin{thm}\label{app_thm:bf_bound}
    Let $g^{bf}$ be the prediction from BF when it's given $P = \{ z\in \Delta_{k}^{n} \colon b-\epsilon\leq Az\leq b+\epsilon\}$.
    Also, fix an arbitrary prediction $g^{ref} = g^{(\theta^{ref})}\in \G_{bf}$ and call it the reference prediction.
    Then,
    \[
        d(\eta,g^{bf})\leq -V(b, \epsilon) + \eta \cdot \log\eta\leq d(\eta, g^{ref}) + 2\epsilon^{\top}|\theta^{ref}| \leq d(\eta, g^{ref}) + 2\|\epsilon\|_{\infty}\left\| \theta^{ref} \right\|_{1}.
    \]
\end{thm}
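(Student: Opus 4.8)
The plan is to prove the chain of three inequalities one at a time, converting every quantity into the exponential-family bookkeeping already used for Lemma~\ref{lem:pythagorean}. Throughout I write $Z_i^{(\theta)} = \sum_{\ell} \exp(a_{i\ell}^{(\theta)})$ and lean on the two identities $\log g^{(\theta)}_{i\ell} = a^{(\theta)}_{i\ell} - \log Z_i^{(\theta)}$ and $\eta \cdot a^{(\theta)} = \theta \cdot (A\eta) = \theta \cdot b^{*}$. Since $d(\eta, g) = \eta\cdot\log\eta - \eta\cdot\log g$ by definition of KL divergence, and $\sum_{\ell}\eta_{i\ell} = 1$, these give the clean formula $d(\eta, g^{ref}) = \eta\cdot\log\eta - \theta^{ref}\cdot b^{*} + \sum_{i}\log Z_i^{(\theta^{ref})}$, which I will use for the middle inequality.

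For the first inequality, I would note that because $g^{bf}$ attains the outer maximum in $V = \max_g \min_{z\in P} z\cdot\log g$, we have $V = \min_{z\in P} z\cdot\log g^{bf}$. The polytope $P$ is constructed precisely so that $\eta\in P$, hence $V \le \eta\cdot\log g^{bf}$. Adding $\eta\cdot\log\eta$ to both sides and rearranging turns this into $d(\eta,g^{bf}) = \eta\cdot\log\eta - \eta\cdot\log g^{bf} \le \eta\cdot\log\eta - V$, which is exactly the first inequality.

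The middle inequality is where the real work lies, and I expect it to be the main obstacle. I would lower-bound the dual value $V(b,\epsilon)$ from Theorem~\ref{thm:BF_dual} by evaluating its (concave) objective at one feasible point: take $\sigma'_j, \sigma_j$ to be the positive and negative parts of $\theta^{ref}_j$, so that $\sigma'-\sigma = \theta^{ref}$ and $\sigma'+\sigma = |\theta^{ref}|$ componentwise. This yields $V(b,\epsilon) \ge \theta^{ref}\cdot b - |\theta^{ref}|\cdot\epsilon - \sum_i \log Z_i^{(\theta^{ref})}$, and hence an upper bound on $-V(b,\epsilon)$. Substituting this bound together with the formula for $d(\eta,g^{ref})$ above, the partition terms $\sum_i\log Z_i^{(\theta^{ref})}$ and the $\eta\cdot\log\eta$ terms cancel exactly, and the claim collapses to showing $\theta^{ref}\cdot(b^{*}-b) \le \epsilon^{\top}|\theta^{ref}|$. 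This last step follows from the triangle inequality together with the defining property of $\epsilon$, namely $b^{*}\in[b-\epsilon,b+\epsilon]$ so that $|b^{*}_j - b_j| \le \epsilon_j$ for every $j$; then $\theta^{ref}\cdot(b^{*}-b) \le \sum_j |\theta^{ref}_j|\,|b^{*}_j-b_j| \le \sum_j |\theta^{ref}_j|\,\epsilon_j = \epsilon^{\top}|\theta^{ref}|$. The delicate part is the sign- and term-tracking that makes the log-partition and entropy contributions cancel cleanly, leaving this transparent estimate rather than a tangle of surviving terms.

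The final inequality is immediate: since $\epsilon \ge \vec{0}_m$, an application of Hölder's inequality gives $\epsilon^{\top}|\theta^{ref}| = \sum_j \epsilon_j|\theta^{ref}_j| \le \|\epsilon\|_\infty \sum_j |\theta^{ref}_j| = \|\epsilon\|_\infty\|\theta^{ref}\|_1$, and multiplying by $2$ and adding $d(\eta,g^{ref})$ to both sides completes the chain.
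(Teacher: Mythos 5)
Your proof is correct, and for the key middle inequality it takes a noticeably more direct route than the paper. The paper proves $-V(b,\epsilon)\leq -\eta\cdot\log g^{ref} + 2\epsilon^{\top}|\theta^{ref}|$ by first constructing a ``reference'' max-entropy program whose optimum is $g^{ref}$, verifying the KKT conditions to certify that the positive/negative parts of $\theta^{ref}$ are its optimal multipliers (Theorem~\ref{app_thm:ref_bf_prog}), invoking the Boyd--Vandenberghe global sensitivity inequality for the perturbed program (Theorem~\ref{app_thm:boyd_glob_sens}, Corollary~\ref{app_cor:bf_global_sens}), and then simplifying the resulting $(\sigma'^{\,ref}-\sigma^{ref})^{\top}(\widehat{b}-b^{*})$ term via Lemma~\ref{app_lem:sens_simplif}. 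You instead observe that $V(b,\epsilon)$ equals (and by weak duality is at least) the dual objective of Theorem~\ref{thm:BF_dual} evaluated at any feasible $(\sigma,\sigma')$, plug in the positive and negative parts of $\theta^{ref}$, and reduce the claim to $\theta^{ref}\cdot(b^{*}-b)\leq \epsilon^{\top}|\theta^{ref}|$, which follows from $|b^{*}_j-b_j|\leq\epsilon_j$. The two arguments rest on the same underlying fact --- the sensitivity bound is itself obtained by evaluating a dual at the reference multipliers --- but yours is self-contained, skips the KKT verification and the separate simplification lemma, and makes the cancellation of the log-partition and entropy terms explicit through the identity $d(\eta,g^{ref})=\eta\cdot\log\eta-\theta^{ref}\cdot b^{*}+\sum_i\log Z_i^{(\theta^{ref})}$. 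The paper's machinery buys a reusable sensitivity framework (its reference program also underlies the consistency argument), but for this theorem alone your derivation is cleaner; your handling of the first and third inequalities matches the paper's.
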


From this result, we can write the upper bound in terms of model and approximation uncertainty.
Let $g^{*} = g^{(\theta^{*})} := \argmin_{g\in \G_{bf}}d(\eta, g)$.
By definition, $g^{*}\in \G_{bf}$.
By also recalling that $d(\eta, g^{bf}) = d(\eta, g^{*}) + d(g^{*}, g^{bf})$ (Lemma~\ref{app_lem:bf_loss_decomp}), we get the following.

\begin{cor}[Theorem~\ref{thm:basic_bound}]~\label{app_cor:bf_bound_to_uncertainties}
    Suppose we had the same assumptions as the above theorem.
    Then,
    \[
        d(\eta, g^{bf}) \leq  d(\eta, g^{*}) + 2\epsilon^{\top}|\theta^{*}| \leq d(\eta, g^{*}) + 2 \|\epsilon\|_\infty \|\theta^{*}\|_1
    \]
    and both $2\epsilon^{\top}|\theta^{*}|$, $2 \|\epsilon\|_\infty \|\theta^{*}\|_1$ serve as upper bounds to BF's approximation uncertainty $d(g^{*}, g^{bf}) = \mathcal{E}^{appr}_{bf}$.
\end{cor}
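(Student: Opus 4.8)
The plan is to obtain this corollary as a direct specialization of Theorem~\ref{app_thm:bf_bound}, which already does the analytic heavy lifting (the sensitivity analysis bounding $-V(b,\epsilon)$). Since Theorem~\ref{app_thm:bf_bound} holds for an \emph{arbitrary} reference prediction $g^{ref}=g^{(\theta^{ref})}\in\G_{bf}$, the first step is simply to instantiate that reference at the best approximator itself: set $g^{ref}=g^{*}=g^{(\theta^{*})}$. This is legitimate because $g^{*}:=\argmin_{g\in\G_{bf}}d(\eta,g)$ is by definition a member of $\G_{bf}$, so it is a valid reference with parameter vector $\theta^{*}$. Substituting into the chain of inequalities from Theorem~\ref{app_thm:bf_bound} immediately yields
\[
    d(\eta, g^{bf}) \leq d(\eta, g^{*}) + 2\epsilon^{\top}|\theta^{*}| \leq d(\eta, g^{*}) + 2\|\epsilon\|_\infty \|\theta^{*}\|_1,
\]
where the last step is the element-wise bound $\epsilon^{\top}|\theta^{*}|=\sum_{j}\epsilon_{j}|\theta^{*}_{j}|\leq\|\epsilon\|_\infty\|\theta^{*}\|_1$ (H\"older with $p=\infty$, $q=1$), valid since $\epsilon\geq\vec{0}_{m}$.

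Second, I would convert this bound on the total loss into a bound on the approximation uncertainty using the Pythagorean decomposition. By Lemma~\ref{app_lem:bf_loss_decomp} we have the exact identity $d(\eta, g^{bf}) = d(\eta, g^{*}) + d(g^{*}, g^{bf})$, whose second term is precisely $\mathcal{E}^{appr}_{bf}$. Subtracting $d(\eta, g^{*})$ from both sides of the displayed inequality cancels the model-uncertainty term and leaves
\[
    d(g^{*}, g^{bf}) \leq 2\epsilon^{\top}|\theta^{*}| \leq 2\|\epsilon\|_\infty \|\theta^{*}\|_1,
\]
which is the claimed bound on $\mathcal{E}^{appr}_{bf}$. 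Since $\theta^{*}$, and hence $\|\theta^{*}\|_1$, is determined by the fixed problem data $A$ and $\eta$ alone and does not depend on $\epsilon$ (or on $b$), the right-hand side is $O(\|\epsilon\|_\infty)$ with implicit constant $2\|\theta^{*}\|_1$, recovering the convergence rate stated in Theorem~\ref{thm:basic_bound}.

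There is little genuine obstacle at this level: the analytic core lives in Theorem~\ref{app_thm:bf_bound}, and the corollary is essentially a substitution followed by a rearrangement. The one point requiring care is the applicability of Lemma~\ref{app_lem:bf_loss_decomp}, which relies on $Ag^{*}=A\eta$ — that the best approximator reproduces the true rule accuracies and class frequencies. I would discharge this by invoking the corollary establishing $Ag^{*}=A\eta$ (the marginal-matching result already used inside the proof of Lemma~\ref{app_lem:bf_loss_decomp}), or equivalently by noting that it is the first-order optimality condition for $\min_{\theta}d(\eta,g^{(\theta)})$ over the exponential family $\G_{bf}$, whose sufficient statistics are exactly the rows of $A$. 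With that in hand, the two displayed steps complete the proof.
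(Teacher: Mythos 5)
Your proposal is correct and matches the paper's own derivation: the paper likewise instantiates Theorem~\ref{app_thm:bf_bound} with the reference prediction $g^{ref}=g^{*}\in\G_{bf}$ and then subtracts the model uncertainty using the decomposition $d(\eta,g^{bf})=d(\eta,g^{*})+d(g^{*},g^{bf})$ from Lemma~\ref{app_lem:bf_loss_decomp}. Your extra care about the hypothesis $Ag^{*}=A\eta$ (discharged via Corollary~\ref{app_cor:g_star_marginal}) is exactly how the paper justifies applying the Pythagorean identity, so there is nothing to add.
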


\begin{proof}[Proof of Theorem~\ref{app_thm:bf_bound}]
    We'll first argue why to bound $d(\eta, g^{bf})$, it's sufficient to upper bound $-V(b, \epsilon)$.
    Recall from Lemma~\ref{app_lem:adv_opt_label}
    \[
        -V(b, \epsilon) = -\min_{z\in P}\max_{g\in \Delta_{k}^{n}}z^{\top}\log g= \max_{z\in P}\min_{g\in \Delta_{k}^{n}}-z^{\top}\log g = \max_{z\in P}-z^{\top}\log g^{bf}.
    \]
    It's clear that the right hand term is larger than or equal to $-\eta^{\top}\log g^{bf}$, so, $-V(b,\epsilon) + \eta\cdot \log\eta\geq d(\eta, g^{bf)}$.
    Indeed, $-V(b, \epsilon)$ can have its value be computed from a convex program.

    We use a sensitivity analysis result to upper bound that convex program's optimal objective value.
    The program whose objective value we wish to bound is the \textit{arbitrary} program.
    E.g.~the convex program whose optimal value is $-V(b, \epsilon)$ is the arbitrary program.
    In essense, the result says that if there's another convex program (call it the reference program) with sufficiently similar constraints to the arbitrary program, we can bound the arbitrary program's objective by the reference program's objective plus a weighted sum of the reference program's optimal Lagrange multipliers.
    We will construct a reference program from our chosen reference prediction $g^{ref}$ and denote its optimal Lagrange multipliers by $\sigma^{ref}$ and $\sigma^{\prime ref}$.
    The polytope constraint in the reference program will involve the same matrix $A$, but will have the constraints $Az\leq \widehat{b}$ and $-Az\leq -\widehat{b}$.
    One can ignore what $\widehat{b}$ is for now.
    Before we present the bound, define $\epsilon^{+} = b+\epsilon-b^{*}$, the total overestimate of $b^{*}$ and $\epsilon^{-} = b^{*}-b+\epsilon$, the total underestimate of the same quantity.
    Corollary~\ref{app_cor:bf_global_sens} states
    \[
        -\eta\log g^{bf}\leq -g^{( \theta^{ref} )}\cdot \log g^{( \theta^{ref} )}+ (\sigma^{\prime ref}-\sigma^{ref})^{\top}(\widehat{b}-b^*) + \epsilon^{+\top}\sigma^{ref}+ \epsilon^{-\top}\sigma^{\prime ref}.
    \]
    Now, what remains is for us to simplify the terms in the upper bound.
    Note that by definition, $\theta^{ref} = \sigma^{\prime ref}-\sigma^{ref}$.
    Lemma~\ref{app_lem:sens_simplif} states that
    \[
        \theta^{ref\top}(\widehat{b}-b^{*}) = (g^{( \theta^{ref} )}-\eta)^{\top}\log g^{( \theta^{ref} )}.
    \]
    This means that we can simplify our above bound to
    \begin{align*}
        -\eta\log g^{bf}
        &\leq
        -g^{( \theta^{ref} )}\cdot \log g^{( \theta^{ref} )}+ (g^{( \theta^{ref} )}-\eta)^{\top}\log g^{( \theta^{ref} )}  + \epsilon^{+\top}\sigma^{ref}+ \epsilon^{-\top}\sigma^{\prime ref}\\
        &=
        -\eta\cdot \log g^{( \theta^{ref} )}+ \epsilon^{+\top}\sigma^{ref}+ \epsilon^{-\top}\sigma^{\prime ref}.
    \end{align*}
    To continue, observe that by definition, $b^{*}\in [ b-\epsilon, b+\epsilon ]$.
    This means that for every element $j$, $|b_{j}-b_{j}^{*}|\leq \epsilon_{j}$.
    So, $\epsilon^{+}, \epsilon^{-}\leq 2 \epsilon$.
    Thus if we take $|\cdot |$ to be element-wise absolute value,
    \[
        \epsilon^{+\top}\sigma^{ref}+ \epsilon^{-\top}\sigma^{\prime ref} \leq 2 \epsilon \cdot |\theta^{ref}|\leq 2\|\epsilon\|_\infty\|\theta^{ref}\|_{1}.
    \]
    We get the first inequality because we will construct the reference weights $\sigma^{ref}, \sigma^{\prime ref}$ so that for any element $j$, only one of $\sigma^{ref}_{j}$ and $\sigma^{\prime ref}_{j}$ will be non-zero.
    Putting everything we have so far together, we have
    \[
        -\eta\cdot \log g^{bf}\leq -V(b,\epsilon)\leq -\eta\cdot \log g^{(\theta^{ref})} + 2 \epsilon \cdot |\theta^{ref}|\leq -\eta\cdot \log g^{(\theta^{ref})} + 2\|\epsilon\|_\infty \|\theta^{ref}\|_{1}.
    \]
    Adding $\eta\cdot \log \eta$ to each of the four sections of the inequality gives us our claim.
\end{proof}

\subsection{Reference Program}\label{app_subsec:ref_prog}

We now construct a BF program where an arbitrary $g =g^{(\theta)}\in \G_{bf}$ is the optimal solution.
This is equivalent to fixing a vector $\theta\in \mathbb{R}^{m}$.

\begin{thm}\label{app_thm:ref_bf_prog}
    Suppose we had some fixed but arbitrary $g=g^{(\theta)}\in \G_{bf}$ and define $\widehat{b}= Ag^{(\theta)}$.
    Also, define our Lagrange multipliers
    \[
        \sigma_{i} = \begin{cases}
            -\theta_{i} \quad \text{if} \quad \theta_{i}<0\\
            0 \quad \text{otherwise}
        \end{cases}
        \quad \text{and} \quad
        \sigma^{\prime}_{i} = \begin{cases}
            \theta_{i} \quad \text{if} \quad \theta_{i}\geq 0\\
            0 \quad \text{otherwise.}
        \end{cases}
    \]
    Then, $g$ and $\sigma, \sigma^{\prime}$ are jointly optimal for the maximum entropy problem
    \[
        \min_{z\in P} z\cdot \log z \qquad \text{where} \qquad P = \{ z\in \Delta_{k}^{n}\colon \widehat{b}\leq Az\leq \widehat{b} \}.
    \]
\end{thm}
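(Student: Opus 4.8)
The plan is to exhibit $(g,\sigma,\sigma')$ as a Karush–Kuhn–Tucker (KKT) point of the convex program
\[
    \min_{z\in \Delta_{k}^{n}} z\cdot \log z \quad \text{subject to}\quad Az = \widehat{b},
\]
which coincides with $\min_{z\in P} z\cdot \log z$ because the two-sided constraint $\widehat{b}\leq Az\leq \widehat{b}$ forces the equality $Az=\widehat{b}$. The objective $z\cdot \log z = \sum_{i,\ell} z_{i\ell}\log z_{i\ell}$ is convex and every constraint is affine, so the program is convex; consequently KKT conditions are sufficient for global optimality, and it is enough to verify that the claimed pair is a KKT point. I would follow the Lagrangian setup of Lemma~\ref{app_lem:adv_opt_label}, writing the equality as the two inequalities $Az\leq \widehat{b}$ and $-Az\leq -\widehat{b}$ with nonnegative multipliers $\sigma$ and $\sigma'$ respectively, together with a multiplier $\xi$ for the simplex constraint $Dz=\vec{1}_{n}$.

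First I would dispatch feasibility and complementary slackness, which are essentially immediate. Primal feasibility holds because $g=g^{(\theta)}$ is a normalized softmax, so $g\in \Delta_{k}^{n}$, and $Ag=\widehat{b}$ by the very definition $\widehat{b}=Ag^{(\theta)}$. Dual feasibility $\sigma,\sigma'\geq \vec{0}_{m}$ holds by construction, since $\sigma$ collects the negated negative entries of $\theta$ and $\sigma'$ the positive entries. Complementary slackness is then automatic: both inequalities are active at $g$ (the slacks $(Ag-\widehat{b})_{j}$ and $(-Ag+\widehat{b})_{j}$ vanish), so the products with $\sigma_{j}$ and $\sigma'_{j}$ are zero regardless of the multiplier values.

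The one remaining condition is stationarity, and here I would reuse the computation already performed in the proof of Lemma~\ref{app_lem:adv_opt_label}, specialized to $b=\widehat{b}$ and $\epsilon=\vec{0}_{m}$. In that setting the stationarity equation $\log z + \vec{1}_{nk} + A^{\top}\sigma - A^{\top}\sigma' + D^{\top}\xi = \vec{0}_{nk}$ is solved, after eliminating $\xi$ via the per-block normalization $Dz=\vec{1}_{n}$, by $z_{i\ell}\propto \exp(a_{i\ell}^{(\sigma'-\sigma)})$. Since the positive/negative-part decomposition yields $\sigma'-\sigma=\theta$, this is exactly $z_{i\ell}\propto \exp(a_{i\ell}^{(\theta)})=g^{(\theta)}_{i\ell}$, so $g$ satisfies stationarity with these multipliers. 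Hence $(g,\sigma,\sigma')$ is a KKT point and, by convexity, jointly optimal.

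The main subtlety worth flagging is not analytic difficulty but bookkeeping about uniqueness: when an equality constraint is split into two inequalities, the multipliers are not unique, and any nonnegative $\sigma,\sigma'$ with $\sigma'-\sigma=\theta$ is an equally valid optimal dual solution. The positive/negative-part choice is the canonical one for which at most one of $\sigma_{j},\sigma'_{j}$ is nonzero in each coordinate. I would emphasize this property explicitly, since it is precisely what is exploited downstream to bound $\epsilon^{+\top}\sigma^{ref}+ \epsilon^{-\top}\sigma^{\prime ref}$ by $2\epsilon\cdot |\theta^{ref}|$ in the proof of Theorem~\ref{app_thm:bf_bound}. Apart from this observation, the argument is routine once the Lagrangian derivation of Lemma~\ref{app_lem:adv_opt_label} is imported.
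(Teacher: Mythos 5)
Your proposal is correct and follows essentially the same route as the paper: both verify the KKT conditions directly, observing that primal/dual feasibility and complementary slackness are immediate from the construction, and that stationarity holds because the softmax form of $g^{(\theta)}$ is precisely the solution of the zero-gradient equation derived in Lemma~\ref{app_lem:adv_opt_label} with $\sigma'-\sigma=\theta$. Your added remark on the non-uniqueness of the split multipliers and why the positive/negative-part choice is the one exploited downstream is a useful clarification but does not change the argument.
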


\begin{proof}
    We state the KKT conditions and show that they are satisfied.
    The convex program in question is
    \[
        \min_{\substack{Az-\widehat{b}\leq 0\\-Az+\widehat{b}\leq 0\\Dz=\vec{1}_{n} } } \ \ z \cdot \log z.
    \]
    Say we associate (like in the proof of Theorem~\ref{thm:BF_dual}) the Lagrange multipliers $\xi$ with constraint $Dz-\vec{1}_{n} =\vec{0}_{n}$.
    Before stating the KKT conditions concretely, note that by construction, Slater's condition is satisfied because we have all affine constraints and the feasible region is non-empty (see \citet{boyd_vandenberghe_2004}, Section 5.2.3).
    To show that a pair of primal and dual variables $z$, $\sigma$, and $\sigma^{\prime}$ are jointly optimal, it is necessary and sufficient to satisfy
    \begin{align*}
        Az-\widehat{b}                           & \leq 0\\
        -Az+\widehat{b}                          & \leq 0\\
        Dz-\vec{1}_{n}                           & =0\\
        \sigma                                   & \geq 0\\
        \sigma^{\prime}                          & \geq 0\\
        \sigma_{i}(Az-\widehat{b})_{i}           & =0 \quad i=1,\ldots\\
        \sigma^{\prime}_{i}(-Az+\widehat{b})_{i} & =0 \quad i=1,\ldots\\
        \nabla_{z}\left[ z\cdot \log z + \sigma^{\top}(Az-\widehat{b}) + \sigma^{\prime\top}(-Az+\widehat{b}) +\xi^{\top}(Dz-\vec{1}_{n}) \right] & =0.
    \end{align*}

    Of course, we will be considering $z = g^{(\theta)}$.
    The first three requirements are met by construction.
    To see that the third requirement is met, note that by construction, $g^{(\theta)}\in \Delta_{k}^{n}$, meaning that every $k$ elements sum to one, which is what $Dz=\vec{1}_{n}$ requires.
    Also, we have constructed $\sigma$ and $\sigma^{\prime}$ to be non-negative.
    Now, since $Az=\widehat{b}$, the complementary slackness conditions are trivially satisfied.
    To see that the zero gradient condition is satisfied, note that in solving for the functional form of the learner's prediction (or adversary's labeling), we took the gradient of the Lagrangian and set the gradient equal to $0$ and solved for $z$.
    In us choosing $\sigma^{ds}$, we use the functional form of $z$, so we automatically satisfy the zero gradient condition.
    Therefore, all of the KKT conditions are satisfied.
\end{proof}

\subsection{Sensitivity Analysis of an Arbitrary BF program}\label{app_subsec:sensitivity_analy}
We now want to write an arbitrary BF program as an instance of the reference program with perturbed constraints.
This will allow us to bound $-V(b,\epsilon)$.
Define our perturbed problem as follows.
\[
    V^{*}(\widehat{b}, u_{1}, u_{2} ) :=  \min_{\substack{Az-\widehat{b}\leq u_{1}\\-Az+\widehat{b}\leq u_{2}\\Dz-\vec{1}_{n}=\vec{0}_{n} } } \ \ z \cdot \log z.
\]
If $u_{1} = u_{2} =0$, i.e.~$V^{*}(\widehat{b},0,0)$, then we get our reference program.
Recall that we have defined $\epsilon^{+} = b+\epsilon-b^{*}$ and $\epsilon^{-} = b^{*}-b+\epsilon$.
So, if we choose
\[
    u_{1}= -\widehat{b}+ b^{*} + \epsilon^{+} \quad \text{and} \quad u_{2}=\widehat{b}-b^{*}+\epsilon^{-},
\]
we have
\[
    V^{*}(\widehat{b}, -\widehat{b}+ b^{*} + \epsilon^{+}, \widehat{b}-b^{*}+\epsilon^{-})
    = \min_{\substack{Az-\widehat{b}\leq -\widehat{b}+ b^{*} + \epsilon^{+}\\-Az+\widehat{b}\leq \widehat{b}-b^{*}+\epsilon^{-}\\Dz-\vec{1}_{n}=\vec{0}_{n} } } \ \ z \cdot \log z
    = V( b, \epsilon )
\]
which is the arbitrary BF program whose optimal value we have discussed beforehand.
In the proof of Theorem~\ref{app_thm:bf_bound}, we saw that $-V(b, \epsilon)\geq -\eta^{\top}\log g^{bf}$.
So, getting a lower bound for $V^{*}(\widehat{b}, u_{1}, u_{2})$ (with appropriately chosen $u_{1}, u_{2}$) would give an upper bound for $-V( b, \epsilon)$.

\citet{boyd_vandenberghe_2004} provide a global sensitivity result (Section 5.6.2).
We state the result for our program without proof.
Indeed, the requirement for the following result is that the reference or unperturbed program satisfy Slater's condition, which we have argued for in Section~\ref{app_subsec:ref_prog}.
We associate $\sigma^{ref}$ with the upper bound for $Az$, i.e.~$Az-\widehat{b}\leq 0$, while $\sigma^{\prime ref}$ is associated with $-Az+\widehat{b}\leq 0$.
\begin{thm}[\citet{boyd_vandenberghe_2004} Section 5.6.2, Equation 5.57]\label{app_thm:boyd_glob_sens}
    If $\sigma$ and $\sigma^{\prime}$ are dual optimal for the unperturbed (or reference) problem, then for all $u_{1}$ and $u_{2}$, we have
    \[
        V^{*}(\widehat{b}, u_{1},u_{2}) \geq V^{*}(\widehat{b},0,0) - \sigma^{ref\top}u_{1} - \sigma^{\prime ref\top}u_{2}.
    \]
\end{thm}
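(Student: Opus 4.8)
The plan is to derive the inequality from weak duality, exactly as in the standard global sensitivity argument of \citet{boyd_vandenberghe_2004}. First I would form the Lagrangian of the reference (unperturbed) program $V^{*}(\widehat{b}, 0, 0)$, attaching $\sigma^{ref}$ to the constraint $Az - \widehat{b} \leq \vec{0}_{m}$, $\sigma^{\prime ref}$ to $-Az + \widehat{b} \leq \vec{0}_{m}$, and an unconstrained multiplier $\xi^{ref}$ to the equality $Dz = \vec{1}_{n}$:
\[
    L(z, \sigma, \sigma^{\prime}, \xi) = z \cdot \log z + \sigma^{\top}(Az - \widehat{b}) + \sigma^{\prime\top}(-Az + \widehat{b}) + \xi^{\top}(Dz - \vec{1}_{n}).
\]
Since Slater's condition holds for this program (all constraints are affine and the feasible set is nonempty, as argued in Section~\ref{app_subsec:ref_prog}), strong duality gives $V^{*}(\widehat{b},0,0) = \inf_{z} L(z, \sigma^{ref}, \sigma^{\prime ref}, \xi^{ref})$, i.e.\ the reference optimum equals the dual function evaluated at the optimal multipliers.

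Next I would take an arbitrary point $z$ feasible for the perturbed program, so that $Az - \widehat{b} \leq u_{1}$, $-Az + \widehat{b} \leq u_{2}$, and $Dz = \vec{1}_{n}$. Evaluating the dual function against this particular $z$ yields
\[
    V^{*}(\widehat{b}, 0, 0) \leq z \cdot \log z + \sigma^{ref\top}(Az - \widehat{b}) + \sigma^{\prime ref\top}(-Az + \widehat{b}) + \xi^{ref\top}(Dz - \vec{1}_{n}).
\]
The equality-constraint term vanishes because $Dz = \vec{1}_{n}$, and since $\sigma^{ref}, \sigma^{\prime ref} \geq \vec{0}_{m}$ I can bound each inequality-constraint term from above using $Az - \widehat{b} \leq u_{1}$ and $-Az + \widehat{b} \leq u_{2}$. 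This gives $V^{*}(\widehat{b}, 0, 0) \leq z \cdot \log z + \sigma^{ref\top} u_{1} + \sigma^{\prime ref\top} u_{2}$, which rearranges to $z \cdot \log z \geq V^{*}(\widehat{b}, 0, 0) - \sigma^{ref\top} u_{1} - \sigma^{\prime ref\top} u_{2}$.

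Finally, since this lower bound holds for every feasible $z$ of the perturbed program and its right-hand side does not depend on $z$, taking the infimum of the left side over all such $z$ replaces it by $V^{*}(\widehat{b}, u_{1}, u_{2})$ and delivers the claim. I expect no real obstacle here: the argument is just weak duality together with careful sign-tracking. The only points requiring attention are confirming that the unperturbed program attains strong duality (already secured by Slater's condition in Section~\ref{app_subsec:ref_prog}, so the dual value coincides with $V^{*}(\widehat{b},0,0)$) and observing that the unperturbed equality constraint $Dz = \vec{1}_{n}$ carries no perturbation, so it contributes nothing to the bound.
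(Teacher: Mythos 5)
Your proof is correct: it is exactly the standard global sensitivity argument (strong duality for the unperturbed problem via Slater's condition, then weak-duality-style bounding of the Lagrangian at any point feasible for the perturbed problem, using the signs of $\sigma^{ref},\sigma^{\prime ref}$ and the fact that the equality constraint is unperturbed). The paper itself states this result without proof, citing \citet{boyd_vandenberghe_2004} Section 5.6.2, and your argument is precisely the one given there, so there is nothing further to compare.
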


The result we use in the proof of the BF bound follows from plugging in our choices for $u_{1}$ and $u_{2}$ and rearranging.
\begin{cor}\label{app_cor:bf_global_sens}
    \[
        -\eta\log g^{bf}\leq -g^{( \sigma^{ref} )}\cdot \log g^{( \sigma^{ref} )}+ (\sigma^{\prime ref}-\sigma^{ref})^{\top}(\widehat{b}-b^{*}) + \epsilon^{+\top}\sigma^{ref}+ \epsilon^{-\top}\sigma^{\prime ref}
    \]
\end{cor}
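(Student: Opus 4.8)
The plan is to obtain this corollary as a direct instantiation of the global sensitivity bound in Theorem~\ref{app_thm:boyd_glob_sens}, feeding in the particular perturbations $u_{1}=-\widehat{b}+b^{*}+\epsilon^{+}$ and $u_{2}=\widehat{b}-b^{*}+\epsilon^{-}$ chosen just above the statement. First I would assemble the two facts already in place. By Theorem~\ref{app_thm:ref_bf_prog}, the unperturbed (reference) program $V^{*}(\widehat{b},0,0)$ attains its optimum at the primal point $g^{(\theta^{ref})}$ with $\theta^{ref}=\sigma^{\prime ref}-\sigma^{ref}$, so its optimal value is $V^{*}(\widehat{b},0,0)=g^{(\theta^{ref})}\cdot \log g^{(\theta^{ref})}$. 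By the preceding discussion, the perturbed program with these $u_{1},u_{2}$ coincides with the arbitrary BF program, i.e.\ $V^{*}(\widehat{b},u_{1},u_{2})=V(b,\epsilon)$.

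Substituting into Theorem~\ref{app_thm:boyd_glob_sens} gives
\[
    V(b,\epsilon)\;\geq\; g^{(\theta^{ref})}\cdot \log g^{(\theta^{ref})} - \sigma^{ref\top}u_{1} - \sigma^{\prime ref\top}u_{2}.
\]
I would then expand $-\sigma^{ref\top}u_{1}-\sigma^{\prime ref\top}u_{2}$ and regroup. The terms containing $\widehat{b}$ and $b^{*}$ collect, via $\theta^{ref}=\sigma^{\prime ref}-\sigma^{ref}$, into the single term $-(\sigma^{\prime ref}-\sigma^{ref})^{\top}(\widehat{b}-b^{*})$, while the terms containing the perturbation slacks become $-\epsilon^{+\top}\sigma^{ref}-\epsilon^{-\top}\sigma^{\prime ref}$. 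Multiplying the inequality by $-1$ flips its direction and delivers the right-hand side of the claim as an upper bound on $-V(b,\epsilon)$.

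To close, I would chain with the inequality $-\eta\cdot \log g^{bf}\leq -V(b,\epsilon)$ recorded at the start of the proof of Theorem~\ref{app_thm:bf_bound}; this holds because $g^{bf}$ maximizes $\min_{z\in P}z\cdot \log g$, whence $V(b,\epsilon)=\min_{z\in P}z\cdot \log g^{bf}\leq \eta\cdot \log g^{bf}$ using $\eta\in P$. Transitivity then yields the stated bound on $-\eta\cdot \log g^{bf}$ (noting that the first term $g^{(\sigma^{ref})}$ in the statement should read $g^{(\theta^{ref})}$, as the derivation produces). The argument is essentially bookkeeping once the sensitivity theorem is available; the one place meriting care is the sign accounting in the regrouping, where one must confirm that the $\widehat{b}-b^{*}$ contributions merge correctly and that $\epsilon^{+}$ pairs with $\sigma^{ref}$ and $\epsilon^{-}$ with $\sigma^{\prime ref}$ rather than being inadvertently swapped.
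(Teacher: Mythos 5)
Your proposal is correct and follows essentially the same route as the paper's proof: instantiate the global sensitivity bound with the chosen perturbations $u_{1},u_{2}$, identify $V^{*}(\widehat{b},0,0)$ and $V^{*}(\widehat{b},u_{1},u_{2})$ with the reference and arbitrary program values, regroup the $\widehat{b}-b^{*}$ and slack terms, and chain with $-\eta\cdot\log g^{bf}\leq -V(b,\epsilon)$. Your sign bookkeeping checks out, and you are right that the $g^{(\sigma^{ref})}$ in the statement is a typo for $g^{(\theta^{ref})}$.
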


\begin{proof}
    First, reverse the direction of the inequality in Theorem~\ref{app_thm:boyd_glob_sens}.
    \[
        -V^{*}(\widehat{b}, u_{1},u_{2}) \leq -V^{*}(\widehat{b},0,0) +\sigma^{ref\top}u_{1} + \sigma^{\prime ref\top}u_{2}.
    \]
    Plugging in
    \[
        u_{1} = -\widehat{b}+ b^{*} + \epsilon^{+} \quad \text{and}  \quad u_{2} = \widehat{b}-b^{*}+\epsilon^{-}
    \]
    as chosen above gives
    \[
        - g^{bf}\cdot \log g^{bf}\leq -g^{ ref }\cdot \log g^{ref} +\sigma^{ref\top}u_{1} + \sigma^{\prime ref\top}u_{2}.
    \]
    because by construction, $V(\widehat{b}, u_{1}, u_{2}) = g^{bf}\cdot \log g^{bf}$ and $V^{*}(\widehat{b}, 0,0) = V(\widehat{b}, 0) = g^{ ref }\cdot \log g^{ref}$.
    Explicitly expanding $u_{1}, u_{2}$ gives
    \[
        -\eta\cdot \log g^{bf}\leq -g^{ref}\cdot \log g^{ref}+ (\sigma^{\prime ref}-\sigma^{ref})^{\top}(\widehat{b}-b^{*}) + \epsilon^{+\top}\sigma^{ref}+ \epsilon^{-\top}\sigma^{\prime ref}
    \]
    where we recall that in Theorem~\ref{app_thm:bf_bound}, we showed that $-\eta\cdot \log g^{bf}\leq -g^{bf}\cdot \log g^{bf}$.
\end{proof}

\subsection{Simplification of terms in Sensitivity Analysis}\label{app_subsec:sens_analy_simpl}

We wish to now simplify the term $(\sigma^{\prime ref}-\sigma^{ref})^{\top}(\widehat{b}-b^{*})$.
Recall that by construction of $\sigma^{ref}$ and $\sigma^{\prime ref}$ that $\sigma^{\prime ref}-\sigma^{ref} = \theta^{ref}$.
So, we analyze $\theta^{ref\top}(\widehat{b}-b^{*})$.

\begin{lemma}\label{app_lem:sens_simplif}
    \[
        \theta^{ref\top}(\widehat{b}-b^{*}) = (g^{( \theta^{ref} )}-\eta)^{\top}\log g^{( \theta^{ref} )}.
    \]
\end{lemma}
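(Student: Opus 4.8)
The plan is to rewrite both sides using the partition-function decomposition of the exponential family and to exploit the fact that $\widehat{b}$ and $b^{*}$ are precisely the marginals $Ag^{(\theta^{ref})}$ and $A\eta$. First I would recall from the construction of the reference program (Theorem~\ref{app_thm:ref_bf_prog}) that $\widehat{b} = Ag^{(\theta^{ref})}$, while by definition $b^{*} = A\eta$. Hence $\widehat{b} - b^{*} = A(g^{(\theta^{ref})} - \eta)$, and using the shorthand $a^{(\theta)} = A^{\top}\theta$ the left-hand side becomes
\[
    \theta^{ref\top}(\widehat{b} - b^{*}) = \theta^{ref\top}A\big(g^{(\theta^{ref})} - \eta\big) = a^{(\theta^{ref})}\cdot\big(g^{(\theta^{ref})} - \eta\big).
\]

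Next I would substitute the exponential-family form. Writing $Z^{(\theta)}_i = \sum_{\ell'}\exp(a^{(\theta)}_{i\ell'})$ as in the proof of the Pythagorean theorem (Lemma~\ref{lem:pythagorean}), we have $\log g^{(\theta^{ref})}_{i\ell} = a^{(\theta^{ref})}_{i\ell} - \log Z^{(\theta^{ref})}_i$, equivalently $a^{(\theta^{ref})}_{i\ell} = \log g^{(\theta^{ref})}_{i\ell} + \log Z^{(\theta^{ref})}_i$. Plugging this into the dot product and splitting the sum over $(i,\ell)$ gives
\[
    a^{(\theta^{ref})}\cdot\big(g^{(\theta^{ref})} - \eta\big) = \big(g^{(\theta^{ref})} - \eta\big)^{\top}\log g^{(\theta^{ref})} + \sum_{i=1}^{n}\log Z^{(\theta^{ref})}_i\sum_{\ell=1}^{k}\big(g^{(\theta^{ref})}_{i\ell} - \eta_{i\ell}\big).
\]

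The key step is to observe that the second term vanishes. Since $\log Z^{(\theta^{ref})}_i$ does not depend on $\ell$, and since both $g^{(\theta^{ref})}_i$ and $\eta_i$ lie in $\Delta_k$ — so that $\sum_{\ell} g^{(\theta^{ref})}_{i\ell} = \sum_{\ell}\eta_{i\ell} = 1$ — the inner sum $\sum_{\ell}(g^{(\theta^{ref})}_{i\ell} - \eta_{i\ell})$ equals $1 - 1 = 0$ for every $i$. This leaves exactly $(g^{(\theta^{ref})} - \eta)^{\top}\log g^{(\theta^{ref})}$, which is the claim. I do not expect a genuine obstacle here; the entire content is this cancellation of the log-partition contribution, and the one thing to be careful about is that it relies on $g^{(\theta^{ref})}$ and $\eta$ sharing the same block-wise normalization (each block summing to one over the $k$ labels) — a property that would fail for arbitrary vectors in $\R^{nk}$ and is exactly why the argument is special to probability distributions.
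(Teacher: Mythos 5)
Your proof is correct and follows essentially the same route as the paper's: rewrite the left-hand side as $(g^{(\theta^{ref})}-\eta)^{\top}a^{(\theta^{ref})}$ using $\widehat{b}=Ag^{(\theta^{ref})}$ and $b^{*}=A\eta$, then absorb the log-partition term, which cancels because each block of $g^{(\theta^{ref})}$ and $\eta$ sums to one. The paper phrases the cancellation as ``subtracting zero'' rather than substituting $a^{(\theta^{ref})}_{i\ell}=\log g^{(\theta^{ref})}_{i\ell}+\log Z^{(\theta^{ref})}_i$ first, but the two are the same argument.
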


\begin{proof}
    Observe that by definition and construction respectively, $b^{*} = A\eta$ and $\widehat{b} = Ag^{(\theta^{ref})}$.
    Therefore, the left hand side of our claim can be written as
    \[
        \theta^{ref\top}(\widehat{b}-b^{*}) = \theta^{ref\top}(A(g^{(\theta^{ref})}-\eta)) = (g^{(\theta^{ref})}-\eta)^{\top}A^{\top}\theta^{ref}.
    \]
    But by definition, that's equal to
    \[
        (g^{(\theta^{ref})}-\eta)^{\top}a^{(\theta^{ref})} = \sum_{i=1}^{n} \sum_{\ell =1}^{k} (g^{(\theta^{ref})}_{i\ell }-\eta_{i\ell }) a^{(\theta^{ref})}_{i\ell }.
    \]
    Since $g^{(\theta^{ref})}_{i}$ and $\eta_{i}$ are each distributions (in $\Delta_{k}$), for any constant, especially
    \[
        Z_{i}^{(\theta^{ref})} := \log\bigg( \sum_{\ell =1}^{k} a^{(\theta^{ref})}_{i\ell } \bigg), \quad \text{we have} \quad \sum_{\ell=1}^{k} (g^{(\theta^{ref})}_{i\ell }-\eta_{i\ell })Z_{i}^{(\theta^{ref})} = 0.
    \]
    Using this, we can subtract $0$ to our above sum to get
    \[
        \sum_{i=1}^{n} \sum_{\ell =1}^{k} (g^{(\theta^{ref})}_{i\ell }-\eta_{i\ell }) \log(\exp(a^{(\theta^{ref})}_{i\ell } )) - \sum_{i=1}^{n} \sum_{\ell =1}^{k} (g^{(\theta^{ref})}_{i\ell }-\eta_{i\ell })Z_{i}^{(\theta^{ref})}
    \]
    \[
        = \sum_{i=1}^{n} \sum_{\ell =1}^{k} (g^{(\theta^{ref})}_{i\ell }-\eta_{i\ell }) \log\left(\frac{\exp(a^{(\theta^{ref})}_{i\ell } )}{Z_{i}^{(\theta^{ref})}} \right)
    \]
    But, the term inside the logarithm is exactly $g^{(\theta^{ref})}_{i\ell }$ and our claim is proved.
\end{proof}

\section{Proof of Theorem~\ref{thm:bf_best_approx}}\label{app_sec:bf_best_approx}
\begin{thm}[Theorem~\ref{thm:bf_best_approx}]\label{app_thm:bf_best_approx}
    There is only one best approximator to $\eta$ in $\G$, i.e.~$\argmin_{g\in \G}d(\eta, g)$ has only one element.
    Call that best approximator $g^{*} = g^{(\theta^{*})}\in \G$.
    When $\epsilon = \vec{0}_{m}$, the learner's prediction gotten from solving $V(b^{*}, \vec{0}_{m})$, call it $g^{bf*}$, is exactly $g^{*}$.
\end{thm}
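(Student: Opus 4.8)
The plan is to reduce both assertions to a single characterization: an element $g \in \G$ minimizes $d(\eta, \cdot)$ over $\G$ if and only if it matches moments with $\eta$, i.e.\ $Ag = A\eta = b^*$. Granting this, the two claims fall out quickly, since the BF solution at $\epsilon = \vec{0}_m$ is forced to satisfy exactly this condition. To set up the characterization, I would first write the objective as a function of the natural parameter: using $\log g^{(\theta)}_{i\ell} = a^{(\theta)}_{i\ell} - \log Z^{(\theta)}_i$ together with $\eta \cdot a^{(\theta)} = (A\eta)\cdot \theta = b^* \cdot \theta$, one gets
\[
    d(\eta, g^{(\theta)}) = \eta \cdot \log \eta - b^* \cdot \theta + \sum_{i=1}^n \log Z^{(\theta)}_i.
\]
The map $\theta \mapsto \sum_i \log Z^{(\theta)}_i$ is a sum of log-partition functions of affine images of $\theta$, hence convex, so the objective is convex in $\theta$. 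A direct gradient computation gives $\nabla_\theta\, d(\eta, g^{(\theta)}) = -b^* + Ag^{(\theta)}$, so the stationarity condition for a finite minimizer is precisely $Ag^{(\theta)} = A\eta$, and by convexity this condition is both necessary and sufficient for global optimality.

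Next I would settle existence, which is the one place the characterization needs care: a convex function bounded below on $\R^m$ need not attain its infimum, so moment matching could conceivably fail to be achievable within $\G$. This is where the BF machinery does the work. At $\epsilon = \vec{0}_m$ the polytope collapses to $P = \{z \in \Delta_k^n : Az = b^*\}$, which is nonempty (it contains $\eta$, since $A\eta = b^*$ by definition) and compact; as entropy is continuous, the max-entropy problem $\min_{z \in P} z \cdot \log z$ from Theorem~\ref{thm:minimax_game} attains its optimum at some $g^{bf*}$. By Theorem~\ref{thm:BF_dual} this optimizer has the softmax form, so $g^{bf*} = g^{(\theta^{bf*})} \in \G$, and because $g^{bf*} \in P$ it satisfies $Ag^{bf*} = b^* = A\eta$. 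Thus $g^{bf*}$ meets the stationarity condition and is therefore a global minimizer of $d(\eta, \cdot)$ over $\G$ — giving existence for free.

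For uniqueness, the conceptual crux, I would invoke the Pythagorean theorem (Lemma~\ref{lem:pythagorean}) symmetrically. Suppose $g, g' \in \G$ both satisfy moment matching. Applying the lemma with the hypothesis placed on $g$ gives $d(\eta, g') = d(\eta, g) + d(g, g')$, while applying it with the hypothesis placed on $g'$ gives $d(\eta, g) = d(\eta, g') + d(g', g)$. Summing these two identities and cancelling $d(\eta, g) + d(\eta, g')$ from both sides leaves $0 = d(g, g') + d(g', g)$; since each term is a nonnegative sum of KL divergences, both vanish, forcing $g = g'$. Hence the minimizer is unique; call it $g^*$. Combining with the previous paragraph, $g^{bf*}$ is a minimizer and the minimizer is unique, so $g^{bf*} = g^*$, which is exactly the second assertion.

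The main obstacle is the existence step: naively minimizing $d(\eta, g^{(\theta)})$ directly over $\theta \in \R^m$ does not obviously produce a minimizer, because the infimum may only be approached as $\theta$ escapes to infinity, pushing $g^{(\theta)}$ toward the boundary of the simplex. The clean resolution is to borrow existence from the compactness of $P$ and the exponential-family form of the max-entropy solution, rather than arguing coercivity of the objective in $\theta$-space; this simultaneously identifies the minimizer with the $\epsilon = \vec{0}_m$ BF prediction. The uniqueness argument, though short, rests essentially on the already-established Pythagorean identity.
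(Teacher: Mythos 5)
Your proof is correct, but it reaches the two conclusions by a genuinely different route from the paper. The paper gets optimality of $g^{bf*}$ as the $\epsilon=\vec{0}_m$ specialization of its sensitivity-analysis error bound (Theorem~\ref{app_thm:bf_bound}), i.e.\ $d(\eta,g^{bf*})\le -V(b^*,\vec{0}_m)+\eta\cdot\log\eta\le d(\eta,g)$ for every $g\in\G$, and then proves uniqueness by applying the Pythagorean identity once (with the moment-matching hypothesis on $g^{bf*}$ only) to force $d(g^{bf*},g^{(\theta')})=0$, finishing with an explicit Jensen's-inequality argument that zero KL divergence implies elementwise equality. You instead prove the intrinsic exponential-family characterization --- writing $d(\eta,g^{(\theta)})$ as a convex function of $\theta$ whose gradient is $Ag^{(\theta)}-b^*$, so that minimizers over $\G$ are exactly the moment-matchers --- and then import existence from compactness of $P$ at $\epsilon=\vec{0}_m$ together with the softmax form guaranteed by Theorem~\ref{thm:BF_dual}; your uniqueness step applies the Pythagorean lemma symmetrically to two moment-matchers, which is slightly more than needed (once both are known to be minimizers, a single application already gives $d(g,g')=0$) but is perfectly sound. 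Your approach buys a cleaner, self-contained picture (it never invokes the heavier perturbation machinery of Section~\ref{app_sec:bf_bound} and makes explicit \emph{why} $Ag^*=A\eta$, which the paper only extracts afterwards as Corollary~\ref{app_cor:g_star_marginal}); the paper's approach buys economy, since the error bound has to be proved anyway for the convergence-rate results and then yields optimality of $g^{bf*}$ in one line. Both arguments share the same reliance on Theorem~\ref{thm:BF_dual} to place $g^{bf*}$ inside $\G$ with a finite parameter vector, which is where any residual delicacy (dual attainment) lives in either version.
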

\begin{proof}
    We begin by stating Theorem~\ref{app_thm:bf_bound} with $\epsilon=\vec{0}_{m}$.
    For any $g\in \G_{bf}$,
    \[
        d(\eta, g^{bf*})\leq d(\eta, g).
    \]
    This implies that
    \[
        g^{bf*} \in \argmin_{g\in \mathcal{G}} d(\eta, g).
    \]
    To show all claims, we need to show that $g^{bf*}$ is the only element in the set of best approximators to $\eta$.
    Since $g^{bf*}$ is the optimal solution to $V(b^{*}, \vec{0}_{m})$, it follows that $Ag^{bf*} = b^{*} = A\eta$.
    Now, suppose for contradiction that there existed another $g^{(\theta^{\prime})}\in \G$ where $g^{(\theta^{\prime})}\neq g^{*}$ and
    \[
        g^{(\theta^{\prime})} \in \argmin_{g\in \mathcal{G}} d(\eta, g).
    \]
    That is, there are two different best approximators to $\eta$ from $\G$.
    This means that $d(\eta, g^{bf*}) = d(\eta, g^{(\theta^{\prime})})$.
    Since $g^{bf*}\in \G$ and $Ag^{bf*} = A\eta$, we can use the Pythagorean theorem (Lemma~\ref{app_lem:pythagorean}) to see that
    \[
        d(\eta, g^{(\theta^{\prime})}) = d(\eta, g^{bf*}) + d(g^{bf*}, g^{(\theta^{\prime})}).
    \]
    But from what we have just said, we have
    \[
        0 = d(g^{bf*}, g^{(\theta^{\prime})}) = \sum_{i=1}^{n} \sum_{\ell =1}^{k} g^{bf*\top}_{i\ell }\log \left(\frac{g^{bf*}_{i\ell }}{g^{(\theta^{\prime})}_{i\ell }}\right)
    \]
    It suffices to show that the only way this equality can hold is when $g^{(\theta^{\prime})} = g^{bf*}$ element-wise.
    This would contradict our assumption that $g^{bf*}\neq g^{(\theta^{\prime})}$.

    To derive the contradiction, divide the RHS by $\sum_{i=1}^{n} \sum_{\ell =1}^{k} g^{bf*}_{i\ell }=n$ to get
    \[
        -\frac{1}{n} \sum_{i=1}^{n} \sum_{\ell =1}^{k} g^{bf*\top}_{i\ell }\log \left(\frac{g^{(\theta^{\prime})}_{i\ell }}{g^{bf*}_{i\ell }}\right).
    \]
    Observe that $-\log(\cdot )$ is a convex function.
    Therefore, by Jensen's inequality, 
    \[
        -\frac{1}{n} \sum_{i=1}^{n} \sum_{\ell =1}^{k} g^{bf*\top}_{i\ell }\log \left(\frac{g^{(\theta^{\prime})}_{i\ell }}{g^{bf*}_{i\ell }}\right)\geq -\frac{1}{n} \sum_{i=1}^{n} \sum_{\ell =1}^{k} \log \left(g^{bf*}_{i\ell }\frac{g^{(\theta^{\prime})}_{i\ell }}{g^{bf*}_{i\ell }}\right) = 0.
    \]
    The equality condition of Jensen's inequality states that the inequality above is equality if and only if for all $i, i^{\prime}\in [n], \ell, \ell^{\prime} \in [k]$,
    \[
        \frac{g^{bf*}_{i\ell }}{g^{(\theta^{\prime})}_{i\ell }} = \frac{g^{bf*}_{i^{\prime}\ell^{\prime} }}{g^{(\theta^{\prime})}_{i^{\prime}\ell^{\prime} }} = R
    \]
    where $R\in \mathbb{R}\setminus \{ 0 \}$ is the common ratio.
    Recall that by definition, for each $i\in [n]$ 
    \[
        \sum_{\ell =1}^{k} g^{bf*}_{i\ell } = \sum_{\ell =1}^{k} g^{(\theta^{\prime})}_{i\ell } = 1
    \]
    For fixed but arbitrary $i\in [n]$, using the common ratio, we have
    \[
        1 = \sum_{\ell =1}^{k} g^{bf*}_{i\ell } = \sum_{\ell =1}^{k} Rg^{(\theta^{\prime})}_{i\ell } = R.
    \]
    Therefore, Jensen's inequality turns into an equality only when $g^{bf*} = g^{(\theta^{\prime})}$ element-wise.
    This finishes the argument by contradiction and we conclude that $g^{bf*}$ is the unique best approximator to $\eta$ from the set $\G$.
    Therefore, $g^{bf*} = g^{*}$.
\end{proof}

\begin{cor}~\label{app_cor:g_star_marginal}
    $Ag^{*} = A\eta$.
\end{cor}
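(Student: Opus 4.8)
The plan is to read this off directly from Theorem~\ref{app_thm:bf_best_approx}, which has already done the substantive work. First I would recall that that theorem identifies the unique best KL-approximator $g^{*}$ with $g^{bf*}$, the learner's play obtained by solving the minimax value $V(b^{*}, \vec{0}_{m})$. Hence it suffices to show that this particular prediction agrees with $\eta$ on every constraint, i.e.\ that $A g^{bf*} = A\eta$.

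Next I would unpack the feasible set when $\epsilon = \vec{0}_{m}$. By Theorem~\ref{app_thm:bf_minimax_game} and Lemma~\ref{app_lem:adv_opt_label}, the learner's play $g^{bf*}$ coincides with the adversary's maximum-entropy labeling $z^{*}$, which is the optimizer of $\min_{z\in P} z\cdot \log z$. For the program $V(b^{*}, \vec{0}_{m})$ the polytope is $P = \{z \in \Delta_{k}^{n} : b^{*} - \vec{0}_{m} \leq Az \leq b^{*} + \vec{0}_{m}\} = \{z \in \Delta_{k}^{n} : Az = b^{*}\}$. Being the optimizer, $g^{bf*}$ is in particular feasible, so membership in $P$ immediately forces the equality $A g^{bf*} = b^{*}$.

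Finally I would invoke the definition $b^{*} = A\eta$ to conclude $A g^{bf*} = b^{*} = A\eta$, and then substitute $g^{*} = g^{bf*}$ from Theorem~\ref{app_thm:bf_best_approx} to obtain $A g^{*} = A\eta$, as claimed.

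There is no genuinely hard step here; the one thing worth flagging is the potential for circularity, since the proof of Theorem~\ref{app_thm:bf_best_approx} itself uses $A g^{bf*} = A\eta$ when it applies the Pythagorean theorem. That fact, however, is derived purely from feasibility of the optimizer of $V(b^{*}, \vec{0}_{m})$ and never from this corollary, so the logical order is sound: feasibility yields $A g^{bf*} = b^{*} = A\eta$, which both drives the uniqueness argument in the theorem and, once $g^{*} = g^{bf*}$ is established, delivers the corollary.
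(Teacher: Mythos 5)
Your proposal is correct and matches the paper's own proof, which likewise chains $A\eta = b^{*} = Ag^{bf*} = Ag^{*}$ using the definition of $b^{*}$, feasibility of the optimizer of $V(b^{*},\vec{0}_{m})$, and the identification $g^{bf*}=g^{*}$ from Theorem~\ref{app_thm:bf_best_approx}. Your remark on avoiding circularity is also accurate: the theorem's use of the Pythagorean lemma rests on $Ag^{bf*}=b^{*}$ obtained from feasibility, not on this corollary.
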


\begin{proof}
    Observe that the following equalities hold:
    \[
        A\eta = b^{*} = Ag^{bf*} = Ag^{*}.
    \]
    The first equality is by definition.
    The second and third equalities follow by the previous result (Theorem~\ref{app_thm:bf_best_approx}).
    Namely, $Ag^{bf*} = b^{*}$ by construction and it was shown above that $g^{bf*} = g^{*}$.
\end{proof}

\section[Almost all DS Predictions are in G]{Almost all OCDS predictions are in $\G_{bf}$}\label{app_sec:ds_in_exp_fam}

We show now that almost all DS predictions fall into the exponential family of BF predictions $\G_{bf}$.
The main restriction is that elements in the DS prediction (See Equation~\ref{app_eqn:ds_prediction_bayes_theorem}) cannot be $0$ or $1$.
This is to avoid expressions with $\infty$, especially $\infty/\infty$.
So, we'll define $\G_{ds}^{\circ}$ to be the set of all OCDS predictions where no class frequency or accuracy is $0$ or $1$.
The set of OCDS predictions $\mathcal{G}_{ds}$ is the closure of $G^{\circ}_{ds}$.

The strategy will be to work backwards from an OCDS prediction and to write it as a softmax, involving the matrix $A$.
Then, it suffices to check what the possible weights are.
We also show how to go from a prediction's weights to OCDS parameters (class frequencies/rule accuracies).
These OCDS parameters can be used to construct that same prediction via E step.

To continue, we need to represent which rules predict what class on datapoints.
This is because we allow rules to abstain and want to talk about the rules that don't abstain on a datapoint.
Define a function $\rho\colon[n]\times [k]\rightarrow 2^{[p]}$ be the function that returns which rules predicted a certain label on a certain datapoint.
For example, $\rho(i, \ell)$ returns the rules that predict class $\ell$ on datapoint $x_{i}$.

Suppose $A$ was the matrix that encoded accuracy constraints for our $p$ rules and class frequency constraints (a la the main paper).
Abuse notation and call the OCDS prediction for class $\ell $ on datapoint $i$ with class frequencies $w$ and rule accuracies $b$
\[
    g^{ds}_{i\ell }(w,b) \propto w_{\ell}\prod_{j\in \rho\left( i, \ell \right)} b_{j}\prod_{\substack{\ell^{\prime}=1\\\ell^{\prime}\neq \ell} }^{k} \prod_{j^{\prime}\in \rho\left( i,\ell^{\prime} \right)} \frac{1-b_{j^{\prime}}}{k-1}.
\]
This forms a distribution as $\ell $ varies in $[k]$.
Let the OCDS prediction with $w,b$ on all $n$ datapoints in $X$ be denoted $g^{ds}(w,b)$.
Then for element-wise inequality,
\[
    \mathcal{G}^{\circ}_{ds} := \{ g^{ds}(w,b)\colon 0<w,b<1 \}.
\]

\begin{lemma}~\label{app_lem:bf_one_coin_pred_size}
    For accuracy constraints and rules that possibly abstain, $\G_{bf} = \G_{ds}^{\circ}$.
\end{lemma}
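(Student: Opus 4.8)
The plan is to prove the set equality $\G_{bf}=\G_{ds}^{\circ}$ by exhibiting a parameter-level correspondence between the one-coin Dawid--Skene parameters $(w,b)$ with $0<w,b<1$ and the exponential-family weights $\theta\in\mathbb{R}^m$, and then showing this correspondence is onto in both directions, with the only real work being a scale-invariance argument for the class frequencies. First I would put the OCDS prediction into a canonical log-linear form. Using the abstention-aware notation $\rho(i,\ell)$, the factor $\prod_{\ell'=1}^{k}\prod_{j\in\rho(i,\ell')}\frac{1-b_j}{k-1}$, which collects every rule that does not abstain on $x_i$, is independent of $\ell$ and therefore cancels under the normalization in Equation~\ref{app_eqn:ds_prediction_bayes_theorem}. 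This reduces $g^{ds}_{i\ell}(w,b)$ to being proportional to $w_\ell\prod_{j\in\rho(i,\ell)}\frac{b_j(k-1)}{1-b_j}$, so $\log g^{ds}_{i\ell}=\log w_\ell+\sum_{j\in\rho(i,\ell)}\log\frac{b_j(k-1)}{1-b_j}+c_i$ for an $\ell$-independent constant $c_i$.

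The key structural observation is that this has exactly the same feature dependence on $(i,\ell)$ as the BF log-prediction $a^{(\theta)}_{i\ell}=\sum_{j\in\rho(i,\ell)}\theta_j/n_j+\theta_{p+\ell}/n$ read off from Equation~\ref{eqn:A_matrix_def} and Equation~\ref{eqn:bf_exponential_form}: a real coefficient attached to each rule that predicts $\ell$, plus a real coefficient attached to class $\ell$. I would then note that abstentions need no special handling, since a rule abstaining on $x_i$ contributes neither to any $\rho(i,\ell)$ nor to $a^{(j)}_{i\ell}$ (because $h^{(j)}_i=\vec{0}_k$), so it drops out of both expressions consistently. The inclusion $\G_{ds}^{\circ}\subseteq\G_{bf}$ then follows from Lemma~\ref{lem:ocds_weights} (equivalently, by running the coefficient match forward): when $0<w_\ell,b_j<1$ the prescribed weights are finite reals, so $g^{ds}(w,b)=g^{(\theta)}\in\G_{bf}$.

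For the reverse inclusion $\G_{bf}\subseteq\G_{ds}^{\circ}$ I would invert the correspondence given an arbitrary $\theta\in\mathbb{R}^m$. Matching the rule coefficients amounts to solving $\log\frac{b_j(k-1)}{1-b_j}=r_j$ for a target real value $r_j$; since $b\mapsto\log\frac{b(k-1)}{1-b}$ is a continuous strictly increasing bijection from $(0,1)$ onto $\mathbb{R}$, this has a unique solution $b_j=\frac{e^{r_j}}{k-1+e^{r_j}}\in(0,1)$ for every real $r_j$, so the rule weights cause no difficulty. The genuine obstacle is the class coefficients: $\log w_\ell$ ranges only over $(-\infty,0)$ when $w_\ell\in(0,1)$, whereas the BF class coefficient sweeps out all of $\mathbb{R}$. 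I would resolve this by exploiting that the OCDS prediction depends on $w$ only through the ratios $w_\ell/w_{\ell'}$, since a common positive factor cancels in the normalization; equivalently $g^{ds}(cw,b)=g^{ds}(w,b)$ for all $c>0$. Hence only the differences $\log w_\ell-\log w_{\ell'}$ are determined, and these can be made to equal any prescribed reals while keeping each $w_\ell\in(0,1)$ (choose $w_\ell=\exp(t_\ell)$ realizing the desired differences, then rescale by $c=\exp(-\max_{\ell'}t_{\ell'}-\delta)$ with $\delta>0$). Assembling these inversions produces, for any $\theta\in\mathbb{R}^m$, parameters $(w,b)$ strictly in $(0,1)$ with $g^{(\theta)}=g^{ds}(w,b)$, giving $\G_{bf}\subseteq\G_{ds}^{\circ}$ and hence equality.

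I expect the scale-invariance step for the class frequencies to be the main subtlety: it is exactly what repairs the mismatch between the half-line $(-\infty,0)$ swept by $\log w_\ell$ and the full real line swept by the BF class weights, and it is the reason the membership condition on $w$ in $\G_{ds}^{\circ}$ (each coordinate in $(0,1)$, with no normalization imposed) is the right one rather than an honest constraint to $\Delta_k$.
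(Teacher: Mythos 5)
Your proposal is correct and follows essentially the same route as the paper: the forward inclusion via the explicit weight assignment is the content of Lemma~\ref{app_lem:one_coin_ds_weights}, and your inversion for the reverse inclusion (logit-type bijection for the $b_j$, softmax shift/scale invariance to repair the fact that $\log w_\ell$ only sweeps a half-line) is exactly what the paper does in Lemma~\ref{app_lem:one_coin_params_from_theta}, where the normalizing constant $\widehat{w}$ plays the role of your rescaling factor $c$. If anything you are more careful than the paper's one-paragraph proof of this lemma, which asserts that ``all possible real weights are exhibited'' without flagging the class-frequency range issue you correctly isolate as the one genuine subtlety; note only that your closing remark is slightly off, since the softmax choice $w_\ell=\exp(t_\ell)/\sum_{\ell'}\exp(t_{\ell'})$ shows the correspondence survives even if one insists on $w\in\Delta_k$.
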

\begin{proof}
    In Lemma~\ref{app_lem:one_coin_ds_weights} below, we show that for any prediction $g^{ds}\in \G^{\circ}_{ds}$, there are real weights $\theta^{ds}$ such that $g^{(\theta^{ds})} = g^{ds}$.
    By allowing each of the accuracies and class frequencies to range from $(0,1)$, all possible real weights are exhibited.
    As $\G_{bf}$ is the set of predictions $g^{(\theta)}$ attainable by all real weights $\theta\in \mathbb{R}^{m}$, we have our claim.
\end{proof}

\begin{lemma}\label{app_lem:one_coin_ds_weights}
    For any $g^{ds}\in \G_{ds}^{\circ}$ constructed by one E step on accuracies $0<b_{1},\ldots, b_{p}<1$ and class frequencies $0< w_{1},\ldots, w_{k}< 1$, the weights
    \[
        \theta_{p+\ell} = \log\left(e^{n} w_{\ell} \right)\quad \text{and} \quad \theta_{j} = \log\left(e^{n_{j}} \frac{b_{j}(k-1)}{1-b_{j}} \right),
    \]
    are real and are such that $g^{(\theta)} = g^{ds}$.
\end{lemma}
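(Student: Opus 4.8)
The plan is to verify the claimed identity $g^{(\theta)}=g^{ds}$ datapoint-by-datapoint by direct substitution, exploiting that for each fixed $i$ both sides are distributions over the $k$ classes, and that the softmax defining $g^{(\theta)}_{i\cdot}$ (Equation~\ref{eqn:bf_exponential_form}) is unchanged if we add any $\ell$-independent constant to the exponents $a^{(\theta)}_{i\ell}$. Hence it suffices to show that $a^{(\theta)}_{i\ell}$ agrees with $\log g^{ds}_{i\ell}(w,b)$ as a function of $\ell$ up to an additive term not depending on $\ell$. Before any computation I would dispatch the ``realness'' claim from the domain restriction: since $0<w_\ell<1$, the argument $e^{n}w_\ell$ is positive and finite; and since $0<b_j<1$, the ratio $b_j(k-1)/(1-b_j)$ is positive and finite, so $e^{n_j}b_j(k-1)/(1-b_j)$ is positive and finite. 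Both logarithms are therefore well defined, so $\theta\in\R^m$.

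The first substantive step is to expand $\log g^{ds}_{i\ell}(w,b)$. Writing the product over the rules predicting a class other than $\ell$ as the full product over all non-abstaining rules on $x_i$ divided by the product over $j\in\rho(i,\ell)$, the ``full product'' factor is the same for every $\ell$ and so is absorbed into the normalization. What survives as the genuinely $\ell$-dependent part is $\log w_\ell+\sum_{j\in\rho(i,\ell)}\log\frac{b_j(k-1)}{1-b_j}$, plus an $\ell$-independent constant. The second step is to compute $a^{(\theta)}_{i\ell}=\sum_{j=1}^{m}\theta_j a^{(j)}_{i\ell}$ from the two cases of Equation~\ref{eqn:A_matrix_def}: the class-frequency rows contribute only through $j=p+\ell$ (via $\vec{e}^{\,n}_{\ell}/n$), while the rule rows contribute exactly for $j\in\rho(i,\ell)$ (via $h^{(j)}/n_j$). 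I would then substitute the stated $\theta$ and observe that the $n_j$ and $n$ appearing in $\theta$ are precisely what cancel against the $1/n_j$ and $1/n$ normalizations carried by the rows of $A$, leaving the per-class coefficient $\log\frac{b_j(k-1)}{1-b_j}$ and the class-frequency term $\log w_\ell$, matching the expanded form above up to an $\ell$-independent offset.

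With the two exponents shown to coincide modulo an $\ell$-independent shift, the softmaxes are equal, giving $g^{(\theta)}_{i\ell}=g^{ds}_{i\ell}(w,b)$ for all $i,\ell$, i.e.\ $g^{(\theta)}=g^{ds}$. I expect the main obstacle to be the normalization bookkeeping around abstentions: one must be careful that only the rules in $\rho(i,\ell)$ enter the surviving exponent, that the cross-class product genuinely reorganizes into an $\ell$-independent factor, and that the additive offsets introduced by the $e^{n_j}$ and $e^{n}$ scalings collapse into the partition function rather than perturbing the $\ell$-profile. Once the identity is in hand, it feeds directly into Lemma~\ref{app_lem:bf_one_coin_pred_size}: letting $w,b$ range over $(0,1)$ makes $\theta$ range over all of $\R^m$, establishing $\G_{bf}=\G_{ds}^{\circ}$.
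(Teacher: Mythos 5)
Your proposal is correct and follows essentially the same route as the paper's proof: rewrite the cross-class product $\prod_{\ell'\neq\ell}\prod_{j'\in\rho(i,\ell')}\frac{1-b_{j'}}{k-1}$ as an $\ell$-independent full product divided by the $\rho(i,\ell)$ factors, observe that the $\ell$-independent part cancels in the softmax, and match the surviving exponent $\log w_\ell+\sum_{j\in\rho(i,\ell)}\log\frac{b_j(k-1)}{1-b_j}$ against $a^{(\theta)}_{i\ell}$ after the $e^{n}$ and $e^{n_j}$ factors in $\theta$ cancel the $1/n$ and $1/n_j$ normalizations in the rows of $A$. The only cosmetic difference is that you invoke shift-invariance of the softmax explicitly, whereas the paper carries the common factor along and then drops it; the substance is identical.
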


\begin{proof}
    We show this by working backwards, going from writing the DS prediction as a softmax to bringing about the matrix $A$ used in $\G_{bf}$'s definition.
    In doing this, the claimed weights will show up.
    Write the quantity that the DS prediction is proportional to (from above) as a softmax:
    \[
        \frac{\exp\Bigg(\log\left(w_{\ell}\right) + \displaystyle\sum_{j\in \rho\left( i, \ell \right)} \log\left( b_{j}\right) + \displaystyle\sum_{\substack{\ell^{\prime}=1\\\ell^{\prime}\neq\ell}}^{k} \sum_{j^{\prime}\in \rho\left( i,\ell^{\prime} \right)}\log\left(\frac{1-b_{j^{\prime}}}{k-1}\right) \Bigg)}    {\displaystyle\sum_{\ell^{\prime}=1}^{k} \exp\Bigg(\log\left(w_{\ell^{\prime}}\right) + \displaystyle\sum_{j\in \rho\left( i, \ell^{\prime} \right)} \log\left(b_{j}\right) + \displaystyle\sum_{\substack{\ell^{\prime\prime}=1\\\ell^{\prime\prime}\neq \ell^{\prime}}}^{k} \displaystyle\sum_{j^{\prime\prime}\in \rho\left( i,\ell^{\prime\prime} \right)} \log\left(\frac{1-b_{j^{\prime\prime}}}{k-1}\right)\Bigg)}.
    \]
    Observe that we can write this as
    \[
        \frac{\exp\Bigg(\log\left(w_{\ell}\right) + \displaystyle\sum_{j\in \rho\left( i, \ell \right)}\log\left( \frac{b_{j}(k-1)}{1-b_{j}} \right) + \displaystyle\sum_{j^{\prime}\in \cup_{\ell^{\prime}=1}^{k}\rho(i,\ell^{\prime})} \log\left(\frac{1-b_{j^{\prime}}}{k-1}\right) \Bigg)}  {\displaystyle\sum_{\ell^{\prime}=1}^{k} \exp\Bigg(\log\left(w_{\ell^{\prime}}\right) + \displaystyle\sum_{j\in \rho\left( i, \ell^{\prime} \right)} \log\left(\frac{b_{j}(k-1)}{1-b_{j}} \right) + \displaystyle\sum_{j^{\prime\prime}\in \cup_{\ell^{\prime\prime}=1}^{k}\rho(i,\ell^{\prime\prime})} \log\left(\frac{1-b_{j^{\prime\prime}}}{k-1}\right)\Bigg)}.
    \]
    Note that we can't make the last sum range over $[p]$ because we allow for specialists, so not all rules necessarily have to make a prediction.
    Before moving forward, notice that the last term in every exponent is the same.
    Therefore, those make no contribution to the softmax.
    The softmax can be written as
    \begin{equation}\label{app_eqn:ds_softmax_form}
        \frac{\exp\left(\log\left(w_{\ell}\right) + \sum_{j\in \rho\left( i, \ell \right)}\log\left( \frac{b_{j}(k-1)}{1-b_{j}} \right)\right)}{\sum_{\ell^{\prime}=1}^{k} \exp\left(\log\left(w_{\ell^{\prime}}\right) + \sum_{j\in \rho\left( i, \ell^{\prime} \right)} \log\left(\frac{b_{j}(k-1)}{1-b_{j}} \right)\right)}.
    \end{equation}

    To continue, recall the weights we presented in the claim,
    \[
        \theta_{p+\ell} = \log\left(e^{n} w_{\ell} \right)\quad \text{and} \quad \theta_{j} = \log\left(e^{n_{j}} \frac{b_{j}(k-1)}{1-b_{j}} \right)
    \]
    where once again $j\in [p]$ and $\ell\in [k]$.
    We would like to show that the linear combination in the numerator is $a^{( \theta )}_{i\ell}$ (the $(k(i-1) + \ell)^{th}$ element of the vector $A^{\top}\theta$).
    Indeed, observe (from the definition of $A$ in Subsection~\ref{app_subsec:bf_expository}) that
    \[
        a^{( \theta )}_{i\ell} = \sum_{j=1}^{p} \frac{1}{n_{j}} h^{(j)}_{i\ell}\theta_{j} + \sum_{\ell^{\prime}=1}^{k}\frac{1}{n}  [\vec{e}_{\ell^{\prime}}^{\,n}]_{i\ell}\theta_{p+\ell^{\prime}}.
    \]
    By transcribing the definitions of our matrix entries, we have that
    \[
        a^{( \theta )}_{i\ell} = \sum_{j=1}^{p} \frac{1}{n_{j}} \theta_{j}\textbf{1}(h^{(j)}(x_i)=\ell) + \sum_{\ell^{\prime}=1}^{k} \frac{1}{n} \theta_{p+\ell^{\prime}}\textbf{1}(\ell^{\prime}=\ell).
    \]
    So, the only $\theta_{j}$'s that appear are associated with the rules that predict label $\ell$ on datapoint $x_{i}$.
    This is exactly the set of rules that $\rho\left( i,\ell \right)$ represents.
    Also, there's only one value of $\ell^{\prime}$ that makes the indicator non-zero.
    We can now write
    \[
        a^{( \theta )}_{i\ell} = \sum_{j\in \rho(i,\ell )} \frac{1}{n_{j}} \theta_{j} + \frac{1}{n} \theta_{p+\ell} = \log\left(w_{\ell}\right) + \sum_{j\in \rho\left( i, \ell \right)}\log\left( \frac{b_{j}(k-1)}{1-b_{j}} \right),
    \]
    after plugging in the claimed weights $\theta$.
    This is exactly the argument of the numerator in Equation~\ref{app_eqn:ds_softmax_form}.
    So, we have exhibited a set of weights $\theta$ such that the DS prediction is recovered.
\end{proof}

We can now go backwards and solve for the accuracies and class frequencies.
\begin{lemma}\label{app_lem:one_coin_params_from_theta}
    An arbitrary prediction $g^{(\theta)}\in \G_{bf}$ is a one-coin DS prediction with
    \[
        b_{j} = \frac{1}{1+ e^{n_{j}}(k-1)\exp(-\theta_{j})} \quad \text{and} \quad w_{\ell }= \frac{\exp(\theta_{p+\ell })}{\sum_{\ell^{\prime}=1}^{k} \exp(\theta_{p+\ell^{\prime}})}
    \]
    Moreover, the computed $b_{j}$ and $w_{\ell }$ all reside in $(0,1)$ and the $w_{\ell }$'s form a distribution.
\end{lemma}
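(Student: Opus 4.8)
The plan is to verify that the proposed $(b,w)$ invert the correspondence of Lemma~\ref{app_lem:one_coin_ds_weights}, up to the harmless rescaling freedom in the class frequencies. Given an arbitrary $g^{(\theta)}\in\G_{bf}$, I would first define $b_j$ and $w_\ell$ by the stated formulas and dispatch the ``moreover'' claims, which are immediate: since $e^{n_j}(k-1)\exp(-\theta_j)$ is strictly positive, the denominator defining $b_j$ exceeds $1$, so $b_j\in(0,1)$; and $w_\ell$ is a softmax of the $\theta_{p+\ell}$, hence lies in $(0,1)$ and sums to $1$, i.e.\ $w\in\Delta_k$.

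The heart of the argument is the inversion identity for the accuracies. A one-line computation gives $\frac{b_j}{1-b_j}=\frac{e^{\theta_j}}{e^{n_j}(k-1)}$, hence $\frac{b_j(k-1)}{1-b_j}=e^{\theta_j-n_j}$ and therefore $\log\!\big(e^{n_j}\tfrac{b_j(k-1)}{1-b_j}\big)=\theta_j$. In other words, feeding $b_j$ back through the forward map of Lemma~\ref{app_lem:one_coin_ds_weights} returns exactly the weight we started with. For the class-frequency weights the recovery is only up to an additive constant: the forward map sends $w_\ell$ to $\log(e^n w_\ell)=\theta_{p+\ell}+c$, where $c:=n-\log\sum_{\ell'}\exp(\theta_{p+\ell'})$ does not depend on $\ell$. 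Writing $\tilde\theta$ for the weights obtained by applying Lemma~\ref{app_lem:one_coin_ds_weights} to this $(b,w)$, we thus have $\tilde\theta_j=\theta_j$ for $j\le p$ and $\tilde\theta_{p+\ell}=\theta_{p+\ell}+c$.

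It remains to check that $g^{(\tilde\theta)}=g^{(\theta)}$. The key observation is that the parametrization $\theta\mapsto g^{(\theta)}$ is invariant under a common shift of the class-frequency block. Indeed, from the exponential-family form $g^{(\theta)}_{i\ell}\propto\exp(a^{(\theta)}_{i\ell})$, the class-frequency block contributes to $a^{(\theta)}_{i\ell}$ only the single term proportional to $\theta_{p+\ell}$, so shifting every $\theta_{p+\ell}$ by the same $c$ changes each logit $a^{(\theta)}_{i\ell}$ by the identical amount and leaves the softmax untouched. Hence $g^{(\tilde\theta)}_{i\ell}=g^{(\theta)}_{i\ell}$ for all $i,\ell$. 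Combining this with Lemma~\ref{app_lem:one_coin_ds_weights}, which guarantees that $g^{(\tilde\theta)}$ is the one-coin DS prediction built from $(b,w)$, yields $g^{(\theta)}=g^{ds}(w,b)$, as claimed.

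The main obstacle is conceptual rather than computational: one must recognize that the DS softmax does not pin down the class frequencies uniquely but only up to a positive rescaling, so the correct inverse for $w$ is the normalized softmax of the $\theta_{p+\ell}$ rather than the literal $e^{\theta_{p+\ell}-n}$ suggested by the forward formula. Once this is seen, the only thing to argue is that the corresponding additive shift $c$ is annihilated by the softmax, which is exactly the invariance step above.
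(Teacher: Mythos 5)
Your proof is correct and follows essentially the same route as the paper's: an exact algebraic inversion of the forward map for the rule weights, plus the observation that the class-frequency weights are only determined up to a common additive shift that the softmax absorbs, which forces the normalized (softmax) form of $w_\ell$. The only difference is direction of presentation --- the paper derives the formulas by introducing an auxiliary scaling constant $\widehat{w}$ and solving for it via $\sum_\ell w_\ell = 1$, whereas you posit the formulas and verify they invert Lemma~\ref{app_lem:one_coin_ds_weights} up to the harmless shift --- but the mathematical content is identical.
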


\begin{proof}
    For rule weight $\theta_{j}$ where $j\in [p]$, it suffices to solve for $b_{j}$ in
    \[
        \theta_{j} = \log\left(e^{n_{j}} \frac{b_{j}\left( k-1 \right)}{1-b_{j}}  \right).
    \]
    Doing some quick algebra, we have
    \[
        \exp\left( \theta_{j} \right)(1-b_{j}) = e^{n_{j}}b_{j}\left( k-1 \right) \quad \text{meaning} \quad \exp\left( \theta_{j} \right) = b_{j}(e^{n_{j}}(k- 1) + \exp\left( \theta_{j} \right))
    \]
    We conclude that
    \[
        b_{j} = \frac{1}{1+e^{n_{j}}(k-1)\exp\left( -\theta_{j} \right)}.
    \]
    As $\theta_{j}\rightarrow -\infty$, the right hand side tends to $0$, while $\theta\rightarrow \infty$ means the right hand side tends to $1$.
    Since we require the $\theta$ value to be strictly real, we will never get $b_{j}\in \{ 0,1 \}$.

    Now, we handle the class frequency weights, i.e.~weights $\theta_{p+1},\ldots, \theta_{p+k}$.
    Since the class frequencies are a distribution, the class frequency that we choose must be such that $\sum_{\ell=1}^{k} w_{\ell}=1$.
    Observe also that a class frequency weight appears in every argument of the prediction softmax (see proof of Lemma~\ref{app_lem:one_coin_ds_weights}).
    Therefore, we can add $\log(\widehat{w})$ to every argument of the softmax without changing its value.
    Taking that into the class frequency weight, we can write
    \[
        \theta_{p+\ell } = \log\left( e^{n}\frac{w_{\ell }}{\widehat{w}}  \right) \quad \text{meaning} \quad w_{\ell} = \frac{\exp\left( \theta_{p+\ell} \right)\widehat{w}}{e^{n}}.
    \]
    Since we require $\sum_{\ell =1}^{k} w_{\ell } =1$,
    \[
        \frac{\widehat{w}}{e^{n}} \sum_{\ell=1}^{k} \exp\left( \theta_{p+\ell} \right) = 1 \quad \text{we infer that} \quad \widehat{w} =\frac{e^{n}}{\sum_{\ell=1}^{k} \exp\left( \theta_{p+\ell} \right)}.
    \]
    Plugging in this $\widehat{w}$ and recognizing the result is a softmax gives us our claim.
    This means that arbitrary distributions from the exponential family are actually one-coin DS predictions whose accuracies and class frequencies are in $\left( 0,1 \right)$.
\end{proof}

\section{Dawid Skene Error Expression}\label{app_sec:gen_error_exp}

We now decompose the OCDS loss into terms corresponding to model and approximation uncertainty.
Even though the BF and OCDS prediction sets are technically different, they're sufficiently similar so that OCDS has the same model uncertainty as BF, which we'll show.
We use $g^{\dagger}$ to represent a best OCDS approximator to $\eta$, compared to $g^{*}$ for BF's best approximator ($g^{\dagger} \in \argmin_{g\in \G_{ds}}d(\eta, g)$).
We'll not endeavor to show that $g^{\dagger}$ is unique.

We show a general decomposition which can be simplified by virtue of dealing with OCDS predictions.

\subsection{DS Loss Decomposition}~\label{app_subsec:ds_loss_decomp}
\begin{lemma}\label{app_lem:ds_error_expr}
    Let $g^{\dagger}\in\G_{ds}$ be a best approximator to $\eta$, $g^{ds}$ be a OCDS prediction with estimated parameters ($w$, $b$), and $g^{ds*}$ be the DS prediction using empirical parameters ($w^{*}, b^{*}$).
    Then,
    \[
        \underbrace{d(\eta, g^{ds})}_{\mathcal{E}_{ds}}= \underbrace{d( \eta, g^{\dagger} )}_{\mathcal{E}^{mod}_{ds}} + \mathcal{E}^{appr}_{ds}
    \]
    \[
        \mathcal{E}^{appr}_{ds} = \underbrace{d(\eta, g^{ds*}) - d(\eta, g^{\dagger})}_{\mathcal{E}_{ds, 1}^{appr}} + \underbrace{\sum_{i=1}^{n} \sum_{\ell =1}^{k} \eta_{i\ell } \log\left( \frac{g^{ds*}_{i\ell }}{g^{ds}_{i\ell }}  \right)}_{\mathcal{E}_{ds, 2}^{appr}}.
    \]
    Moreover, $d(\eta,g^{\dagger}) = d(\eta, g^{*})$.
\end{lemma}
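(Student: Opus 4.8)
The plan is to treat the two decomposition identities as bookkeeping and to concentrate the real work on the ``Moreover'' claim $d(\eta, g^{\dagger}) = d(\eta, g^{*})$, i.e.\ that BF and OCDS share the same model uncertainty.

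First I would dispense with the decomposition. The top identity is just the definition of approximation uncertainty, $\mathcal{E}^{appr}_{ds} := d(\eta, g^{ds}) - d(\eta, g^{\dagger})$, and so carries no content beyond naming. For the second identity I would substitute this definition into the claimed right-hand side and cancel: since
\[
    d(\eta, g^{ds}) - d(\eta, g^{ds*}) = \sum_{i=1}^n\sum_{\ell=1}^k \eta_{i\ell}\log\frac{\eta_{i\ell}}{g^{ds}_{i\ell}} - \sum_{i=1}^n\sum_{\ell=1}^k \eta_{i\ell}\log\frac{\eta_{i\ell}}{g^{ds*}_{i\ell}} = \sum_{i=1}^n\sum_{\ell=1}^k \eta_{i\ell}\log\frac{g^{ds*}_{i\ell}}{g^{ds}_{i\ell}},
\]
the two proposed summands $\mathcal{E}^{appr}_{ds,1}$ and $\mathcal{E}^{appr}_{ds,2}$ add back up to $d(\eta, g^{ds}) - d(\eta, g^{\dagger})$, which is exactly $\mathcal{E}^{appr}_{ds}$. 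This is the ``one easily sees by substitution'' step and uses nothing about $g^{\dagger}$ beyond its role as a common offset.

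The crux is $d(\eta, g^{\dagger}) = d(\eta, g^{*})$. One inequality is free: since $g^{*} \in \G_{bf} = \G^{\circ}_{ds} \subseteq \G_{ds}$ (Lemma~\ref{app_lem:bf_one_coin_pred_size}) and $g^{\dagger}$ minimizes $d(\eta, \cdot)$ over the larger set $\G_{ds}$, we get $d(\eta, g^{\dagger}) \leq d(\eta, g^{*})$; in particular $d(\eta, g^{\dagger})$ is finite, so $g^{\dagger}_{i\ell} > 0$ at every $(i,\ell)$ with $\eta_{i\ell} > 0$. For the reverse inequality I would use that $\G_{ds}$ is the closure of $\G_{bf}$: pick a sequence $g_n \in \G_{bf}$ with $g_n \to g^{\dagger}$. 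Because $g^{\dagger}$ dominates the support of $\eta$, the map $\nu \mapsto d(\eta, \nu)$ is continuous along this sequence, so $d(\eta, g_n) \to d(\eta, g^{\dagger})$. Now apply the Pythagorean theorem (Lemma~\ref{app_lem:pythagorean}) with foot $g^{*}$, legitimate since $Ag^{*} = A\eta$ (Corollary~\ref{app_cor:g_star_marginal}) and each $g_n \in \G_{bf}$: this gives $d(\eta, g_n) = d(\eta, g^{*}) + d(g^{*}, g_n)$, whence $d(g^{*}, g_n) \to d(\eta, g^{\dagger}) - d(\eta, g^{*})$. Finally, by lower semicontinuity of KL divergence, $d(g^{*}, g^{\dagger}) \leq \liminf_n d(g^{*}, g_n) = d(\eta, g^{\dagger}) - d(\eta, g^{*})$, so $d(\eta, g^{\dagger}) \geq d(\eta, g^{*}) + d(g^{*}, g^{\dagger}) \geq d(\eta, g^{*})$. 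Combined with the free inequality this forces equality (and, incidentally, $d(g^{*}, g^{\dagger}) = 0$, though the statement only needs the value).

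The main obstacle is the boundary: $g^{\dagger}$ can a priori lie on the topological boundary of $\G_{bf}$ inside its closure $\G_{ds}$, where the exponential-family parametrization degenerates (some $\theta_j \to \pm\infty$) and the Pythagorean identity cannot be applied directly to $g^{\dagger}$ itself. The delicate point is that lower semicontinuity of KL \emph{alone} does not force the infimum over $\G_{bf}$ and over its closure to agree — passing to a closure can strictly lower the infimum of a merely lower semicontinuous function. What rescues the argument is that the Pythagorean identity pins the excess loss of each interior $g_n$ to the nonnegative quantity $d(g^{*}, g_n)$, so lower semicontinuity is only ever invoked in the favorable direction. I would therefore combine the Pythagorean structure with lower semicontinuity rather than rely on either alone, and record the finiteness of $d(\eta, g^{\dagger})$ up front so that the KL terms I manipulate are never of the indeterminate form $\infty - \infty$.
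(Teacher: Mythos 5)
Your proposal is correct and follows essentially the same route as the paper: the two decompositions are handled by definition plus adding zero and telescoping the KL terms, and the claim $d(\eta,g^{\dagger})=d(\eta,g^{*})$ is obtained from $\G_{bf}=\G_{ds}^{\circ}$ together with a limit argument along a sequence in $\G_{bf}$ converging to $g^{\dagger}$ (the paper states this step more tersely as the best approximators being ``arbitrarily close''). Your detour through the Pythagorean identity and lower semicontinuity is sound but unnecessary: once $d(\eta,\cdot)$ is continuous along the approximating sequence, the inequality $d(\eta,g_n)\geq d(\eta,g^{*})$ already follows from $g^{*}$ minimizing over $\G_{bf}\ni g_n$, which closes the reverse direction directly.
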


\begin{proof}
    The first claim is by definition.
    Then, by adding 0 twice, one immediately gets
    \[
        d( \eta, g^{ds}) = d( \eta, g^{\dagger} ) + d(\eta, g^{ds*}) - d(\eta, g^{\dagger}) + d(\eta, g^{ds}) - d(\eta, g^{ds*}).
    \]
    The first term after the equals sign corresponds with the model uncertainty while the remaining terms are the approximation uncertainty.
    One can simplify the last two terms as follows:
    \[
        d(\eta, g^{ds}) - d(\eta, g^{ds*}) = \sum_{i=1}^{n} \sum_{\ell =1}^{k} \eta_{i\ell }\left[\log\left( \frac{\eta_{i\ell }}{g^{ds}_{i\ell }}  \right) - \log\left( \frac{\eta_{i\ell }}{g^{ds*}_{i\ell }}  \right)\right] = \sum_{i=1}^{n} \sum_{\ell =1}^{k} \eta_{i\ell } \log\left( \frac{g^{ds*}_{i\ell }}{g^{ds}_{i\ell }}  \right).
    \]
    For our very last claim, recall that Lemma~\ref{app_lem:bf_one_coin_pred_size} shows that $\G_{bf} = \G_{ds}^{\circ}$.
    The BF predictions missing from $\mathcal{G}_{ds}\setminus \mathcal{G}_{bf}$ are exactly the limit points of $\mathcal{G}_{bf}$.
    Specifically, the DS predictions where a rule accuracy $b_{j}$ or class frequency $w_{\ell }$ is in the set $\{ 0,1 \}$.
    Thus if $g^{\dagger}\not\in \G_{bf}$, the best approximator in $\G_{bf}$ is arbitrarily close to $g^{\dagger}$.
    Therefore, the BF and DS model uncertainties are arbitrarily close.
\end{proof}

Now, in our analysis of the last term, we will encounter the normalizing constant for the DS prediction.
For the one-coin DS model, it is
\begin{equation}\label{app_eqn:ds_oc_Z}
    Z^{ds}_{i} = \sum_{\ell=1}^{k} w_{\ell}\prod_{j\in \rho\left( i, \ell \right)} b_{j}\prod_{\substack{\ell^{\prime}=1\\\ell^{\prime}\neq \ell} }^{k} \prod_{j^{\prime}\in \rho\left( i,\ell^{\prime} \right)} \frac{1-b_{j^{\prime}}}{k-1}.
\end{equation}
We'll also consider
\[
    Z^{*}_{i} = \Pre( h^{(j)}(x) = h^{(j)}(x_{i}), \forall j\in [p] ) = \frac1n|\{ x\in X \colon h^{(j)}(x_{i}) = h^{(j)}(x), \forall j\in [p] \}|,
\]
or the fraction of datapoints where the ensemble's predictions are the same as the ones on datapoint $x_{i}$.
This quantity is known when one has the ensemble predictions.
However, the DS model predicts this implicitly when it normalizes, e.g.~the denominator on the right hand side of Equation~\ref{app_eqn:ds_prediction_bayes_theorem}.
Thus, it comes into play.

We will consider the distribution of \textit{unique} ways the ensemble can predict on datapoints.
Since $Z_{i}^{*}$ is defined in terms of datapoint index $i$, there may be duplicates.
For example, if all rules predict class $1$ on the first three datapoints, $Z_{1}^{*} = Z_{2}^{*} = Z_{3}^{*}$.
More generally, if we have $p$ rules that predict on all points and $k$ classes, there are $k^{p}$ possible ways for the ensemble to predict.
I.e.~each rule can predict any of the $k$ classes.
However, it's often the case that not all $k^{p}$ ways to predict obtain, for usually $n<k^{p}$ or the ensemble makes the same prediction on different datapoints.
So, we let $Z^{*}, Z^{ds*}, Z^{ds}$ be the empirical, empirical OCDS, and OCDS distributions of unique ways for the ensemble to predict.
To be clear, the next two quantities are in reference to datapoint $x_{i}$ and not the $i^{th}$ ``unique way for the ensemble to predict''.
For example, $Z^{ds}_{i}$ is from Equation~\ref{app_eqn:ds_oc_Z}.
One can get $Z^{ds*}_{i}$ by substituting in the empirical class frequencies and accuracies.

\subsection{One-coin Error Expression (Lemma~\ref{lem:ds_loss_decomp})}\label{app_subsec:one_coin_err_exp}

We can actually simplify the last term in the OCDS error expression.
While we have used $w \in \Delta_{k}$ to represent a class frequency distribution, we'll use $\vec{w} $ to emphasize that it is a vector.
\begin{lemma}[Lemma~\ref{lem:ds_loss_decomp}]\label{app_lem:one_coin_err_exp}
    Fix OCDS prediction $g^{ds}$ gotten from applying one E step to fixed class frequencies $\vec{w} $ and fixed rule accuracies $b$.
    For $g^{ds}$ and $g^{ds*}$,
    \begin{multline*}
        d( \eta, g^{ds})= d( \eta, g^{*} ) + d(\eta, g^{ds*}) - d(\eta, g^{*}) + n\Bigg( KL(\vec{w}^{*},\vec{w} ) + \sum_{j=1}^{p} \frac{n_{j}}{n} KL(\vec{b_{j}^{*}},\vec{b_{j}} )\\
    +KL(Z^{*},Z^{ds*}) - KL(Z^{*},Z^{ds})\Bigg)
\end{multline*}
    where $\vec{w}^{*}$ is the vector of empirical class frequencies, while $\vec{b}_{j}$ and $\vec{b}^{*}_{j}$ are the distributions $\left(b_{j}, 1-b_{j} \right)^{\top}$ of the fixed and empirical accuracies for rule $j$.
\end{lemma}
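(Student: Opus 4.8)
The plan is to build directly on the decomposition already proved in Lemma~\ref{app_lem:ds_error_expr}, which gives
$d(\eta, g^{ds}) = d(\eta, g^{\dagger}) + \big[d(\eta, g^{ds*}) - d(\eta, g^{\dagger})\big] + \mathcal{E}^{appr}_{ds,2}$ together with $d(\eta, g^{\dagger}) = d(\eta, g^{*})$, where $\mathcal{E}^{appr}_{ds,2} = \sum_{i=1}^n\sum_{\ell=1}^k \eta_{i\ell}\log(g^{ds*}_{i\ell}/g^{ds}_{i\ell})$. After substituting $g^{\dagger}$ by $g^{*}$, all that remains is to show $\mathcal{E}^{appr}_{ds,2}$ equals $n\big(KL(\vec w^*, \vec w) + \sum_{j} \tfrac{n_j}{n} KL(\vec b_j^*, \vec b_j) + KL(Z^*, Z^{ds*}) - KL(Z^*, Z^{ds})\big)$. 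First I would substitute the explicit one-coin forms of $g^{ds}_{i\ell}$ and $g^{ds*}_{i\ell}$ and take the log-ratio; the factors $1/(k-1)$ occur identically in both and cancel, so $\log(g^{ds*}_{i\ell}/g^{ds}_{i\ell})$ splits into four additive groups: a class-frequency term $\log(w^*_\ell/w_\ell)$, a ``correct-rule'' term $\sum_{j\in\rho(i,\ell)}\log(b^*_j/b_j)$, an ``incorrect-rule'' term $\sum_{\ell'\neq\ell}\sum_{j'\in\rho(i,\ell')}\log((1-b^*_{j'})/(1-b_{j'}))$, and a normalizer term $-\log(Z^{ds*}_i/Z^{ds}_i)$.

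Next I would evaluate each group after weighting by $\eta_{i\ell}$ and summing. For the class-frequency term, $\sum_i \eta_{i\ell} = n w^*_\ell$ by definition of the empirical frequency, giving $n\sum_\ell w^*_\ell\log(w^*_\ell/w_\ell) = n\,KL(\vec w^*, \vec w)$. For the accuracy terms I would swap summation order so the outer index is the rule $j$: the correct-rule contribution collapses to $\sum_j \log(b^*_j/b_j)\sum_{i:\,h^{(j)}(x_i)\neq\text{?}}\eta_{i,h^{(j)}(x_i)} = \sum_j n_j b^*_j\log(b^*_j/b_j)$ via $b^*_j = \tfrac1{n_j}\sum_i\eta_{i,h^{(j)}(x_i)}$, while the incorrect-rule contribution, after using $\sum_{\ell\neq\ell'}\eta_{i\ell}=1-\eta_{i\ell'}$, becomes $\sum_j n_j(1-b^*_j)\log((1-b^*_j)/(1-b_j))$. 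Summing these two yields $\sum_j n_j\,KL(\vec b_j^*, \vec b_j) = n\sum_j \tfrac{n_j}{n}KL(\vec b_j^*, \vec b_j)$, matching the target.

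The genuinely delicate step is the normalizer term. Since $\sum_\ell \eta_{i\ell}=1$, the $\eta$-weights wash out and the group contributes $\sum_{i=1}^n\log(Z^{ds}_i/Z^{ds*}_i)$. The key structural fact is that $Z^{ds}_i$ and $Z^{ds*}_i$ depend on $i$ only through the ensemble's prediction pattern on $x_i$, so I would group the datapoints by their unique prediction pattern $u$, writing $n_u$ for the number of points realizing $u$; then $\sum_i\log(Z^{ds}_i/Z^{ds*}_i) = \sum_u n_u\log(Z^{ds}_u/Z^{ds*}_u) = n\sum_u Z^*_u\log(Z^{ds}_u/Z^{ds*}_u)$ using $Z^*_u = n_u/n$. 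Finally I would recognize this as $KL(Z^*, Z^{ds*}) - KL(Z^*, Z^{ds})$: the self-entropy contributions $Z^*_u\log Z^*_u$ cancel in the difference, leaving exactly $\sum_u Z^*_u\log(Z^{ds}_u/Z^{ds*}_u)$. I expect this to be the main obstacle, precisely because $Z^*$ is a bona fide distribution over realized patterns whereas $Z^{ds}$ and $Z^{ds*}$ restricted to realized patterns need not sum to one, so one must keep them inside a \emph{difference} of divergences for the manipulation to be valid. Assembling the four evaluated groups gives the claimed formula for $\mathcal{E}^{appr}_{ds,2}$, and substituting back into the Lemma~\ref{app_lem:ds_error_expr} decomposition completes the proof.
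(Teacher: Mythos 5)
Your proposal is correct and follows essentially the same route as the paper's proof: both start from the decomposition in Lemma~\ref{app_lem:ds_error_expr}, expand the one-coin log-ratio into class-frequency, correct-rule, incorrect-rule, and normalizer groups, evaluate the first three via $\sum_i \eta_{i\ell} = n w^*_\ell$ and $\sum_i\sum_\ell \eta_{i\ell}\mathbf{1}(h^{(j)}(x_i)=\ell) = n_j b^*_j$, and handle the normalizer by grouping datapoints into unique ensemble prediction patterns so the difference of logs becomes $n\bigl(KL(Z^*,Z^{ds*}) - KL(Z^*,Z^{ds})\bigr)$. Your remark that $Z^{ds}$ and $Z^{ds*}$ need not normalize over realized patterns, and hence must stay inside a difference of divergences, is a correct and worthwhile clarification of why the paper's ``add zero'' step is valid.
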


\begin{proof}
    We show this by simplifying the last sum in Lemma~\ref{app_lem:ds_error_expr}.
    (By that same Lemma, we can replace $d(\eta, g^{\dagger})$ by $d(\eta, g^{*})$.)
    For a one-coin prediction, it is equal to
    \[
        \sum_{i=1}^{n} \sum_{\ell =1}^{k} \eta_{i\ell }\left[ \log\left( \frac{w_{\ell}^{*}\prod_{j\in \rho\left( i, \ell \right)} b_{j}^{*}\prod_{\substack{\ell^{\prime}=1\\\ell^{\prime}\neq \ell} }^{k} \prod_{j^{\prime}\in \rho\left( i,\ell^{\prime} \right)} \frac{1-b_{j^{\prime}}^{*}}{k-1}}{w_{\ell}\prod_{j\in \rho\left( i, \ell \right)} b_{j}\prod_{\substack{\ell^{\prime}=1\\\ell^{\prime}\neq \ell} }^{k} \prod_{j^{\prime}\in \rho\left( i,\ell^{\prime} \right)} \frac{1-b_{j^{\prime}}}{k-1}}  \right) + \log\left( \frac{Z_{i}^{ds}}{Z_{i}^{ds*}}  \right)\right].
    \]
    We have used our above definition of $Z$.
    Observe that because the last logarithm term doesn't depend on $\ell$, it becomes
    \[
        \sum_{i=1}^{n} \left[ \log\left( Z^{ds}_{i}\right) - \log\left( Z^{ds*}_{i} \right)\right].
    \]
    Before simplifying further, define $\tau$ as the total of number of unique ways the ensemble predicts on a datapoint.
    $\tau\leq (k + 1)^{p}$ because each of the $p$ rules can predict any of the $k$ classes or abstain.
    For each $t\in [\tau]$, define $i_{t}\in [n]$ to be the first datapoint where the ensemble predicts like the $t^{th}$ unique way.
    Now, define $Z^{*}_{i_{t}}$ as the fraction of points where the ensemble predicts as the $t^{th}$ unique way.
    We may understand this another way: suppose we fix some datapoint $x_{i_{t}}$ and randomly select one out of the $n$ datapoints $x$.
    $Z^{*}_{i_{t}}$ is the probability that the rule predictions on $x$ match the rule predictions on $x_{i_{t}}$.
    Then,
    \[
        \sum_{t=1}^{\tau} Z_{i_{t}}^{*} = 1.
    \]
    Thus, our sum of differences of logarithms becomes
    \[
        \sum_{t=1}^{\tau} nZ^{*}_{i_{t}}\left[ \log\left( Z^{ds}_{i_{t}}\right) - \log\left( Z^{ds*}_{i_{t}}\right)\right].
    \]
    By adding 0, this is equal to
    \[
        \sum_{t=1}^{\tau} nZ^{*}_{i_{t}}\left[ \log\left( \frac{Z^{ds}_{i_{t}}}{Z^{*}_{i_{t}}}\right) + \log\left( \frac{Z^{*}_{i_{t}}}{Z^{ds*}_{i_{t}}}\right)\right] = n(KL(Z^{*}, Z^{ds*}) - KL(Z^{*}, Z^{ds})).
    \]

    This is equal to the difference of KL divergences between the empirical and predicted distributions of ensemble predictions.
    $Z_{i_{t}}^{*}$ is the probability that for each rule $h^{(j)}$, $h^{(j)}(x_{i_{t}}) = h^{(j)}(x)$.
    Observe that $Z^{*}_{i_{t}}$ in general does not equal either $Z^{ds}_{i_{t}}$ or $Z^{ds*}_{i_{t}}$, meaning this quantity is non-zero.

    Now, we take on the first logarithm term.
    To get rid of the fraction, define
    \[
        \xi_{j} = \log\left(\frac{b_{j}^{*}}{b_{j}}\right), \quad \widehat{\xi}_{j} = \log\left(\frac{1-b_{j}^{*}}{1-b_{j}}\right),\quad \text{and} \quad \xi_{p+\ell } = \log\left(\frac{w_{\ell }^{*}}{w_{\ell }}\right).
    \]
    Also, by turning the products in the logarithm into sums outside the logarithm, we get
    \[
        \sum_{i=1}^{n} \sum_{\ell =1}^{k} \eta_{i\ell }\Bigg[ \xi_{p+\ell }+\sum_{j\in \rho(i,\ell )} \xi_{j} +\sum_{\substack{\ell^{\prime}=1\\\ell^{\prime}\neq \ell} }^{k} \sum_{j^{\prime}\in \rho\left( i,\ell^{\prime} \right)}\widehat{\xi}_{j^{\prime}}\Bigg].
    \]
    Lets first simplify
    \[
        \sum_{\ell =1}^{k} \sum_{i=1}^{n} \eta_{i\ell }\xi_{p+\ell }.
    \]
    Observe that by summing over $i$ first, we are computing $nw_{\ell }^{*}$.
    Thus, the above is equal to
    \[
        \sum_{\ell =1}^{k} nw_{\ell }^{*}\log\left( \frac{w^{*}_{\ell }}{w_{\ell }}  \right) = nKL(\vec{w}^{*},\vec{w} ).
    \]

    We deal with the remaining terms now.
    Rather than summing over specific $j$ and $j^{\prime}$, we can use indicators as follows.
    \[
        \sum_{i=1}^{n} \sum_{\ell =1}^{k} \eta_{i\ell }\Bigg[ \sum_{j=1}^{p} \xi_{j}\textbf{1}(h^{(j)}(x_{i}) = \ell ) +\sum_{\substack{\ell^{\prime}=1\\\ell^{\prime}\neq \ell} }^{k} \sum_{j^{\prime}=1}^{p}\widehat{\xi}_{j^{\prime}}\textbf{1}(h^{(j^{\prime})}(x_{i}) = \ell^{\prime} )\Bigg].
    \]
    Now, recall that
    \[
        \sum_{i=1}^{n} \sum_{\ell =1}^{k} \eta_{i\ell } \textbf{1}(h^{(j)}(x_{i}) = \ell ) = n_{j}b_{j}^{*}\quad \text{so that} \quad n_{j}(1-b_{j}^{*}) = \sum_{i=1}^{n} \sum_{\ell =1}^{k} \sum_{\substack{\ell^{\prime}=1\\\ell^{\prime}\neq \ell} }^{k} \eta_{i\ell }\textbf{1}(h^{(j)}(x_{i}) = \ell^{\prime} ).
    \]
    This means our above sum is actually equal to
    \[
        \sum_{j=1}^{p} n_{j}b_{j}^{*}\log\left( \frac{b_{j}^{*}}{b_{j}}  \right) +  \sum_{j=1}^{p} n_{j}(1-b_{j}^{*})\log\left( \frac{1-b_{j}^{*}}{1-b_{j}} \right).
    \]
    Defining $\vec{b_{j}} = (b_{j}, 1-b_{j})^{\top}$ and respectively for $b_{j}^{*}$, the above becomes
    \[
        \sum_{j=1}^{p} n_{j}KL(\vec{b_{j}^{*}},\vec{b_{j}} ).
    \]
    Putting everything together,
    \[
         \sum_{i=1}^{n} \sum_{\ell =1}^{k} \eta_{i\ell } \log\left( \frac{g^{(\theta^{ds*})}_{i\ell }}{g^{(\theta^{ds})}_{i\ell }}  \right)= n\Bigg( KL(\vec{w}^{*},\vec{w} ) + \sum_{j=1}^{p} \frac{n_{j}}{n} KL(\vec{b_{j}^{*}},\vec{b_{j}} )+KL(Z^{*},Z^{ds*}) - KL(Z^{*},Z^{ds})\Bigg).
    \]
\end{proof}

\section{BF and DS Error Comparison}~\label{app_sec:bf_ds_comparison}
We now have everything needed to be able to compare BF and DS by looking at their model and approximation uncertainties.
The first case considered is when BF gets the empirical parameters, i.e.~$\epsilon=\vec{0}_{m}$.
\begin{thm}\label{app_thm:BF_better_than_DS}
    For any set of rule predictions, if the empirical rule accuracies $b^{*}$ and empirical class frequencies $w^{*}$ are given to BF (so it predicts $g^{bf} = g^{*}$), the learner's best-play is always better/no worse than any DS prediction.
    That is, for any OCDS prediction $g^{ds}$,
    \[
        d(\eta, g^{bf}) \leq d(\eta, g^{ds}).
    \]
\end{thm}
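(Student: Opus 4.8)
The plan is to reduce the statement to the single fact that $g^{*}$, the best approximator to $\eta$ inside the exponential family, stays the best approximator even after we enlarge the comparison class from $\G_{bf}$ to the full set of one-coin Dawid--Skene predictions $\G_{ds}$. The opening move is to invoke Theorem~\ref{app_thm:bf_best_approx} with $\epsilon = \vec{0}_m$: feeding BF the empirical quantities $b^{*}, w^{*}$ forces its play $g^{bf}$ to coincide with the unique minimizer $g^{*}$ of $d(\eta, \cdot)$ over $\G = \G_{bf}$, so $d(\eta, g^{bf}) = d(\eta, g^{*})$. It therefore suffices to show $d(\eta, g^{*}) \le d(\eta, g^{ds})$ for every OCDS prediction $g^{ds} \in \G_{ds}$.

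Next I would split on whether $g^{ds}$ is built from nondegenerate parameters. By Lemma~\ref{app_lem:bf_one_coin_pred_size} we have $\G_{bf} = \G_{ds}^{\circ}$, the OCDS predictions whose accuracies and class frequencies lie strictly in $(0,1)$, and $\G_{ds}$ is the closure of this set. In the nondegenerate case $g^{ds} \in \G_{bf}$, and the inequality is immediate because $g^{*}$ minimizes $d(\eta, \cdot)$ over $\G_{bf}$; equivalently, one applies Theorem~\ref{app_thm:bf_bound} at $\epsilon = \vec{0}_m$ with the reference prediction taken to be $g^{ref} = g^{ds}$, which collapses directly to $d(\eta, g^{bf}) \le d(\eta, g^{ds})$.

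The remaining and genuinely more delicate case is a degenerate $g^{ds} \in \G_{ds} \setminus \G_{bf}$, where some rule accuracy or class frequency equals $0$ or $1$. Here I would lean on the final claim of Lemma~\ref{app_lem:ds_error_expr}: if $g^{\dagger}$ denotes a best approximator to $\eta$ over all of $\G_{ds}$, then $d(\eta, g^{\dagger}) = d(\eta, g^{*})$, precisely because $\G_{ds}$ is the closure of $\G_{bf}$ so the two methods share the same model uncertainty. Since $g^{\dagger}$ is by definition the minimizer of $d(\eta, \cdot)$ over $\G_{ds}$, any $g^{ds} \in \G_{ds}$ satisfies $d(\eta, g^{ds}) \ge d(\eta, g^{\dagger}) = d(\eta, g^{*}) = d(\eta, g^{bf})$, which is exactly the claim. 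If one prefers a self-contained route, one can instead choose a sequence $g_{t} \in \G_{bf}$ with $g_{t} \to g^{ds}$ and pass to the limit in the chain $d(\eta, g_{t}) \ge d(\eta, g^{*})$.

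I expect the main obstacle to be exactly this boundary passage. The subtlety is that $d(\eta, \cdot)$ is a sum of KL terms that can jump to $+\infty$ at the boundary: if $g^{ds}$ assigns zero mass to a label on which $\eta$ places positive mass, then $d(\eta, g^{ds}) = +\infty$ and the inequality is trivial, but in the finite-value case I must verify that the termwise continuity of $\eta_{i\ell}\log(\eta_{i\ell}/g_{i\ell})$, as $g_{i\ell}$ approaches a strictly positive limit, lets me carry $d(\eta, g_{t}) \ge d(\eta, g^{*})$ through to $g^{ds}$. Confirming this continuity at degenerate parameters (or, equivalently, the invariance of the model uncertainty under passing to the closure that Lemma~\ref{app_lem:ds_error_expr} supplies) is the only step requiring care; everything upstream is a direct chaining of the cited results.
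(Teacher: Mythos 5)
Your proposal is correct and follows essentially the same route as the paper's own proof: reduce to $g^{bf}=g^{*}$ via Theorem~\ref{app_thm:bf_best_approx}, handle $g^{ds}\in\G_{ds}^{\circ}=\G_{bf}$ immediately by optimality of $g^{*}$, and treat boundary predictions by approximating with a sequence from $\G_{bf}$ and passing to the limit. If anything, you are more careful than the paper about the limiting step, since you explicitly separate the $d(\eta,g^{ds})=+\infty$ case (where the claim is trivial) from the finite case where termwise continuity of the KL sum must be checked.
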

\begin{proof}
    From Lemma~\ref{app_lem:bf_one_coin_pred_size}, $\mathcal{G}_{bf} = \mathcal{G}_{ds}^{\circ}$.
    Since BF is given $b^{*}, w^{*}$, we know that the BF prediction $g^{bf}$, is equal to $g^{*}$, the best approximator of $\eta$ from $\mathcal{G}_{bf}$ (Theorem~\ref{app_thm:bf_best_approx}).
    With these two facts, $g^{*}$ is better than any prediction in $\G_{ds}^{\circ}$.
    However, there are also DS predictions outside of $\mathcal{G}_{ds}$.
    So, take $g^{ds}\in \G_{ds}\setminus \mathcal{G}_{bf} $.
    Since $g^{ds}$ is a limit point of $\G_{ds}^{\circ}$, there is a sequence $\{g^{ds+}_{i}  \}_{i = 1}^{n}$ whose limit is $g^{ds}$.
    (Note that $g^{ds+}_{i}\in \G_{ds}^{\circ}$ for all $i$ and that we have abused notation with the subscript index.)
    We have already established that for any $i$,
    \[
        d(\eta, g^{bf}) \leq d(\eta, g^{ds^{+}}_{i}) = d(\eta, g^{ds}) + d(\eta, g^{ds+}_{i}) - d(\eta, g^{ds}).
    \]
    Because $g^{ds+}_{i}\rightarrow g^{ds}$, the last difference is arbitrarily small and
    \[
        d(\eta,g^{bf})\leq d(\eta, g^{ds}),
    \]
    which is what we wanted to show.
\end{proof}

\subsection{Lemma~\ref{lem:bf_ds_comp_second}}
We now have the error bound presented in the paper in full detail.
\begin{lemma}[Lemma~\ref{lem:bf_ds_comp_second}]
    If $g^{bf}$ from $V(b,\epsilon)$, $g^{\left( \theta^{*} \right)}$ the best approximator from $\mathcal{G}_{bf}$, and $\epsilon$ satisfies
    \begin{multline*}
        \|\epsilon\|_\infty \leq \frac{1}{2\left\| \theta^{*} \right\|_1}
        \Big(d(\eta, g^{ds*})-d(\eta, g^{\left( \theta^{*} \right)})+ n\Big( KL(\vec{w}^{*},\vec{w}^{ds}) + \sum_{j=1}^{p} \frac{n_{j}}{n}KL(\vec{b}^{*}_{j},\vec{b}_{j}^{ds} )\Big)\\
        + n\Big(KL(Z^{*}, Z^{ds*}) - KL(Z^{*}, Z^{ds})\Big)\Big),
    \end{multline*}
    then $g^{bf}$ is better than the one-coin DS prediction $g^{ds}$, i.e.~$d(\eta, g^{bf})\leq d(\eta,g^{ds})$.
\end{lemma}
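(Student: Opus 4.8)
The plan is to reduce the head-to-head comparison $d(\eta, g^{bf}) \leq d(\eta, g^{ds})$ to a comparison of \emph{approximation} uncertainties alone, and then to sandwich BF's approximation uncertainty between its known upper bound and the fully expanded expression for DS's. First I would invoke the two loss decompositions: Lemma~\ref{app_lem:bf_loss_decomp} gives $d(\eta, g^{bf}) = d(\eta, g^{*}) + d(g^{*}, g^{bf})$, and Lemma~\ref{app_lem:ds_error_expr} gives $d(\eta, g^{ds}) = d(\eta, g^{\dagger}) + \mathcal{E}^{appr}_{ds}$ together with the identity $d(\eta, g^{\dagger}) = d(\eta, g^{*})$ coming from the closure argument $\G_{bf} = \G_{ds}^{\circ}$ in Lemma~\ref{app_lem:bf_one_coin_pred_size}. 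The point is that both decompositions share the \emph{same} model-uncertainty term $d(\eta, g^{*})$, so after cancelling it the target inequality is equivalent to $d(g^{*}, g^{bf}) \leq \mathcal{E}^{appr}_{ds}$.

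Next I would bound the two sides separately. For the left side, Corollary~\ref{app_cor:bf_bound_to_uncertainties} (the appendix form of Theorem~\ref{thm:basic_bound}) supplies $d(g^{*}, g^{bf}) \leq 2\|\epsilon\|_\infty \|\theta^{*}\|_1$, which is the only place the interval widths $\epsilon$ enter. For the right side I would substitute the explicit one-coin form from Lemma~\ref{app_lem:one_coin_err_exp}, namely
\[
\mathcal{E}^{appr}_{ds} = d(\eta, g^{ds*}) - d(\eta, g^{*}) + n\Big(KL(\vec{w}^{*},\vec{w}^{ds}) + \sum_{j=1}^{p}\frac{n_{j}}{n}KL(\vec{b}^{*}_{j},\vec{b}^{ds}_{j}) + KL(Z^{*}, Z^{ds*}) - KL(Z^{*}, Z^{ds})\Big).
\]
With these two facts in hand, the sufficient condition $2\|\epsilon\|_\infty \|\theta^{*}\|_1 \leq \mathcal{E}^{appr}_{ds}$ is, after dividing by $2\|\theta^{*}\|_1$, exactly the hypothesis on $\|\epsilon\|_\infty$ displayed in the lemma; the chain $d(g^{*}, g^{bf}) \leq 2\|\epsilon\|_\infty\|\theta^{*}\|_1 \leq \mathcal{E}^{appr}_{ds}$ then closes the argument.

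Since every inequality is quoted from an earlier result, there is no genuine computational obstacle; the argument is essentially substitution and rearrangement. The one point I would treat with care is the well-definedness and sign of the right-hand side: the $\mathcal{E}^{appr}_{ds,2}$ piece $KL(Z^{*}, Z^{ds*}) - KL(Z^{*}, Z^{ds})$ can be negative, so a priori the displayed bound on $\|\epsilon\|_\infty$ could be negative and the hypothesis vacuous. I would therefore note that $\mathcal{E}^{appr}_{ds} = d(g^{*}, g^{ds}) \geq 0$ as a KL divergence, which follows from applying the Pythagorean identity of Lemma~\ref{app_lem:pythagorean} with $g = g^{*}$ and $g' = g^{ds}$ (legitimate because $Ag^{*} = A\eta$ by Corollary~\ref{app_cor:g_star_marginal} and $g^{ds} \in \G_{bf}$ by Lemma~\ref{app_lem:bf_one_coin_pred_size}). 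This reconciles the possibly-negative last term with the non-negativity of the whole, and confirms that a nonempty ball of admissible $\epsilon$ exists precisely when $g^{ds} \neq g^{*}$.
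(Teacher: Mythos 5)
Your proposal is correct and follows essentially the same route as the paper's proof: both reduce the comparison to approximation uncertainties via the shared model-uncertainty term, bound BF's approximation uncertainty by $2\|\epsilon\|_\infty\|\theta^{*}\|_1$ using Theorem~\ref{thm:basic_bound}, and expand $\mathcal{E}^{appr}_{ds}$ via the one-coin error expression before rearranging. Your added remark on the non-negativity of the right-hand side (via $\mathcal{E}^{appr}_{ds} = d(g^{*}, g^{ds}) \geq 0$) matches the observation the paper makes immediately after the lemma statement in the main text.
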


\begin{proof}
    Theorem~\ref{app_thm:bf_bound} states
    \[
        d(\eta,g^{bf})\leq d(\eta, g^{(\theta^{*})}) + 2\|\epsilon\|_\infty\|\theta^{*}\|_1,
    \]
    bounding the total epistemic error of the BF prediction via the exact model uncertainty and an upper bound on the approximation uncertainty.
    From Lemma~\ref{app_lem:one_coin_err_exp}, we see that the model uncertainty for one-coin DS matches.
    Therefore, it suffices for our upper bound of BF's approximation uncertainty to be smaller than DS' approximation uncertainty.
    Written out, we require $\epsilon$ sufficiently small that the following inequality holds.
    \[
        2\|\epsilon\|_\infty\|\theta^{*}\|_1\leq \mathcal{E}^{appr}_{ds}
    \]
    But, this just boils down to
    \begin{multline*}
        \|\epsilon\|_\infty\leq \frac{1}{2\|\theta^{*}\|_1} \Bigg(d(\eta, g^{ds*}) - d(\eta, g^{(\theta^{*})})+ n\Big( KL(\vec{w}^{*}||\vec{w} ) + \sum_{j=1}^{p} \frac{n_{j}}{n}KL(\vec{b_{j}^{*}}||\vec{b_{j}} )\\
        +KL(Z^{*}||Z^{ds*}) - KL(Z^{*}||Z^{ds})\Big)\Bigg)
    \end{multline*}
    where we've expanded the OCDS approximation uncertainty via Lemma~\ref{app_lem:ds_error_expr}.
\end{proof}

\section{OCDS with EM is Inconsistent}\label{app_sec:ds_inconsistent}

We end the theoretical portion of the appendix by formally describing a class of problems where OCDS equipped with EM can easily be shown to be inconsistent.
In essence, we compute the unique optimal approximator $g^{*}$ to $\eta$ (Theorem~\ref{app_thm:bf_best_approx}) and show that applying the M Step to it, and then applying the E Step will result in the OCDS prediction not equalling $g^{*}$.
Recall that EM is said to converge when repeated applications of the E and M step bring no change to the prediction.
Thus, if we give EM a prediction (namely $g^{*}$) and and the resulting prediction from the E-Step (after applying an M-Step) is different from $g^{*}$, then that means EM never converges to $g^{*}$.

Our class of problems will have $p=2$ rules in the ensemble and $k=2$ classes.
Moreover, the rules will be generalists and predict on all $n$ datapoints.
Observe that there are eights ways for the ensemble to predict and for the labels to be assigned (Table~\ref{app_tab:ways_to_pred}).
\begin{table}[htpb]
    \centering
    \caption{All Rule Prediction/Ground Truth Label Combinations}\label{app_tab:ways_to_pred}
    \begin{tabular}{ccccccccc}
    \toprule
    true label$\rightarrow$ & 1 & 2 & 1 & 2 & 1 & 2 & 1 & 2 \\
    \midrule
    $h^{(1)}(x)$ & 1 & 1 & 2 & 2 & 1 & 1 & 2 & 2 \\
    $h^{(2)}(x)$ & 1 & 1 & 2 & 2 & 2 & 2 & 1 & 1 \\
    \bottomrule
    \end{tabular}
\end{table}

Call $c_{11}$ the number of points (out of $n$) where both rules in the ensemble predict $1$.
Going by this, $c_{21}$ is the number of points where the first rule predicts class $2$ while the second rule predicts class $1$.
Formally,
\[
    c_{11} = n\Pre( h^{(1)}(x) = 1, h^{(2)}(x) =1 ) \quad \text{and} \quad c_{21} = n\Pre(  h^{(1)}(x) = 2, h^{(2)}(x) =1  ).
\]
One quickly infers that
\[
    c_{11} + c_{22} + c_{12} + c_{21} = n.
\]
Similarly, call $r_{1, 12}$ the number of points where the true label is $1$, and the first rule predicts class $1$ while the second rule erroneously predicts class $2$.
\[
    r_{1,12} = n\Pre( y = 1, h^{(1)}(x) = 1, h^{(2)}(x) =2 )
\]
One sees that $r_{1, 12} + r_{2, 12} = c_{12}$.
Moreover, $nb_{1}^{*} = r_{1,11} + r_{2,22} + r_{1, 12} + r_{2, 21}$ or writing out the probabilities only,
\begin{multline*}
    b_{1}^{*} = \Pre( h^{(1)}(x) = y ) = \Pre( y = 1, h^{(1)}(x) = 1, h^{(2)}(x) =1 )\\
              + \Pre( y = 2, h^{(1)}(x) = 2, h^{(2)}(x) =2 ) + \Pre( y = 1, h^{(1)}(x) = 1, h^{(2)}(x) =2 )\\
              + \Pre( y = 2, h^{(1)}(x) = 2, h^{(2)}(x) =1 ).
\end{multline*}
Similarly for the class frequencies, $w_{1}^{*} = r_{1,11} + r_{1,22} + r_{1,12} + r_{1,21}$.

Suppose for datapoint $x_{i}$ that the rules both predict $1$.
Then,
\[
    \Pre(y=\ell \mid h^{(1)}(x) = 1, h^{(2)}(x) =1 ) = \frac{\Pre(y=\ell , h^{(1)}(x) = 1, h^{(2)}(x) =1 )}{\Pre(h^{(1)}(x) = 1, h^{(2)}(x) =1 )} = \frac{r_{\ell,11}}{c_{11}}.
\]
If a prediction (from BF/OCDS) predicts the above probability for label $\ell $ when the rules each predict $1$, we'll say the prediction infers the correct proportions of labels for $c_{11}$.
If the prediction can do this for all $c$ values, we'll just say it infers the correct proportion of labels.
Formally, say $g^{\prime}$ infers the correct proportion of labels.
Suppose that for datapoint $x_{i}$, $h^{(1)}(x_{i}) = \ell^{\prime}$ and $h^{(2)}(x_{i}) = \ell^{\prime\prime}$.
When we say $g^{\prime}$ infers the correct proportion of labels, we mean
\[
    g^{\prime}_{i\ell } = \frac{r_{\ell,\ell^{\prime}\ell^{\prime\prime}}}{c_{\ell^{\prime}\ell^{\prime\prime}}} = \frac{\Pre\left( y=\ell, h^{(1)}(x_{i}) = \ell^{\prime}, h^{(2)}(x_{i}) = \ell^{\prime\prime} \right)}{\Pre\left( h^{(1)}(x_{i}) = \ell^{\prime}, h^{(2)}(x_{i}) = \ell^{\prime\prime} \right)}.
\]

We now show that under a certain restriction on the $r$ and $c$ values, the BF prediction infers the correct proportion of labels.
\begin{lemma}\label{app_lem:bf_instanti}
    Suppose for a dataset that $c_{11} = c_{22}$, $c_{12} = c_{21}$, $r_{1,11} = r_{2,22}$, $r_{1,12} = r_{2,21}$, and $r_{2,12} = r_{1,21}$.
    Then, class frequencies for each class are equal.
    Now, suppose we have an ensemble of two rules that predict on each of the $n$ datapoints.
    If BF is given the empirical accuracies and empirical class frequencies, it correctly infers the correct proportion of labels.
    Furthermore, a set of weights that are optimal for BF are as follows.
    The class frequency weights are both equal to $n$.
    The learner's weights for rules $h^{\left( 1 \right)}$ and $h^{\left( 2 \right)}$ are
    \[
        n\log\left( \sqrt{\frac{r_{1,11}}{r_{2,11}} \frac{r_{1,12}}{r_{2,12}}} \right) \quad \text{and} \quad n\log\left( \sqrt{\frac{r_{1,11}}{r_{2,11}}\frac{r_{2,12}}{r_{1,12}} } \right)
    \]
    respectively.
\end{lemma}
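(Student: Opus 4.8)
The plan is to establish optimality of the stated weights \emph{constructively}: I take the claimed $\theta$, show directly that the induced exponential-family member $g^{(\theta)}\in\G$ reproduces the empirical rule accuracies and class frequencies, i.e.\ $Ag^{(\theta)}=A\eta=b^{*}$, and then appeal to Theorem~\ref{app_thm:bf_best_approx}. Since BF is handed the empirical parameters (so $\epsilon=\vec{0}_{m}$), that theorem says the BF prediction is the \emph{unique} best approximator $g^{*}$, which is the only member of $\G$ whose marginals match $A\eta$; hence any $\theta$ with $Ag^{(\theta)}=A\eta$ must give $g^{(\theta)}=g^{bf}=g^{*}$, proving simultaneously that BF infers the correct label proportions and that the exhibited $\theta$ is optimal. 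I would flag up front that the weights are only optimal up to the usual softmax redundancy (a common additive shift of the two class-frequency weights leaves the prediction unchanged), which is why the statement reads ``a set of weights'' rather than ``the weights.''

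Before the weight computation I would pin down the combinatorial identities forced by the symmetry hypotheses. The three given relations are $r_{1,11}=r_{2,22}$, $r_{1,12}=r_{2,21}$, $r_{2,12}=r_{1,21}$; combining $r_{1,11}=r_{2,22}$ with $c_{11}=c_{22}$, written as $r_{1,11}+r_{2,11}=r_{1,22}+r_{2,22}$, yields the remaining identity $r_{2,11}=r_{1,22}$. Substituting all four into $n w_{1}^{*}=r_{1,11}+r_{1,22}+r_{1,12}+r_{1,21}$ turns it term-by-term into $r_{2,22}+r_{2,11}+r_{2,21}+r_{2,12}=nw_{2}^{*}$, so $w_{1}^{*}=w_{2}^{*}=\tfrac12$; this is the first assertion.

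Next I would write the exponential-family score explicitly for two generalist rules ($n_{1}=n_{2}=n$, $m=4$). From the definition of $A$,
\[
    a^{(\theta)}_{i\ell}=\frac{1}{n}\Big[\theta_{p+\ell}+\!\!\sum_{j:\,h^{(j)}(x_i)=\ell}\!\!\theta_j\Big],
\]
so with the class-frequency weights set to $\theta_{3}=\theta_{4}=n$ (equal, hence cancelling in the softmax, as expected from $w_{1}^{*}=w_{2}^{*}$) the prediction depends only on the two rule weights and on which rules fire. For each of the four rule patterns I would compute the log-odds $\log(g^{(\theta)}_{i1}/g^{(\theta)}_{i2})=\tfrac1n\big[\sum_{j:\,h^{(j)}(x_i)=1}\theta_j-\sum_{j:\,h^{(j)}(x_i)=2}\theta_j\big]$: the ``both predict $1$'' and ``both predict $2$'' patterns give $\tfrac1n(\theta_1+\theta_2)=\log(r_{1,11}/r_{2,11})$, while the two ``disagreement'' patterns give $\pm\tfrac1n(\theta_1-\theta_2)=\pm\log(r_{1,12}/r_{2,12})$; the square roots in the stated weights are precisely what make these logs telescope. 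Converting each log-odds back to a probability and using the symmetry identities gives $g^{(\theta)}_{i\ell}=r_{\ell,\ell'\ell''}/c_{\ell'\ell''}$ on every pattern $(\ell',\ell'')$, i.e.\ $g^{(\theta)}$ infers the correct proportion of labels.

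Finally I would close the loop on $Ag^{(\theta)}=A\eta$. Because $g^{(\theta)}$ is constant across the $c_{\ell'\ell''}$ datapoints sharing a pattern and equals the conditional proportion there, summing $g^{(\theta)}_{i\ell}$ over such a block returns $c_{\ell'\ell''}\cdot r_{\ell,\ell'\ell''}/c_{\ell'\ell''}=r_{\ell,\ell'\ell''}$, which is exactly the count appearing in the empirical accuracies and class frequencies; aggregating over patterns gives $Ag^{(\theta)}=A\eta=b^{*}$. Invoking Theorem~\ref{app_thm:bf_best_approx} then identifies $g^{(\theta)}$ with $g^{bf}=g^{*}$ and certifies the weights as optimal. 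The main obstacle is not the (short) log-algebra but this last identification step together with the bookkeeping of the symmetry relations: one must check that ``matches the conditional proportions'' really does force $Ag^{(\theta)}=A\eta$, that the appeal to uniqueness (rather than a direct KKT verification as in Theorem~\ref{app_thm:ref_bf_prog}) is legitimate, and track exactly which of the eight $r$-values collapse onto one another so that all four patterns check out. I would also assume all counts are strictly positive so the logs---and hence the weights---remain finite, consistent with $\G_{bf}=\G_{ds}^{\circ}$.
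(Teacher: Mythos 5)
Your proposal is correct, and the central computation (checking that the exhibited weights make $g^{(\theta)}$ equal the conditional proportions $r_{\ell,\ell'\ell''}/c_{\ell'\ell''}$ on each of the four rule patterns, using the symmetry identities and the derived relation $r_{2,11}=r_{1,22}$) is the same one the paper performs. Where you genuinely diverge is in the optimality certificate. The paper writes out the concave dual objective of Theorem~\ref{thm:BF_dual} explicitly for this $p=2$, $k=2$ instance and verifies that all four partial derivatives vanish at the claimed weights; since the dual is concave, stationarity certifies global optimality directly. You instead verify the primal moment-matching condition $Ag^{(\theta)}=A\eta=b^{*}$ (by summing the constant block predictions over each pattern) and then invoke Lemma~\ref{app_lem:pythagorean} together with the uniqueness statement of Theorem~\ref{app_thm:bf_best_approx} to conclude that any moment-matching member of $\G$ must be $g^{*}=g^{bf}$. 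These are dual views of the same fact --- the stationarity conditions of the dual are precisely $Ag^{(\theta)}=b$ --- but your route avoids writing down the dual at all and leans on the general machinery already proved in the appendix, which makes the argument shorter and arguably cleaner; the paper's route is more self-contained and makes the role of the dual variables explicit, which it then reuses when contrasting with the OCDS weights. Your remarks on the softmax shift-invariance of the class-frequency weights and on needing strictly positive counts so the logarithms are finite are both correct and worth keeping.
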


\begin{proof}
    We now write out the dual problem for BF\@.
    We will make the following simplification: defining real variables representing $\sigma^{\prime}-\sigma$.
    Let $n\theta_{1}$ and $n\theta_{2}$ represent the weights for the rules, while $n\tau_{1}$ and $n\tau_{2}$ represent the weights for the class frequencies.
    They are allowed to be real by definition.
    We include the $n$ explicitly on the outside so that the log-sum-exp term in the BF dual does not contain $n$.
    (Each row of $A$ in this case is defined by $h^{(j)}/n$ or $\vec{e}^{\,n}_{\ell}/n$, so we want to get rid of the $n$.)
    Expanding the terms from Theorem~\ref{thm:BF_dual}, our objective is
    \begin{multline*}
        \max_{\theta_{1}, \theta_{2}, \tau_{1},\tau_{2}}[ nb_{1}\theta_{1} + nb_{2}\theta_{2} + nw_{1}\tau_{1} + nw_{2}\tau_{2} -c_{11}\log\left( \exp\left( \tau_{1} + \theta_{1}+ \theta_{2} \right) + \exp\left( \tau_{2} \right)\right)\\
        - c_{22}\log\left( \exp\left( \tau_{1} \right) + \exp\left( \tau_{2} + \theta_{1}+ \theta_{2} \right) \right)\\
        - c_{12}\log\left( \exp\left( \tau_{1} + \theta_{1}\right) + \exp\left( \tau_{2} + \theta_{2} \right) \right)\\
        - c_{21}\log\left( \exp\left( \tau_{1} + \theta_{2}\right) + \exp\left( \tau_{2} + \theta_{1} \right) \right)].
    \end{multline*}
    Note that the term inside the logarithm is the sum of all the exponential of all possible predictions given that the ensemble predicts in a certain way.
    For example, $c_{11}$ is the case where the rules both predict label $1$.
    Therefore, the two possible predictions are $\tau_{1} + \theta_{1} + \theta_{2}$ for class $1$ and $\tau_{2}$ for class $2$.
    Since this objective is convex, we can take the partial derivatives with respect to each variable and exhibit $\theta_{1}, \theta_{2}, \tau_{1}, \tau_{2}$ such that the partial derivatives are 0.
    The partial derivatives set to $0$ are
    \begin{align*}
        nb_{1} & =  c_{11}\frac{e^{\tau_{1} + \theta_{1}+ \theta_{2}}}{e^{\tau_{1} + \theta_{1}+ \theta_{2}} + e^{\tau_{2}}}
                 +  c_{22}\frac{e^{\tau_{2}+ \theta_{1}+\theta_{2}}}{e^{\tau_{2}+ \theta_{1}+\theta_{2}} + e^{\tau_{1}}}
                 +  c_{12} \frac{e^{\tau_{1} + \theta_{1}}}{e^{\tau_{1} + \theta_{1}} + e^{\tau_{2} + \theta_{2}}}
                 +  c_{21}\frac{e^{\tau_{2}+ \theta_{1}}}{e^{\tau_{2}+ \theta_{1}} + e^{\tau_{1} + \theta_{2}}} \\
        nb_{2} & =  c_{11}\frac{e^{\tau_{1} + \theta_{1}+ \theta_{2}}}{e^{\tau_{1} + \theta_{1}+ \theta_{2}} + e^{\tau_{2}}}
                 +  c_{22}\frac{e^{\tau_{2}+ \theta_{1}+\theta_{2}}}{e^{\tau_{2}+ \theta_{1}+\theta_{2}} + e^{\tau_{1}}}
                 +  c_{12} \frac{e^{\tau_{2} + \theta_{2}}}{e^{\tau_{2} + \theta_{2}} + e^{\tau_{1} + \theta_{1}}}
                 +  c_{21}\frac{e^{\tau_{1}+ \theta_{2}}}{e^{\tau_{1}+ \theta_{2}} + e^{\tau_{2} + \theta_{1}}} \\
        nw_{1} & =  c_{11}\frac{e^{\tau_{1} + \theta_{1}+ \theta_{2}}}{e^{\tau_{1} + \theta_{1}+ \theta_{2}} + e^{\tau_{2}}}
                 +  c_{22}\frac{e^{\tau_{1}}}{e^{\tau_{1}} + e^{\tau_{2}+ \theta_{1}+\theta_{2}}}
                 +  c_{12} \frac{e^{\tau_{1} + \theta_{1}}}{e^{\tau_{1} + \theta_{1}} + e^{\tau_{2} + \theta_{2}}}
                 +  c_{21}\frac{e^{\tau_{1}+ \theta_{2}}}{e^{\tau_{1}+ \theta_{2}} + e^{\tau_{2} + \theta_{1}}} \\
        nw_{2} & =  c_{11}\frac{e^{\tau_{2}}}{e^{\tau_{2}} + e^{\tau_{1} + \theta_{1}+ \theta_{2}}}
                 +  c_{22}\frac{e^{\tau_{2}+ \theta_{1}+\theta_{2}}}{e^{\tau_{2}+ \theta_{1}+\theta_{2}} + e^{\tau_{1}}}
                 +  c_{12} \frac{e^{\tau_{2} + \theta_{2}}}{e^{\tau_{2} + \theta_{2}} + e^{\tau_{1} + \theta_{1}}}
                 +  c_{21}\frac{e^{\tau_{2}+ \theta_{1}}}{e^{\tau_{2}+ \theta_{1}} + e^{\tau_{1} + \theta_{2}}}
    \end{align*}
    Now, choose $\tau_{1}=\tau_{2}=1$ and
    \[
        \theta_{1} = \log\left(\sqrt{ \frac{r_{1,11}}{r_{2,11}}\frac{r_{1,12}}{r_{2,12}}} \right) \quad \text{and} \quad \theta_{2} = \log\left(\sqrt{ \frac{r_{1,11}}{r_{2,11}}\frac{r_{2,12}}{r_{1,12}}   } \right).
    \]
    By our choice of $\tau$'s, we can disregard the appearance of $\tau_{1}$ and $\tau_{2}$ in the softmaxes.
    Observe that the accuracies and class frequencies can be written in terms of the $r$ terms.
    \[
        nb_{1} = r_{1,11} + r_{2,22} + r_{1,12} + r_{2, 21}, \quad nb_{2} = r_{1, 11} + r_{2,22} + r_{2,12} + r_{1, 21}
    \]
    \[
        nw_{1} = r_{1,11} + r_{1, 22} + r_{1, 12} + r_{1, 21}, \quad \text{and} \quad nw_{2} = r_{2,11} + r_{2, 22} + r_{2, 12} + r_{2, 21}.
    \]

    We briefly digress to show that $nw_{1} = nw_{2}$.
    Since
    \[
        c_{11} = c_{22}, \quad \text{it follows that} \quad r_{1,11} + r_{2,11} = r_{1,22} + r_{2,22}.
    \]
    As we have assumed $r_{1,11} = r_{2,22}$, we see the above equality implies $r_{2,11} = r_{1,22}$.
    Now by our assumptions of equality of the $r$ terms, $nw_{1} = nw_{2}$.

    If we can establish correspondences between the softmaxes and those equations involving the $r$'s using our claimed weights, that means those weights are optimal and the learner's optimal prediction infers the correct proportions of labels.
    (The learner's predictions are exactly those softmaxes.)
    For the first partial derivative,
    \[
        c_{11}\frac{e^{\theta_{1}+ \theta_{2}}}{e^{\theta_{1}+ \theta_{2}} + 1} = c_{11}\frac{e^{\log\left( \frac{r_{1,11}}{r_{2,11}}  \right)}}{e^{\log\left( \frac{r_{1,11}}{r_{2,11}}  \right)} + 1}  = c_{11} \frac{r_{1,11}}{r_{1,11} + r_{2,11}} = r_{1,11}
    \]
    because $r_{1,11} + r_{2,11} = c_{11}$.
    The softmax term for $c_{22}$ works out the same way.
    Recall that by assumption that $r_{1,11} = r_{2,22}$, the softmax equals $r_{2,22}/c_{22}$ because we also assumed that $c_{11} = c_{22}$.
    For the last two terms in the first partial derivative, one can see that
    \[
        c_{12}\frac{e^{\theta_{1}}}{e^{\theta_{1}} + e^{\theta_{2}}}
        = c_{12}\frac{\sqrt{\frac{r_{1,11}}{r_{2,11}} \frac{r_{1,12}}{r_{2,12}} } }
        {\sqrt{\frac{r_{1,11}}{r_{2,11}} \frac{r_{1,12}}{r_{2,12}}}   + \sqrt{\frac{r_{1,11}}{r_{2,11}} \frac{r_{2,12}}{r_{1,12}}} } = c_{12}\frac{r_{1,12}}{c_{12}} = c_{21}\frac{r_{2,21}}{c_{21}}
    \]
    where for the second to last equality, we used the fact that $c_{12} = r_{1,12} + r_{2,12}$ whereas the last equality used our assumptions that $r_{1,12 = r_{2,21}}$ and $c_{12}=c_{21}$.
    These calculations show that the first partial derivative is $0$ with respect to the weights we have chosen.
    In fact, all of the partial derivatives evaluate to zero when our weights are used.

    To summarize, we chose weights of $n$ for each of the class frequency constraints, and we chose
    \[
        n\log\left( \sqrt{\frac{r_{1,11}}{r_{2,11}} \frac{r_{1,12}}{r_{2,12}} }  \right)
    \]
    for the first rule's constraint, and
    \[
        n\log\left( \sqrt{\frac{r_{1,11}}{r_{2,11}} \frac{r_{2,12}}{r_{1,12}} }  \right)
    \]
    for the second rule's constraint.
\end{proof}

Because we gave the empirical accuracies and class frequencies to BF, its prediction is also the best approximator.
Since the optimal approximator $g^{*}$ is such that $Ag^{*} = b^{*}$, the M-step of OCDS' EM algorithm (Algorithm~\ref{app_alg:DSEM}) will return the empirical accuracies and class frequencies.
Then, the E-Step will return OCDS predictions using the aforementioned empirical quantities.
To see that the resulting prediction from OCDS after an M-Step and E-Step is not optimal (doesn't necessarily match BF's prediction), observe that the OCDS prediction for class $1$ on a datapoint where both rules predict label $1$ is
\[
    \frac{w_{1}^{*}b_{1}^{*}b_{2}^{*}}{w_{1}^{*}b_{1}^{*}b_{2}^{*} + w_{2}^{*}(1-b_{1}^{*})(1-b_{2}^{*})}, \quad \text{not necessarily equal to} \quad \frac{r_{1,11}}{r_{1,11} + r_{2,11}},
\]
which is the optimal prediction.

For completeness, following Lemma~\ref{app_lem:one_coin_ds_weights}, the OCDS weights are
\[
    n\log\left( w_{1}^{*} \right) \quad \text{and} \quad n\log\left( w_{2}^{*} \right)
\]
for the class frequencies while assigning
\[
    n\log\left( \frac{b_{1}^{*}(k-1)}{1-b_{1}^{*}}  \right) \quad \text{and} \quad n\log\left( \frac{b_{2}^{*}(k-1)}{1-b_{2}^{*}}  \right)
\]
for the rules.

\subsection{A Concrete Example}

We now instantiate an instance of the above problem and show that one does not get $g^{*}$ after applying one M step followed by one E step to $g^{*}$.
This means that EM does not converge at $g^{*}$.

Define $w_{1}^{*} = w_{2}^{*}=0.5$ and say we have $n=22$ datapoints.
Also, say
\[
    r_{1,11} = r_{2,22} = 5, \quad r_{1,12} = r_{2,21} = 3, \quad r_{2,11} = r_{1,22} = 2, \quad r_{2,12} = r_{1, 21} = 1,
\]
which means that
\[
    c_{11} = c_{22} = 7, \quad c_{12} = c_{21} = 4, \quad w^{*}_{1} = w_{2}^{*} = \frac{11}{22} = 0.5, \quad b_{1}^{*} = \frac{16}{22}, \quad b_{2}^{*} = \frac{12}{22}.
\]

The previous section shows that $g^{*}$ computed from $V(b^{*}, \vec{0}_{4})$ (we're abusing notation so that $b^{*}$ has both the empirical rule accuracies and class frequencies) is such that $Ag^{*} = b^{*}$.
Note that $g^{*}$ is unique (Theorem~\ref{app_thm:bf_best_approx}) so that we only have to consider what we computed from $V(b^{*}, \vec{0}_{4})$.
This means that the M step with $g^{*}$ will return $b^{*}$.
Now, we check the result of the E step.
We compute one prediction from the E step, and omit the others for brevity.
With the empirical accuracies and empirical class frequencies (from the M step), DS predicts that the probability that the label is $1$ when both rules predict $1$ (which has actual value $r_{1,11}/c_{11}$) is
\[
    \frac{w^{*}_{1}b_{1}^{*}b_{2}^{*}}{w^{*}_{1}b_{1}^{*}b_{2}^{*} + w_{2}^{*}(1-b_{1}^{*})(1-b_{2}^{*})} = \frac{\frac{1}{2}\frac{16}{22}\frac{12}{22} }{\frac{1}{2}\frac{16}{22}\frac{12}{22} + \frac{1}{2}\frac{6}{22}\frac{10}{22}} = \frac{1}{1 + \frac{6\cdot 10}{16\cdot 12}}\approx 0.76.
\]
Since there are $c_{11} = 7$ points where both rules predict $1$, DS predicts that $0.76 \cdot 7 =5.33$ out of $7$ many points have label $1$.
This is in contrast to the optimal approximator, which correctly predicts that $5$ out of $7$ points have label $1$ (when both rules predict $1$).

The following table shows the optimal approximator's predictions, and DS' predictions for the $r$ values.
It suffices to note that the DS predictions do not match the optimal approximator.
\begin{table}[htpb]
    \centering
    \caption{Predicted $r$ values from BF and OCDS}\label{tab:bf_ds_preds}
    \begin{tabular}{ccccccccc}
    \toprule
    Pred. & $r_{1,11}$ & $r_{2,11}$ & $r_{1, 22}$ & $r_{2,22}$ & $r_{1, 12}$ & $r_{2,12}$ & $r_{1,21}$ & $r_{2,21}$ \\
    \midrule
    $g^{*}$  & $5$ & $2$ & $2$ & $5$ & $1$ & $3$ & $1$ & $3$ \\
    $g^{ds*}$  & $5.33$ & $1.67$ & $1.67$ & $5.33$ & $1.71$ & $2.29$ & $1.71$ & $2.29$ \\
    \bottomrule
    \end{tabular}
\end{table}
Therefore, EM cannot converge at $g^{*}$ for this problem as applying an M step followed by an E step results in the prediction moving away from $g^{*}$.
We conclude that DS is inconsistent.

\section{Experimental Results}~\label{app_sec:experimental_results}

To end the appendix, we now present the complete experimental results.
The results of the main paper are presented again, but with more detailed commentary.
The loss decomposition for every dataset is also shown.

\subsection{Datasets}
The first citation is the source of the raw data, while the second citation is the source of the rules-of-thumb.
We note that many of the datasets and rules have been compiled in the WRENCH repository \citep{zhang2021wrench}.
If the validation sets were not provided, we construct it by doing a random split on the training set.
The resulting size of the training/validation sets are chosen to match the cited author's choices.
The datasets are also included in the supplementary material.

\begin{enumerate}
    \item Animals with Attributes (AwA) \citep{awa_data}, \citep{adversarial_multi_class_learning_performance_guarantees}. We wish to distinguish images of animals. Two classes (chimpanzee and seal) are selected to create a binary task. The rules are created by fine-tuning a pretrained ResNet-18 using labeled data from other classes.
    \item Basketball \citep{Fu_triplet_basketball} for both data and rules. We wish to identify which videos are about basketball in a set of videos. The rules are heuristics that are given information such as presence of certain objects, their size, and distance from each other.
    \item Breast Cancer \citep{misc_breast_cancer_17}, \citep{adversarial_label_learning}. We wish to determine whether breast cell nuclei are positive or negative for breast cancer. The rules are single dimension logistic regression classifiers.  The features chosen are mean radius of the nucleus, radius standard error, and worst radius of the cell nucleus.
    \item Cardiotocography \citep{misc_cardiotocography_193}, \citep{adversarial_label_learning}. We wish to use cardiotocograms to predict fetal heart rate.  Out of the ten possible classes, the two most common are selected. The rules are single dimension logistic regression classifiers.  The features chosen are accelerations per second, mean value of long-term variability, and histogram median.
    \item DomainNet \citep{domainnet_data}, \citep{adversarial_multi_class_learning_performance_guarantees}. We wish to classify images that come from different domains. 5 classes are randomly selected from the 25 classes that appear most frequently. To generate the rules, a pre-trained ResNet-18 network is fine-tuned using 60 labeled examples from each domain.
    \item IMDB \citep{imdb_data}, \citep{ren-etal-2020-denoising}. We wish to determine whether or not an IMDB review is positive or negative. The rules look for specific keywords or patterns present in the text.
    \item OBS Network \citep{misc_obs_404}, \citep{adversarial_label_learning}.  We wish to detect and block network nodes which may have potentially malicious behavior. The rules are single dimension logistic regression classifiers.  The features chosen are percentage flood per node, average packet drop rate, and utilized bandwidth.
    \item SMS \citep{misc_sms_spam_collection_228}, \citep{Awasthi2020Learning}.  We wish to determine whether a text message (SMS) is spam or not. Rules check for specific keywords or phrases.
    \item Yelp \citep{yelp_data}, \citep{ren-etal-2020-denoising}. We wish to classify whether a Yelp review is positive or not.  The rules look for specific keywords or patterns present in the text.
    \item Youtube \citep{misc_youtube_spam_collection_380}, \citep{youtube_snorkel}. We wish to determine whether comments on five youtube videos were spam or not. The rules check for certain keywords to determine if the comment is spam.
\end{enumerate}

For datasets where the rules predict probabilities, we convert the probabilities to hard predictions by taking the label with the highest probability.
Also, for methods that cannot handle abstaining rules, we have a version of the dataset where a random label replaces a rule abstention.
This ``filled out'' dataset is only used for the methods that can't handle rule abstentions.

Table~\ref{app_tab:dataset_stats} summarizes some relevant statistics.
\begin{table}[htpb]
    \centering
    \caption{Dataset Statistics}\label{app_tab:dataset_stats}
    \begin{tabular}{lccrrc}
    \toprule
    Name & \# Class ($k$) & \# Rules ($p$) & \# Train ($n$) & \# Valid & Rules Abstain?\\
    \midrule
    AwA        & 2 & 36 & 1372  & 172  & N \\
    Basketball & 2 & 4  & 17970 & 1064 & Y \\
    Cancer     & 2 & 3  & 171   & 227  & N \\
    Cardio     & 2 & 3  & 289   & 385  & N \\
    Domain     & 5 & 5  & 2587  & 323  & N \\
    IMDB       & 2 & 8  & 20000 & 2500 & Y \\
    OBS        & 2 & 3  & 239   & 317  & N \\
    SMS        & 2 & 73 & 4571  & 500  & Y \\
    Yelp       & 2 & 8  & 30400 & 3800 & Y \\
    Youtube    & 2 & 10 & 1586  & 120  & Y \\
    \bottomrule
    \end{tabular}
\end{table}

\subsection{Methods}
We consider a seven methods that serve as representatives for various label estimation strategies used in weak supervision.
They are briefly described here, along with relevant details about initialization.
See the attached code for the complete implementation of each method.
\begin{enumerate}
    \item Majority Vote (MV), but we normalize the counts for each datapoint so a distribution is predicted.
    Namely, its prediction $g^{mv}_{i\ell }$ is
    \[
        g^{mv}_{i\ell }:= \frac{|\{ j\in[p] \colon h^{(j)}(x_{i}) =\ell \}|}{|\{ j\in[p] \colon h^{(j)}(x_{i}) \neq\text{?}\}|}
    \]
    \item Dawid-Skene, one-coin variant (OCDS) equipped with EM \citep{DS_1979}, \citep{li2014error}. Our nominal representative for the probabilistic approach, described in Subsection~\ref{app_subsec:ds_expository}.
        Assumes conditional independence of rule predictions given the label.
        EM is initialized with the majority vote prediction and is run until convergence.
    \item Data Programming (DP) \citep{data_prog}, a generalization of DS that relaxes the independence assumptions made by use of a factor graph.  Run with the default hyperparameters provided in the WRENCH implementation \citep{zhang2021wrench}.
    \item Enhanced Bayesian Classifier Combination (EBCC) \citep{EBCC}, a Bayesian method that generalizes DS, interpretable as having multiple confusion matrices per rule.  Posterior label distribution is estimated via mean field variational inference with suggested hyperparameters from aforementioned authors.
    \item Hyper Label Model (HyperLM) \citep{HyperLM}, a graph neural network that predicts a label given rule predictions.
    \item Adversarial Multi-Class Learning, Convex Combination variant (AMCL CC) \citep{adversarial_multi_class_learning_performance_guarantees}, another adversarial weak supervision method.
        The scaling factor is set to $0.4$.
        We note that this method cannot handle rules that abstain.
        Thus, if any rule abstains on a datapoint, ``fill-in'' that rule's prediction by selecting a label uniformly at random.
        This method requires a linear program solver, for which we use Gurobi \citep{gurobi}.
    \item Balsubramani-Freund with log-loss and accuracy constraints (BF), our method in consideration, implemented with CVXPY~\citep{diamond2016cvxpy}, run with MOSEK \citep{mosek}, an off the shelf convex solver.
\end{enumerate}

DP, EBCC, AMCL CC, and BF each have sources of randomness, so we run each mentioned method 10 times.
DP's and EBCC's randomness comes from the intialization, while AMCL CC and BF's randomness comes from the fact that those two methods use labeled data.
For the table, 100 labeled points are randomly drawn from the validation set.
Wilson's interval with failure probability $0.05$ (as suggested by \citet{brown2001}) is used to bound the rule accuracies and class frequencies for BF\@.

The mean log loss/0-1 loss/Brier score is reported for each method on each dataset.
Bolded entries ones where the respective means could not be distinguished by a paired two tailed t-test with $p=0.05$.

\subsection{Comparison with SOTA}
We consider three losses for each method.
Let the ground truth be $\eta$ and the prediction from some fixed method be $g$.
For the experiments, the labels will be deterministic.
If the label for datapoint $x_{i}$, denoted $y_{i}$ is $\ell $, then $\eta_{i\ell } = 1$.
\begin{enumerate}
    \item Log-loss: $\frac{1}{n} \sum_{i=1}^{n} \sum_{\ell =1}^{k} -\eta_{i\ell }\log g_{i\ell }$
    \item 0-1 loss: $\frac{1}{n} \sum_{i=1}^{n} \textbf{1}(\argmax_{\ell \in [k]}g_{i\ell } = y_{i})$
    \item Brier Score: $\frac{1}{n} \sum_{i=1}^{n} \sum_{\ell =1}^{k} (g_{i\ell } - \eta_{i\ell })^{2}$
\end{enumerate}

\begin{table}
    \begin{center}
    \caption{Comparison of BF Against Other WS Methods Using Average Log Loss}\label{app_tab:labeled_wrench_log_loss}
        \begin{tabular}{ccccccccccc}
\toprule
            Method & AwA & Basketball & Cancer & Cardio & Domain & IMDB & OBS & SMS & Yelp & Youtube \\
            \midrule
            MV & $0.31$ & $2.40$ & $14.87$ & $0.66$ & $5.48$ & $6.39$ & $8.73$ & $0.79$ & $5.90$ & $1.27$ \\
            OCDS & $0.24$ & $3.75$ & $4.46$ & $13.74$ & $22.32$ & $2.91$ & $6.28$ & $0.78$ & $1.73$ & $17.63$ \\
            DP & $0.42$ & $1.31$ & $6.14$ & $7.01$ & $9.21$ & $0.68$ & $3.98$ & $0.53$ & $2.61$ & $0.72$ \\
            EBCC & \textbf{0.13} & $0.45$ & $4.25$ & $0.90$ & $1.80$ & $0.73$ & $2.23$ & $0.43$ & $0.81$ & $0.69$ \\
            HyperLM & $0.21$ & $1.31$ & $6.93$ & $0.60$ & $1.29$ & $0.62$ & $2.66$ & $0.68$ & \textbf{0.60} & \textbf{0.42} \\
\toprule
            AMCL CC & \textbf{0.14} & $1.26$ & $14.86$ & $0.42$ & $5.42$ & $1.46$ & $8.73$ & $0.69$ & $0.85$ & $0.70$ \\
            BF & \textbf{0.13} & \textbf{0.39} & \textbf{0.68} & \textbf{0.20} & \textbf{1.12} & \textbf{0.59} & \textbf{0.61} & \textbf{0.42} & $0.64$ & $0.50$ \\
\toprule
            $\frac{1}{n} d(\eta, g^{*})$ & $0.01$ & $0.32$ & $0.65$ & $0.13$ & $1.01$ & $0.57$ & $0.59$ & $0.25$ & $0.54$ & $0.21$ \\
            \bottomrule
        \end{tabular}
\end{center}
\end{table}

\begin{table*}
    \caption{Comparison of BF Against Other WS Methods Using Average 0-1 Loss and Average Brier Score}\label{app_tab:labeled_wrench_zero_one_brier_score}
\begin{adjustbox}{width=\textwidth,center}
        \begin{tabular}{ccccccccccccccccccccc}
\toprule
            \multicolumn{1}{c}{Method} & \multicolumn{2}{c}{AwA} & \multicolumn{2}{c}{Basketball} & \multicolumn{2}{c}{Cancer} & \multicolumn{2}{c}{Cardio} & \multicolumn{2}{c}{Domain} & \multicolumn{2}{c}{IMDB} & \multicolumn{2}{c}{OBS} & \multicolumn{2}{c}{SMS} & \multicolumn{2}{c}{Yelp} & \multicolumn{2}{c}{Youtube} \\
             & 0-1 & BS & 0-1 & BS & 0-1 & BS & 0-1 & BS & 0-1 & BS & 0-1 & BS & 0-1 & BS & 0-1 & BS & 0-1 & BS & 0-1 & BS \\
            \midrule
            MV & \textbf{1.31} & $0.15$ & $24.54$ & $0.31$ & $52.05$ & $0.95$ & $34.95$ & $0.35$ & $45.73$ & $0.62$ & $29.40$ & $0.47$ & \textbf{27.62} & $0.54$ & $31.92$ & $0.32$ & \textbf{31.84} & $0.49$ & \textbf{18.79} & \textbf{0.23} \\
            OCDS & $2.11$ & $0.04$ & \textbf{11.29} & \textbf{0.23} & $52.05$ & $1.02$ & $39.79$ & $0.80$ & $80.17$ & $1.60$ & $49.81$ & $0.95$ & \textbf{27.62} & $0.55$ & $9.67$ & \textbf{0.18} & $46.74$ & $0.72$ & $52.40$ & $1.05$ \\
            DP & $3.15$ & $0.06$ & \textbf{11.29} & \textbf{0.23} & $50.88$ & $1.01$ & $39.79$ & $0.80$ & $72.51$ & $1.36$ & $30.48$ & $0.45$ & \textbf{27.62} & $0.55$ & $32.19$ & $0.36$ & $46.78$ & $0.71$ & $34.75$ & $0.40$ \\
            EBCC & \textbf{1.57} & \textbf{0.03} & $36.33$ & $0.29$ & $52.05$ & $1.03$ & $39.79$ & $0.62$ & $48.23$ & $0.74$ & $28.26$ & $0.45$ & \textbf{27.62} & $0.55$ & \textbf{8.16} & $0.25$ & $36.02$ & $0.51$ & $52.40$ & $0.50$ \\
            HyperLM & $2.55$ & $0.10$ & $36.36$ & $0.45$ & $52.05$ & $0.94$ & $7.96$ & $0.31$ & $41.98$ & $0.65$ & \textbf{27.74} & $0.41$ & \textbf{27.62} & $0.45$ & $53.73$ & $0.50$ & $32.92$ & \textbf{0.41} & $20.37$ & $0.26$ \\
\midrule
            AMCL CC & $2.00$ & $0.06$ & $12.14$ & \textbf{0.23} & $49.18$ & $0.93$ & \textbf{3.11} & \textbf{0.06} & \textbf{36.82} & \textbf{0.54} & $31.74$ & $0.46$ & \textbf{27.62} & $0.54$ & $45.04$ & $0.49$ & $37.39$ & $0.48$ & $38.88$ & $0.47$ \\
            BF & $3.67$ & $0.06$ & \textbf{11.40} & \textbf{0.22} & \textbf{40.47} & \textbf{0.49} & \textbf{3.11} & $0.08$ & \textbf{36.75} & \textbf{0.55} & $29.33$ & \textbf{0.41} & \textbf{27.62} & \textbf{0.42} & $13.50$ & $0.25$ & \textbf{34.42} & $0.45$ & $24.34$ & $0.33$ \\
\midrule
            $g^{*}$ & $0.58$ & $0.01$ & $11.27$ & $0.19$ & $36.26$ & $0.46$ & $3.11$ & $0.06$ & $37.26$ & $0.51$ & $28.74$ & $0.38$ & $27.62$ & $0.40$ & $8.09$ & $0.14$ & $26.54$ & $0.36$ & $7.31$ & $0.12$ \\
            \bottomrule
        \end{tabular}
\end{adjustbox}
\end{table*}

In Table~\ref{app_tab:labeled_wrench_log_loss}, we see that BF does very well when judged by its prediction's log loss.
This may not be surprising because that is its the minimax game objective.
Even when BF does not have the best log-loss (compared to other methods on Yelp, Youtube), its loss was very close to HyperLM's.

Now, the model uncertainty in the last row $\frac{1}{n} d(\eta, g^{*})$ allows us to determine how much of BF's loss is from its approximation uncertainty.
(Recall $g^{*}$ is gotten by solving the BF dual with empirical rule accuracies/class frequencies, Theorem~\ref{app_thm:bf_bound}.)
In other words, how much of BF's error is theoretically reducible without getting more rules of thumb?
For almost all of these datasets considered, the answer is not very much (in terms of log loss).
We see that the BF log loss with $100$ labels is decently close to the lowest possible log loss BF can attain with the rules-of-thumb used.
In the loss visualizations below, one can see that for a lot of the datasets, BF is limited by the lack of rules available to it.
Thus, the presence of additional labeled data (up to 300 labeled points) does not bring a big gain in performance.
Also in those graphs is the breakdown of DS error.
That will be discussed when the graphs are presented.

Table~\ref{app_tab:labeled_wrench_zero_one_brier_score} shows that the BF prediction is pretty good when measured with 0-1 loss and Brier score.
While it is less dominant, it still performs very well -- being the method that has the best result on the largest number of datasets.
When it does not have the best result, it is competitive with the other methods shown.
Like for log-loss, we are also able to evaluate the best approximator $g^{*}$ to $\eta$ on these losses.
Note that $g^{*}$ is the best approximator in terms of KL divergence.
Except for Domain with 0-1 Loss and IMDB with 0-1 Loss, $g^{*}$ had loss no bigger (and often smaller) than even the best methods.
This shows that the prediction gotten from BF with log loss is good even when evaluated under other losses.

\subsection{Consistency}
To demonstrate the consistency of BF, we show that it is consistent under the DS generative assumption.
For us, a method being consistent means it can attain $0$ approximation uncertainty for every problem.
And specifically, BF produces a prediction that has $0$ approximation uncertainty when it is given $b^{*}$.
In the literature, consistency can mean the ability of a method to infer the underlying generative distribution as the number of (unlabeled) datapoints $n\rightarrow \infty$.
When BF is used in that setting, it will also infer the underlying generative distribution.

We will consider the one-coin BF model, with rule accuracy and class frequency constraints.
The data will be generated under the one-coin DS assumption with $k=2$ classes, $p=3$ rules and $n$ datapoints.
Our label space will be $\{ -1,1 \}$ for convenience, and distributions over two elements will be other those labels.
\begin{enumerate}
    \item Draw the underlying label distribution $w^{\star}\sim Dirichlet(1,1)$.
    \item For each $j\in [p]$, draw underlying accuracy $b^{\star}_{j}\sim Beta(2, 4/3)$.
    \item For each $i\in [n]$:
        \begin{enumerate}
            \item Draw label $y_{i}\sim Categorical(w)$
            \item For each $j\in [p]$, draw rule $j$'s prediction, $y_{i}(-1)^{s}$, $s\sim Bernoulli(b_{j})$.
        \end{enumerate}
\end{enumerate}

If we fix $n$, we can compute the empirical class frequencies and rule accuracies, $w^{*}$ and $b^{*}$ respectively.
Those are the quantities given to BF\@.
To simulate the case where one gets more data generated by the same underlying distribution, we generate a total of $n=10^5$ datapoints, and give BF the first $10^2, 10^3,\ldots$ datapoints.
A total of 10 datasets are generated via this process and the resulting KL divergence between the BF prediction and the underlying distribution is averaged.

The underlying label distribution $\eta$ in this case can be represented easily.
If we fix the underlying label distribution and the underlying rule accuracies, then for any set of rule predictions, the underlying distribution $\eta$  is in the form of the RHS of Equation~\ref{app_eqn:ds_prediction_bayes_theorem}.
By how the rule accuracies/class frequencies are generated, one can easily show that $\eta\in \mathcal{G}_{bf}$ (Lemma~\ref{app_lem:bf_one_coin_pred_size}).
We measure the KL divergence of the BF prediction when it gets $10^2, 10^3,\ldots$ datapoints to the underlying label distribution $\eta$ for those $10^2, 10^3\ldots$ datapoints.

The figure shown in the main paper is reproduced here (Figure~\ref{app_fig:bf_synth_convergence}).
Once again, the graph is on a log-log scale and the divergence between the BF prediction and the underlying label distribution ($\frac{1}{n} d(\eta, g^{*})$) decreases exponentially fast.
This means that $g^{*}\rightarrow \eta$ as $n\rightarrow \infty$.

\begin{figure}
    \centering
    \includegraphics[width=0.6\linewidth]{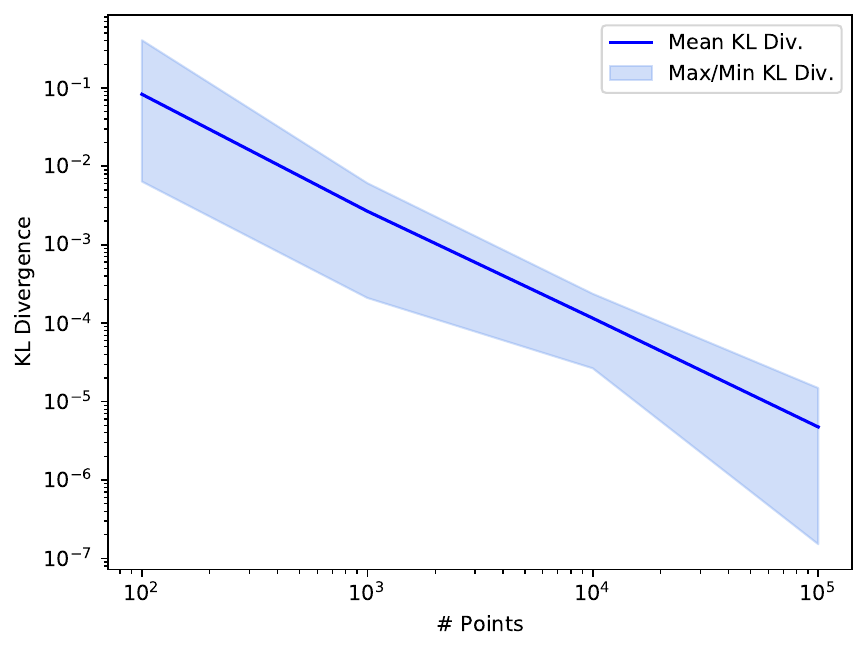}
    \caption{Synthetic Data Convergence}~\label{app_fig:bf_synth_convergence}
\end{figure}

\subsection{Error Decomposition Visualization}
To finish the appendix, we present the error decomposition visualizations for all ten datasets.
We want to remind the reader that even though we write $d(\mu,\nu)$, what we have plotted is $\frac{1}{n} d(\mu,\nu)$.
Practically speaking, this makes no difference to us as the vertical axis values have just been scaled.
Because of this, the values on the vertical axes match the values in Table~\ref{app_tab:labeled_wrench_log_loss}.
If the OCDS loss $d(\eta, g^{ds})$ is not shown, it is because the value is so large that it would otherwise compress all the other values displayed.

Recall that we are framing the discussion on approximation uncertainty in terms of how it changes as a function of how well the empirical parameters are estimated.
Specifically for our experiments, we are concerned with how well the empirical rule accuracies and class frequencies are estimated.
For (one-coin) BF, these estimates are given directly to BF while for OCDS, those quantities are estimated via its EM algorithm.

For a fixed quantity of labeled points, we randomly sample that quantity 10 times from the validation set for BF\@.
The resulting minimum/average/maximum approximation uncertainties are plotted.

We will take $\eta$ to be the ground truth labeling.
To compute the BF loss decomposition, we take advantage of Theorem~\ref{thm:bf_best_approx}, which says that if we give BF the empirical parameters, we get the best approximator $g^{*}$ for $\eta$.
Calling the BF prediction $g^{bf}$, we know $d(\eta, g^{bf})$ and $d(\eta, g^{*})$.
By Lemma~\ref{lem:bf_loss_decomp}, those two quantities are enough to compute $d(g^{*}, g^{bf})$.
We point out that the blue section (the minimum BF approximation uncertainty $d(g^{*}, g^{bf})$) often doesn't decrease because the rule accuracies/class frequency on the labeled dataset (the validation set) does not in general equal the rule accuracies/class frequencies on the training set (which is what we're labeling and measuring loss on).
For OCDS, we are interested in $\mathcal{E}_{ds, 1}^{appr} = d(\eta, g^{ds*}) - d(\eta, g^{*})$ and $\mathcal{E}_{ds,2}^{appr} = d(\eta, g^{ds})-d(\eta, g^{ds*})$.
These two quantities taken together represent the OCDS approximation uncertainty, but only the latter depends on how well EM estimates the empirical rule accuracies and class frequencies.
We plot lines such that the gap between the lines denotes a specific contribution of loss.
The (green) gap between the horizontal axis and the solid black line is the model uncertainty.
The gap between the solid and dashed lines is $\mathcal{E}_{ds, 1}^{appr} + d(\eta, g^{*}) = d(\eta, g^{ds*})$, the loss OCDS has if EM perfectly estimated the empirical rule accuracies and class frequencies.
(Recall that $g^{ds*}$ is the OCDS prediction gotten from doing one E-Step with the empirical parameters.)
The gap between the dashed and dotted lines is $\mathcal{E}_{ds, 2}^{appr} = d(\eta, g^{ds}) - d(\eta, g^{ds*})$, the loss incurred from imperfect estimation of the empirical rule accuracies/class frequencies by EM\@.
We remind the reader that while $\mathcal{E}_{ds,2}^{appr}$ can be negative, that was not observed in our experiments.

To be terse, we will reference the figure number after the dataset and will not explicitly say ``Figure''.
For Cancer (\ref{app_fig:cancer_loss_breakdown}), Cardio (\ref{app_fig:cardio_loss_breakdown}), IMDB (\ref{app_fig:imdb_loss_breakdown}), SMS (\ref{app_fig:sms_loss_breakdown}), and Yelp (\ref{app_fig:yelp_loss_breakdown}), $\mathcal{E}_{ds, 1}^{appr}$ is very low as the dashed line ($d(\eta, g^{ds*})$) is not far above the solid line $d(\eta, g^{*})$.
Thus, the reducible portion of OCDS' error is mainly from EM's failure to estimate the empirical rule accuracies/class frequencies well.
For AwA (\ref{app_fig:awa_loss_breakdown}), Basketball (\ref{app_fig:basketball_loss_breakdown}), and OBS (\ref{app_fig:obs_loss_breakdown}), EM perfectly estimating the empirical rule accuracies/class frequencies would give a loss close to the BF loss $d(\eta, g^{bf})$ because $\mathcal{E}_{ds, 1}^{appr}$ is so large.
Domain (\ref{app_fig:domain_loss_breakdown}) and Youtube (\ref{app_fig:youtube_loss_breakdown}) are the last two datasets and is the in-between case.

\begin{figure}
    \centering
    \includegraphics[width=0.6\linewidth]{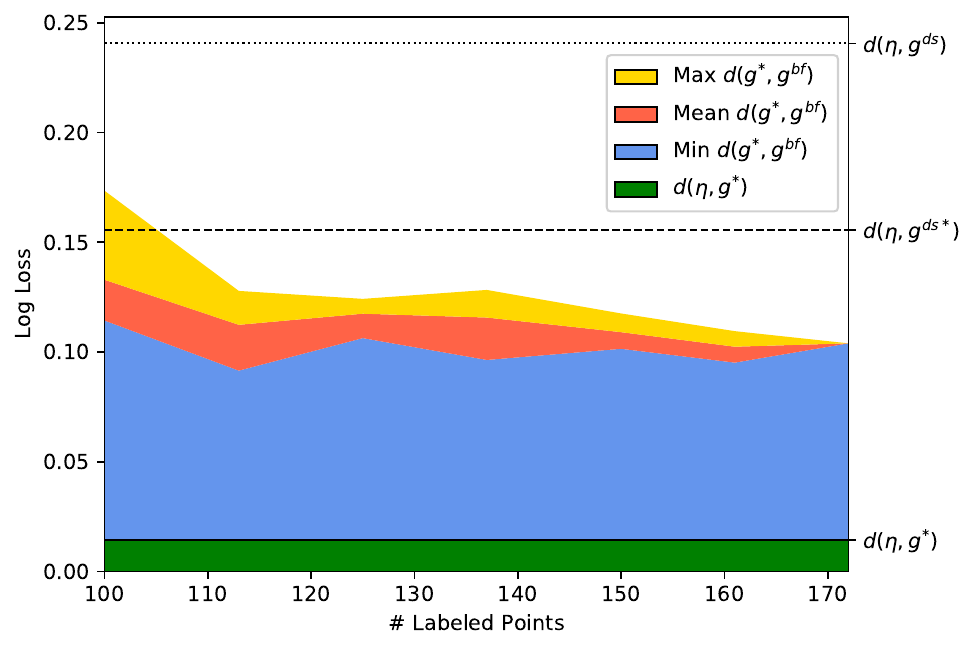}
    \caption{AwA BF/OCDS Loss Decomposition}~\label{app_fig:awa_loss_breakdown}
\end{figure}
\begin{figure}
    \centering
    \includegraphics[width=0.6\linewidth]{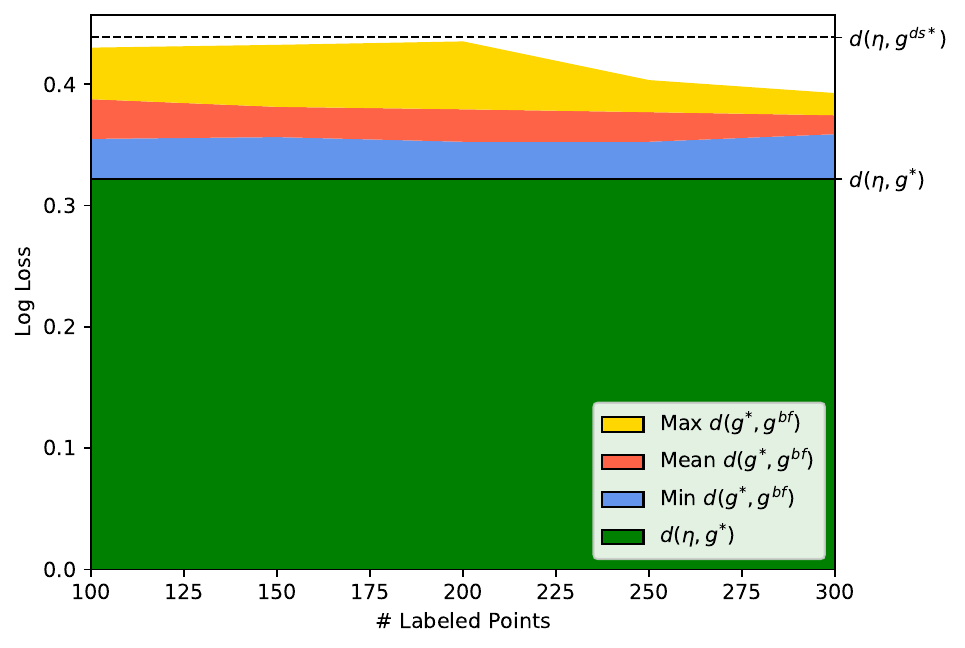}
    \caption{Basketball BF/OCDS Loss Decomposition}~\label{app_fig:basketball_loss_breakdown}
\end{figure}
\begin{figure}
    \centering
    \includegraphics[width=0.6\linewidth]{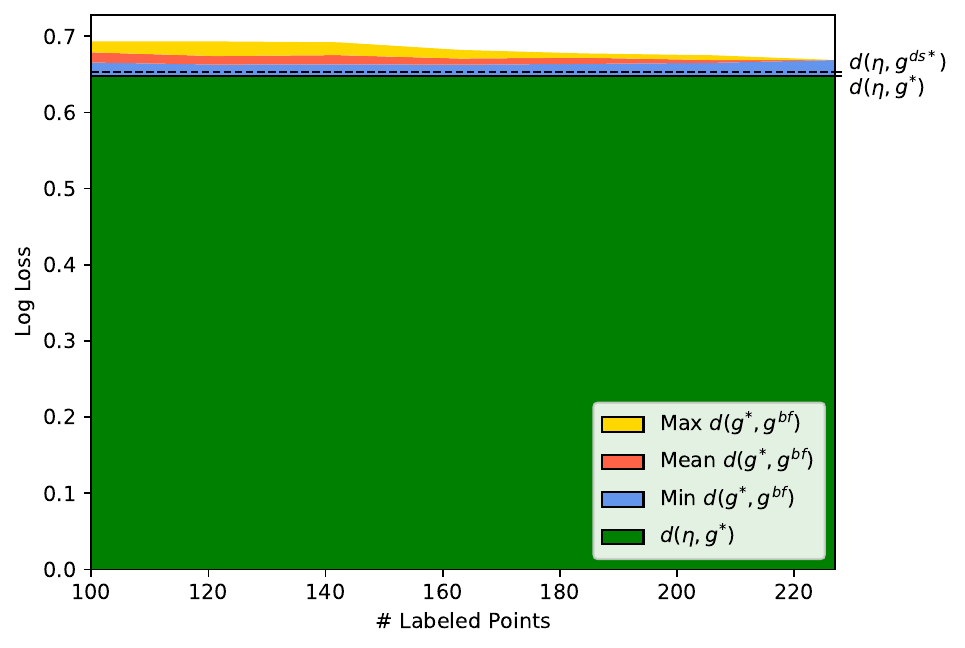}
    \caption{Cancer BF/OCDS Loss Decomposition}~\label{app_fig:cancer_loss_breakdown}
\end{figure}
\begin{figure}
    \centering
    \includegraphics[width=0.6\linewidth]{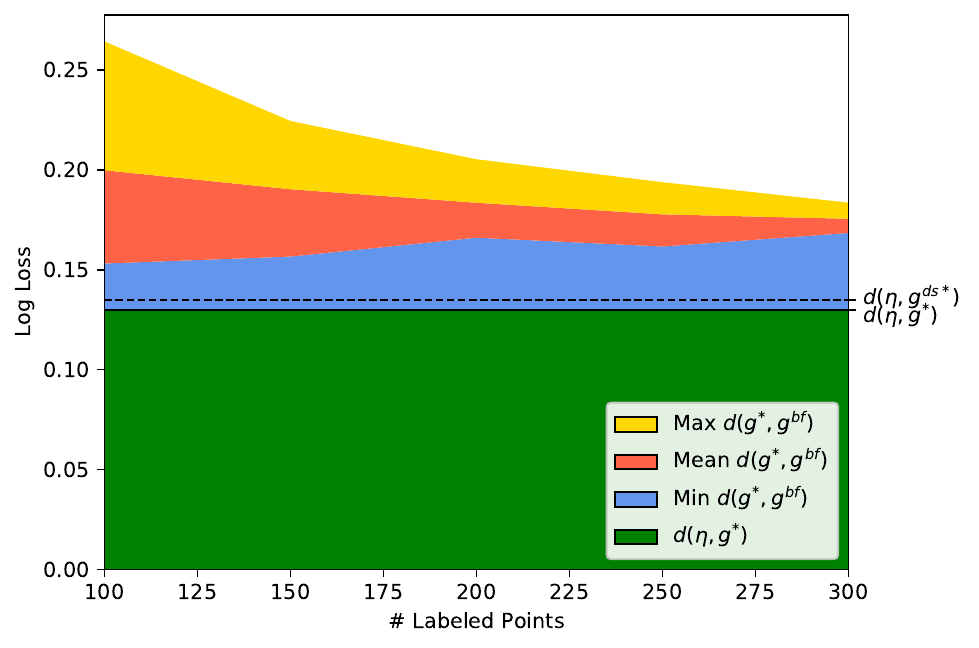}
    \caption{Cardio BF/OCDS Loss Decomposition}~\label{app_fig:cardio_loss_breakdown}
\end{figure}
\begin{figure}
    \centering
    \includegraphics[width=0.6\linewidth]{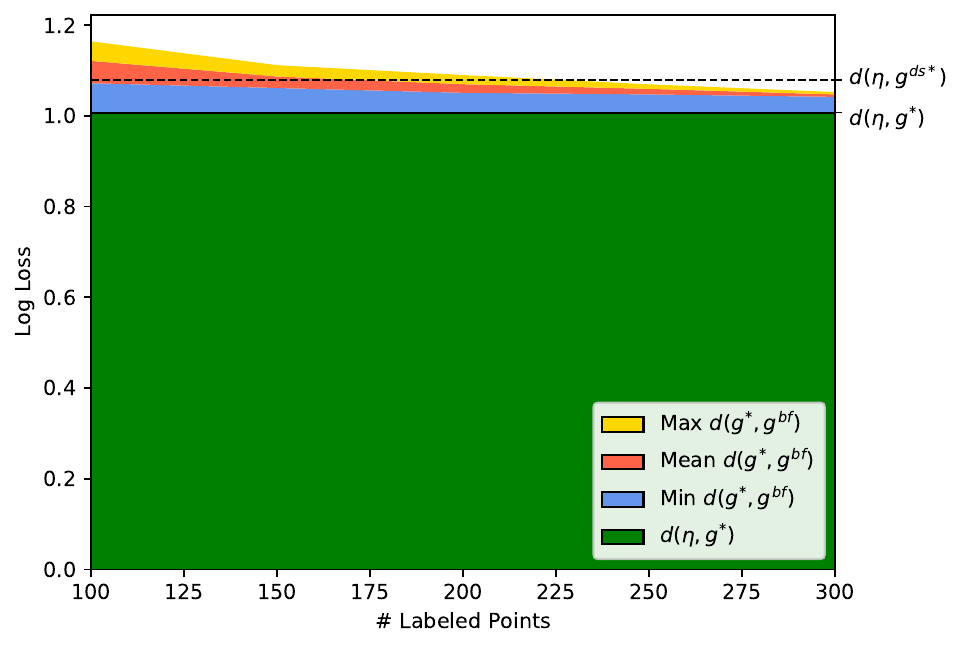}
    \caption{Domain BF/OCDS Loss Decomposition}~\label{app_fig:domain_loss_breakdown}
\end{figure}
\begin{figure}
    \centering
    \includegraphics[width=0.6\linewidth]{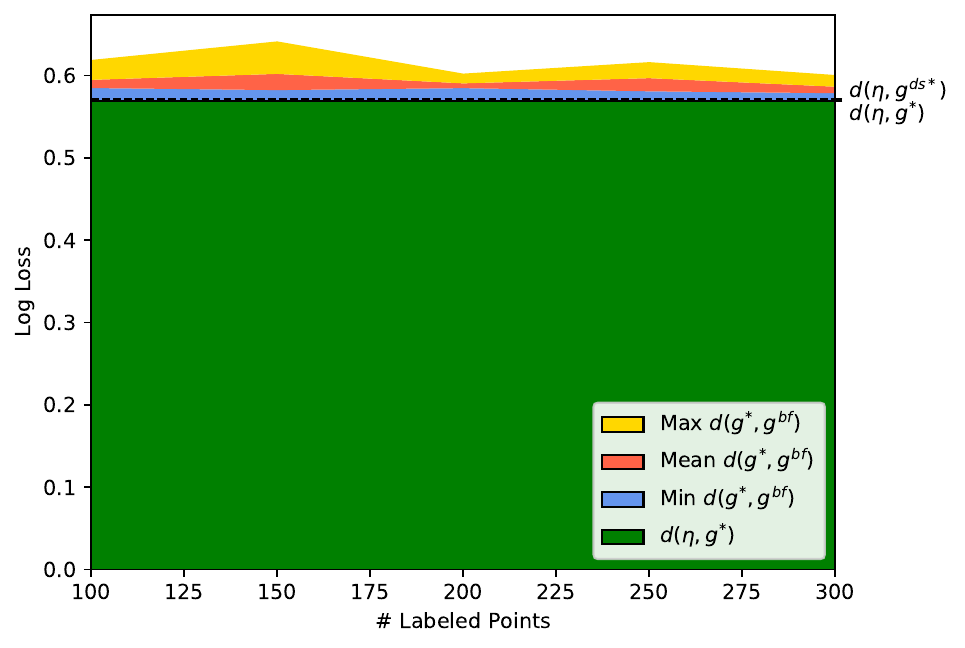}
    \caption{IMDB BF/OCDS Loss Decomposition}~\label{app_fig:imdb_loss_breakdown}
\end{figure}
\begin{figure}
    \centering
    \includegraphics[width=0.6\linewidth]{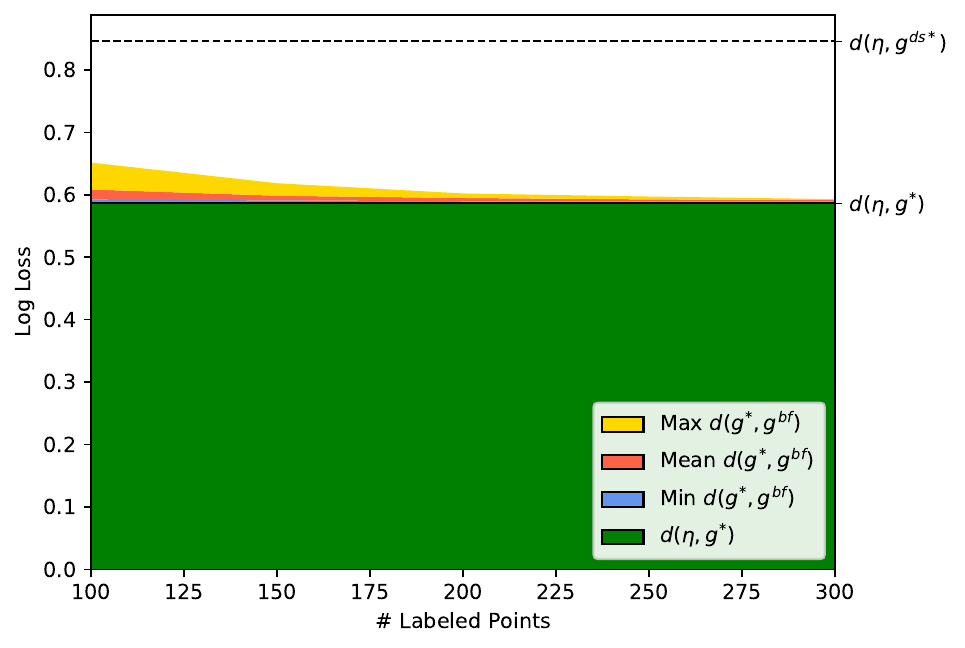}
    \caption{OBS BF/OCDS Loss Decomposition}~\label{app_fig:obs_loss_breakdown}
\end{figure}
\begin{figure}
    \centering
    \includegraphics[width=0.6\linewidth]{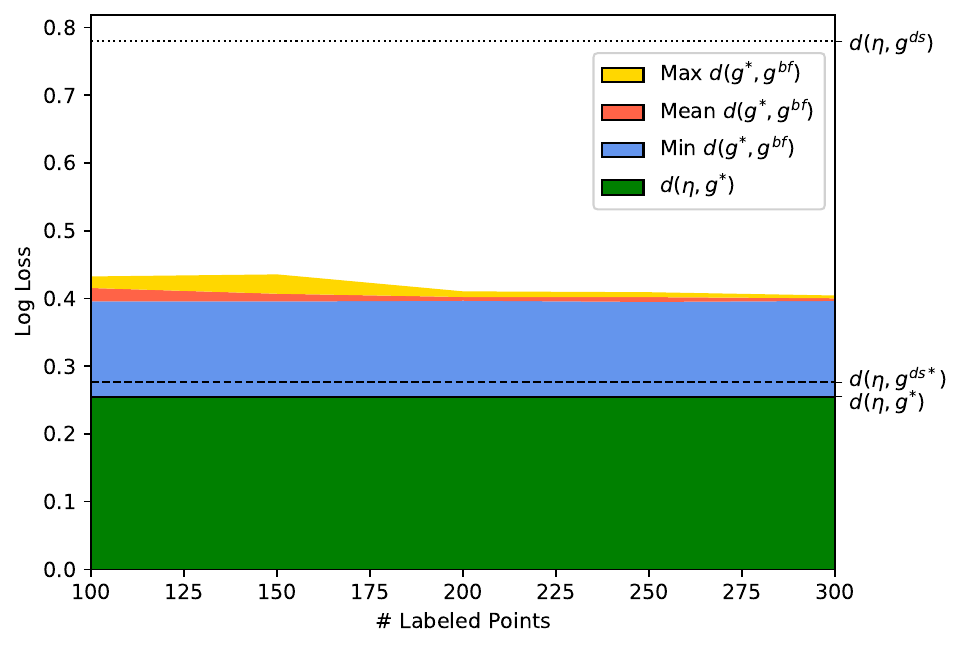}
    \caption{SMS BF/OCDS Loss Decomposition}~\label{app_fig:sms_loss_breakdown}
\end{figure}
\begin{figure}
    \centering
    \includegraphics[width=0.6\linewidth]{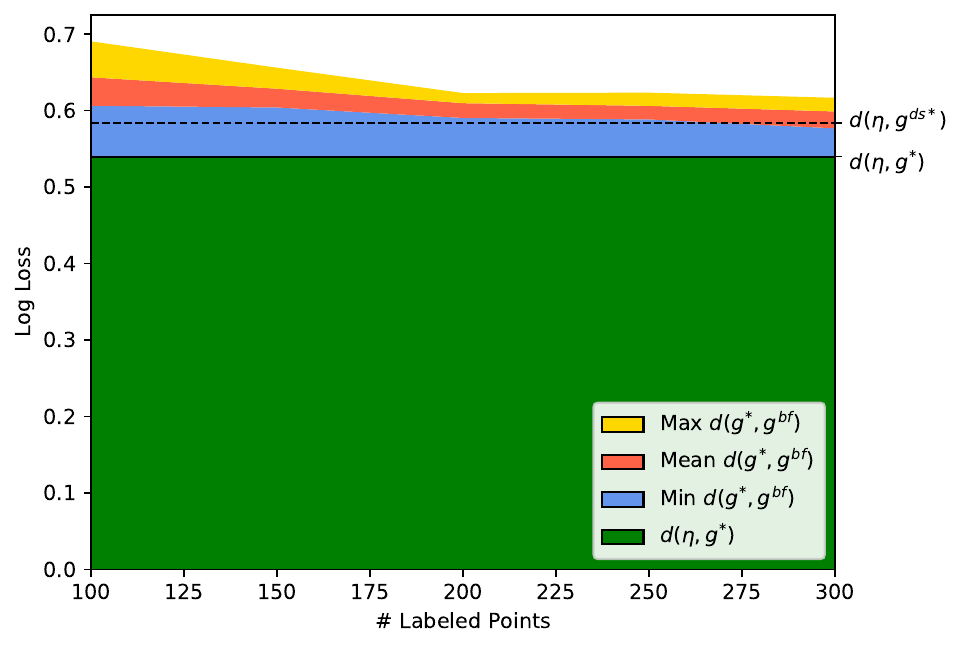}
    \caption{Yelp BF/OCDS Loss Decomposition}~\label{app_fig:yelp_loss_breakdown}
\end{figure}
\begin{figure}
    \centering
    \includegraphics[width=0.6\linewidth]{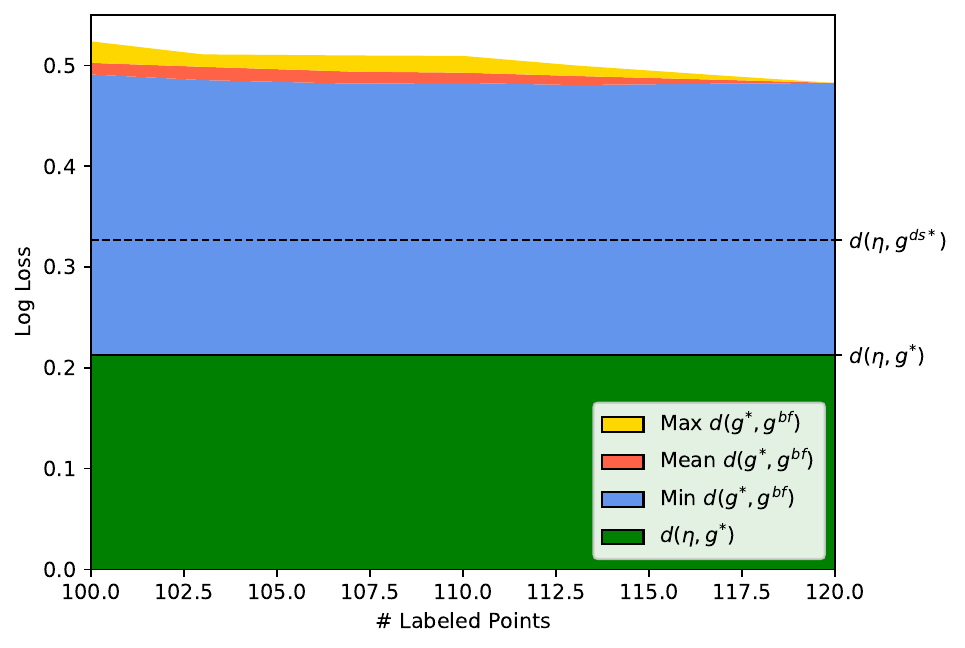}
    \caption{Youtube BF/OCDS Loss Decomposition}~\label{app_fig:youtube_loss_breakdown}
\end{figure}
\end{document}